\documentclass[letterpaper]{article} 
\usepackage[preprint]{aaai2027}    

\usepackage[hyphens]{url}  
\usepackage{graphicx} 
\usepackage{natbib}  
\usepackage{caption} 

\usepackage[english]{babel}
\usepackage{xcolor}
\usepackage{booktabs}
\usepackage{siunitx}
\usepackage{amsmath,amssymb,amsthm}
\usepackage[page]{appendix}
\newtheorem{assumption}{Assumption}
\newtheorem{corollary}{Corollary} 
\newtheorem{lemma}{Lemma} 
\newtheorem{theorem}{Theorem}
\usepackage{tabularray}
\usepackage{algorithm}
\usepackage{algorithmic}

\newcommand{\Var}{\mathrm{Var}}

\newtheorem{prop}{Proposition}

\newcommand{\Ind}{\mathbb{I}}

\newcolumntype{L}{>{\raggedright\arraybackslash}X}

\usepackage{tikz}
\usetikzlibrary{shapes.geometric, arrows.meta, positioning}
\UseTblrLibrary{booktabs}
\usepackage{subcaption}
\title{Inferential Evaluation of Surrogate-Derived Models under Covariate Shift}

\author{
    Longtian Shi\textsuperscript{\rm 1},
    Molei Liu\textsuperscript{\rm 2, \rm 3}\corresponding,
    Doudou Zhou\textsuperscript{\rm 1}\corresponding
}

\affiliations{
    \textsuperscript{\rm 1}Department of Statistics and Data Science, National University of Singapore\\
    \textsuperscript{\rm 2}Department of Biostatistics, Peking University Health Science Center, Peking University, Beijing, China\\
    \textsuperscript{\rm 3}Beijing International Center for Mathematical Research, Peking University, Beijing, China\\
    longtian.shi\_ilovestat@u.nus.edu, moleiliu@bjmu.edu.cn, ddzhou@nus.edu.sg
}

\begin{document}
\maketitle

\begin{abstract}
In transfer-learning settings, a model derived from abundant surrogate labels may be deployed in a target population where gold-standard outcomes are unobserved. Evaluating its target performance is essential for determining whether decisions based on the model remain reliable, yet it is difficult when gold labels are scarce, and covariate distributions differ across data sources. We study a three-sample setting with a small gold-labeled source, a larger surrogate-labeled source, and an unlabeled target. Under conditional transportability, we evaluate the surrogate-derived model against the latent gold-standard outcome in the target population. We propose cross-fitted estimators that transport information from the two labeled sources through source-specific density ratios. We also combine outcome-regression augmentation with a kernel correction for estimating the model near a threshold,  accounting for uncertainty from all three samples. We establish asymptotically linear inference for TPR and FPR, consistency and pointwise inference for the ROC curve, and asymptotically normal inference for AUC. Simulations assess bias, coverage, and sensitivity to bandwidth and relative sample sizes. A retrospective temporal validation on Chatbot Arena and a semi-synthetic ACS-Income study provide validation in real-world AI applications.
\end{abstract}

\section{Introduction}
\label{Introduction}

Many prediction systems are developed with far more imperfect labels than gold-standard outcomes. Obtaining gold-standard outcomes can require expert adjudication, costly measurements, or long-term follow-up, so such outcomes are often available only in small, selectively assembled samples. A surrogate label, however, may be available at scale from a proxy measurement, a rule-based procedure, weak supervision, or an automated annotator. This larger sample identifies a population prediction score for the surrogate label. Systematic discrepancies between the two labels can carry over to this score and affect its operating characteristics for the gold-standard outcome. We therefore study these operating characteristics in a target population whose covariate distribution may differ from those of both labeled samples. This data structure occurs across a broad range of semi-supervised and surrogate-assisted learning problems \citep{gronsbell2018semi,hou2023surrogate,xia2024prediction}.

\begin{figure}[!t]
  \centering
  \begin{tikzpicture}[
    every node/.style={
      draw,
      align=center,
      inner xsep=4pt,
      inner ysep=4pt,
    },
    dataset/.style={
  rectangle,
  rounded corners=3pt,
  text width=2.25cm,
  minimum height=1.55cm,
  fill=gray!10,
  line width=1pt,
  font=\footnotesize,
},
subset/.style={
  rectangle,
  rounded corners=4pt,
  text width=4.3cm,
  minimum height=0.95cm,
  fill=gray!05,
  line width=0.8pt,
  font=\footnotesize,
},
    arrow/.style={
      -{Stealth[length=2.5mm,width=1.5mm]},
      thick,
      draw=black!80,
    },
    shift_arrow/.style={
      -{Stealth[length=2.5mm,width=1.5mm]},
      thick,
      draw=blue!90,
      dashed,
    }
  ]

    \node[subset] (HL) {
      $\mathcal{D}_{\mathrm{HL}}
      =\{(X_i^{\mathrm{HL}},Y_i^{\mathrm{HL}})\}_{i=1}^{n_{\mathrm{HL}}}$\\[1pt]
      \textbf{Gold-labeled source}: small, high-fidelity sample
    };

    \node[subset, below=0.65cm of HL] (T) {
      $\mathcal{D}_{\mathrm{T}}
      =\{X_\ell^{\mathrm{T}}\}_{\ell=1}^{n_{\mathrm{T}}}$\\[1pt]
      \textbf{Unlabeled target}: target population
    };

    \node[subset, below=0.65cm of T] (AL) {
      $\mathcal{D}_{\mathrm{AL}}
      =\{(X_j^{\mathrm{AL}},S_j^{\mathrm{AL}})\}_{j=1}^{n_{\mathrm{AL}}}$\\[1pt]
      \textbf{Surrogate-labeled source}: large, imperfect-label sample
    };

    \node[dataset, right=1cm of T] (D) {
      Target-population\\
      performance of the\\
      population surrogate-derived\\
      model for the gold outcome
    };

    \draw[arrow]
      (HL.east) to[bend left=8] (D.150);

    \draw[arrow]
      (T.east) -- (D.west);

    \draw[arrow]
      (AL.east) to[bend right=8] (D.210);

    \draw[shift_arrow]
      (HL.south) -- (T.north);

    \draw[shift_arrow]
      (AL.north) -- (T.south);

  \end{tikzpicture}

  \caption{Three-population setting. Gold and surrogate labels are observed in separate sources; only covariates are observed in the target population. Each source may have its own covariate shift relative to the target (dashed arrows).}
  \label{fig:datastructure}
\end{figure}

Specifically, we evaluate the population surrogate-derived model in the target population with respect to the gold-standard outcome $Y$, using the true-positive rate (TPR), false-positive rate (FPR), receiver operating characteristic (ROC) curve, and area under the ROC curve (AUC).

The way these labels are collected can also induce covariate shift. A gold-labeled sample may come from a specialized study, a different institution or platform, or an earlier period, whereas surrogate labels may be available in a larger operational database. Differences in sampling design, population composition, and data-collection processes can therefore lead to different covariate distributions across the two labeled sources and the target, in which performance is to be evaluated. These differences matter even if the conditional gold-outcome mechanism is stable between the gold-labeled source and the target: TPR, FPR, ROC, and AUC average over the target covariate distribution, so source-population performance need not equal target-population performance. Moreover, because the two labeled samples are assembled through different processes, their shifts relative to the target need not be the same. Each source consequently requires its own transport correction \citep{shimodaira2000improving,sugiyama2012machine,zhou2025doubly}.

These features yield three distinct populations, two labeled sources and an unlabeled target, summarized in Figure~\ref{fig:datastructure}. We index their observations separately as $\mathcal{D}_{\mathrm{HL}}=\{(X_i^{\mathrm{HL}},Y_i^{\mathrm{HL}})\}_{i=1}^{n_{\mathrm{HL}}}$, $\mathcal{D}_{\mathrm{AL}}=\{(X_j^{\mathrm{AL}},S_j^{\mathrm{AL}})\}_{j=1}^{n_{\mathrm{AL}}}$, and $\mathcal{D}_{\mathrm{T}}=\{X_\ell^{\mathrm{T}}\}_{\ell=1}^{n_{\mathrm{T}}}$. The indices $i$, $j$, and $\ell$ are local to their respective samples, and population superscripts distinguish observations in empirical expressions. Three samples contain no shared observational units and are mutually independent; observations are i.i.d. within each sample. The labels $Y$ and $S$ are binary. For $g\in\{\mathrm{HL},\mathrm{AL},\mathrm{T}\}$, let $\mathbb P_g$ and $\mathbb E_g$ denote probability and expectation under population $g$. In population-level expressions, the subscript on $\mathbb P_g$ or $\mathbb E_g$ identifies the population law, so we write the generic variables as $X$, $Y$, and $S$ without population superscripts. Superscripts are retained for sample observations to distinguish the three datasets. We write $\Ind\{A\}$ for the indicator of event $A$. Under $\mathbb P_{\mathrm{T}}$, $Y$ denotes the latent gold-standard outcome; only $X$ is observed in the target sample. Write $O^{\mathrm{HL}}=(X,Y)$, $O^{\mathrm{AL}}=(X,S)$, and $O^{\mathrm{T}}=X$ for generic observed records from three sources. Let $O_r^g$ denote the corresponding $r$th sample record. For any integrable source-specific function $f$, use the operator $\mathbb P_gf=\mathbb E_g\{f(O^g)\}$ and $\mathbb P_{g,n}f=n_g^{-1}\sum_{r=1}^{n_g}f(O_r^g)$ for its population and empirical averages, respectively. 

We assume conditional outcome transportability,
\[
m_Y^\star(x)
:=\mathbb{E}_{\mathrm{HL}}(Y\mid X=x)
=\mathbb{E}_{\mathrm{T}}(Y\mid X=x).
\]
The auxiliary-source law defines the population surrogate-derived model $
m_S^\star(x):=\mathbb{E}_{\mathrm{AL}}(S\mid X=x)$. For a fixed threshold $c$ in the interior of $m_S^\star$'s range, the target TPR is
\begin{align}
\mathrm{TPR}_{\mathrm{T}}(c)
&=\mathbb P_{\mathrm{T}}\{m_S^\star(X)\ge c\mid Y=1\}\notag\\
&=\frac{\mathbb{E}_{\mathrm{T}}\left[m_Y^\star(X)
\Ind\{m_S^\star(X)\ge c\}\right]}
{\mathbb{E}_{\mathrm{T}}\left[m_Y^\star(X)\right]}.
\label{eq:intro_tpr_identification}
\end{align}
The FPR, ROC curve, and AUC are defined with respect to the same target law. The gold-labeled source identifies $m_Y^\star$, the surrogate-labeled source identifies $m_S^\star$, and the target sample supplies the covariate law governing the expectations in~\eqref{eq:intro_tpr_identification}. The identifying formula for TPR also shows why direct plug-in inference can be unreliable with flexible nuisance estimators. Both $m_Y^\star$ and $m_S^\star$ must be estimated from labeled data, and errors in either propagate into the plug-in performance metric. Their convergence rates may be adequate for prediction but too slow for the resulting plug-in bias to be negligible on the scale required for inference. Estimating $m_S^\star(x)$ is especially delicate because it enters the target functional through $\Ind\{m_S^\star(X)\ge c\}$: a small error can move an observation across $c$, and its leading contribution is concentrated where $m_S^\star(X)$ is near $c$. Orthogonal corrections for smooth functionals do not capture this contribution.

Our three-sample setting is related to several lines of work. Semi-supervised methods combine a small gold-labeled sample with unlabeled covariates to estimate prediction performance more efficiently, including under nonuniform label sampling and for ROC analysis \citep{gronsbell2018semi,gronsbell2022efficient,gao2024ssroc}. Proxy- and surrogate-assisted methods instead use inexpensive measurements or machine predictions for model training or statistical inference \citep{hou2023surrogate,xia2024prediction,angelopoulos2023prediction}. They do not address target-population performance when the gold- and surrogate-labeled samples undergo distinct shifts relative to the target.

\noindent\textbf{Technical challenge.}
The inferential difficulty lies in the interaction between multi-source transport and a learned decision boundary: the gold-outcome regression and surrogate score are identified from differently shifted sources, while estimation error in the latter propagates through a hard threshold into TPR, FPR, and ROC. A separate literature develops doubly robust or automatic debiasing methods for statistical functionals under covariate shift \citep{reddi2015doubly,chernozhukov2023automatic,qiu2024risk}. Work specific to model evaluation transports prediction error, classification accuracy, or AUC from a labeled source to a target population \citep{steingrimsson2023transporting,li2023auc,zhou2025doubly,liu2025auc}. These methods treat the prediction rule as fixed, or as learned using gold outcomes, and involve a single labeled-source-to-target shift; here, the population surrogate-derived model is identified from a second shifted source. \citet{ying2025towards} incorporates automated computational phenotypes under covariate shift, with those phenotypes observed in the unlabeled population rather than in a separate source. \citet{park2024debiased} handles estimated nuisance functions inside indicators by smoothing the indicator, whereas our estimator retains the hard-threshold estimand and localizes the correction for estimating $m_S^\star$ near the threshold. Concurrent work considers target-risk evaluation under covariate shift and selective labels and multi-source prediction-powered inference under heterogeneous distributions \citep{ulichney2026beyond,li2026multisource}; neither targets the operating characteristics of a population surrogate-derived model identified from a separate surrogate-label population. The present problem, therefore, requires transporting gold-outcome residuals from the gold-labeled source and residuals associated with estimating $m_S^\star$ from the auxiliary source, with the latter weighted near the evaluation threshold.

\noindent\textbf{Technical and practical novelty.}
Our central contribution is a source-aware debiasing principle that corrects each first-order nuisance effect using the population in which it is identified, enabling valid target-population inference without gold labels in the target. We develop three-sample estimators with two source-specific corrections. First, transported outcome residuals from the gold-labeled source correct the first-order error in target-sample imputation based on $\hat m_Y$. Second, since $\hat m_S$ enters a hard threshold, its estimation error matters primarily for observations near that threshold. We therefore use a kernel to localize surrogate residuals near the threshold and transport this correction from the auxiliary source to the target population. Together with cross-fitting, the two corrections remove the leading nuisance-estimation effects and yield an asymptotically linear expansion with separate contributions from the gold-labeled, auxiliary, and target samples. We establish pointwise inference for TPR and FPR at fixed thresholds and for ROC values at fixed false-positive rates, as well as consistency of the projected ROC curve and asymptotically normal inference for AUC. Simulations verify theoretical coverage and characterize sensitivity to bandwidth and sample sizes. The two applications evaluate the method under a natural temporal shift in Chatbot Arena and a semi-synthetic ACS-Income label-masking design, with Target outcomes withheld during estimation.

\section{Method}
\label{Method}

Suppress the target subscript and write
$\tau_1(c)=\operatorname{TPR}(c)$ and
$\tau_0(c)=\operatorname{FPR}(c)$. Let $\theta_1(c)$ and $p_1\in(0,1)$
denote the numerator and denominator in
\eqref{eq:intro_tpr_identification}, so that
$\tau_1(c)=\theta_1(c)/p_1$. For
$g\in\{\mathrm{HL},\mathrm{AL},\mathrm{T}\}$, let $p_g$ denote the density of
$X$ under population $g$ with respect to a common dominating measure. Under
the source-support and bounded-density-ratio conditions in part (a) of
Assumption~\ref{ass:smooth}, define
$w_{\mathrm{HL}}(x)=p_{\mathrm{T}}(x)/p_{\mathrm{HL}}(x)$ and
$w_{\mathrm{AL}}(x)=p_{\mathrm{T}}(x)/p_{\mathrm{AL}}(x)$.
For a fixed threshold $c$, write
$\Ind_c(x)=\Ind\{m_S^\star(x)\ge c\}$.

\paragraph{Population Identities.}
Motivated by~\eqref{eq:intro_tpr_identification}, a direct plug-in estimator substitutes fitted estimators $\hat m_Y$ and $\hat m_S$ for $m_Y^\star$ and $m_S^\star$, respectively, replacing target expectations with empirical averages over covariates. Below, arguments $X$ are omitted. For generic approximations $m$ and $w$ to $m_Y^\star$ and $w_{\mathrm{HL}}$, and a given function $q$, define
\begin{align*}
\Psi(m,w;q)
&=\mathbb E_{\mathrm{T}}(mq)+\mathbb E_{\mathrm{HL}}\{w(Y-m)q\},\\
\Psi(m,w;q)-\mathbb E_{\mathrm{T}}(m_Y^\star q)
&=\mathbb E_{\mathrm{HL}}\{(w_{\mathrm{HL}}-w)(m-m_Y^\star)q\}.
\end{align*}
The first term averages the imputed quantity $mq$ over the target covariates,
while the second uses transported gold-source residuals. Their sum differs
from $\mathbb E_{\mathrm{T}}(m_Y^\star q)$ only by the product of the
outcome-regression and density-ratio errors. $q\equiv1$ gives the
corresponding identity for $p_1=\mathbb E_{\mathrm{T}}(m_Y^\star)$. For the numerator, condition on the data used to fit $\hat m_S$ and define
$\hat I_c(x)=\Ind\{\hat m_S(x)\ge c\}$. With
$\delta_S=\hat m_S-m_S^\star$ and $q=\hat I_c$, the target functional in
the display is $\mathbb E_{\mathrm{T}}(m_Y^\star\hat I_c)$, whose difference
from $\theta_1(c)$ is $B_S(c)=\mathbb E_{\mathrm{T}}\{m_Y^\star(\hat I_c-\Ind_c)\}$. 

We now construct a correction for $B_S(c)$. The derivative of the map
$t\mapsto\Ind\{t\ge c\}$, in the distributional sense, is the Dirac measure
at $c$. An observation can contribute to $B_S(c)$ only if
$|m_S^\star-c|\le|\delta_S|$; under uniform consistency, such observations
lie in a shrinking neighborhood of the threshold. Replacing the point mass
by $K_h(c-t)$, where
$K_h(v)=h^{-1}K(v/h)$, suggests estimating $B_S(c)$ with $\mathbb E_{\mathrm{T}}\!\left[m_Y^\star(X)K_h\{c-\hat m_S(X)\}\delta_S(X)\right]$. Appendix~\ref{methodintuition} justifies this approximation.
Conditional on the data used to fit $\hat m_S$,
$\mathbb E_{\mathrm{AL}}\{S-\hat m_S(X)\mid X\}=-\delta_S(X)$;
hence,
\begin{align*}
&\mathbb E_{\mathrm{AL}}\!\left[
w_{\mathrm{AL}}(X)m_Y^\star(X)
K_h\{c-\hat m_S(X)\}
\{S-\hat m_S(X)\}\right]\\
&\qquad=-\mathbb E_{\mathrm{T}}\!\left[
m_Y^\star(X)K_h\{c-\hat m_S(X)\}\delta_S(X)
\right].
\end{align*}
The auxiliary term cancels, to first order, the error caused by
replacing $m_S^\star$ with $\hat m_S$ in the threshold indicator.

\paragraph{Cross-fitted Implementation.}
The preceding population identities treat the fitted nuisance functions as fixed. Cross-fitting recreates this separation in the sample by evaluating each observation with nuisance functions trained without that observation. Independently split $\mathcal{D}_{\mathrm{HL}}$, $\mathcal{D}_{\mathrm{AL}}$, and $\mathcal{D}_{\mathrm{T}}$ into $V$ folds $\{\mathcal{I}_k\}_{k=1}^V$, $\{\mathcal{J}_k\}_{k=1}^V$, and $\{\mathcal{L}_k\}_{k=1}^V$. For each nuisance estimator, the superscript $(-k)$ means fold $k$ is removed from every sample entering that particular fit. Empirical averages over the three validation folds are denoted by $\mathbb P_{\mathrm{HL},\mathcal I_k}$, $\mathbb P_{\mathrm{AL},\mathcal J_k}$, and $\mathbb P_{\mathrm{T},\mathcal L_k}$, respectively. Fit $\hat m_Y^{(-k)}$ for $m_Y^\star$ on $\mathcal{D}_{\mathrm{HL}}\setminus\mathcal{I}_k$ and $\hat m_S^{(-k)}$ for $m_S^\star$ on $\mathcal{D}_{\mathrm{AL}}\setminus\mathcal{J}_k$. Fit $\hat w_{\mathrm{HL}}^{(-k)}$ for $w_{\mathrm{HL}}$ by distinguishing $\mathcal{D}_{\mathrm{HL}}\setminus\mathcal{I}_k$ from $\mathcal{D}_{\mathrm{T}}\setminus\mathcal{L}_k$, and fit $\hat w_{\mathrm{AL}}^{(-k)}$ for $w_{\mathrm{AL}}$ analogously using $\mathcal{D}_{\mathrm{AL}}\setminus\mathcal{J}_k$ and $\mathcal{D}_{\mathrm{T}}\setminus\mathcal{L}_k$. Appendix~\ref{densityratioestimation} gives the classifier-to-density-ratio conversion. 

Therefore, the cross-fitted sample analogue of $\Psi(m,w;1)$ is
\begin{align*}
\widehat p_1
&=\frac{1}{n_{\mathrm{T}}}\sum_{k=1}^V\sum_{\ell\in\mathcal{L}_k}
\hat m_Y^{(-k)}(X_\ell^{\mathrm{T}})\\
&\quad+\frac{1}{n_{\mathrm{HL}}}\sum_{k=1}^V\sum_{i\in\mathcal{I}_k}
\hat w_{\mathrm{HL}}^{(-k)}(X_i^{\mathrm{HL}})
\{Y_i^{\mathrm{HL}}-\hat m_Y^{(-k)}(X_i^{\mathrm{HL}})\}.
\end{align*}
The target plug-in term, gold-source correction, and auxiliary correction for the numerator are
\begin{align*}
\widehat\theta_{\mathrm{T}}(c)
&=\frac{1}{n_{\mathrm{T}}}\sum_{k=1}^V\sum_{\ell\in\mathcal{L}_k}
\hat m_Y^{(-k)}(X_\ell^{\mathrm{T}})
\Ind\{\hat m_S^{(-k)}(X_\ell^{\mathrm{T}})\ge c\},\\
\widehat\theta_{\mathrm{HL}}(c)
&=\frac{1}{n_{\mathrm{HL}}}\sum_{k=1}^V\sum_{i\in\mathcal{I}_k}
\hat w_{\mathrm{HL}}^{(-k)}(X_i^{\mathrm{HL}})\\
&\quad\times\{Y_i^{\mathrm{HL}}-\hat m_Y^{(-k)}(X_i^{\mathrm{HL}})\}
\Ind\{\hat m_S^{(-k)}(X_i^{\mathrm{HL}})\ge c\},\\
\widehat\theta_{\mathrm{AL}}(c)
&=\frac{1}{n_{\mathrm{AL}}}\sum_{k=1}^V\sum_{j\in\mathcal{J}_k}
\hat w_{\mathrm{AL}}^{(-k)}(X_j^{\mathrm{AL}})
\hat m_Y^{(-k)}(X_j^{\mathrm{AL}})\\
&\times K_h\{c-\hat m_S^{(-k)}(X_j^{\mathrm{AL}})\}\!
\{S_j^{\mathrm{AL}}-\hat m_S^{(-k)}(X_j^{\mathrm{AL}})\}.
\end{align*}
The sum $\widehat\theta_{\mathrm{T}}(c)+\widehat\theta_{\mathrm{HL}}(c)$ is the cross-fitted sample analog of $\Psi(m,w;\hat I_c)$: it removes the first-order outcome-regression error while treating the fitted decision rule as fixed. $\widehat\theta_{\mathrm{AL}}(c)$ is the sample analog of the auxiliary correction above and addresses the remaining error from replacing $m_S^\star$ by $\hat m_S$ inside the indicator. Thus, the proposed estimator is
\[
\hat\tau_1(c)
=\frac{\widehat\theta_{\mathrm{T}}(c)
+\widehat\theta_{\mathrm{HL}}(c)
+\widehat\theta_{\mathrm{AL}}(c)}{\widehat p_1},
\]

\paragraph{FPR, ROC, and AUC}
The corresponding FPR estimator is denoted by $\hat\tau_0(c)$. For numerical
ROC and AUC construction, $\hat\tau_1$ and $\hat\tau_0$ are evaluated on a
prespecified threshold grid. The deterministic endpoint pairs
$(\tau_0,\tau_1)=(1,1)$ and $(0,0)$ are then added, and equal-weight
least-squares isotonic projection produces $[0,1]$-valued sequences that
are nonincreasing in $c$. Denote the resulting projected curves by
$\tilde\tau_1$ and $\tilde\tau_0$. For a target FPR level $u\in[0,1]$, the estimated threshold and ROC value are $\hat c_u=\inf\{c:\tilde\tau_0(c)\le u\}$ and $\widehat{\operatorname{ROC}}^{cf}(u)
=\tilde\tau_1(\hat c_u)$.
The projected TPR and FPR sequences are linearly interpolated as functions of
$c$. The trapezoidal area under the estimated ROC curve defines
$\widehat{\operatorname{AUC}}^{cf}$, the AUC point estimate associated with
that curve. Standard errors and confidence intervals are based on the scalar one-step estimator $\widehat{\operatorname{AUC}}_{\mathrm{os}}^{cf}$, which targets AUC directly
and avoids requiring a distributional approximation for the entire estimated
ROC curve. Both estimators target the same population AUC, and
Appendix~\ref{FPRROCAUC} gives their constructions.

\section{Theoretical Results}
\label{Theory}

This section establishes asymptotic linear expansions, Gaussian limits, and
consistent standard errors for TPR and FPR at fixed thresholds and for ROC at
fixed FPR levels. It also establishes uniform consistency of the projected ROC
estimator and its trapezoidal AUC, together with asymptotic normality and
consistent standard-error estimation for the scalar one-step AUC estimator.

\subsection{TPR and FPR at a Fixed Threshold}

Throughout the threshold-specific results, $c$ is a fixed interior threshold and $h\to0$ along the joint sample-size sequence indexed by $n$. Define the reference rate $a_n=\{n_{\mathrm{HL}}^{-1}+(n_{\mathrm{AL}}h)^{-1} +n_{\mathrm{T}}^{-1}\}^{1/2}$. The effective auxiliary sample size is $n_{\mathrm{AL}}h$ (only observations near $c$ contribute to the auxiliary correction). The three source-specific influence terms are
\begin{align*}
\psi_{\mathrm{HL},1}(c)
=&p_1^{-1}w_{\mathrm{HL}}(X)\{Y-m_Y^\star(X)\}\{\Ind_c(X)-\tau_1(c)\},\\
\psi_{\mathrm{AL},1,h}(c)
&=p_1^{-1}w_{\mathrm{AL}}(X)m_Y^\star(X)
K_h\{c-m_S^\star(X)\}\\
&\quad\times\{S-m_S^\star(X)\},\\
\psi_{\mathrm{T},1}(c)
&=p_1^{-1}m_Y^\star(X)
\{\Ind_c(X)-\tau_1(c)\}.
\end{align*}
They capture variation from gold-outcome residuals,
surrogate model residuals near $c$, and target-population averaging,
respectively. For compact notation, write
$\psi_{\mathrm{AL},1}(c)=\psi_{\mathrm{AL},1,h}(c)$ and let
$\mathcal G=\{\mathrm{HL},\mathrm{AL},\mathrm{T}\}$. Their exact aggregate
variance is
$s_{1,n}^2(c)=\sum_{g\in\mathcal G}n_g^{-1}
\Var_g\{\psi_{g,1}(c)\}$.
Thus $a_n$ gives the stochastic fluctuation order, while $s_{1,n}(c)$
retains the source-specific variance constants. The condition
$s_{1,n}(c)\asymp a_n$ below is a nondegeneracy condition ensuring that the
reference rate matches the actual variance.

\begin{theorem}[Three-Sample Asymptotic Normality]
\label{thm:asymptotic_normality_tpr}
Suppose Assumptions~\ref{ass:smooth}--\ref{ass:higher_smoothness} in
Appendix~\ref{app:formal_assumptions} hold. Then $\hat\tau_1(c)-\tau_1(c)=\sum_{g\in\mathcal G}\mathbb P_{g,n}\psi_{g,1}(c)+o_p(a_n)$. If additionally $s_{1,n}(c)\asymp a_n$, the remainder is
$o_p\{s_{1,n}(c)\}$ and
$\{\hat\tau_1(c)-\tau_1(c)\}/s_{1,n}(c)\Rightarrow\mathcal N(0,1)$.
\end{theorem}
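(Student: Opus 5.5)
The plan is to linearize the ratio first and then expand the numerator fold by fold, conditional on the training data. Write $\hat\theta_1(c)=\widehat\theta_{\mathrm{T}}(c)+\widehat\theta_{\mathrm{HL}}(c)+\widehat\theta_{\mathrm{AL}}(c)$. Then
\[
\hat\tau_1(c)-\tau_1(c)=\widehat p_1^{-1}\bigl[\{\hat\theta_1(c)-\theta_1(c)\}-\tau_1(c)\{\widehat p_1-p_1\}\bigr].
\]
So it suffices to expand each of the two differences as a sum of empirical averages plus an $o_p(a_n)$ remainder. Consistency of $\widehat p_1$ then lets us replace $\widehat p_1^{-1}$ by $p_1^{-1}$ at cost $o_p(a_n)$. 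The $-\tau_1(c)$ centering arises exactly from the $\widehat p_1$ expansion, and this is why $\psi_{\mathrm{HL},1}$ and $\psi_{\mathrm{T},1}$ carry the factor $\{\Ind_c-\tau_1(c)\}$.

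Within each fold $k$, condition on the out-of-fold data, so that $\hat m_Y^{(-k)}$, $\hat m_S^{(-k)}$, $\hat w_g^{(-k)}$ are fixed functions. Each fold average then splits into three pieces:
\begin{itemize}
\item a population term evaluated at the estimated nuisances;
\item an empirical-process term, meaning the centered fold average of the estimated-nuisance summand minus the centered fold average of the true-nuisance summand;
\item the oracle empirical average.
\end{itemize}
The empirical-process pieces are mean zero conditionally. Their conditional variances are bounded by $n_g^{-1}$ times the $L_2(\mathbb P_g)$ distance of the summands, and these distances vanish by the consistency rates in the assumptions. Chebyshev then gives $o_p(n_g^{-1/2})$. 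For the auxiliary piece the summand carries $K_h$, so its second moment is of order $h^{-1}$ times a vanishing factor. This yields $o_p\{(n_{\mathrm{AL}}h)^{-1/2}\}$. The cases to handle are $\hat I_c$ versus $\Ind_c$ inside the HL and T terms, which differ on $\{|m_S^\star-c|\le|\delta_S|\}$, a set whose probability vanishes under a bounded density of $m_S^\star(X)$ near $c$, and $\hat m_S$ versus $m_S^\star$ inside $K_h$.

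For the population terms, the identity for $\Psi$ gives
\[
\mathbb P_{\mathrm{T}}(\hat m_Y\hat I_c)+\mathbb P_{\mathrm{HL}}\{\hat w_{\mathrm{HL}}(Y-\hat m_Y)\hat I_c\}-\theta_1(c)=B_S(c)+\mathbb E_{\mathrm{HL}}\{(w_{\mathrm{HL}}-\hat w_{\mathrm{HL}})(\hat m_Y-m_Y^\star)\hat I_c\}.
\]
The product term is $o(a_n)$ by Cauchy--Schwarz and the product-rate assumption. The population auxiliary term equals $-\mathbb E_{\mathrm{T}}\{m_Y^\star K_h(c-\hat m_S)\delta_S\}$ up to terms involving $(\hat w_{\mathrm{AL}}-w_{\mathrm{AL}})\delta_S$ and $(\hat m_Y-m_Y^\star)\delta_S$, each weighted by $K_h$. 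These are products of errors localized near $c$, and I would bound them by $h^{-1}\cdot$ rate $\cdot$ rate over a set of probability $O(h+\|\delta_S\|_\infty)$. The assumptions should make this $o(a_n)$.

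\textbf{Main obstacle.} The key step is the smoothing-bias bound
\[
B_S(c)-\mathbb E_{\mathrm{T}}\{m_Y^\star K_h(c-\hat m_S)\delta_S\}=o(a_n).
\]
I would write it through the conditional law of $m_S^\star(X)$ given $\delta_S$-level information. Let $G(t)=\mathbb E_{\mathrm T}\{m_Y^\star\mid m_S^\star=t\}f(t)$, where $f$ is the density of $m_S^\star(X)$. Then
\[
B_S(c)\approx -\int_{c-\delta}^{c}G(t)\,dt\approx -G(c)\delta,
\]
with a second-order error $O(\delta^2)$. The kernel term approximates $G(c)\delta$ up to $O(h^{\beta})$ bias from the higher-smoothness assumption (Assumption~\ref{ass:higher_smoothness}) plus $O(\|\delta_S\|^2/h)$ from Taylor-expanding $K_h(c-m_S^\star-\delta_S)$. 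Making this rigorous with $\delta_S$ varying in $x$ is the delicate point. I would handle it by Taylor expanding the indicator difference and the kernel jointly in $\delta_S(x)$ and integrating against the density of $m_S^\star$, following Appendix~\ref{methodintuition}. The bandwidth conditions should give $h^{\beta}\|\delta_S\|+\|\delta_S\|_\infty^2/h=o(a_n)$.

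For the first variance step, the oracle auxiliary average uses $K_h(c-m_S^\star)$, so its centered version is exactly $\mathbb P_{\mathrm{AL},n}\psi_{\mathrm{AL},1}(c)$, which has mean zero because $\mathbb E(S-m_S^\star\mid X)=0$. Summing over folds yields the stated expansion.

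For normality, the three samples are independent and each average is an i.i.d. sum. Under $s_{1,n}\asymp a_n$ the remainder is $o_p(s_{1,n})$. I would verify Lyapunov for the triangular array $\psi_{\mathrm{AL},1,h}$: its third absolute moment is $O(h^{-2})$ while its variance is of order $h^{-1}$, so the Lyapunov ratio is $O\{(n_{\mathrm{AL}}h)^{-1/2}\}\to0$. The other two arrays are bounded and fixed. Combining the independent components by characteristic functions gives $\mathcal N(0,1)$.
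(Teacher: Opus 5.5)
\textbf{Overall structure.} Your architecture is the paper's: the exact ratio identity, foldwise conditioning with conditional-Chebyshev bounds for the centered terms, the $\Psi$ identity leaving $B_S(c)$ plus the HL product bias, the AL population term equal to $-\mathbb E_{\mathrm T}\{m_Y^\star K_h(c-\hat m_S)\delta_S\}$ up to an $O_p\{(\bar r_{w,\mathrm{AL}}+\bar r_Y)\bar r_S\}$ product, and a Lindeberg/Lyapunov CLT. Your localized second-moment bound for the shifted-kernel empirical term is fine, and even a bit sharper than the paper's.

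\textbf{The gap.} It is at the step you flag yourself, and the route you sketch would not close it.

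First, the sign is reversed. For constant $\delta>0$, $\hat I_c-\Ind_c=\Ind\{c-\delta\le Z<c\}\ge0$, so $B_S(c)\approx+G(c)\delta$. With your sign, the quantity you need to be $o(a_n)$ would instead be $\approx-2G(c)\delta$.

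More substantively, once $\delta_S$ varies with $x$ the first-order object is not $G(c)$ times a scalar. It is $f_Z(c)\,\mathbb E_{\mathrm T}\{m_Y^\star\delta_S\mid Z=c\}$, the density at $c$ of the signed measure $A\mapsto\mathbb E_{\mathrm T}[m_Y^\star\delta_S\Ind\{Z\in A\}]$. A second-order remainder for $B_S(c)$ therefore depends on the joint law of $(m_S^\star(X),\delta_S(X))$. The marginal density of $m_S^\star$ only yields $|B_S(c)|=O_p(\bar r_S)$, and an indicator cannot be Taylor-expanded pointwise.

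This missing ingredient is exactly part (b) of Assumption~\ref{ass:rate}. Let $\mu_u^{(1)}$ be the density of $A\mapsto\mathbb E_{\mathrm T}[m_Y^\star\delta_S\Ind\{Z+u\delta_S\in A\}]$. The fundamental theorem of calculus along the path $u\mapsto Z+u\delta_S$ gives $B_S(c)=\int_0^1\mu_u^{(1)}(c)\,du$. Path variation then costs at most $\tfrac12\sup_u|\partial_u\mu_u^{(1)}(c)|=O_p(r_{S,2}^2)$. Lemma~\ref{lem:primitive_score_path} gives a primitive sufficient condition via the conditional density of $Z$ given $\delta_S(X)$.

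\textbf{The kernel expansion.} Taylor-expanding $K_h(c-m_S^\star-\delta_S)$ around $K_h(c-m_S^\star)$ does not fit these assumptions, for two reasons.
\begin{enumerate}
\item It forces you to smooth the $u=0$ density $\mu_0^{(1)}$, whose smoothness is not assumed; only $\mu_1^{(1)}$'s is.
\item It produces an extra $O_p(\bar r_S^2/h)$ term, and your requirement $\bar r_S^2/h=o(a_n)$ is not implied by part (a) of Assumption~\ref{ass:rate}. For example, take $a_n\asymp(n_{\mathrm{AL}}h)^{-1/2}$, $h=n_{\mathrm{AL}}^{-1/6}$ and $\bar r_S\asymp h^{3/2}/\log n_{\mathrm{AL}}$. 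This meets every listed rate given sufficiently accurate $\hat m_Y$ and $\hat w_{\mathrm{AL}}$, yet $\bar r_S^2/h\gg a_n$.
\end{enumerate}

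The paper never moves the kernel off the fitted score. The identity $\mathbb E_{\mathrm T}\{m_Y^\star K_h(c-\hat m_S)\delta_S\}=\int K_h(c-t)\,d\nu_1^{(1)}(t)$ is exact, so the symmetric kernel smooths $\mu_1^{(1)}$. The resulting bias is $-\tfrac12h^2\mu_2(K)\partial_t^2\mu_1^{(1)}(c)+o_p(h^2r_{S,2})$. The tail piece is $o_p(h^2r_{S,2})$ because $\|\nu_1^{(1)}\|_{\mathrm{TV}}=O_p(r_{S,2})$. Comparing with $\int_0^1\mu_u^{(1)}(c)\,du$ then yields the score-drift remainder $O_p(r_{S,2}^2+h^2r_{S,2})=o_p(a_n)$ directly.

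\textbf{Minor point on the CLT.} Normalize the pooled three-sample array by $s_{1,n}(c)$ and use $s_{1,n}(c)\gtrsim(n_{\mathrm{AL}}h)^{-1/2}$, rather than a component-wise Lyapunov ratio. The AL variance alone is not assumed to be of exact order $h^{-1}$.
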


The expansion separates uncertainty from the three independent samples. Its
nonstandard component is the auxiliary-source term, which uses score residuals
near $c$ to cancel $B_S(c)$, the error induced by replacing $m_S^\star$ with
$\hat m_S$ inside the indicator. Under the two parts of
Assumption~\ref{ass:rate},
the cancellation remainder is
$O_p(r_{S,2}^2+h^2r_{S,2})+o_p(a_n)=o_p(a_n)$, where
$r_{S,2}$ is the cross-fitted $L_2$ error of $\hat m_S$.
Appendix~\ref{methodintuition} gives the proof. The centered factor
$\Ind_c-\tau_1(c)$ in the HL and target terms follows from the ratio expansion
with the estimated prevalence; Appendix~\ref{sqrtp1appendix} gives the details.

The relative sizes of $n_{\mathrm{HL}}^{-1}$, $(n_{\mathrm{AL}}h)^{-1}$, and $n_{\mathrm{T}}^{-1}$ determine the normalization and the source of first-order uncertainty. If $(n_{\mathrm{AL}}h)^{-1}=o(n_{\mathrm{HL}}^{-1}+n_{\mathrm{T}}^{-1})$, the auxiliary contribution from estimating $m_S^\star$ is negligible and the limit is the same as if $m_S^\star$ were known. If $(n_{\mathrm{AL}}h)^{-1}$ dominates the other two variance orders, the convergence rate is instead $(n_{\mathrm{AL}}h)^{-1/2}$. When all three orders are comparable, all three samples contribute to the standard error. A $\sqrt{n_{\mathrm{HL}}}$ normalization arises when
$(n_{\mathrm{AL}}h)^{-1}=o(n_{\mathrm{HL}}^{-1})$ and
$n_{\mathrm{T}}^{-1}=o(n_{\mathrm{HL}}^{-1})$. More generally, the theorem quantifies the first-order uncertainty from estimating $m_S^\star$ when it enters a hard threshold.

\begin{theorem}[TPR Standard Error and Confidence Interval]
\label{thm:variance_and_ci}
Let $\hat\psi_{g,1,r}(c)$ be the cross-fitted plug-in version of
$\psi_{g,1}(c)$,
defined explicitly in Appendix~\ref{ROCAUCCorollary}. For each
$g\in\mathcal G$, let
$\bar\psi_{g,1}(c)=n_g^{-1}\sum_{r=1}^{n_g}\hat\psi_{g,1,r}(c)$.
Under Assumptions~\ref{ass:smooth}--\ref{ass:higher_smoothness} and the nondegeneracy
condition $s_{1,n}(c)\asymp a_n$,
\[
\widehat{\mathrm{SE}}_{\mathrm{TPR}}^2(c)
=\sum_{g\in\mathcal G}
\frac{1}{n_g^2}\sum_{r=1}^{n_g}
\{\hat\psi_{g,1,r}(c)-\bar\psi_{g,1}(c)\}^2
\]
satisfies
$\widehat{\mathrm{SE}}_{\mathrm{TPR}}(c)/s_{1,n}(c)\to_p1$. Writing $z_q$ for
the $q$th quantile of $\mathcal N(0,1)$, a pointwise $(1-\alpha)$ confidence
interval is $\hat\tau_1(c)\pm
z_{1-\alpha/2}\widehat{\mathrm{SE}}_{\mathrm{TPR}}(c)$.
\end{theorem}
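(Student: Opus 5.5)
The argument reduces the claim to a separate consistency statement for each of the three samples. Because the samples are independent, $s_{1,n}^2(c)=\sum_g n_g^{-1}\sigma_g^2$ with $\sigma_g^2=\Var_g\{\psi_{g,1}(c)\}$. Each summand of $\widehat{\mathrm{SE}}^2_{\mathrm{TPR}}(c)$ has the form $n_g^{-1}\hat\sigma_g^2$, where $\hat\sigma_g^2$ is the empirical variance of the $\hat\psi_{g,1,r}(c)$. It therefore suffices to prove
\[
n_g^{-1}\,|\hat\sigma_g^2-\sigma_g^2| = o_p(a_n^2)\qquad\text{for each } g\in\mathcal G .
\]
Summing over $g$ then gives $\widehat{\mathrm{SE}}^2_{\mathrm{TPR}}(c)/s_{1,n}^2(c)\to_p1$, using the nondegeneracy condition $s_{1,n}(c)\asymp a_n$.

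For the HL and T terms the natural normalization is $O(1)$ and $n_g^{-1}\le a_n^2$, so $\hat\sigma_g^2-\sigma_g^2=o_p(1)$ is enough. For the AL term, $\sigma_{\mathrm{AL}}^2\asymp h^{-1}$ while $n_{\mathrm{AL}}^{-1}h^{-1}\le a_n^2$, so we need $h(\hat\sigma_{\mathrm{AL}}^2-\sigma_{\mathrm{AL}}^2)=o_p(1)$.

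For each $g$ I would split the error into two parts:
\[
\hat\sigma_g^2-\sigma_g^2=\Bigl(\hat\sigma_g^2-\tilde\sigma_g^2\Bigr)+\Bigl(\tilde\sigma_g^2-\sigma_g^2\Bigr),
\]
where $\tilde\sigma_g^2$ is the empirical variance of the oracle values $\psi_{g,1}(c)(O^g_r)$.

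\textbf{Oracle part.} The term $\tilde\sigma_g^2-\sigma_g^2$ is handled by a weak law of large numbers, since $\psi_{g,1}$ is bounded for HL and T. This uses the bounded density ratios from Assumption~\ref{ass:smooth}(a) and $Y,m_Y^\star\in[0,1]$. For AL the argument is a triangular-array WLLN: $h\,\psi_{\mathrm{AL},1}^2$ has mean $O(1)$ by a change of variables $t=m_S^\star(x)$, using a bounded density of $m_S^\star(X)$ near $c$. Its variance is $O(h^2\cdot h^{-3})/n_{\mathrm{AL}}=O\{(n_{\mathrm{AL}}h)^{-1}\}\to0$, because $a_n\to0$ forces $n_{\mathrm{AL}}h\to\infty$.

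\textbf{Plug-in part.} By Cauchy–Schwarz,
\[
|\hat\sigma_g^2-\tilde\sigma_g^2|\lesssim \Delta_g+\Delta_g^{1/2}\tilde\sigma_g,
\qquad \Delta_g=n_g^{-1}\sum_r(\hat\psi_{g,1,r}-\psi_{g,1,r})^2 .
\]
So it suffices to show $\Delta_g=o_p(1)$ for HL and T, and $h\Delta_{\mathrm{AL}}=o_p(1)$. Conditioning on the training folds, cross-fitting turns $\Delta_g$ into an $L_2(\mathbb P_g)$ norm of the nuisance error plus a negligible fluctuation.

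For HL and T, the error in $\hat\psi_{g,1}$ is controlled by four quantities:
\begin{itemize}
\item $\|\hat m_Y-m_Y^\star\|$ and $\|\hat w_{\mathrm{HL}}-w_{\mathrm{HL}}\|$;
\item $|\hat p_1-p_1|+|\hat\tau_1(c)-\tau_1(c)|$, which is $o_p(1)$ by Theorem~\ref{thm:asymptotic_normality_tpr};
\item the indicator discrepancy $\mathbb E_g|\hat I_c-\Ind_c|$.
\end{itemize}
The indicator discrepancy is bounded by $\mathbb P_g(|m_S^\star-c|\le\|\delta_S\|_\infty)+\mathbb P_g(|\delta_S|>\epsilon)$. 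This tends to zero by the margin condition and consistency of $\hat m_S$, using the change of measure through $w_{\mathrm{HL}}$.

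For AL, the difference $h\{\hat\psi_{\mathrm{AL}}^2-\psi_{\mathrm{AL}}^2\}$ has three sources:
\begin{itemize}
\item weight and regression errors $(\hat w_{\mathrm{AL}}-w_{\mathrm{AL}})$ and $(\hat m_Y-m_Y^\star)$, multiplied by $hK_h^2=O(h^{-1})$ mass on a set of probability $O(h)$;
\item the residual change $\delta_S$;
\item the kernel shift $K_h(c-\hat m_S)-K_h(c-m_S^\star)$. By Lipschitz continuity of $K$ this is at most $h^{-2}|\delta_S|$ on the union of the two $h$-windows.
\end{itemize}

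\textbf{Main obstacle.} I expect the kernel-shift term to be the hardest step. Its contribution to $h\Delta_{\mathrm{AL}}$ is of order $h\cdot h^{-4}\,\mathbb E\{\delta_S^2\,\Ind(\text{window})\}\lesssim h^{-2}r_{S,\infty}^2$. This requires $\|\delta_S\|_\infty=o(h)$, or alternatively $r_{S,2}^2=o(h^3)$ combined with a local boundedness argument. I would try to obtain this from the rate conditions in Assumption~\ref{ass:rate}: the cancellation bound $O_p(r_{S,2}^2)=o_p(a_n)$, together with uniform consistency, should imply $\hat m_S$ lies within $o(h)$ of $m_S^\star$ on the window. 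Failing that, I would impose this explicitly, as the appendix assumptions likely do.

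The other nuisance errors enter only through $\mathbb E_{\mathrm{AL}}[\,\cdot\,\mid$ window$]$ and are $o_p(1)$ by $L_2$ consistency and the bounded density of $m_S^\star(X)$. Centering by $\bar\psi_{g,1}$ matters only through the squared-mean difference, which is controlled by the same bounds. Finally, Slutsky's theorem combined with Theorem~\ref{thm:asymptotic_normality_tpr} gives the asymptotic coverage $1-\alpha$ of the interval.
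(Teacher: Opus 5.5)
Your proposal is correct and follows essentially the same route as the paper's proof. Both argue by per-source second-moment consistency at scale $d_{\mathrm{HL}}=d_{\mathrm{T}}=1$, $d_{\mathrm{AL}}=h^{-1}$, a triangular-array law of large numbers for the AL oracle term via an $O(h^{-3})$ fourth moment, a Cauchy--Schwarz passage from oracle to plug-in values with centering handled separately, and the kernel-shift bound $\bar r_S^2h^{-3}=o_p(h^{-1})$.

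The condition you flag as the main obstacle, $\bar r_S/h=o_p(1)$, does not follow from $r_{S,2}^2=o_p(a_n)$; it is imposed directly in part (a) of Assumption~\ref{ass:rate}. That same part also supplies the sup-norm consistency of $\hat m_Y$ and $\hat w_{\mathrm{AL}}$ that the AL step actually needs. $L_2$ consistency alone, as your closing paragraph suggests, would not control errors multiplied by $K_h^2$.
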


The FPR estimator admits an analogous three-source expansion and Gaussian
limit; Corollary~\ref{cor:pointwise_fpr} in
Appendix~\ref{ROCAUCCorollary} gives the formal statement and the corresponding
influence-function substitutions and plug-in standard error.

\subsection{ROC and AUC Inference}
Let $Z=m_S^\star(X)$, let $f_Z$ be its density under the target law, and set
$p_0=1-p_1$. The weighted target score densities are
$q_1(t)=\mathbb E_{\mathrm{T}}\{m_Y^\star(X)\mid Z=t\}f_Z(t)$ and
$q_0(t)=\mathbb E_{\mathrm{T}}\{1-m_Y^\star(X)\mid Z=t\}f_Z(t)$.
For $g\in\mathcal G$, the FPR influence terms $\psi_{g,0}(c)$ are obtained
from $\psi_{g,1}(c)$ by replacing
$m_Y^\star$ with $1-m_Y^\star$,
$Y-m_Y^\star$ with $m_Y^\star-Y$ in the HL term, $p_1$ with $p_0$, and
$\tau_1(c)$ with $\tau_0(c)$. For a fixed $u\in(0,1)$, let
$c_u$ be the unique solution to $\tau_0(c_u)=u$. Because
$\tau_1'(c)=-q_1(c)/p_1$ and $\tau_0'(c)=-q_0(c)/p_0$, the slope ratio is
$\lambda_u=\tau_1'(c_u)/\tau_0'(c_u)
=p_0q_1(c_u)/\{p_1q_0(c_u)\}$. The source-specific ROC influence terms are
$\phi_{g,u}=\psi_{g,1}(c_u)-\lambda_u\psi_{g,0}(c_u)$, with aggregate
variance $v_n^2(u)=\sum_{g\in\mathcal G}n_g^{-1}
\Var_g(\phi_{g,u})$.
The first term is the source-$g$ contribution to TPR estimation at
$c_u$; the second accounts for estimating $c_u$ by inverting the FPR
curve.

\begin{theorem}[ROC Inference at a Fixed FPR]
\label{thm:pointwise_roc}
Suppose Assumptions~\ref{ass:smooth}--\ref{ass:roc_local} hold, and
$v_n(u)\asymp a_n$. Then
\[
\widehat{\operatorname{ROC}}^{cf}(u)-\operatorname{ROC}(u)
=\sum_{g\in\mathcal G}\mathbb P_{g,n}\phi_{g,u}
+o_p\{v_n(u)\}.
\]
Moreover, $\{\widehat{\operatorname{ROC}}^{cf}(u)-
\operatorname{ROC}(u)\}/v_n(u)\Rightarrow\mathcal N(0,1)$.
\end{theorem}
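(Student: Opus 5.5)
The proof follows the usual route for quantile-type functionals. We first make the pointwise results for $\hat\tau_1$ and $\hat\tau_0$ hold locally uniformly near $c_u$. We then control the isotonic projection, and finally linearize the inversion $\hat c_u=\inf\{c:\tilde\tau_0(c)\le u\}$ by a Taylor expansion of $\tau_0$ and $\tau_1$ around $c_u$.

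\textbf{Step 1: local uniformity.} Under the local conditions in Assumption~\ref{ass:roc_local}, I would strengthen the expansion of Theorem~\ref{thm:asymptotic_normality_tpr}. Concretely, I would show that for $k\in\{0,1\}$,
\[
\hat\tau_k(c)-\tau_k(c)=\sum_{g\in\mathcal G}\mathbb P_{g,n}\psi_{g,k}(c)+o_p(a_n)
\]
uniformly over $c$ in a shrinking neighborhood $\mathcal C_n=\{|c-c_u|\le \epsilon_n\}$ on the threshold grid. I would also establish stochastic equicontinuity of the leading terms: $\sup_{c\in\mathcal C_n}|\sum_g\mathbb P_{g,n}\{\psi_{g,k}(c)-\psi_{g,k}(c_u)\}|=o_p(a_n)$.

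For the HL and T terms this follows from VC-type arguments, since the class $\{\Ind_c\}$ is monotone in $c$ and the variance of the increments is $O(\epsilon_n)$. The AL term $K_h\{c-m_S^\star\}(S-m_S^\star)$ is harder. Its increments in $c$ have variance of order $\epsilon_n^2/h^3$ relative to $a_n^2\asymp(n_{\mathrm{AL}}h)^{-1}$. I therefore need $\epsilon_n=o(h)$, which is available because $\hat c_u-c_u=O_p(a_n)$ and $a_n=o(h)$ is presumably implied by the bandwidth conditions. I expect this equicontinuity of the kernel-localized auxiliary term, together with the uniformity of the $B_S(c)$ cancellation remainder over $\mathcal C_n$, to be the main obstacle. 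The approach I would try first is a chaining bound over a grid of mesh $o(a_n)$, using the Lipschitz property of $K$, with the remainder bounds from Appendix~\ref{methodintuition} verified to hold uniformly because their constants depend only on uniform bounds on $q_1$, $q_0$ and $f_Z$ near $c_u$.

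\textbf{Step 2: projection and consistency.} Since $\tau_0$ and $\tau_1$ are strictly decreasing near $c_u$ (because $q_0(c_u)>0$), the raw estimates are monotone up to $O_p(a_n)$ on grid points separated by more than $a_n$. I would use the max–min representation of isotonic regression to show $\sup_{c\in\mathcal C_n}|\tilde\tau_k(c)-\hat\tau_k(c)|=o_p(a_n)$, assuming the grid spacing is $o(a_n)$ or interpolation error is $o(a_n)$ by smoothness. Uniform consistency of $\tilde\tau_0$ and strict monotonicity then give $\hat c_u\to_p c_u$.

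\textbf{Step 3: linearizing the inversion.} By the definition of $\hat c_u$ and the continuity of the interpolated curve, $\tilde\tau_0(\hat c_u)=u+o_p(a_n)$. Writing $\tilde\tau_0(\hat c_u)=\tau_0(\hat c_u)+\Delta_0+o_p(a_n)$, where $\Delta_k=\sum_g\mathbb P_{g,n}\psi_{g,k}(c_u)$, and expanding, we get
\[
\tau_0'(c_u)(\hat c_u-c_u)=-\Delta_0+o_p(a_n),
\]
so $\hat c_u-c_u=O_p(a_n)$, which justifies localization to $\mathcal C_n$. Then
\[
\widehat{\mathrm{ROC}}^{cf}(u)-\mathrm{ROC}(u)=\tilde\tau_1(\hat c_u)-\tau_1(c_u)=\Delta_1+\tau_1'(c_u)(\hat c_u-c_u)+o_p(a_n)=\Delta_1-\lambda_u\Delta_0+o_p(a_n),
\]
which equals $\sum_g\mathbb P_{g,n}\phi_{g,u}+o_p(a_n)$. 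By $v_n(u)\asymp a_n$, the remainder is $o_p\{v_n(u)\}$.

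\textbf{Step 4: Gaussian limit.} The leading term is a sum of three independent i.i.d. averages. I would apply the Lindeberg–Feller CLT to the triangular array $n_g^{-1}\phi_{g,u}/v_n(u)$. The only nontrivial check is the AL component, where $|\phi_{\mathrm{AL},u}|\lesssim h^{-1}$. The Lindeberg condition reduces to $(n_{\mathrm{AL}}h)^{-1}\to0$, exactly as in Theorem~\ref{thm:asymptotic_normality_tpr}, and delivers the $\mathcal N(0,1)$ limit.
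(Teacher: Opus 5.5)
\textbf{There is a genuine gap: how you localize around $c_u$.} The rest of your plan matches the paper: first-order equivalence of the projected curves near $c_u$, a Bahadur representation for $\hat c_u$ from inverting $\tilde\tau_0$, the slope ratio $\lambda_u$ from a Taylor expansion of $\tau_1$, and the three-sample Lindeberg argument of Lemma~\ref{lem:three_sample_clt}.

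\textbf{Step~3 is circular.} You expand $\tilde\tau_0(\hat c_u)$ using your window result on $\{|c-c_u|\le\epsilon_n\}$. That window must have width $o(h)$ for your AL equicontinuity bound. You then use the resulting rate $\hat c_u-c_u=O_p(a_n)$ to justify that $\hat c_u$ lies in the window. But Step~2 only gives $\hat c_u\to_p c_u$, which does not place $\hat c_u$ in a window of width $o(h)$. The paper breaks the circle by bracketing at deterministic points. Once local equivalence holds uniformly on $\{|t-c_u|\le Ma_n\}$, it chooses $C_\varepsilon$ with $\mathbb P\{|\hat\tau_0(c_u)-\tau_0(c_u)|\le C_\varepsilon a_n\}\ge1-\varepsilon$ and $M>2C_\varepsilon/|\tau_0'(c_u)|$. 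On that event, $\tilde\tau_0(c_u-Ma_n)>u>\tilde\tau_0(c_u+Ma_n)$, and monotonicity of $\tilde\tau_0$ traps $\hat c_u$. Only then is the expansion applied at the random point. Also, $a_n=o(h)$ is not implied by the basic bandwidth and rate conditions; it is imposed through $b_n=o(h)$ and $a_n=O(b_n)$ in part (b) of Assumption~\ref{ass:roc_local}.

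\textbf{Step~2 does not establish $\sup|\tilde\tau_k-\hat\tau_k|=o_p(a_n)$ on your window.} This is the more substantive problem.
\begin{itemize}
\item Isotonic projection is global. The projected value at a grid point is the raw average over the maximal pooled block containing it. Nothing in Step~1 stops such a block from extending past your window, where you have no equicontinuity and the AL kernel error decorrelates on scale $h$.
\item Your premise that the raw curves are ``monotone up to $O_p(a_n)$'' presupposes a uniform $O_p(a_n)$ error bound on a fixed neighborhood. Step~1 does not supply this. When the AL term dominates, a kernel-localized process typically misses that bound by a $\sqrt{\log(1/h)}$ factor.
\item Even granting the bound, a sup error of order $a_n$ only bounds the projection change by $O_p(a_n)$, not $o_p(a_n)$.
\end{itemize}

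\textbf{The missing idea is the two-scale block-width argument of Lemma~\ref{lem:isotonic_local}.}
\begin{itemize}
\item For every split of a pooled block, the KKT inequality $\bar y_{L:q}\le\bar y_{q+1:R}$ holds. Combined with a slope bounded away from zero and grid quasi-uniformity, it bounds the block's width by a constant times the sup error on the block.
\item Global uniform consistency, together with a fixed drop of $\tau_y$ over a fixed distance, keeps every block touching the window inside a fixed neighborhood.
\item The sup-norm rate $O_p(b_n)$, with $b_n=o(h)$, then gives block widths $O(b_n)$.
\item Reapplying the width bound with the shifted trend $\tau_y+e_{y,n}(c_u)$ and the $o_p(a_n)$ modulus over $Mb_n$-windows shrinks the widths to $o_p(a_n)$.
\end{itemize}
Both conditions are hypotheses of the theorem via part (b) of Assumption~\ref{ass:roc_local}, so you need not derive equicontinuity by chaining. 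What you do need is error control at scale $o(h)$ on a fixed neighborhood, not only on the shrinking window.

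\textbf{Interpolation also needs an argument.} The direct estimators jump in $c$ at every fitted score, so interpolation error at $c_u$ is not controlled ``by smoothness.'' It must come either from a grid-uniform expansion with equicontinuous leading terms plus $\Delta_{c,n}=o(a_n)$, or, as in Lemma~\ref{lem:grid_interpolation}, from $\Delta_{c,n}+\bar r_S=o_p(a_n)$.
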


To estimate $v_n(u)$, the slope ratio is estimated by
$\hat\lambda_u=(1-\hat p_1)\hat q_1(\hat c_u)/
\{\hat p_1\hat q_0(\hat c_u)\}$, using the target score-density estimates in
Appendix~\ref{ROCAUCCorollary}. The plug-in ROC influence values are
$\hat\phi_{g,r,u}=\hat\psi_{g,1,r}(\hat c_u)-
\hat\lambda_u\hat\psi_{g,0,r}(\hat c_u)$, with source-specific means
$\bar\phi_{g,u}=n_g^{-1}\sum_{r=1}^{n_g}\hat\phi_{g,r,u}$.
Appendix~\ref{ROCAUCCorollary} states the target score-density estimators and
sufficient conditions for $\hat\lambda_u\to_p\lambda_u$.
Here $\psi_{g,1,r}(c)$ and $\psi_{g,0,r}(c)$ are the observation-$r$
contributions from source $g$ to the TPR and FPR influence functions,
respectively. Their cross-fitted plug-in versions are given under
\emph{Plug-in influence values} in Appendix~\ref{ROCAUCCorollary}.

\begin{corollary}[ROC Standard Error and Confidence Interval at a Fixed FPR]
\label{cor:roc_se}
Under the conditions of Theorem~\ref{thm:pointwise_roc}, if additionally $\hat\lambda_u\to_p\lambda_u$ and, for every $g\in\mathcal G$ and
$y\in\{0,1\}$,
\[
\frac{1}{n_g}\sum_{r=1}^{n_g}
\left\{\hat\psi_{g,y,r}(\hat c_u)-\psi_{g,y,r}(c_u)\right\}^2
=o_p(d_g),
\]
where $d_{\mathrm{HL}}=d_{\mathrm{T}}=1$ and
$d_{\mathrm{AL}}=h^{-1}$. Then
\[
\widehat{\operatorname{SE}}_{\mathrm{ROC}}^2(u)
=\sum_{g\in\mathcal G}\frac{1}{n_g^2}
\sum_{r=1}^{n_g}\{\hat\phi_{g,r,u}-\bar\phi_{g,u}\}^2
\]
satisfies
$\widehat{\operatorname{SE}}_{\mathrm{ROC}}(u)/v_n(u)\to_p1$.
Hence a pointwise $(1-\alpha)$ confidence interval is
$\widehat{\operatorname{ROC}}^{cf}(u)
\pm z_{1-\alpha/2}\widehat{\operatorname{SE}}_{\mathrm{ROC}}(u)$.
\end{corollary}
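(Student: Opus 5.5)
The plan is to compare, source by source, the empirical variance built from the plug-in values $\hat\phi_{g,r,u}$ with the one built from the oracle values $\phi_{g,r,u}=\psi_{g,1,r}(c_u)-\lambda_u\psi_{g,0,r}(c_u)$. I would then show that the oracle empirical variance matches $\Var_g(\phi_{g,u})$ at the right scale. Write $\hat V_g=n_g^{-1}\sum_r(\hat\phi_{g,r,u}-\bar\phi_{g,u})^2$ and let $V_g^\circ$ be its oracle analogue. Then $\widehat{\operatorname{SE}}^2_{\mathrm{ROC}}(u)=\sum_g n_g^{-1}\hat V_g$ and $v_n^2(u)=\sum_g n_g^{-1}\Var_g(\phi_{g,u})$. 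Since $v_n(u)\asymp a_n$ and each summand satisfies $n_g^{-1}d_g\lesssim a_n^2$, it suffices to prove two things for every $g$:
\[
\hat V_g-V_g^\circ=o_p(d_g)
\quad\text{and}\quad
V_g^\circ-\Var_g(\phi_{g,u})=o_p(d_g).
\]
These give $\widehat{\operatorname{SE}}^2_{\mathrm{ROC}}(u)-v_n^2(u)=o_p(a_n^2)=o_p\{v_n^2(u)\}$, and taking square roots yields the ratio claim.

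\textbf{Step 1 (plug-in to oracle).} Decompose
\[
\hat\phi_{g,r,u}-\phi_{g,r,u}=\{\hat\psi_{g,1,r}(\hat c_u)-\psi_{g,1,r}(c_u)\}-\hat\lambda_u\{\hat\psi_{g,0,r}(\hat c_u)-\psi_{g,0,r}(c_u)\}-(\hat\lambda_u-\lambda_u)\psi_{g,0,r}(c_u).
\]
The first two pieces have empirical mean square $o_p(d_g)$ by the displayed hypothesis, using $\hat\lambda_u=O_p(1)$. For the third piece, $\hat\lambda_u-\lambda_u=o_p(1)$ is multiplied by $n_g^{-1}\sum_r\psi_{g,0,r}(c_u)^2=O_p(d_g)$. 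This last bound is trivial for HL and T, whose influence functions are bounded under the bounded density ratios. For AL it uses $\mathbb E_{\mathrm{AL}}\{K_h(c-m_S^\star)^2\}=O(h^{-1})$, which follows from the bounded score density near $c_u$ in Assumption~\ref{ass:roc_local}, together with Markov's inequality. Hence $n_g^{-1}\sum_r(\hat\phi_{g,r,u}-\phi_{g,r,u})^2=o_p(d_g)$. The elementary inequality $|\hat V_g-V_g^\circ|\le \Delta+2\sqrt{\Delta V_g^\circ}$, where $\Delta$ is this mean-square difference, then gives $o_p(d_g)$, provided $V_g^\circ=O_p(d_g)$.

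\textbf{Step 2 (oracle LLN).} For HL and T, $\phi_{g,u}$ is a fixed bounded function, so the ordinary law of large numbers applies. For AL, $\phi_{\mathrm{AL},u}$ depends on $h$, so I would use a triangular-array weak law. One has $\mathbb E(\phi^2)\asymp h^{-1}$ by nondegeneracy, while $\Var(\phi^2)\le\mathbb E\phi^4=O(h^{-3})$ since $K$ is bounded. Chebyshev's inequality then gives the relative error
\[
O_p\{(n_{\mathrm{AL}}h)^{-1/2}\},
\]
which is $o_p(1)$ because $n_{\mathrm{AL}}h\to\infty$ (implicit in $a_n\to0$). The squared empirical mean is handled the same way, since $\mathbb E\phi_{\mathrm{AL},u}=0$.

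\textbf{Main obstacle.} I expect the main difficulty to be the AL source, where all the moment bounds must be tracked at the $h^{-1}$ scale rather than $O(1)$. This is exactly why the hypothesis uses $d_{\mathrm{AL}}=h^{-1}$, and why I will check $\mathbb E_{\mathrm{AL}}K_h^2=O(h^{-1})$ and $\mathbb E_{\mathrm{AL}}K_h^4=O(h^{-3})$ carefully from the local density assumption. The pivotal interval then follows from Theorem~\ref{thm:pointwise_roc} and Slutsky's lemma.
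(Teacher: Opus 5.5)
Your proposal is correct and follows the paper's proof. Both arguments use the same three-term decomposition of $\hat\phi_{g,r,u}-\phi_{g,r,u}$. The paper pairs $\hat\lambda_u-\lambda_u$ with $\hat\psi_{g,0,r}$ and invokes $n_g^{-1}\sum_r\hat\psi_{g,0,r}^2=O_p(d_g)$, whereas you pair it with the oracle value, which is an immaterial difference. Both then apply the oracle-to-plug-in second-moment comparison and the fourth-moment Chebyshev law of large numbers from the proof of Theorem~\ref{thm:variance_and_ci}.

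One small slip: the nondegeneracy condition $v_n(u)\asymp a_n$ constrains only the aggregate variance, so it does not give $\mathbb E_{\mathrm{AL}}\phi_{\mathrm{AL},u}^2\asymp h^{-1}$. You do not need that lower bound, however. Chebyshev with $\mathbb E_{\mathrm{AL}}\phi_{\mathrm{AL},u}^4=O(h^{-3})$ already gives the absolute error $O_p\{(n_{\mathrm{AL}}h^3)^{-1/2}\}=o_p(h^{-1})=o_p(d_{\mathrm{AL}})$, which is all your reduction requires.
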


Under local strict monotonicity and uniform accuracy of the unprojected curves, interpolation
and isotonic projection do not change the estimator to first order at $c_u$;
part (b) of Assumption~\ref{ass:roc_local} states the required monotonicity and local stochastic-process
conditions. Simultaneous ROC bands need a uniform Gaussian approximation.

AUC has a different asymptotic structure. For the scalar one-step estimator introduced
in Section~\ref{Method}, the AUC functional has a bounded derivative with respect to score
perturbations, so its auxiliary-source influence term has variance of order
one rather than $h^{-1}$. Its contribution to the estimator variance is
therefore of order $n_{\mathrm{AL}}^{-1}$, and AUC inference does not
require a Gaussian-process limit for the full ROC curve. Write
$A=\operatorname{AUC}$. Appendix~\ref{FPRROCAUC} gives the construction of
$\widehat{\operatorname{AUC}}_{\mathrm{os}}^{cf}$, and
Appendix~\ref{ROCAUCCorollary} gives its source-specific influence functions
$\varphi_g^A$, $g\in\mathcal G$, and plug-in variance estimator. The three
influence functions correspond to gold-outcome estimation, surrogate-derived model
estimation, and target pairwise averaging. Let
$\rho_n=(n_{\mathrm{HL}}^{-1}+n_{\mathrm{AL}}^{-1}+n_{\mathrm{T}}^{-1})^{1/2}$
and $\sigma_{A,n}^2=\sum_{g\in\mathcal G}n_g^{-1}
\Var_g(\varphi_g^A)$.

\begin{theorem}[Asymptotic Normality of the One-Step AUC Estimator]
\label{thm:auc_clt}
Suppose part (a) of Assumption~\ref{ass:smooth} and
Assumption~\ref{ass:auc_path} hold and
$\sigma_{A,n}\asymp\rho_n$. Then
\[
\widehat{\operatorname{AUC}}_{\mathrm{os}}^{cf}-A
=\sum_{g\in\mathcal G}\mathbb P_{g,n}\varphi_g^A+o_p(\sigma_{A,n}).
\]
Moreover, $\{\widehat{\operatorname{AUC}}_{\mathrm{os}}^{cf}-A\}/
\sigma_{A,n}\Rightarrow\mathcal N(0,1)$.
\end{theorem}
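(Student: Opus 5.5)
We need to prove Theorem~\ref{thm:auc_clt}. The one-step AUC estimator has not been shown explicitly, so we assume it has the standard form: a plug-in pairwise target average plus HL and AL corrections.

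The target AUC can be written as
\[
A=\frac{\mathbb E_{\mathrm T}\otimes\mathbb E_{\mathrm T}\big[m_Y^\star(X)\{1-m_Y^\star(X')\}\,\Ind\{m_S^\star(X)>m_S^\star(X')\}\big]}{p_1p_0}.
\]
Write $N(m_Y,m_S)$ for the numerator. The one-step estimator is then built from three pieces:
\begin{itemize}
\item the cross-fitted plug-in $N(\hat m_Y^{(-k)},\hat m_S^{(-k)})$, a second-order U-statistic over target folds;
\item an HL correction weighted by $\hat w_{\mathrm{HL}}$ applied to the gold residual $Y-\hat m_Y$, times the derivative of $N$ in $m_Y$;
\item an AL correction weighted by $\hat w_{\mathrm{AL}}$ applied to $S-\hat m_S$, times the pathwise derivative of $N$ in $m_S$.
\end{itemize}
The whole is divided by the estimated $\hat p_1(1-\hat p_1)$.

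The key structural fact is that the map $m_S\mapsto N(m_Y^\star,m_S)$ is Fréchet differentiable in $L_2$. Its derivative is $\delta\mapsto\mathbb E_{\mathrm T}[\delta(X)\{m_Y^\star(X)\,q_0(Z)-\{1-m_Y^\star(X)\}\,q_1(Z)\}]$, with $Z=m_S^\star(X)$ and $q_0,q_1$ the weighted score densities. This holds because averaging the indicator over the second copy $X'$ smooths it against the density $f_Z$. So unlike TPR, no kernel is needed, and the AL influence function has bounded variance. This is why the rate is $\rho_n$ rather than $a_n$, and we take this regularity to be part of Assumption~\ref{ass:auc_path}.

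The plan proceeds in four steps.

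\textbf{Step 1: fold-wise decomposition.} Fix a fold $k$ and condition on the training data. Write the estimator minus $A$ as a sum of three parts:
\begin{itemize}
\item (i) centered empirical-process terms of the form $(\mathbb P_{g,\mathcal{\cdot}_k}-\mathbb P_g)$ applied to the estimated influence functions;
\item (ii) a population "drift" term obtained by evaluating the one-step functional at the estimated nuisances;
\item (iii) the linearization of the ratio.
\end{itemize}
For the target pairwise average, apply the Hoeffding decomposition. Its projection gives $\varphi_{\mathrm T}^A$, and the degenerate part is $O_p(n_{\mathrm T}^{-1})=o_p(\rho_n)$.

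\textbf{Step 2: show the drift is second order.} The HL drift is bounded by the product of the density-ratio error and the $m_Y$ error in $L_2(\mathbb P_{\mathrm{HL}})$, exactly as in the identity for $\Psi(m,w;q)$. The AL drift equals the exact difference $N(m_Y^\star,\hat m_S)-N(m_Y^\star,m_S^\star)$ minus its linearization, plus a cross term in $(\hat w_{\mathrm{AL}}-w_{\mathrm{AL}})(\hat m_S-m_S^\star)$.

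\textbf{Step 3: bound the exact difference minus its linearization.} This is the main obstacle. Define $D(t)=\mathbb E_{\mathrm T}[\{1-m_Y^\star(X')\}\Ind\{m_S^\star(X')<t\}]$; its derivative is $q_0$. Then one coordinate of the difference is
\[
\mathbb E_{\mathrm T}\big[m_Y^\star(X)\{D(\hat m_S(X))-D(m_S^\star(X))\}\big].
\]
The catch is that the comparison on the second copy also moves with $\hat m_S(X')$. I would split the indicator difference into two pieces, one where only $X$'s score moves and one where only $X'$'s score moves, leaving a cross remainder. Each single-argument piece is handled by a Taylor expansion using Lipschitz continuity of $q_0$ and $q_1$, giving $O(\|\delta_S\|_2^2)$. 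The cross remainder is nonzero only when $|Z-Z'|\le|\delta_S(X)|+|\delta_S(X')|$. A margin-type bound from the bounded joint density of $(Z,Z')$ then gives $O(\|\delta_S\|_\infty\|\delta_S\|_1)$, which needs the rate condition in Assumption~\ref{ass:auc_path} to be $o(\rho_n)$. If that fails, I would assume $\|\delta_S\|_\infty\|\delta_S\|_2=o(\rho_n)$ directly.

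\textbf{Step 4: ratio expansion and CLT.} Centered terms with estimated nuisances differ from those with true nuisances by terms whose conditional variance is $o_p(1)/n_g$, using $L_2$ consistency and bounded weights. Chebyshev's inequality conditional on the training folds, together with the finiteness of $V$, makes these $o_p(\rho_n)$.

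Next, expand the ratio: linearizing $1/\{\hat p_1(1-\hat p_1)\}$ produces the $-A(1-2p_1)$-type centering already built into $\varphi_g^A$.

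Finally, independence across the three samples and the Lindeberg CLT applied to $\sum_g\mathbb P_{g,n}\varphi_g^A$, scaled by $\sigma_{A,n}\asymp\rho_n$, yields asymptotic normality. Lindeberg holds because the influence functions have bounded variance and bounded moments. The nondegeneracy condition lets us convert $o_p(\rho_n)$ remainders into $o_p(\sigma_{A,n})$.
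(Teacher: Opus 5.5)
Your architecture matches the paper's: Hoeffding projection of the cross-fitted target pairwise term, cancellation of the outcome and score directions by the two source corrections, product-rate remainders, a ratio expansion that produces the $-A(p_0-p_1)$ centering, and a Lindeberg CLT with bounded envelopes. The step that fails is Step~3, the one you call the main obstacle. A margin argument bounds the \emph{absolute} cross remainder, and that quantity is only first order. Take $\delta_S\equiv\epsilon$ and $a=Z-Z'$. The cross term $2\Ind\{a>0\}-\Ind\{a>-\epsilon\}-\Ind\{a>\epsilon\}$ equals $-1$ on $(-\epsilon,0]$ and $+1$ on $(0,\epsilon]$. Its expected absolute value is therefore of exact order $\epsilon$, not $\|\delta_S\|_\infty\|\delta_S\|_1=\epsilon^2$. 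In general the event $\{|Z-Z'|\le|\delta_S(X)|+|\delta_S(X')|\}$ yields only $O(\bar r_S)$, and making that $o(\rho_n)$ would need a parametric sup-norm rate. Your fallback $\|\delta_S\|_\infty\|\delta_S\|_2=o(\rho_n)$ does not help, because no such product comes out of the margin bound; it would also add a hypothesis not in the theorem. The expectation is second order only through cancellation across the two sides of $D=0$. That cancellation needs smoothness of the weighted density of $D$ \emph{jointly} with $\Delta_{S,k}=\delta_S(X)-\delta_S(X')$, and the paper notes after Assumption~\ref{ass:auc_path} that smoothness of the marginal density of $D$ alone is not enough. The paper does not derive this remainder at all. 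Part~(a) of Assumption~\ref{ass:auc_path} posits twice pathwise differentiability of $(s,t)\mapsto\eta(M+s\delta_Y^{(-k)},Z+t\delta_S^{(-k)})$ with remainder $O_p(r_{Y,2}^2+r_{Y,2}r_{S,2}+r_{S,2}^2)$, and part~(b) makes this $o_p(\rho_n)$. Fréchet differentiability with an $o(\|\delta_S\|_2)$ remainder would not suffice; a quadratic remainder is what the product rates need. Replace Step~3 by a direct appeal to part~(a).

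Your Step~2 drift accounting is also incomplete, and the missing terms are exactly what part~(b) is built to control.
\begin{itemize}
\item The plug-in is bilinear in $m_Y$, so $\eta(M+\delta_Y,Z)-\eta(M,Z)-D_Y\eta[\delta_Y]=-\mathbb E_{\mathrm{T}\times\mathrm{T}}[\delta_Y(X)\delta_Y(X')\kappa\{Z(X),Z(X')\}]$. This is of order $r_{Y,2}^2$, not a density-ratio product.
\item The plug-in is evaluated jointly at $(\hat m_Y,\hat m_S)$, which creates an $r_{Y,2}r_{S,2}$ interaction that your $m_Y^\star$-fixed expansion omits.
\item More importantly, the corrections cannot use the true gradients $\Gamma_Y=G_0(Z)-G_1(Z)$ and $\Gamma_S=Mq_0(Z)-(1-M)q_1(Z)$. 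The estimator uses pilots $\hat G_{y,k}$ and $\hat q_{y,k}$, the latter a kernel estimate with bandwidth $b$. Both are built from the held-out target fold $\mathcal L_k$ and composed with $\hat m_S^{(-k)}$.
\end{itemize}
Consequently the population mean of the fold-$k$ HL correction is $-D_Y\eta[\delta_{Y,k}]-\mathbb P_{\mathrm T}[\delta_{Y,k}(\hat\Gamma_{Y,k}-\Gamma_Y)]-\mathbb P_{\mathrm{HL}}(\delta_{w,\mathrm{HL},k}\delta_{Y,k}\hat\Gamma_{Y,k})$, and the AL correction is analogous. Controlling the middle terms requires $r_{\Gamma_Y,2}r_{Y,2}=o_p(\rho_n)$ and $r_{\Gamma_S,2}r_{S,2}=o_p(\rho_n)$. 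Your Step~4 centered terms additionally need pilot consistency. The conditioning $\sigma$-field for the held-out HL and AL folds must include $\mathcal L_k$, which is harmless because the three samples are independent. Finally, $L_2$ consistency alone does not give $L_2(\mathbb P_{\mathrm T})$ convergence of the fitted first Hoeffding projection to its oracle counterpart, because the ordering kernel is discontinuous. The paper handles this with $\mathbb P(Z=Z')=0$ and the bound $\mathbb P(|Z-Z'|\le2\epsilon)+2\mathbb P(|\hat m_S^{(-k)}(X)-Z|>\epsilon)$.
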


\paragraph{Bandwidth Choice.} $h$ appears in the AL correction
$K_h\{c-\hat m_S(X)\}\{S-\hat m_S(X)\}$. Estimation error in $\hat m_S$
can change a hard-threshold decision only when the score is close to $c$, so the kernel uses AL residuals near $c$ to correct the resulting
error. A smaller $h$ tightens correction around $c$ but reduces effective AL observations, raising variability. A larger $h$ uses more observations, is more stable, but increases localization bias by including scores farther from $c$. The pointwise theory, therefore, requires $h\to0$, $n_{\mathrm{AL}}h\to\infty$, and $\bar r_S/h=o_p(1)$, where $\bar r_S$ is the maximum uniform error of $\hat m_S$ across the cross-fitting folds. The neighborhood must shrink, contain increasing local information,
and remain wider than the score-estimation error. The remaining joint-rate
conditions are given in Assumption~\ref{ass:rate}; fixed-$u$ ROC inference
additionally requires $a_n=o(h)$ under
Assumption~\ref{ass:roc_local}. Proposition~\ref{prop:oracle_bandwidth} formalizes this trade-off: the AL
variance is of order $(n_{\mathrm{AL}}h)^{-1}$, but the leading
localization bias $\propto h^2$. When the AL term dominates,
$h$ is chosen so that this bias is negligible relative to the standard error
while $n_{\mathrm{AL}}h$ continues to diverge; the proposition gives the
corresponding conditional benchmark.

Because $m_S^\star(X)\in[0,1]$, one reference family satisfying these rate requirements is $h_{\mathrm{ref}}=C n_{\mathrm{AL}}^{-1/4}$, $C>0$. When the three sample sizes are of the same order, and the nuisance errors are
$o_p(n_{\mathrm{AL}}^{-3/8})$, this choice satisfies the preceding rate
requirements. A prespecified $C$ controls the finite-sample width of the local
neighborhood. Sensitivity can be assessed over a range of values by reporting
both $h$ and $n_{\mathrm{AL}}h$; Simulation studies illustrate the
effect of these two quantities. 

\section{Simulation Studies}
\label{simulation}

Studies~1, 2, and 4 examine the theory under controlled conditions using correctly specified nonlinear nuisance bases and analytically known, untruncated density ratios. They compare Full with two fixed-score transport baselines: importance-weighted gold-label validation (IW-HL) and transported score calibration (TSC). Study~3 instead ablates two correction terms under estimated weights for density ratios. Each experiment uses 500 replications; proposed estimators use five-fold cross-fitting. 

DGP-A has correlated Gaussian covariates and smooth interactions; DGP-B has a symmetric bimodal target law with distinct saturating and quadratic interaction effects. Set \(h=Cn_{\mathrm{AL}}^{-1/4}\), with \(C=1\) unless varied. Complete data-generating processes (DGPs), comparator definitions, diagnostics, and supplementary results are in Appendix~\ref{sec:sim_setup_details}.


\subsection{Study 1: Sample-Size Scaling and Standard-Error Accuracy.}
\label{sim:scaling}

\begin{figure*}[t]
  \centering
  \includegraphics[width=\textwidth]{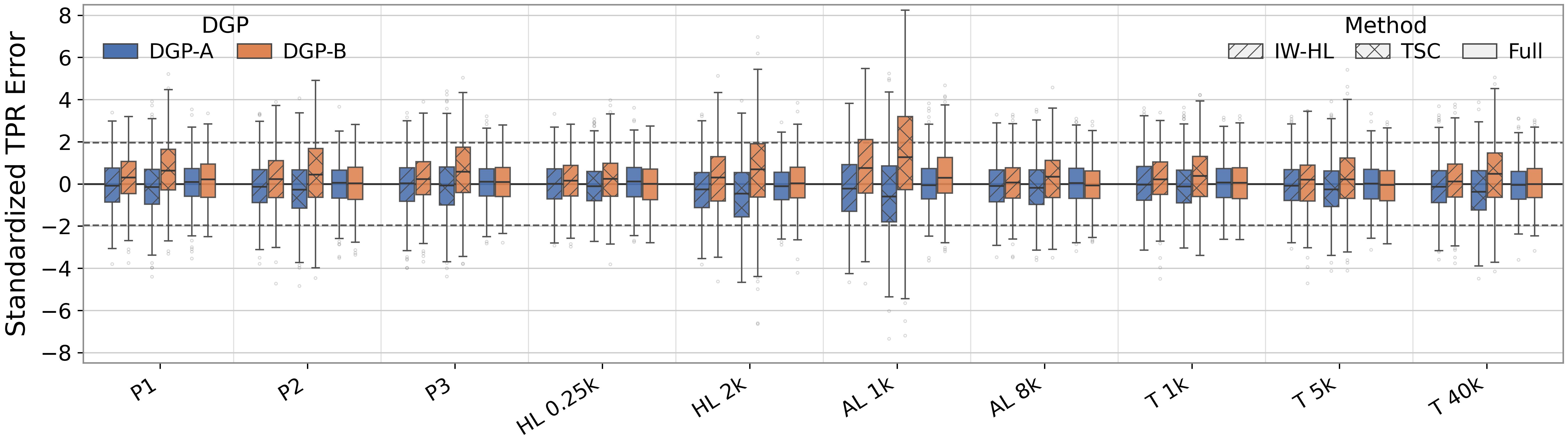}
  \caption{Study 1 Standardized TPR errors at \(c=0.5\). Dashed lines mark \(\pm1.96\), and outliers are shown.}
  \label{fig:sim_scaling_tpr}
\end{figure*}

We evaluate the three-source expansion in
Theorem~\ref{thm:asymptotic_normality_tpr}. The proportional designs are
\(\mathrm{P1}=(500,2000,10000)\),
\(\mathrm{P2}=(1000,4000,20000)\), and
\(\mathrm{P3}=(2000,8000,40000)\), where each triple is
\((n_{\mathrm{HL}},n_{\mathrm{AL}},n_{\mathrm T})\). Around P2, we also vary
one finite sample at a time:
\(n_{\mathrm{HL}}\in\{250,2000\}\),
\(n_{\mathrm{AL}}\in\{1000,8000\}\), and
\(n_{\mathrm T}\in\{1000,5000,40000\}\).

In Figure~\ref{fig:sim_scaling_tpr}, at \(c=0.5\), Full TPR coverage over P1--P3 is \(0.926\)--\(0.952\); from P1 to P3, empirical SD falls \(0.0295\to0.0155\) in DGP-A and \(0.0323\to0.0161\) in DGP-B, with mean SE tracking it (\(0.0281\to0.0151\) and \(0.0292\to0.0161\)). Across the six P1--P3-by-DGP cells, Full is closest to 0.95 coverage in every cell, has lower MSE than IW-HL in every cell, and has mean SE/SD \(0.957\), versus \(0.834\) for IW-HL and \(0.695\) for TSC. Across all 20 cells, mean coverage/SE--SD is \(0.934/0.950\) for Full, \(0.879/0.821\) for IW-HL, and \(0.805/0.701\) for TSC. TSC sometimes lowers RMSE through smoothing but has optimistic conditional SEs; the evidence here is for calibrated unconditional inference, not uniform RMSE dominance. Including the appendix FPR results, Full is closest to both 0.95 coverage and unit SE/SD in 39 of the 40 Study~1 cells at \(c=0.5\) (Figure~\ref{fig:sim_scaling_fpr}). Varying each sample exposes all three variance contributions (Table~\ref{tab:sim_scaling_sdse}); weaker cells remain in Table~\ref{tab:sim_calibration_ranges}.

\subsection{Study 2: Bandwidth Sensitivity.}
\label{sim:bandwidth}

At P2, we vary \(C\in\{0.5,0.75,1,1.5,2\}\), which gives \(h\in\{0.063,0.094,0.126,0.189,0.251\}\). IW-HL and TSC do not use \(h\) and are shown once in the Appendix Figure~\ref{fig:sim_bandwidth_tpr}, rather than being duplicated over five bandwidths. 

Larger bandwidths reduce variability but can make the analytical SE optimistic. At \(c=0.5\), \(C\leq1\), Full coverage is \(0.934\)--\(0.960\). From \(C=0.5\) to \(2\), empirical SD falls \(21.5\%\)/\(24.5\%\) in DGP-A/B, while mean SE falls faster, \(28.9\%\)/\(36.1\%\), reducing coverage to \(0.922/0.862\). Averaged over the ten Full bandwidth-by-DGP cells, coverage is \(0.930\); the \(h\)-free IW-HL and TSC references have coverage \(0.879\) and \(0.804\). Point-error rankings differ, and lower-threshold FPR contains an appendix counterexample. Thus this supports prespecified, threshold-specific variance--calibration sensitivity, not a universal \(C\) or uniform dominance.

\subsection{Study 3: Unified Correction Ablation.}
\label{ablat}

We now use \(\mathrm{P4}=(4000,16000,80000)\), simultaneous moderate two-source shift, estimated density ratios, \(h=0.5n_{\mathrm{AL}}^{-1/4}\), and the same scalar one-step AUC implementation as in the theory. We compare: (i) the plug-in estimator; (ii) HL-only, the fixed-score debiased baseline; (iii) AL-only; (iv) the full estimator; and (v) the full estimating equation with oracle nuisance functions. Omitting the HL correction also omits its prevalence correction. The main display uses TPR at \(c=0.5\), ROC at the low-FPR point \(u=0.1\), and scalar one-step AUC. Smaller-sample and bandwidth sensitivity results are reported in Appendix~\ref{app:ablattpr}.

\begin{figure}[t]
  \centering
  \includegraphics[width=\linewidth]{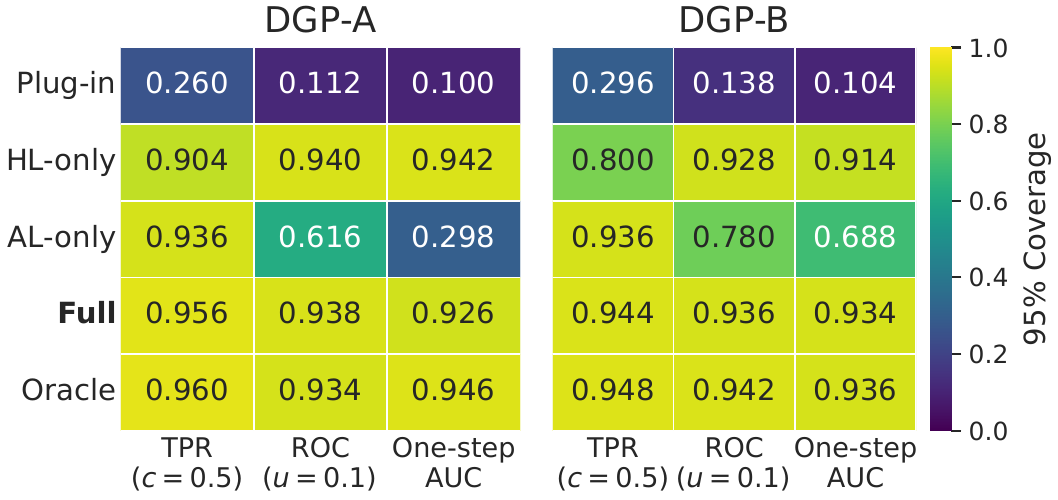}
  \caption{Study 3 empirical 95\% coverage over 500 replications. Coverage, rather than the magnitude of the performance metric, is the target of this comparison.}
  \label{fig:sim_ablation_coverage}
\end{figure}

Figure~\ref{fig:sim_ablation_coverage} shows that the plug-in standard error severely understates nuisance-estimation uncertainty: across the six DGP-by-target cells, its coverage is \(0.100\)--\(0.296\). Full coverage is \(0.926\)--\(0.956\), close to the oracle-nuisance range \(0.934\)--\(0.960\), and its mean-SE/empirical-SD ratios are \(0.927\)--\(1.036\). The partial corrections are target dependent: HL-only misses the threshold-specific AL contribution, whereas AL-only misses substantial global-score uncertainty. The complete bias, SD, SE, and coverage comparison in Appendix Table~\ref{tab:sim_ablation_full} shows why both corrections are needed when both nuisance directions are first order.

\subsection{Study 4: Fixed-\(u\) ROC and Scalar AUC Inference.}
\label{sim:roc_auc}
Using P1--P3 and \(C=1\), we evaluate
\(\operatorname{ROC}(u)\) at \(u\in\{0.1,0.2\}\) and the scalar one-step AUC.
The ROC estimator inverts the separately projected TPR/FPR curves using a
common grid-based interpolation convention and the slope-adjusted influence
function in Theorem~\ref{thm:pointwise_roc}; AUC uses
Theorem~\ref{thm:auc_clt} and the separate pilot
\(b=n_{\mathrm T}^{-1/5}\). In Appendix Figure~\ref{fig:sim_roc_auc}, Full ROC coverage is \(0.882\)--\(0.954\) and AUC coverage is
\(0.932\)--\(0.950\); the worst case is DGP-B/P2 at \(u=0.1\), while AUC
SE/SD is \(0.974\)--\(1.006\). Across all 18 cells, the mean coverage is
\(0.927\) for Full, \(0.911\) for IW-HL, and \(0.919\) for TSC. The AUC results are
comparable, where Full has a more stable average ROC coverage despite sometimes higher RMSE. The DGP-A/P2 Full Q--Q diagnostic has well-aligned centers
but visible tail deviations (Figure~\ref{fig:sim_qq_appendix}). We also provide a regularized
dictionary diagnostic regarding flexible learners in Appendix~\ref{app:flexible_ml_diagnostic}.

\section{Real Data Studies}
\label{sec:real_data}

We treat the two applications as complementary validation tests. Chatbot Arena asks whether inference remains informative and the source corrections remain consequential under a natural temporal shift. ACS-Income supplies a stricter gold-label validation: after a label-masking analysis is frozen, do the corrections move the estimates toward performance computed from revealed Target outcomes? In both studies, the HL, AL, and Target cohorts are disjoint, and Target outcomes are inaccessible during estimation. The revealed outcomes evaluate frozen finite-sample scores rather than the population model \(m_S^\star\); agreement is therefore corroborating evidence, not a bias or coverage calculation.

\paragraph{Chatbot Arena: validation under natural temporal shift.}
\label{sec:arena_temporal_study}
Chatbot Arena provides pairwise human preferences \citep{chiang2024chatbot}. We evaluate a downstream score learned from LLM-judge labels, rather than raw judge--human agreement \citep{zheng2023judging}. Exact-ID linkage to timestamped Arena-33K \citep{lmsys2023arena33k,agieai2023arena33kmirror} yields user-disjoint early HL (\(X,Y\); \(n=2{,}816\)), middle AL (\(X,S\); \(n=3{,}590\)), and later Target (\(X\); \(n=1{,}499\)) cohorts, without artificial covariate tilting. The score uses multilingual prompt/response embeddings, named metadata, and A/B-swap augmentation; the existing six-model judge panel supplies \(S\). Figure~\ref{fig:arena_temporal_main} displays the changing candidate-model mix. Appendix~\ref{app:arena_temporal_data}, Algorithm~\ref{alg:arena_temporal_workflow}, and Figure~\ref{fig:arena_temporal_shift} document linkage, masking, overlap, and label construction.

\begin{figure}[t]
\centering
\includegraphics[width=0.95\columnwidth]{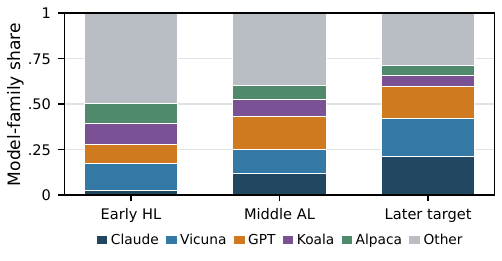}
\caption{Natural temporal shift in Chatbot Arena. Each stacked bar is the distribution of candidate-model family appearances within a fixed cohort; quantitative overlap diagnostics appear in Appendix Figure~\ref{fig:arena_temporal_shift}.}
\label{fig:arena_temporal_main}
\end{figure}

The result is informative in three respects. First, Full estimates target AUC as 0.848 (95\% CI [0.814, 0.881]): even its lower endpoint exceeds 0.80, and the interval width is only 0.067 despite two source-to-target shifts. At FPR 0.10, 0.15, and 0.20, target TPR is 0.562, 0.640, and 0.717. Second, the corrections are consequential rather than cosmetic. Full exceeds uncorrected outcome regression by 0.051 [0.020, 0.083] and the human-correction-only estimate by 0.023 [0.014, 0.031]; thus the complete construction cannot be reduced either to plug-in evaluation or to a fixed-score gold-label correction. Third, after all analyses were frozen, revealed human preferences give AUCs 0.810 and 0.812 for the cross-fitted and all-AL frozen scores. Although these are different estimands, their agreement and their point estimates above 0.80 corroborate the substantive conclusion that judge-derived discrimination persists in the later human-preference cohort; no Target \(S\) is used. Appendix Tables~\ref{tab:arena_temporal_all_auc}--\ref{tab:arena_temporal_paired} and Figures~\ref{fig:arena_temporal_roc}--\ref{fig:arena_temporal_sensitivity} report complete results and diagnostics.

\paragraph{ACS-Income.}
\label{ACS}
For eligible 2019 California ACS records, \(Y\) indicates income above \(\$50{,}000\), while \(S\) thresholds Gemini 3.5 Flash predictions at the same value. Prespecified opposing age-dependent sampling creates disjoint HL/AL/Target cohorts of size \(447/552/186{,}193\). We use five-fold cross-fitted cubic-spline logistic regression, with tuning confined to AL training folds; it ranks first by AL out-of-fold log loss in the reported exploratory six-learner sensitivity.

Here, the masked gold outcomes allow the correction direction to be empirically verified. Full estimates AUC as 0.8663 (95\% CI [0.8216, 0.9110]), versus 0.8383 for the human-correction-only ablation and 0.7074 for plug-in; despite only 447/552 labeled cases, its interval is 0.089 wide. After the reveal, the cross-fitted finite-score Target AUC is 0.8549. Full's descriptive cross-estimand discrepancy is 0.0114, 92.3\% smaller than plug-in's 0.1475 and 31.5\% smaller than the human-correction-only discrepancy of 0.0166; Full is also descriptively closest at FPR 0.10, 0.15, and 0.20. Moreover, the paired auxiliary-label correction is 0.0280 [0.0044, 0.0515], directly showing that this correction is non-negligible. This contribution remains positive across all six exploratory learners, with every learner-specific interval excluding zero, and Full is descriptively closer in five of six cases. Thus, the evidence supports a persistent auxiliary-source contribution without claiming uniform dominance. Figure~\ref{fig:acs_semisynthetic_validation} summarizes the scalar validation; Appendix Table~\ref{tab:acs_main_metrics}, Figure~\ref{fig:acs_spline_full_validation}, and Sections~\ref{supprealmain}--\ref{app:acs_learner_sensitivity} report complete metrics, diagnostics, and sensitivities.

\begin{figure}[t]
\centering
\includegraphics[width=0.95\columnwidth]{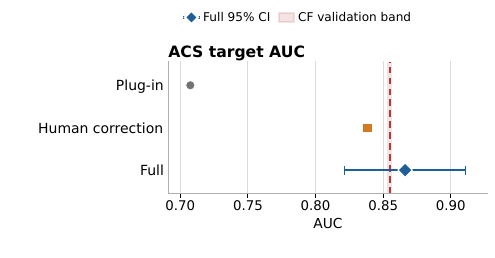}
\caption{ACS gold-label validation for the spline-3 score. Only Full carries the proposed three-source interval; the red line and band evaluate a different pooled cross-fitted finite-score object after the Target-outcome reveal. The paired Full-minus-human contrast is 0.028 [0.004, 0.051].}
\label{fig:acs_semisynthetic_validation}
\end{figure}

\section{Discussion}
\label{Discussion}

We develop a three-sample debiased framework to evaluate the predictive performance of the surrogate-derived model against the gold-standard label in the target population. Our main contribution is a source-aware debiasing principle that mitigates each first-order nuisance effect using sources, allowing for valid inference. In Chatbot Arena and ACS-Income, the real data study results corroborate our theoretical findings and illustrate our practical potential, beyond extensive simulation studies. Scope, limitations, and extensions appear in Supplementary Appendix~\ref{disclimappendix}.

\bibliography{main} 
\clearpage

\begin{appendices}

\counterwithin{assumption}{section}
\counterwithin{theorem}{section}
\counterwithin{corollary}{section}
\counterwithin{lemma}{section}
\counterwithin{prop}{section}
\counterwithin{equation}{section}
\counterwithin{figure}{section}
\counterwithin{table}{section}
\counterwithin{algorithm}{section}

The appendices are organized as follows. Appendix A states the formal assumptions and explains the regularity condition along the score-estimation path. Appendix B develops the methodological intuition and provides implementation details for density-ratio estimation and the FPR, ROC, and AUC constructions. Appendix C gives additional asymptotic results. Appendix D contains the proofs. Appendix E gives the complete simulation designs and additional numerical results. Appendix F documents the application designs and additional empirical results. Appendix G discusses practical considerations and extensions, including bandwidth sensitivity, nuisance-estimation rates, uniform inference, and computational cost.

\section{Formal Assumptions}
\label{app:formal_assumptions}

Unless otherwise stated, all limits are taken as $n\to\infty$ along a joint
asymptotic sequence with
$n_g=n_g(n)\to\infty$ for each
$g\in\{\mathrm{HL},\mathrm{AL},\mathrm{T}\}$, while the number $V$ of
cross-fitting folds remains fixed and, within each source, validation-fold
sizes differ by at most one. Thus, $n$ is a sequence index rather than one of
the three sample sizes.
Throughout the appendix, $\mathcal G=\{\mathrm{HL},\mathrm{AL},\mathrm{T}\}$.
We write $X_n\to_p X$ for convergence in probability and $X_n\Rightarrow X$
for convergence in distribution. For a positive, possibly random sequence
$r_n$, $X_n=O_p(r_n)$ means that $X_n/r_n$ is bounded in probability, whereas
$X_n=o_p(r_n)$ means $X_n/r_n\to_p0$. For positive deterministic
sequences $a_n$ and $b_n$, $a_n\asymp b_n$ means that $a_n/b_n$ is bounded
above and away from zero for all sufficiently large $n$; for positive random
sequences $A_n$ and $B_n$, $A_n\asymp_p B_n$ means
$A_n/B_n=O_p(1)$ and $B_n/A_n=O_p(1)$. For deterministic sequences $x_n$ and
$r_n>0$, $x_n=O(r_n)$ means that $|x_n|/r_n$ is eventually bounded, whereas
$x_n=o(r_n)$ means that $x_n/r_n\to0$. Arrows without a probability subscript
denote ordinary convergence of deterministic sequences. When a rate statement
is said to hold uniformly, the corresponding notation applies to the
displayed supremum or maximum.
For a probability law $\mathbb P$, write
$\|f\|_{\mathbb P,2}=\{\int f^2\,d\mathbb P\}^{1/2}$, and write
$\|f\|_r=\{\int |f(v)|^r\,dv\}^{1/r}$ for the Lebesgue $L_r$ norm,
$r\in\{1,2\}$. For a function on a set $\mathcal A$, let
$\|f\|_{\infty,\mathcal A}=\sup_{x\in\mathcal A}|f(x)|$; when the domain is
clear, write $\|f\|_\infty$. For a finite vector, $\|\cdot\|_\infty$ denotes
the maximum norm.

The assumptions are grouped by inferential target; no theorem requires every
condition below.

\subsection{Conditions for TPR and FPR at a Fixed Threshold}

\begin{assumption}[Basic Identification and Regularity Conditions]
\label{ass:smooth}
\textit{(a) Sampling, transportability, and source-support conditions.}
The three samples contain no shared observational units and are mutually independent; observations are i.i.d. within each sample. 
The target-outcome transportability equality defining $m_Y^\star$ holds, and the target covariate law is absolutely continuous with respect to both source covariate laws. The covariates $X$ have compact support. For constants $C<\infty$ and $\eta>0$, $w_{\mathrm{HL}}(X)\le C$ and $w_{\mathrm{AL}}(X)\le C$ almost surely, $\eta\le p_1\le1-\eta$, and $Y$, $S$, and the fitted regressions are uniformly bounded.

\medskip
\noindent\textit{(b) Kernel and bandwidth.}
\label{ass:bandwidth}
The kernel $K$ is a symmetric probability density with zero first moment,
finite second moment $\mu_2(K)=\int v^2K(v)\,dv$, and $\int K^2<\infty$.
It is twice continuously differentiable, and either has compact support or
$K$, $K'$, and $K''$ have exponentially decaying tails. The bandwidth $h=h_n$ is
deterministic, is fixed before evaluating the analysis folds, and satisfies
\[
h\to0,\qquad n_{\mathrm{AL}}h\to\infty.
\]

\medskip
\noindent\textit{(c) Local score-density regularity.}
\label{ass:joint_density}
Let $Z=m_S^\star(X)$. Under the target law, let $f_Z$ denote its density;
under the auxiliary law, let $f_{Z,\mathrm{AL}}$ denote its density. On a
compact neighborhood $\mathcal U$ of $c$, $f_Z$ is bounded and continuously
differentiable and $f_{Z,\mathrm{AL}}$ is bounded. The weighted
target-score densities
\begin{align*}
q_1(t)
&=\mathbb E_{\mathrm{T}}\{m_Y^\star(X)\mid Z=t\}f_Z(t),\\
q_0(t)
&=\mathbb E_{\mathrm{T}}\{1-m_Y^\star(X)\mid Z=t\}f_Z(t).
\end{align*}
have bounded continuous first derivatives on $\mathcal U$.
\end{assumption}

For each fold, write $\delta_Y^{(-k)}=\hat m_Y^{(-k)}-m_Y^\star$,
$\delta_S^{(-k)}=\hat m_S^{(-k)}-m_S^\star$, and, for
$d\in\{\mathrm{HL},\mathrm{AL}\}$,
$\delta_{w,d}^{(-k)}=\hat w_d^{(-k)}-w_d$. Define
\begin{align*}
r_{Y,2}&=\max_{1\le k\le V}\max_{g\in\mathcal G}
\|\delta_Y^{(-k)}\|_{\mathbb P_g,2},\\
r_{S,2}&=\max_{1\le k\le V}\max_{g\in\mathcal G}
\|\delta_S^{(-k)}\|_{\mathbb P_g,2},\\
r_{w,d,2}&=\max_{1\le k\le V}
\|\delta_{w,d}^{(-k)}\|_{\mathbb P_d,2},\\
\bar r_Y&=\max_{1\le k\le V}\|\delta_Y^{(-k)}\|_\infty,\\
\bar r_S&=\max_{1\le k\le V}\|\delta_S^{(-k)}\|_\infty,\\
\bar r_{w,d}&=\max_{1\le k\le V}\|\delta_{w,d}^{(-k)}\|_\infty.
\end{align*}
Recall that
$a_n=\{n_{\mathrm{HL}}^{-1}+(n_{\mathrm{AL}}h)^{-1}
+n_{\mathrm{T}}^{-1}\}^{1/2}$.

\begin{assumption}[Nuisance-Estimation Rates and Local Threshold Regularity]
\label{ass:rate}
\textit{(a) Cross-fitted nuisance consistency and remainder rates.}
The quantities defined above satisfy
\[
r_{Y,2}+r_{S,2}+\bar r_Y+\bar r_S
+\sum_{d\in\{\mathrm{HL},\mathrm{AL}\}}
\{r_{w,d,2}+\bar r_{w,d}\}=o_p(1).
\]
In addition, the following remainder-rate conditions hold:
\begin{align*}
r_{w,\mathrm{HL},2}r_{Y,2}&=o_p(a_n),\\
(\bar r_{w,\mathrm{AL}}+\bar r_Y)\bar r_S&=o_p(a_n),\\
\frac{\bar r_S}{h^2\sqrt{n_{\mathrm{AL}}}}&=o_p(a_n),\\
r_{S,2}^2+h^2r_{S,2}&=o_p(a_n),\\
\bar r_S/h&=o_p(1).
\end{align*}

\medskip
\noindent\textit{(b) Regularity along the score-estimation path.}
\label{ass:higher_smoothness}
Conditional on the fold-$k$ training data, set
$Z_{k,u}=m_S^\star(X)+u\delta_S^{(-k)}(X)$
for $u\in[0,1]$. For $a\in\{m_Y^\star,1-m_Y^\star\}$, suppose the signed
measure
\[
\nu_{k,u}^{(a)}(A)=\mathbb E_{\mathrm{T}}\!\left[
a(X)\delta_S^{(-k)}(X)
\Ind\{Z_{k,u}\in A\}\right]
\]
has a density $\mu_{k,u}^{(a)}$ on a fixed neighborhood $\mathcal U$ of
$c$. The map
$F_{k,a}(u)=\mathbb E_{\mathrm{T}}[a(X)\Ind\{Z_{k,u}\ge c\}]$ is
absolutely continuous with $F_{k,a}'(u)=\mu_{k,u}^{(a)}(c)$.

Uniformly over folds and both choices of $a$,
$u\mapsto\mu_{k,u}^{(a)}(c)$ is continuously differentiable and
\[
\sup_{0\le u\le1}|\partial_u\mu_{k,u}^{(a)}(c)|
=O_p(r_{S,2}^2).
\]
Moreover, $\mu_{k,1}^{(a)}$ is twice continuously differentiable on
$\mathcal U$ and
\[
\sup_{t\in\mathcal U}|\partial_t^2\mu_{k,1}^{(a)}(t)|
=O_p(r_{S,2}),
\]
with
\[
\sup_{\substack{s,t\in\mathcal U\\|s-t|\le h}}
|\partial_t^2\mu_{k,1}^{(a)}(s)-
\partial_t^2\mu_{k,1}^{(a)}(t)|=o_p(r_{S,2}).
\]
\end{assumption}

The consistency condition in part (a) also implies
\[
\frac{\bar r_{w,\mathrm{AL}}+\bar r_Y+\bar r_S}
{\sqrt{n_{\mathrm{AL}}h}}=o_p(a_n),
\]
because $a_n\ge(n_{\mathrm{AL}}h)^{-1/2}$. Thus this empirical-remainder
rate is a consequence of uniform consistency, not an additional assumption.

Part (b) of Assumption~\ref{ass:rate} is local and matches the two quantities
used in the proof:
variation of the path derivative is second order in the score error, while
the kernel-localization bias is $h^2$ times a quantity of order $r_{S,2}$.
A sufficient condition stated in terms of the conditional density of
the true score given the fitted score error is proved in
Lemma~\ref{lem:primitive_score_path}.

Thus Assumption~\ref{ass:smooth} collects the standard identification,
kernel, and local density conditions. The two parts of
Assumption~\ref{ass:rate} isolate what is specific to an estimated
hard-threshold rule: nuisance-rate control and regularity along the score-estimation path.

\subsection{Additional Conditions for ROC and Trapezoidal AUC}

\begin{assumption}[Additional ROC Conditions]
\label{ass:uniform_curve}
\textit{(a) Uniform curve and threshold-grid regularity.}
Let $\mathcal C=[\underline c,\overline c]$ be the compact population
threshold interval used for inversion. The numerical anchor points introduced
below are not elements of $\mathcal C$. The local score-density condition in
Assumption~\ref{ass:joint_density} and the regularity condition in part (b) of
Assumption~\ref{ass:higher_smoothness} hold uniformly for
$c\in\mathcal C$. The functions $\tau_1$ and $\tau_0$ are
continuous on $\mathcal C$, $\tau_0$ is strictly decreasing, and
$\tau_1(\underline c)=\tau_0(\underline c)=1$ and
$\tau_1(\overline c)=\tau_0(\overline c)=0$.

Let
$\mathcal C_n=\{\underline c=c_{1,n}<\cdots<
c_{M_n-1,n}=\overline c\}$ be the prespecified grid on $\mathcal C$, and let
$\Delta_{c,n}=\max_{2\le m\le M_n-1}
(c_{m,n}-c_{m-1,n})\to0$. In addition,
the grid is quasi-uniform: for some fixed $C_{\mathrm{grid}}<\infty$,
\[
\frac{\max_{2\le m\le M_n-1}(c_{m,n}-c_{m-1,n})}
{\min_{2\le m\le M_n-1}(c_{m,n}-c_{m-1,n})}
\le C_{\mathrm{grid}}.
\]
For the numerical projection only, augment this grid by artificial thresholds $c_{0,n}<\underline c$ and $c_{M_n,n}>\overline c$ (spaced to preserve the aforementioned quasi-uniformity, e.g., $c_{0,n} = \underline c - \Delta_{c,n}$), assigned the deterministic TPR/FPR pairs $(1,1)$ and $(0,0)$, respectively. Write $\mathcal C_n^+=\{c_{0,n}\}\cup\mathcal C_n\cup\{c_{M_n,n}\}$. No strict-monotonicity or smoothness condition is imposed outside $\mathcal C$.
Moreover,
\[
\frac{\log(M_n+1)}{\min\{n_{\mathrm{HL}},n_{\mathrm{T}}\}}\to0,
\qquad
\frac{\log(M_n+1)}{n_{\mathrm{AL}}h}\to0.
\]
For $y\in\{0,1\}$, let $\hat\tau_y^{\mathrm{dir}}(c)$ denote the raw
cross-fitted TPR or FPR estimator before isotonic projection, and define its
remainder after removing the three oracle empirical terms by
\[
R_{y,n}(c)=\hat\tau_y^{\mathrm{dir}}(c)-\tau_y(c)
-\sum_{g\in\mathcal G}\mathbb P_{g,n}\psi_{g,y}(c).
\]
The required uniform nuisance condition is
\[
\max_{c\in\mathcal C_n}|R_{y,n}(c)|=o_p(1),
\qquad y\in\{0,1\}.
\]

\medskip
\noindent\textit{(b) Local conditions for pointwise ROC inference.}
\label{ass:roc_local}
Fix $u\in(0,1)$ and let $c_u$ satisfy $\tau_0(c_u)=u$. On a neighborhood
$\mathcal U_u$ of $c_u$, $\tau_1$ and $\tau_0$ are continuously differentiable, their
derivatives are locally Lipschitz, and, for some $\kappa_u>0$,
\[
\inf_{c\in\mathcal U_u}
\min\{|\tau_1'(c)|,|\tau_0'(c)|\}\ge \kappa_u.
\]
For the local ROC analysis, let $\hat\tau_1$ and $\hat\tau_0$ be the
linearly interpolated raw estimators on the threshold grid, and let
$e_{y,n}=\hat\tau_y-\tau_y$.
There is a deterministic sequence
$b_n\downarrow0$ such that
\begin{align*}
b_n&=o(h),\qquad a_n=O(b_n),\\
\Delta_{c,n}&=o(a_n),\qquad \bar r_S=o_p(a_n).
\end{align*}
For $y\in\{0,1\}$,
\begin{align*}
\sup_{t\in\mathcal U_u}
|\hat\tau_y(t)-\tau_y(t)|&=O_p(b_n),\\
\sup_{\substack{s,t\in\mathcal U_u\\|s-t|\le M b_n}}
\left|e_{y,n}(s)-e_{y,n}(t)\right|&=o_p(a_n)
\end{align*}
for every fixed $M<\infty$.
\end{assumption}

Write $\tilde\tau_1$ and $\tilde\tau_0$ for the interpolated isotonic
projections. One sufficient set of conditions for local equicontinuity in
Assumption~\ref{ass:roc_local} consists of a locally uniform three-source expansion
with $o_p(a_n)$ remainder, together with the usual VC maximal inequality for
the HL and target threshold processes and the translation-class maximal
inequality for the AL kernel process. For a finite threshold grid, a
conservative sufficient set of entropy-rate conditions is
\[
b_n\log(M_n+1)\to0,\qquad
\frac{b_n}{h}\sqrt{\log(M_n+1)}\to0,
\]
together with a locally uniform $o_p(a_n)$ nuisance remainder.

Part (b) of Assumption~\ref{ass:roc_local} is stated for the unprojected curves.
Lemma~\ref{lem:isotonic_local}
below proves that isotonic projection is locally equivalent to the unprojected
estimator to first order under these conditions. The stronger scale-separation condition
$b_n=o(h)$ ensures that any isotonic block intersecting the local neighborhood is small relative to
the localization bandwidth. Up to logarithmic grid factors, it reduces to
$a_n=o(h)$; when the AL term determines the order of $a_n$, this is
$n_{\mathrm{AL}}h^3\to\infty$. The additional condition
$\bar r_S=o_p(a_n)$ is used in Lemma~\ref{lem:grid_interpolation} to make
the interpolated and directly evaluated estimators equivalent to first order.

Part (a) of Assumption~\ref{ass:uniform_curve} is used for uniform
consistency of the projected ROC curve; part (b) is additionally required
only for pointwise inference after inversion and isotonic projection.

\subsection{Conditions for One-Step AUC Inference}

For the AUC analysis, using the nuisance-error notation defined above, let
$Z=m_S^\star(X)$, $M=m_Y^\star(X)$, and
$\kappa(z,z')=\Ind\{z>z'\}+\tfrac12\Ind\{z=z'\}$. Define
$X'$ to be an independent copy of $X$ under $\mathbb P_{\mathrm{T}}$, and
write $\mathbb E_{\mathrm{T}\times\mathrm{T}}$ for expectation over the
product law of $(X,X')$. Define
\[
\eta(m,z)=\mathbb E_{\mathrm{T}\times\mathrm{T}}
[m(X)\{1-m(X')\}\kappa\{z(X),z(X')\}].
\]
Define $G_0(z)=\mathbb E_{\mathrm{T}}[(1-M)\kappa(z,Z)]$ and
$G_1(z)=\mathbb E_{\mathrm{T}}[M\kappa(Z,z)]$, and set
$\Gamma_Y(X)=G_0(Z)-G_1(Z)$ and
$\Gamma_S(X)=Mq_0(Z)-(1-M)q_1(Z)$.

For fold $k$, using the pilot functions constructed in
Appendix~\ref{FPRROCAUC}, let
\begin{align*}
\hat\Gamma_{Y,k}(x)
&=\hat G_{0,k}\{\hat m_S^{(-k)}(x)\}
-\hat G_{1,k}\{\hat m_S^{(-k)}(x)\},\\
\hat\Gamma_{S,k}(x)
&=\hat m_Y^{(-k)}(x)
\hat q_{0,k}\{\hat m_S^{(-k)}(x)\}\\
&\quad-\{1-\hat m_Y^{(-k)}(x)\}
\hat q_{1,k}\{\hat m_S^{(-k)}(x)\}.
\end{align*}
Let $r_{\Gamma_Y,2}$ and $r_{\Gamma_S,2}$ be the maximum, over folds, of
$\|\hat\Gamma_{Y,k}-\Gamma_Y\|_{\mathbb P_{\mathrm{T}},2}$ and
$\|\hat\Gamma_{S,k}-\Gamma_S\|_{\mathbb P_{\mathrm{T}},2}$, respectively.

\begin{assumption}[Additional One-Step AUC Conditions]
\label{ass:auc_path}
\textit{(a) AUC pathwise regularity.}
The target score is continuously distributed. The weighted score densities
$q_0$ and $q_1$ are bounded and continuously differentiable on the score
range. Uniformly over folds, the map
$(s,t)\mapsto\eta(M+s\delta_Y^{(-k)},Z+t\delta_S^{(-k)})$ is twice pathwise
differentiable on $[0,1]^2$, with first derivatives at $(0,0)$ given by
\begin{align*}
D_Y\eta[\delta_Y^{(-k)}]
&=\mathbb E_{\mathrm{T}}\{\delta_Y^{(-k)}(X)\Gamma_Y(X)\},\\
D_S\eta[\delta_S^{(-k)}]
&=\mathbb E_{\mathrm{T}}\{\delta_S^{(-k)}(X)\Gamma_S(X)\},
\end{align*}
and with second-order remainder
$O_p(r_{Y,2}^2+r_{Y,2}r_{S,2}+r_{S,2}^2)$.

\medskip
\noindent\textit{(b) AUC pilot consistency and product rates.}
\label{ass:auc_rates}
Let $\rho_n=(n_{\mathrm{HL}}^{-1}+n_{\mathrm{AL}}^{-1}
+n_{\mathrm{T}}^{-1})^{1/2}$. The pilot bandwidth $b$ satisfies
$b\to0$ and $n_{\mathrm{T}}b\to\infty$, and the pilot-function errors obey
$r_{\Gamma_Y,2}=o_p(1)$ and $r_{\Gamma_S,2}=o_p(1)$. In addition,
the two density-ratio estimators are $L_2$ consistent:
\[
r_{w,\mathrm{HL},2}+r_{w,\mathrm{AL},2}=o_p(1).
\]
The following product rates hold:
\begin{align*}
r_{Y,2}^2+r_{S,2}^2+r_{Y,2}r_{S,2}
&=o_p(\rho_n),\\
\{r_{w,\mathrm{HL},2}+r_{\Gamma_Y,2}\}r_{Y,2}
&=o_p(\rho_n),\\
\{r_{w,\mathrm{AL},2}+r_{\Gamma_S,2}\}r_{S,2}
&=o_p(\rho_n).
\end{align*}
The fitted density ratios and the four pilot functions
$\hat G_{0,k},\hat G_{1,k},\hat q_{0,k},\hat q_{1,k}$ are uniformly bounded
with probability tending to one.
\end{assumption}

The errors $r_{\Gamma_Y,2}$ and $r_{\Gamma_S,2}$ are measured after the pilot
functions are composed with the fitted nuisance functions, so they include both pilot
estimation error and replacement of $m_S^\star$ by $\hat m_S^{(-k)}$. They
are conditions on the nuisance functions entering the scalar AUC expansion rather than uniform conditions on the full
ROC process.

\paragraph{A sufficient condition for the AUC path expansion.}
The expansion in part (a) of Assumption~\ref{ass:auc_path} can be verified by
a pairwise version of the regularity condition in part (b) of
Assumption~\ref{ass:higher_smoothness}. Conditional on fold-$k$ training, let
\begin{align*}
D&=Z(X)-Z(X'),\\
\Delta_{S,k}&=\delta_S^{(-k)}(X)-\delta_S^{(-k)}(X'),\\
M_{k,s}(x)&=M(x)+s\delta_Y^{(-k)}(x),
\end{align*}
and consider the signed measure
\begin{align*}
\nu_{k,s,t}(A)
&=\mathbb E_{\mathrm{T}\times\mathrm{T}}\left[
M_{k,s}(X)\{1-M_{k,s}(X')\}\right.\\
&\qquad\left.\times\Delta_{S,k}
\Ind\{D+t\Delta_{S,k}\in A\}\right].
\end{align*}
It is sufficient that this measure have a density
$\mu_{k,s,t}$ near zero, uniformly in $(s,t)\in[0,1]^2$, with
\begin{align*}
\sup_{s,t}|\partial_t\mu_{k,s,t}(0)|&=O_p(r_{S,2}^2),\\
\sup_{s,t}|\mu_{k,s,t}(0)-\mu_{k,0,t}(0)|
&=O_p(r_{Y,2}r_{S,2}).
\end{align*}
Indeed,
$\partial_t\eta(M_{k,s},Z+t\delta_S^{(-k)})=\mu_{k,s,t}(0)$;
the fundamental theorem of calculus and the two bounds above control the
remainder from estimating $m_S^\star$ and the interaction remainder from
estimating both $m_Y^\star$ and $m_S^\star$, respectively.
Together with the elementary $O_p(r_{Y,2}^2)$ quadratic remainder from the
outcome weights, these bounds imply the stated remainder. They follow,
for example, from a differentiable joint weighted density of
$(D,\Delta_{S,k})$ whose first two $\Delta_{S,k}$-moment derivatives have the
displayed orders. Smoothness of the marginal density of $D$ alone is not
sufficient.

\section{Additional Methodological Details}
\label{app:method_details}

\subsection{Remainder for the Auxiliary Correction}
\label{methodintuition}

Fix one cross-fitting fold and condition on its training data. To quantify the
approximation in Section~\ref{Method}, write
$\delta_S=\hat m_S-m_S^\star$, set
$Z_u=m_S^\star(X)+u\delta_S(X)$ and let $\mu_u^{(1)}(t)$ denote the density
at $t$ of the signed measure
\[
A\mapsto\mathbb E_{\mathrm{T}}\!\left[
m_Y^\star(X)\delta_S(X)\Ind\{Z_u\in A\}\right].
\]
The path expansion in part (b) of Assumption~\ref{ass:higher_smoothness} gives
\[
B_S(c)=\int_0^1\mu_u^{(1)}(c)\,du,
\]
while the conditional mean of the auxiliary correction in
Section~\ref{Method} is
$-\int K_h(c-t)\,d\nu_1^{(1)}(t)$. Because the density
$\mu_1^{(1)}$ is assumed only on the neighborhood $\mathcal U$, split this
integral as
\[
-\int_{\mathcal U}K_h(c-t)\mu_1^{(1)}(t)\,dt
-\int_{\mathcal U^c}K_h(c-t)\,d\nu_1^{(1)}(t).
\]
The second term is zero for all sufficiently small $h$ when $K$ has compact
support and is $o_p(h^2r_{S,2})$ when $K$ has exponentially decaying tails.
Indeed, $c$ is an interior point of $\mathcal U$ and
$\|\nu_1^{(1)}\|_{\mathrm{TV}}\le
\mathbb E_{\mathrm{T}}|m_Y^\star\delta_S|=O_p(r_{S,2})$.
Thus the sum of the indicator drift and the auxiliary mean is
\begin{align*}
&\int_0^1\{\mu_u^{(1)}(c)-\mu_1^{(1)}(c)\}\,du\\
&\quad+\left[\mu_1^{(1)}(c)
-\int_{\mathcal U} K_h(c-t)\mu_1^{(1)}(t)\,dt\right]
+o_p(h^2r_{S,2}).
\end{align*}
The first line is the error from replacing $\mu_u^{(1)}(c)$ by
$\mu_1^{(1)}(c)$, and the second is the kernel-approximation error. They are
respectively $O_p(r_{S,2}^2)$ and
\[
-\frac{h^2\mu_2(K)}{2}\partial_t^2\mu_1^{(1)}(c)
+o_p(h^2r_{S,2}).
\]
The order-$h$ kernel term vanishes because $K$ is symmetric. Hence the sum
is $O_p(r_{S,2}^2+h^2r_{S,2})=o_p(a_n)$ under the two parts of
Assumption~\ref{ass:rate}.

\subsection{Density Ratio Estimation Procedure}
\label{densityratioestimation}

Set $\mathcal F_{\mathrm{HL},k}=\mathcal I_k$ and
$\mathcal F_{\mathrm{AL},k}=\mathcal J_k$. For
$g\in\{\mathrm{HL},\mathrm{AL}\}$ and fold $k$, train a source-versus-target
classifier using the source covariates whose indices are outside
$\mathcal F_{g,k}$ and the target covariates whose indices are outside
$\mathcal L_k$. Let $G^g=0$ denote membership in source $g$ and $G^g=1$
denote target membership. Define the target-membership probability
\[
\pi_g(x)=\mathbb P(G^g=1\mid X=x),
\]
and fit its estimator $\hat\pi_g^{(-k)}$ using only the fold-$k$ training observations above. Bayes' rule gives
\[
w_g(x)=\frac{p_{\mathrm{T}}(x)}{p_g(x)}
=\frac{\mathbb P(G^g=0)}{\mathbb P(G^g=1)}
\frac{\pi_g(x)}{1-\pi_g(x)}.
\]
Consequently, if $n_g^{(-k)}$ and $n_{\mathrm{T}}^{(-k)}$ are the two class sizes used to train the fold-$k$ classifier,
\[
\hat w_g^{(-k)}(x)=
\frac{n_g^{(-k)}}{n_{\mathrm{T}}^{(-k)}}
\frac{\hat\pi_g^{(-k)}(x)}{1-\hat\pi_g^{(-k)}(x)}.
\]
For example, logistic regression may parameterize
$\hat\pi_g^{(-k)}(x)=\operatorname{expit}(x^\top\hat\gamma_g^{(-k)})$,
where $\operatorname{expit}(t)=(1+e^{-t})^{-1}$; other calibrated
probabilistic classifiers may be used.

The two ratios are fitted separately because the HL-to-target and AL-to-target shifts need not agree. Their classifiers also need not belong to the same model class as either outcome regression. The theoretical estimator uses the untruncated ratios under the bounded-density-ratio condition. In computation, fitted class probabilities must be kept away from zero and one; any probability clipping or weight truncation is prespecified and reported as part of the implementation. Evaluation of $\hat w_g^{(-k)}$ is restricted to the held-out source fold $\mathcal{F}_{g,k}$.

\subsection{FPR, ROC, and AUC Construction}
\label{FPRROCAUC}
The target FPR is
\[
\tau_0(c)=\frac{1}{1-p_1}
\mathbb{E}_{\mathrm{T}}\!\left[
\{1-m_Y^\star(X)\}
\Ind\{m_S^\star(X)\ge c\}\right].
\]
Using the same three held-out folds as the TPR estimator, define
{\small
\begin{align*}
\widehat\nu_{\mathrm{T}}(c)
&=
\frac{1}{n_{\mathrm{T}}}\sum_{k=1}^V\sum_{\ell\in\mathcal{L}_k}
\{1-\hat m_Y^{(-k)}(X_\ell^{\mathrm{T}})\}\\
&\quad\times \Ind\{\hat m_S^{(-k)}(X_\ell^{\mathrm{T}})\ge c\},\\
\widehat\nu_{\mathrm{HL}}(c)
&=\frac{1}{n_{\mathrm{HL}}}\sum_{k=1}^V\sum_{i\in\mathcal{I}_k}
\hat w_{\mathrm{HL}}^{(-k)}(X_i^{\mathrm{HL}})
\{\hat m_Y^{(-k)}(X_i^{\mathrm{HL}})-Y_i^{\mathrm{HL}}\}\\
&\quad\times \Ind\{\hat m_S^{(-k)}(X_i^{\mathrm{HL}})\ge c\},\\
\widehat\nu_{\mathrm{AL}}(c)
&=\frac{1}{n_{\mathrm{AL}}}\sum_{k=1}^V\sum_{j\in\mathcal{J}_k}
\hat w_{\mathrm{AL}}^{(-k)}(X_j^{\mathrm{AL}})
\{1-\hat m_Y^{(-k)}(X_j^{\mathrm{AL}})\}\\
&\qquad\times K_h\{c-\hat m_S^{(-k)}(X_j^{\mathrm{AL}})\}
\{S_j^{\mathrm{AL}}-\hat m_S^{(-k)}(X_j^{\mathrm{AL}})\}.
\end{align*}
}
Then
\[
\hat\tau_0(c)
=\frac{\widehat\nu_{\mathrm{T}}(c)+\widehat\nu_{\mathrm{HL}}(c)
+\widehat\nu_{\mathrm{AL}}(c)}{1-\widehat p_1}.
\]

For numerical construction, evaluate the raw TPR and FPR estimators on the
grid $\mathcal C_n=\{c_1,\ldots,c_{M-1}\}\subset\mathcal C$. Add two
artificial thresholds $c_0<\underline c$ and $c_M>\overline c$ solely as
anchors, and assign them the deterministic endpoint pairs
$(\operatorname{FPR},\operatorname{TPR})=(1,1)$ and $(0,0)$, respectively.
These artificial thresholds lie outside the population interval on which
strict monotonicity is assumed. Separately project the two augmented sequences
by least-squares isotonic regression onto
\[
1\ge v_0\ge v_1\ge\cdots\ge v_M\ge0,
\]
because both rates are nonincreasing functions of the threshold. Denote the resulting piecewise-linear curves by $\tilde\tau_0(c)$ and $\tilde\tau_1(c)$. The theoretical results use equal grid weights.

For $u\in[0,1]$, define the generalized inverse and ROC estimator by
\begin{align*}
\hat c_u&=\inf\{c:\tilde\tau_0(c)\le u\},\\
\widehat{\operatorname{ROC}}^{cf}(u)
&=\tilde\tau_1(\hat c_u),
\end{align*}
using linear interpolation between adjacent threshold-grid points. On a grid $0=u_0<\cdots<u_R=1$, the AUC associated with this curve is estimated by
\begin{align*}
\widehat{\operatorname{AUC}}^{cf}
&=\sum_{r=1}^R\frac{u_r-u_{r-1}}{2}\\
&\quad\times\left\{\widehat{\operatorname{ROC}}^{cf}(u_r)
+\widehat{\operatorname{ROC}}^{cf}(u_{r-1})\right\}.
\end{align*}
Under part (a) of Assumption~\ref{ass:uniform_curve},
Theorem~\ref{thm:roc_auc_consistency} establishes uniform consistency of the
projected ROC estimator and consistency of the trapezoidal AUC when the
AUC-grid mesh tends to zero. Theorem~\ref{thm:pointwise_roc}
further gives pointwise inference for the ROC value at a fixed interior
false-positive rate. Uniform confidence bands require stronger stochastic
equicontinuity and uniform-in-threshold remainder bounds; scalar AUC inference
is developed next without such a process condition.

The trapezoidal estimator summarizes the projected ROC curve. For formal AUC
inference, the scalar one-step construction avoids a process approximation. Let
$\kappa(z,z')=\Ind\{z>z'\}+\tfrac12\Ind\{z=z'\}$ and, for
$k=1,\ldots,V$, write $n_{\mathrm{T},k}=|\mathcal L_k|$,
$\hat M_{k\ell}=\hat m_Y^{(-k)}(X_\ell^{\mathrm{T}})$, and
$\hat Z_{k\ell}=\hat m_S^{(-k)}(X_\ell^{\mathrm{T}})$. Define
\begin{align*}
\hat\eta_{\mathrm{T},k}
&=\frac{1}{n_{\mathrm{T},k}(n_{\mathrm{T},k}-1)}
\sum_{\substack{\ell,r\in\mathcal L_k\\ \ell\ne r}}
\hat M_{k\ell}(1-\hat M_{kr})
\kappa(\hat Z_{k\ell},\hat Z_{kr}),\\
\hat\eta_{\mathrm{T}}
&=\sum_{k=1}^V\frac{n_{\mathrm{T},k}}{n_{\mathrm{T}}}
\hat\eta_{\mathrm{T},k}.
\end{align*}
For a pilot bandwidth $b$, set $K_b(v)=b^{-1}K(v/b)$ and define
$\hat a_{1,k\ell}=\hat M_{k\ell}$ and
$\hat a_{0,k\ell}=1-\hat M_{k\ell}$. The held-out target summaries are
\begin{align*}
\hat G_{0,k}(z)
&=\frac{1}{n_{\mathrm{T},k}}\sum_{\ell\in\mathcal L_k}
(1-\hat M_{k\ell})\kappa(z,\hat Z_{k\ell}),\\
\hat G_{1,k}(z)
&=\frac{1}{n_{\mathrm{T},k}}\sum_{\ell\in\mathcal L_k}
\hat M_{k\ell}\kappa(\hat Z_{k\ell},z),\\
\hat q_{y,k}(z)
&=\frac{1}{n_{\mathrm{T},k}}\sum_{\ell\in\mathcal L_k}
\hat a_{y,k\ell}K_b(z-\hat Z_{k\ell}).
\end{align*}
The two source corrections are
\begin{align*}
\hat\eta_{\mathrm{HL}}
&=\frac{1}{n_{\mathrm{HL}}}\sum_{k=1}^V\sum_{i\in\mathcal I_k}
\hat w_{\mathrm{HL}}^{(-k)}(X_i^{\mathrm{HL}})
\{Y_i^{\mathrm{HL}}-\hat m_Y^{(-k)}(X_i^{\mathrm{HL}})\}\\
&\quad\times[\hat G_{0,k}\{\hat m_S^{(-k)}(X_i^{\mathrm{HL}})\}
-\hat G_{1,k}\{\hat m_S^{(-k)}(X_i^{\mathrm{HL}})\}],\\
\hat\eta_{\mathrm{AL}}
&=\frac{1}{n_{\mathrm{AL}}}\sum_{k=1}^V\sum_{j\in\mathcal J_k}
\hat w_{\mathrm{AL}}^{(-k)}(X_j^{\mathrm{AL}})
\{S_j^{\mathrm{AL}}-\hat m_S^{(-k)}(X_j^{\mathrm{AL}})\}\\
&\quad\times\bigl[\hat m_Y^{(-k)}(X_j^{\mathrm{AL}})
\hat q_{0,k}\{\hat m_S^{(-k)}(X_j^{\mathrm{AL}})\}\\
&\qquad-\{1-\hat m_Y^{(-k)}(X_j^{\mathrm{AL}})\}
\hat q_{1,k}\{\hat m_S^{(-k)}(X_j^{\mathrm{AL}})\}\bigr].
\end{align*}
The scalar one-step estimator is
\[
\widehat{\operatorname{AUC}}_{\mathrm{os}}^{cf}
=\frac{\hat\eta_{\mathrm{T}}+\hat\eta_{\mathrm{HL}}
+\hat\eta_{\mathrm{AL}}}{\hat p_1(1-\hat p_1)}.
\]
It targets the same population AUC as the area under the ROC curve. Its AL
correction uses the derivative of AUC with respect to score perturbations,
expressed through the weighted score densities $q_0$ and $q_1$, rather than a
separate correction at every threshold.

\section{Additional Theoretical Results}
\label{app:theory_details}

\subsection{Asymptotic Expansion for the Target Prevalence Estimator}
\label{sqrtp1appendix}

To formalize the asymptotic behavior of $\hat p_1$, define
\begin{align}
    \phi_{p_1}^{\mathrm{HL}} &= w_{\mathrm{HL}}(X)
    \{Y - m_Y^\star(X)\} \label{eq:phi_hl} \\
    \phi_{p_1}^{\mathrm{T}} &= m_Y^\star(X) - p_1. \label{eq:phi_t}
\end{align}

\begin{theorem}[Target Prevalence Estimator]
\label{thm:p1_consistency}
Under part (a) of Assumption~\ref{ass:smooth}, suppose that, uniformly over
folds, $\hat m_Y^{(-k)}$ converges to $m_Y^\star$ in both
$L_2(\mathbb P_{\mathrm{T}})$ and $L_2(\mathbb P_{\mathrm{HL}})$, and
$\hat w_{\mathrm{HL}}^{(-k)}$ converges to $w_{\mathrm{HL}}$ in
$L_2(\mathbb P_{\mathrm{HL}})$. If
$r_{w,\mathrm{HL},2}r_{Y,2}=o_p(b_{p,n})$, where
$b_{p,n}=(n_{\mathrm{HL}}^{-1}+n_{\mathrm{T}}^{-1})^{1/2}$, then
\[
\hat{p}_1 - p_1 = \frac{1}{n_{\mathrm{HL}}} \sum_{i=1}^{n_{\mathrm{HL}}} \phi_{p_1,i}^{\mathrm{HL}}
+ \frac{1}{n_{\mathrm{T}}} \sum_{\ell=1}^{n_{\mathrm{T}}} \phi_{p_1,\ell}^{\mathrm{T}} + o_p(b_{p,n}).
\]
\end{theorem}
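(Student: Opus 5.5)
Fix a fold $k$ and condition on the training data that produce $\hat m_Y^{(-k)}$ and $\hat w_{\mathrm{HL}}^{(-k)}$; write $\hat m=\hat m_Y^{(-k)}$, $\hat w=\hat w_{\mathrm{HL}}^{(-k)}$, $\delta_Y=\hat m-m_Y^\star$ and $\delta_w=\hat w-w_{\mathrm{HL}}$. The fold-$k$ contribution to $\hat p_1$ is
\[
\frac{|\mathcal L_k|}{n_{\mathrm T}}\mathbb P_{\mathrm T,\mathcal L_k}\hat m+\frac{|\mathcal I_k|}{n_{\mathrm{HL}}}\mathbb P_{\mathrm{HL},\mathcal I_k}\{\hat w(Y-\hat m)\}.
\]
I would split each empirical average into three pieces: the population value $\Psi(\hat m,\hat w;1)$, the oracle centered terms, and empirical-process remainders. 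Concretely,
\[
\mathbb P_{\mathrm T,\mathcal L_k}\hat m=\mathbb E_{\mathrm T}\hat m+(\mathbb P_{\mathrm T,\mathcal L_k}-\mathbb E_{\mathrm T})m_Y^\star+(\mathbb P_{\mathrm T,\mathcal L_k}-\mathbb E_{\mathrm T})\delta_Y.
\]
The HL average splits the same way, around $w_{\mathrm{HL}}(Y-m_Y^\star)$, using
\[
\hat w(Y-\hat m)-w_{\mathrm{HL}}(Y-m_Y^\star)=\delta_w(Y-m_Y^\star)-\hat w\,\delta_Y.
\]
Summing over folds with weights $|\mathcal L_k|/n_{\mathrm T}$ and $|\mathcal I_k|/n_{\mathrm{HL}}$, the oracle pieces reassemble exactly into the full-sample averages of $\phi^{\mathrm T}_{p_1}$ and $\phi^{\mathrm{HL}}_{p_1}$. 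This uses $\mathbb E_{\mathrm T}m_Y^\star=p_1$ and $\mathbb E_{\mathrm{HL}}\{w_{\mathrm{HL}}(Y-m_Y^\star)\}=0$.

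\textbf{Bias term.} By the population identity for $\Psi$ in Section~\ref{Method} with $q\equiv1$,
\[
\Psi(\hat m,\hat w;1)-p_1=\mathbb E_{\mathrm{HL}}\{(w_{\mathrm{HL}}-\hat w)\delta_Y\}.
\]
By Cauchy--Schwarz under $\mathbb P_{\mathrm{HL}}$ this is at most $r_{w,\mathrm{HL},2}\,r_{Y,2}=o_p(b_{p,n})$. The identity itself follows from two facts. First, transportability gives $\mathbb E_{\mathrm{HL}}(Y-m_Y^\star\mid X)=0$. Second, $\mathbb E_{\mathrm{HL}}(w_{\mathrm{HL}}f)=\mathbb E_{\mathrm T}f$ holds by absolute continuity, and it applies with $f=\delta_Y$.

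\textbf{Empirical-process remainders.} Conditionally on fold-$k$ training data, each remainder is a centered average of i.i.d. terms evaluated at a fixed function, because of cross-fitting and the mutual independence of the samples. So Chebyshev's inequality gives the following bounds:
\[
(\mathbb P_{\mathrm T,\mathcal L_k}-\mathbb E_{\mathrm T})\delta_Y=O_p(n_{\mathrm T}^{-1/2}\|\delta_Y\|_{\mathbb P_{\mathrm T},2})=o_p(n_{\mathrm T}^{-1/2}),
\]
and the HL remainder is $O_p\bigl(n_{\mathrm{HL}}^{-1/2}\{\|\delta_w\|_{\mathbb P_{\mathrm{HL}},2}+\|\delta_Y\|_{\mathbb P_{\mathrm{HL}},2}\}\bigr)=o_p(n_{\mathrm{HL}}^{-1/2})$. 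The HL bound uses the uniform boundedness of $Y$, of the fitted regressions and of $\hat w$ from part (a) of Assumption~\ref{ass:smooth}; this lets the second moments of $\delta_w(Y-m_Y^\star)-\hat w\delta_Y$ be bounded by squared $L_2$ errors. Conditional $o_p$ statements transfer to unconditional ones by the standard dominated-convergence argument. Since $V$ is fixed, summing over folds preserves all these rates, and both pieces are $o_p(b_{p,n})$.

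\textbf{Main obstacle.} The one delicate point is bounding the second moment of $\hat w\,\delta_Y$ under $\mathbb P_{\mathrm{HL}}$. This requires $\hat w$ to be bounded, or at least $\hat w\delta_Y\to0$ in $L_2(\mathbb P_{\mathrm{HL}})$. I would take that from the assumption in part (a) of Assumption~\ref{ass:smooth} that fitted quantities are uniformly bounded. If that assumption is read as covering only the regressions, I would instead write $\hat w\delta_Y=w_{\mathrm{HL}}\delta_Y+\delta_w\delta_Y$. The first piece is bounded by $C\|\delta_Y\|_{\mathbb P_{\mathrm{HL}},2}$, and the second by $\bar r_Y\|\delta_w\|_{\mathbb P_{\mathrm{HL}},2}$, which is $o_p(1)$.
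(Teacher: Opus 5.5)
Your proposal is correct and follows essentially the paper's argument: the product-bias identity from change of measure, bounded by Cauchy--Schwarz, plus conditional Chebyshev bounds on the cross-fitted centered remainders, with boundedness of $\delta_Y$ controlling the $\delta_w\delta_Y$ piece exactly as in your fallback. One small step is missing: because you keep the exact weights $|\mathcal L_k|/n_{\mathrm T}$ and $|\mathcal I_k|/n_{\mathrm{HL}}$, the foldwise population terms are not a single multiple of $\Psi(\hat m_Y^{(-k)},\hat w_{\mathrm{HL}}^{(-k)};1)$, so the change-of-measure cancellation of $\mathbb E_{\mathrm T}\delta_Y$ leaves a leftover term $\sum_k(|\mathcal L_k|/n_{\mathrm T}-|\mathcal I_k|/n_{\mathrm{HL}})\mathbb E_{\mathrm T}\delta_Y^{(-k)}$; by fold balance this is $O_p(n_{\mathrm T}^{-1}+n_{\mathrm{HL}}^{-1})=o_p(b_{p,n})$, and the paper disposes of it by first replacing all fold weights with $1/V$.
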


\subsection{Pointwise FPR and ROC/AUC Extensions}
\label{ROCAUCCorollary}

\paragraph{Plug-in influence values.}
For $i\in\mathcal I_k$, $j\in\mathcal J_k$, and $\ell\in\mathcal L_k$, define
$\hat I_{g,r}^{(-k)}(c)=
\Ind\{\hat m_S^{(-k)}(X_r^g)\ge c\}$ and
\begin{align*}
\hat R_{Y,i}^{(-k)}
&=Y_i^{\mathrm{HL}}-\hat m_Y^{(-k)}(X_i^{\mathrm{HL}}),\\
\hat R_{S,j}^{(-k)}
&=S_j^{\mathrm{AL}}-\hat m_S^{(-k)}(X_j^{\mathrm{AL}}).
\end{align*}
The cross-fitted TPR influence values in
Theorem~\ref{thm:variance_and_ci} are
\begin{align*}
\hat\psi_{\mathrm{HL},1,i}(c)
&=\frac{\hat w_{\mathrm{HL}}^{(-k)}(X_i^{\mathrm{HL}})}{\hat p_1}
\hat R_{Y,i}^{(-k)}\\
&\quad\times
\{\hat I_{\mathrm{HL},i}^{(-k)}(c)-\hat\tau_1(c)\},\\
\hat\psi_{\mathrm{AL},1,j}(c)
&=\frac{\hat w_{\mathrm{AL}}^{(-k)}(X_j^{\mathrm{AL}})
\hat m_Y^{(-k)}(X_j^{\mathrm{AL}})}{\hat p_1}\\
&\quad\times K_h\{c-\hat m_S^{(-k)}(X_j^{\mathrm{AL}})\}
\hat R_{S,j}^{(-k)},\\
\hat\psi_{\mathrm{T},1,\ell}(c)
&=\frac{\hat m_Y^{(-k)}(X_\ell^{\mathrm{T}})}{\hat p_1}
\{\hat I_{\mathrm{T},\ell}^{(-k)}(c)-\hat\tau_1(c)\}.
\end{align*}

\begin{corollary}[Pointwise FPR Inference]
\label{cor:pointwise_fpr}
Let $\psi_{g,0}(c)$ be obtained from the TPR influence terms by replacing
$m_Y^\star$ by $1-m_Y^\star$, $Y-m_Y^\star$ by $m_Y^\star-Y$ in the
gold-source term, $p_1$ by $p_0=1-p_1$, and $\tau_1(c)$ by
$\tau_0(c)$. Set
$s_{0,n}^2(c)=\sum_{g\in\mathcal G}n_g^{-1}
\Var_g\{\psi_{g,0}(c)\}$. Under Assumptions~\ref{ass:smooth}--\ref{ass:higher_smoothness},
with $p_0$ bounded away from zero, the FPR estimator in
Appendix~\ref{FPRROCAUC} satisfies
\[
\hat\tau_0(c)-\tau_0(c)
=\sum_{g\in\mathcal G}\mathbb P_{g,n}\psi_{g,0}(c)+o_p(a_n).
\]
If additionally $s_{0,n}(c)\asymp a_n$, then the left-hand side divided by
$s_{0,n}(c)$ converges to $\mathcal N(0,1)$. Let
$\hat\psi_{g,0,r}(c)$ be the cross-fitted plug-in values obtained by the same
substitutions in the displayed TPR plug-in values, and let
$\bar\psi_{g,0}(c)=n_g^{-1}\sum_{r=1}^{n_g}\hat\psi_{g,0,r}(c)$. Under the same
nondegeneracy condition,
\[
\widehat{\operatorname{SE}}_{\mathrm{FPR}}^2(c)
=\sum_{g\in\mathcal G}\frac{1}{n_g^2}\sum_{r=1}^{n_g}
\{\hat\psi_{g,0,r}(c)-\bar\psi_{g,0}(c)\}^2
\]
satisfies
$\widehat{\operatorname{SE}}_{\mathrm{FPR}}(c)/s_{0,n}(c)\to_p1$ and yields
the corresponding pointwise Wald interval.
\end{corollary}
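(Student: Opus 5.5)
The plan is to reduce Corollary~\ref{cor:pointwise_fpr} to the TPR arguments already given (Theorems~\ref{thm:p1_consistency}, \ref{thm:asymptotic_normality_tpr} and \ref{thm:variance_and_ci}) by a symmetry substitution. We would not repeat those arguments. Write $\hat\tau_0(c)=\hat\nu(c)/(1-\hat p_1)$ with $\hat\nu=\widehat\nu_{\mathrm T}+\widehat\nu_{\mathrm{HL}}+\widehat\nu_{\mathrm{AL}}$ and $\nu(c)=\mathbb E_{\mathrm T}[(1-m_Y^\star)\Ind_c]$. Each piece of $\hat\nu$ is the corresponding TPR piece with three changes: $\hat m_Y^{(-k)}$ replaced by $1-\hat m_Y^{(-k)}$, the HL residual $Y-\hat m_Y$ replaced by $\hat m_Y-Y$, and the target $m_Y^\star$ replaced by $1-m_Y^\star$. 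Put $\tilde m=1-\hat m_Y^{(-k)}$, which estimates $1-m_Y^\star=\mathbb E_{\mathrm{HL}}(1-Y\mid X)$. Then $\hat m_Y-Y=(1-Y)-\tilde m$. So $\hat\nu$ is exactly the TPR numerator estimator applied to the relabeled outcome $Y'=1-Y$. Transportability holds for $Y'$, and its regression error $\tilde m-(1-m_Y^\star)=-\delta_Y^{(-k)}$ has the same $L_2$ and sup norms. All rate conditions in Assumption~\ref{ass:rate}(a) therefore transfer verbatim. The path-regularity condition in Assumption~\ref{ass:higher_smoothness} was stated for both $a\in\{m_Y^\star,1-m_Y^\star\}$, which covers the relabeled problem. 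Finally, $q_0$ is assumed regular alongside $q_1$ in Assumption~\ref{ass:joint_density}.

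The argument would run in three steps. First, apply the numerator expansion from the proof of Theorem~\ref{thm:asymptotic_normality_tpr} to $Y'$:
$$\hat\nu(c)-\nu(c)=\mathbb P_{\mathrm{HL},n}\{w_{\mathrm{HL}}(m_Y^\star-Y)\Ind_c\}+\mathbb P_{\mathrm{AL},n}\{w_{\mathrm{AL}}(1-m_Y^\star)K_h(c-m_S^\star)(S-m_S^\star)\}+\mathbb P_{\mathrm{T},n}\{(1-m_Y^\star)\Ind_c-\nu(c)\}+o_p(a_n).$$
This step uses the cross-fitting empirical-process bounds, the product remainder $r_{w,\mathrm{HL},2}r_{Y,2}$, and the kernel cancellation of $B_S(c)$ from Appendix~\ref{methodintuition} with $a=1-m_Y^\star$.

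Second, handle the denominator. Theorem~\ref{thm:p1_consistency} gives $(1-\hat p_1)-p_0=-\mathbb P_{\mathrm{HL},n}\phi^{\mathrm{HL}}_{p_1}-\mathbb P_{\mathrm{T},n}\phi^{\mathrm{T}}_{p_1}+o_p(a_n)$, since $b_{p,n}\le a_n$. We linearise the ratio using $p_0\ge\eta$:
$$\hat\tau_0-\tau_0=p_0^{-1}\{(\hat\nu-\nu)-\tau_0(c)((1-\hat p_1)-p_0)\}+O_p\bigl(|\hat\nu-\nu|\,|\hat p_1-p_1|+|\hat p_1-p_1|^2\bigr).$$
The quadratic term is $O_p(a_n^2)=o_p(a_n)$. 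Collecting terms gives the HL term $p_0^{-1}w_{\mathrm{HL}}(m_Y^\star-Y)\{\Ind_c-\tau_0\}$ and the target term $p_0^{-1}(1-m_Y^\star)\{\Ind_c-\tau_0\}$. Here the centring uses $\nu=p_0\tau_0$ and $(1-m_Y^\star)-p_0=-(m_Y^\star-p_1)$. These match the substituted $\psi_{g,0}$.

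Third, the CLT under $s_{0,n}\asymp a_n$ follows exactly as for TPR. The three sums are independent. The HL and target summands are bounded. The AL summand has variance of order $h^{-1}$ and $\mathbb E|\psi_{\mathrm{AL},0}|^{3}=O(h^{-2})$, so the Lyapunov ratio is $O\{(n_{\mathrm{AL}}h)^{-1/2}\}\to0$.

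For the standard error, we would argue as in Theorem~\ref{thm:variance_and_ci}, again via the relabeling. Show that $n_g^{-1}\sum_r(\hat\psi_{g,0,r}-\psi_{g,0,r})^2=o_p(d_g)$, with $d_{\mathrm{AL}}=h^{-1}$, using uniform consistency and $\bar r_S/h=o_p(1)$. Then show that the within-source sample variances are consistent for $\Var_g\psi_{g,0}$ at the scale $d_g$. Under nondegeneracy this yields $\widehat{\mathrm{SE}}_{\mathrm{FPR}}/s_{0,n}\to_p1$.

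The main obstacle is the AL kernel term. Two pieces need care. One is the indicator-drift cancellation, which needs the $a=1-m_Y^\star$ path condition. The other is the plug-in variance bound with $K_h(c-\hat m_S)$ in place of $K_h(c-m_S^\star)$. That bound is controlled by $\|K'\|_\infty\bar r_S/h^2$ in sup norm, which gives relative error $O_p(\bar r_S/h)=o_p(1)$ at the $h^{-1}$ scale.
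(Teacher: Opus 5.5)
Your reduction is the one the paper uses. The paper also relabels $(Y,m_Y^\star,\hat m_Y^{(-k)})\mapsto(1-Y,1-m_Y^\star,1-\hat m_Y^{(-k)})$ and notes $\delta_{Y,0}=-\delta_Y$, so every TPR remainder bound survives; the score-drift term uses the $a=1-m_Y^\star$ branch of Assumption~\ref{ass:higher_smoothness}. It then applies the ratio expansion, the same centring algebra, the three-sample CLT and the same $o_p(d_g)$ plug-in comparison for the standard error. The paper's CLT step is a Lindeberg argument via envelopes; your Lyapunov check works equally well.

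There is one step to fix, in the denominator. You invoke Theorem~\ref{thm:p1_consistency} and then pass to $o_p(a_n)$ using $b_{p,n}\le a_n$. But that theorem assumes $r_{w,\mathrm{HL},2}r_{Y,2}=o_p(b_{p,n})$, whereas Assumption~\ref{ass:rate} supplies only $r_{w,\mathrm{HL},2}r_{Y,2}=o_p(a_n)$. When $(n_{\mathrm{AL}}h)^{-1}$ dominates $a_n^2$ this is strictly weaker, so the theorem's hypothesis is not available under the corollary's assumptions.

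The paper avoids this by using the $a_n$-scale prevalence expansion \eqref{eq:p1_an_expansion} established inside the proof of Theorem~\ref{thm:asymptotic_normality_tpr}. There the decomposition of Theorem~\ref{thm:p1_consistency} is rerun: the centred empirical and fold-weight remainders are $o_p(b_{p,n})$, hence $o_p(a_n)$, and the product bias is $o_p(a_n)$ directly by Assumption~\ref{ass:rate}.

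A cleaner fix is to push your relabelling one step further. One has, exactly,
$n_{\mathrm T}^{-1}\sum_k\sum_{\ell\in\mathcal L_k}\{1-\hat m_Y^{(-k)}(X_\ell^{\mathrm T})\}+n_{\mathrm{HL}}^{-1}\sum_k\sum_{i\in\mathcal I_k}\hat w_{\mathrm{HL}}^{(-k)}(X_i^{\mathrm{HL}})\{\hat m_Y^{(-k)}(X_i^{\mathrm{HL}})-Y_i^{\mathrm{HL}}\}=1-\hat p_1$.
So $\hat\tau_0(c)$ is literally the TPR estimator for the outcome $1-Y$. You have already checked Theorem~\ref{thm:asymptotic_normality_tpr}'s hypotheses for the relabelled problem, including $\eta\le p_0\le 1-\eta$, so that theorem gives the expansion with the denominator handled at the correct $a_n$ scale.
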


We write $\psi_{g,1}$ for the TPR influence terms in
Theorem~\ref{thm:asymptotic_normality_tpr} and $\psi_{g,0}$ for the FPR
terms defined by these substitutions. Their cross-fitted plug-in versions are
denoted by $\hat\psi_{g,1,r}$ and $\hat\psi_{g,0,r}$, respectively.

The generalized-inverse construction in Appendix~\ref{FPRROCAUC} follows the
standard representation of an ROC curve as a composition of class-conditional
survival functions \citep{hsieh1996roc}. The following theorem establishes
consistency of the projected ROC and trapezoidal AUC estimators.

\begin{theorem}[Consistency of the Projected ROC and AUC Estimators]
\label{thm:roc_auc_consistency}
Suppose Assumptions~\ref{ass:smooth}--\ref{ass:higher_smoothness} and
Assumption~\ref{ass:uniform_curve} hold. Construct
$\widehat{\operatorname{ROC}}^{cf}$ and
$\widehat{\operatorname{AUC}}^{cf}$ as in
Appendix~\ref{FPRROCAUC}, using equal-weight isotonic projection. Then
\[
\sup_{u\in[0,1]}\left|
\widehat{\operatorname{ROC}}^{cf}(u)-\operatorname{ROC}(u)
\right|\to_p0.
\]
If, in addition, the AUC grid
$0=u_{0,n}<\cdots<u_{R_n,n}=1$ has mesh
$\Delta_{u,n}=\max_{1\le r\le R_n}(u_{r,n}-u_{r-1,n})\to0$, then
\[
\left|\widehat{\operatorname{AUC}}^{cf}
-\operatorname{AUC}\right|\to_p0.
\]
\end{theorem}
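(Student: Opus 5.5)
The argument proceeds in three steps: uniform consistency of the raw TPR/FPR estimators over the grid, then the projected curves, then the ROC curve by inversion and the AUC by continuity.

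\textbf{Step 1: raw curves on the grid.} For $y\in\{0,1\}$, write
\[
\hat\tau_y^{\mathrm{dir}}(c)-\tau_y(c)=\sum_{g}\mathbb P_{g,n}\psi_{g,y}(c)+R_{y,n}(c).
\]
By part (a) of Assumption~\ref{ass:uniform_curve}, $\max_{c\in\mathcal C_n}|R_{y,n}(c)|=o_p(1)$, so only the oracle empirical terms need control.

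\emph{HL and target terms.} These are centered bounded summands indexed by $c$. A union bound with Hoeffding/Bernstein over the $M_n$ grid points gives a maximal deviation of order $\{\log(M_n+1)/\min(n_{\mathrm{HL}},n_{\mathrm T})\}^{1/2}=o(1)$. Alternatively, the threshold indicators form a VC class.

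\emph{AL term.} Each summand has mean zero, since $\mathbb E_{\mathrm{AL}}(S-m_S^\star\mid X)=0$. It is bounded by $C/h$ and has variance $O(1/h)$, using boundedness of $f_{Z,\mathrm{AL}}$ near the grid. Bernstein's inequality plus a union bound therefore yields a maximum of order
\[
\Bigl\{\tfrac{\log(M_n+1)}{n_{\mathrm{AL}}h}\Bigr\}^{1/2}+\tfrac{\log(M_n+1)}{n_{\mathrm{AL}}h}=o(1).
\]

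Hence $\max_{c\in\mathcal C_n}|\hat\tau_y^{\mathrm{dir}}(c)-\tau_y(c)|=o_p(1)$.

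\textbf{Step 2: isotonic projection and interpolation.} Include the anchors $c_{0,n},c_{M_n,n}$, whose assigned values $(1,1)$ and $(0,0)$ coincide with the monotone truth extended by $\tau_y(\underline c)=1$ and $\tau_y(\overline c)=0$. The population sequence $(\tau_y(c_{m,n}))_m$, together with these anchors, lies in the cone of nonincreasing $[0,1]$-valued sequences.

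Equal-weight least-squares isotonic regression is sup-norm nonexpansive onto monotone targets. The min-max formula shows
\[
\max_m|\tilde v_m-v_m|\le\max_m|\hat v_m-v_m|
\]
whenever $v$ is monotone, and clipping to $[0,1]$ is also nonexpansive. So the projected grid values are uniformly $o_p(1)$-close to $\tau_y$.

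Linear interpolation between grid points adds at most the modulus of continuity of $\tau_y$ over $\Delta_{c,n}\to0$. Uniform continuity of $\tau_y$ on compact $\mathcal C$ handles this, and outside $\mathcal C$ everything is pinned at $0$ or $1$. Therefore $\sup_c|\tilde\tau_y(c)-\tau_y(c)|\to_p0$.

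\textbf{Step 3: the ROC curve by inversion.} This is the main obstacle, because the generalized inverse is discontinuous in general. Strict decrease and continuity of $\tau_0$ on $\mathcal C$ make $c_u=\tau_0^{-1}(u)$ uniformly continuous in $u\in[0,1]$, and $\operatorname{ROC}(u)=\tau_1(c_u)$.

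Let $\epsilon_n=\sup|\tilde\tau_0-\tau_0|$. Monotonicity sandwiches $\hat c_u$:
\[
c_{u+\epsilon_n}\le\hat c_u\le c_{u-\epsilon_n}
\]
(up to grid/anchor edges where $u\pm\epsilon_n$ is truncated to $[0,1]$). Uniform continuity of $u\mapsto c_u$ then gives $\sup_u|\hat c_u-c_u|\to_p0$.

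Finally,
\[
|\tilde\tau_1(\hat c_u)-\tau_1(c_u)|\le\sup|\tilde\tau_1-\tau_1|+|\tau_1(\hat c_u)-\tau_1(c_u)|,
\]
and both terms vanish uniformly by Step 2 and uniform continuity of $\tau_1$. The care needed is at the boundaries $u$ near $0$ or $1$, where the inverse may hit the anchors; there $\tau_1$ and $\operatorname{ROC}$ are near $0$ or $1$ anyway, which handles it.

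\textbf{Step 4: AUC.} Since $\operatorname{ROC}$ is nondecreasing and bounded, the trapezoidal rule applied to the true ROC on a grid with mesh $\Delta_{u,n}$ converges to $\int\operatorname{ROC}=\operatorname{AUC}$; the Riemann-sum error for a monotone function is at most $\Delta_{u,n}$. Replacing $\operatorname{ROC}$ by the estimate changes the trapezoidal sum by at most $\sup_u|\widehat{\operatorname{ROC}}^{cf}-\operatorname{ROC}|=o_p(1)$, because the trapezoidal weights sum to one.
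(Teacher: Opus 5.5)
Your proposal is correct and follows essentially the same route as the paper: Bernstein plus a union bound over the grid for the oracle terms, sup-norm nonexpansiveness of the clipped equal-weight isotonic projection via the max--min formula, the monotone sandwich $c_{u+\epsilon_n}\le\hat c_u\le c_{u-\epsilon_n}$ for the generalized inverse, and a trapezoidal bound for the AUC. The one point the paper makes explicit that you leave implicit is that the projected anchors stay exactly at $1$ and $0$; this is immediate from the max--min formula, since before clipping the first projected value is at least $z_0=1$ and the last is at most $z_{M_n}=0$, and it guarantees $\hat c_u$ is defined for every $u\in[0,1]$. Near $u=0$ and $u=1$ the paper uses the monotone bound $\widehat{\operatorname{ROC}}^{cf}(\epsilon)+\operatorname{ROC}(\epsilon)$ (and its mirror at $1$) where you give a truncation remark, but the two treatments cover the same ground.
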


Theorem~\ref{thm:pointwise_roc} strengthens this consistency result at a fixed
interior false-positive rate. Its slope ratio can be written as
\[
\lambda_u=\frac{(1-p_1)q_1(c_u)}{p_1q_0(c_u)},
\]
where $q_1$ and $q_0$ are defined in part (c) of
Assumption~\ref{ass:joint_density}. One
consistent implementation uses a pilot bandwidth $b$. Define
$\hat a_{1,\ell}^{(-k)}=\hat m_Y^{(-k)}(X_\ell^{\mathrm{T}})$ and
$\hat a_{0,\ell}^{(-k)}=1-\hat m_Y^{(-k)}(X_\ell^{\mathrm{T}})$. The
target-sample kernel estimates are
\[
\hat q_y(t)=\frac{1}{n_{\mathrm{T}}}
\sum_{k=1}^V\sum_{\ell\in\mathcal L_k}
\hat a_{y,\ell}^{(-k)}K_b\{t-\hat m_S^{(-k)}(X_\ell^{\mathrm{T}})\},
\]
and we set
\[
\hat\lambda_u=
\frac{(1-\hat p_1)\hat q_1(\hat c_u)}
{\hat p_1\hat q_0(\hat c_u)}.
\]
Under the score-density condition, this estimator is consistent when
$b\to0$, $n_{\mathrm{T}}b\to\infty$,
$(a_n+\bar r_S)/b=o_p(1)$, and $q_0(c_u)$ is bounded away from zero. These
conditions concern only estimation of the slope ratio and do not change the
ROC point estimator. A Gaussian-process limit is still required for a
simultaneous confidence band, but not for the scalar AUC result in
Theorem~\ref{thm:auc_clt}.

\paragraph{AUC influence functions.}
Let $Z=m_S^\star(X)$, $M=m_Y^\star(X)$, $p_0=1-p_1$,
$A=\operatorname{AUC}$, and $\eta=p_1p_0A$. With continuous $Z$, define
$G_0(z)=\mathbb E_{\mathrm{T}}[(1-M)\Ind\{Z<z\}]$ and
$G_1(z)=\mathbb E_{\mathrm{T}}[M\Ind\{Z>z\}]$. Using the weighted score
densities in part (c) of Assumption~\ref{ass:joint_density}, set
$H_Y(X)=G_0(Z)-G_1(Z)-A(p_0-p_1)$. Recall that
$\Gamma_S(X)=Mq_0(Z)-(1-M)q_1(Z)$. The source-specific influence functions are
\begin{align*}
\varphi_{\mathrm{HL}}^A
&=\frac{w_{\mathrm{HL}}(X)(Y-M)H_Y(X)}{p_1p_0},\\
\varphi_{\mathrm{AL}}^A
&=\frac{w_{\mathrm{AL}}(X)(S-Z)\Gamma_S(X)}{p_1p_0},\\
\varphi_{\mathrm{T}}^A
&=\frac{MG_0(Z)+(1-M)G_1(Z)-2\eta}{p_1p_0}\\
&\quad-\frac{A(p_0-p_1)(M-p_1)}{p_1p_0}.
\end{align*}

To estimate the AUC variance, form the cross-fitted plug-in versions
$\hat\varphi_{g,r}^A$ of the three terms above using the fold-specific
$\hat G_{y,k}$ and $\hat q_{y,k}$ from Appendix~\ref{FPRROCAUC}; in the target
term estimate $\eta$ by
$\hat p_1(1-\hat p_1)\widehat{\operatorname{AUC}}_{\mathrm{os}}^{cf}$.
Let $\tilde\varphi_{g,r}^A=\hat\varphi_{g,r}^A-
n_g^{-1}\sum_{s=1}^{n_g}\hat\varphi_{g,s}^A$.

\begin{corollary}[AUC Standard Error and Confidence Interval]
\label{cor:auc_se}
Under the conditions of Theorem~\ref{thm:auc_clt}, if
$n_g^{-1}\sum_{r=1}^{n_g}(\hat\varphi_{g,r}^A-\varphi_{g,r}^A)^2=o_p(1)$ for each
$g\in\mathcal G$, then
\[
\widehat{\operatorname{SE}}_{\mathrm{AUC}}^2
=\sum_{g\in\mathcal G}\frac{1}{n_g^2}
\sum_{r=1}^{n_g}(\tilde\varphi_{g,r}^A)^2
\]
satisfies
$\widehat{\operatorname{SE}}_{\mathrm{AUC}}/\sigma_{A,n}\to_p1$.
Thus
$\widehat{\operatorname{AUC}}_{\mathrm{os}}^{cf}
\pm z_{1-\alpha/2}\widehat{\operatorname{SE}}_{\mathrm{AUC}}$
is an asymptotically valid confidence interval.
\end{corollary}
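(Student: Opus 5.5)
The argument is a variance-consistency statement: I will show that $\sum_g n_g^{-2}\sum_r(\tilde\varphi^A_{g,r})^2$ equals $\sigma_{A,n}^2\{1+o_p(1)\}$. Since $\sigma_{A,n}\asymp\rho_n$ and every $n_g^{-1}\le\rho_n^2$, it suffices to prove, for each $g\in\mathcal G$ separately,
\[
\frac{1}{n_g}\sum_{r=1}^{n_g}(\tilde\varphi^A_{g,r})^2-\Var_g(\varphi^A_g)=o_p(1).
\]
Multiplying by $n_g^{-1}$ and summing then gives
\[
\widehat{\operatorname{SE}}_{\mathrm{AUC}}^2-\sigma_{A,n}^2=o_p\Bigl(\sum_g n_g^{-1}\Bigr)=o_p(\rho_n^2)=o_p(\sigma_{A,n}^2),
\]
so $\widehat{\operatorname{SE}}_{\mathrm{AUC}}/\sigma_{A,n}\to_p1$ by the continuous mapping theorem. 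The confidence-interval claim then follows from Theorem~\ref{thm:auc_clt} and Slutsky's lemma. This is possible because, unlike the TPR case, the AUC influence functions carry no $h^{-1}$ scaling, so no rescaling by $d_g$ is needed.

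\textbf{Step 1: oracle variance.} For fixed $g$, consider the oracle quantity $n_g^{-1}\sum_r(\varphi^A_{g,r}-\bar\varphi^A_g)^2$, where $\bar\varphi^A_g=n_g^{-1}\sum_r\varphi^A_{g,r}$. I will show it equals $\Var_g(\varphi^A_g)+o_p(1)$ by the weak law of large numbers for i.i.d. observations within source $g$. The law applies because $\varphi^A_g$ is a fixed function and is bounded. Boundedness comes from three facts: $w_g\le C$, $p_1p_0\ge\eta(1-\eta)$ by part (a) of Assumption~\ref{ass:smooth}, and $G_0,G_1,q_0,q_1$ are bounded by part (a) of Assumption~\ref{ass:auc_path}. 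Second moments are therefore finite, and the law of large numbers applies to both $\varphi^A_g$ and $(\varphi^A_g)^2$.

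\textbf{Step 2: replacing oracle values by plug-in values.} I use the elementary inequality for centered empirical norms. Writing $\|v\|_{n_g}=\{n_g^{-1}\sum_r v_r^2\}^{1/2}$, centering is a projection and so does not increase this norm. Applying the triangle inequality to the centered vectors gives
\[
\bigl|\|\tilde\varphi^A_g\|_{n_g}-\|\varphi^A_g-\bar\varphi^A_g\|_{n_g}\bigr|\le\|\hat\varphi^A_g-\varphi^A_g\|_{n_g}=o_p(1),
\]
where the final equality is exactly the corollary's hypothesis. Step~1 gives $\|\varphi^A_g-\bar\varphi^A_g\|_{n_g}=O_p(1)$. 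Expanding the difference of squares as (difference of norms)$\times$(sum of norms) therefore yields the per-source bound displayed above.

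\textbf{Main obstacle.} The main obstacle is only bookkeeping. The oracle influence values are not observable, and the plug-in values $\hat\varphi^A_{g,r}$ are built from fold-specific nuisances and from $\hat\eta$. Their closeness to the oracle values is placed entirely in the hypothesis, so no further analysis is needed there. The one genuinely delicate point is the case $\Var_g(\varphi^A_g)\to0$ for some source $g$. This causes no difficulty: I aggregate additive $o_p(n_g^{-1})$ errors, never relative errors, so the argument above still gives $\widehat{\operatorname{SE}}_{\mathrm{AUC}}^2-\sigma_{A,n}^2=o_p(\rho_n^2)$, and the nondegeneracy condition $\sigma_{A,n}\asymp\rho_n$ is invoked only in that final step.
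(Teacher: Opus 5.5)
Your proposal is correct and follows essentially the same route as the paper's proof in Step~4 of the proof of Theorem~\ref{thm:auc_clt}: a per-source law of large numbers for the bounded, $n$-free oracle influence values; the fact that centering is an empirical $L_2$ contraction, which transfers the hypothesis to the centered plug-in values; and aggregation of the additive $o_p(n_g^{-1})$ errors into $o_p(\rho_n^2)=o_p(\sigma_{A,n}^2)$, followed by Slutsky. The only difference is cosmetic: you bound the difference of empirical norms by the triangle inequality and then factor the difference of squares, whereas the paper expands the squares and applies Cauchy--Schwarz.
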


\subsection{Oracle Bandwidth Details}
\label{app:oracle_bandwidth}

This subsection gives the exact $h^2$ bias and AL variance coefficients
summarized in Section~\ref{Theory}. Let
$a_1(X)=m_Y^\star(X)$, $a_0(X)=1-m_Y^\star(X)$,
$p_y=\mathbb E_{\mathrm{T}}\{a_y(X)\}$, $Z=m_S^\star(X)$, and
$\sigma_S(X)=\operatorname{Var}_{\mathrm{AL}}(S\mid X)^{1/2}$.
Recall that $\mu_2(K)=\int v^2K(v)\,dv$; write
$R(K)=\int K^2(v)\,dv$, and let
$f_{Z,\mathrm{AL}}$ be the AL density of $Z$. For compactness, define
\[
\xi_y(X)=w_{\mathrm{AL}}(X)a_y(X)\sigma_S(X).
\]
Conditional on the training folds, define
\begin{align*}
B_{2,y,n}(c)
&=-\frac{\mu_2(K)}{2p_yV}\sum_{k=1}^V
\partial_t^2\mu_{k,1}^{(a_y)}(c),\\
V_{\mathrm{AL},y}(c)
&=\frac{R(K)f_{Z,\mathrm{AL}}(c)}{p_y^2}
\mathbb E_{\mathrm{AL}}\!\left\{\xi_y^2(X)\mid Z=c\right\}.
\end{align*}

\begin{prop}[Oracle Bandwidth Benchmark]
\label{prop:oracle_bandwidth}
Under the conditions of Theorem~\ref{thm:asymptotic_normality_tpr} for $y=1$
or Corollary~\ref{cor:pointwise_fpr} for $y=0$, let $R$
denote TPR or FPR and use the matching $B_{2,R,n}=B_{2,y,n}$ and
$V_{\mathrm{AL},R}=V_{\mathrm{AL},y}$. If the variance profile
\[
\Gamma_y(t)=f_{Z,\mathrm{AL}}(t)
\mathbb E_{\mathrm{AL}}\{\xi_y^2(X)\mid Z=t\}
\]
is continuous and positive at the evaluation threshold $c$, then the
kernel-localization component of the conditional bias is
$B_{2,R,n}h^2+o_p(h^2r_{S,2})$ and the AL variance contribution is
$V_{\mathrm{AL},R}/(n_{\mathrm{AL}}h)\{1+o(1)\}$. Retaining only these two
leading localization terms defines the conditional benchmark
\[
\operatorname{AMSE}_{R}^{\mathrm{loc}}(h)
=\frac{V_{\mathrm{AL},R}}{n_{\mathrm{AL}}h}
+B_{2,R,n}^2h^4.
\]
By construction, this benchmark excludes the $h$-independent
$O_p(r_{S,2}^2)$ bias from variation along the score-estimation path,
the HL and target variance components, and all other nuisance remainders; no
equality with the full conditional mean squared error is asserted.
If $B_{2,R,n}\ne0$ and the resulting bandwidth tends to zero, its interior
minimizer is
\[
h_{\mathrm{AMSE},R}
=\left\{\frac{V_{\mathrm{AL},R}}
{4n_{\mathrm{AL}}B_{2,R,n}^2}\right\}^{1/5}.
\]
When the AL variance term dominates the HL and target variance terms, a
deterministic bandwidth satisfying the pointwise theorem makes the leading
localization bias negligible relative to the AL standard deviation when
$n_{\mathrm{AL}}B_{2,R,n}^2h^5\to_p0$, equivalently
$h/h_{\mathrm{AMSE},R}\to_p0$. Together with the remaining conditions of
that theorem or corollary, this controls the localization bias
in centered inference. Because $B_{2,R,n}$ and hence
$h_{\mathrm{AMSE},R}$ depend on the training folds, the latter is a conditional
oracle comparison, not a random bandwidth choice covered by
part (b) of Assumption~\ref{ass:bandwidth}.
\end{prop}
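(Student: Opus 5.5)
The plan is to work conditionally on the training folds, fold by fold, and to isolate the two localization pieces from the decomposition already used in Appendix~\ref{methodintuition}. For fixed $k$, the conditional mean of the fold-$k$ AL correction, divided by $p_y$, is $-p_y^{-1}\int K_h(c-t)\,d\nu_{k,1}^{(a_y)}(t)$, up to density-ratio and $\hat m_Y$ substitution errors. Those substitution errors are handled exactly as in the proof of Theorem~\ref{thm:asymptotic_normality_tpr}, through the product-rate conditions of Assumption~\ref{ass:rate}. Adding the indicator drift $B_S(c)$ gives the two pieces displayed there: the path-variation term and the kernel-approximation term. The benchmark keeps only the second of these.

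For the bias, I will Taylor-expand $\mu_{k,1}^{(a_y)}(c-hv)$ to second order inside $\int_{\mathcal U}K(v)\mu_{k,1}^{(a_y)}(c-hv)\,dv$. The zeroth-order term cancels $\mu_{k,1}^{(a_y)}(c)$, and the first-order term vanishes by symmetry of $K$. The second-order term gives $-\tfrac{h^2\mu_2(K)}{2}\partial_t^2\mu_{k,1}^{(a_y)}(c)$, with a remainder controlled by the $o_p(r_{S,2})$ modulus-of-continuity condition in part~(b) of Assumption~\ref{ass:higher_smoothness}. For compactly supported $K$ this is direct once $h$ is small. For exponentially decaying tails, I split at $|v|\le h^{-1/2}$; the far region contributes $o_p(h^2 r_{S,2})$, using the total-variation bound $O_p(r_{S,2})$ on $\nu_{k,1}$, just as in Appendix~\ref{methodintuition}. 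Averaging over folds, with fold sizes differing by at most one, gives weights $1/V+O(n^{-1})$. Dividing by $p_y$, and by $\hat p_y$ to first order, produces $B_{2,y,n}h^2+o_p(h^2r_{S,2})$.

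For the variance, I compute $\Var_{\mathrm{AL}}\{\psi_{\mathrm{AL},y,h}(c)\}$ directly. Its mean is zero because $\mathbb E_{\mathrm{AL}}(S-Z\mid X)=0$. Hence the variance equals $p_y^{-2}\mathbb E_{\mathrm{AL}}[\xi_y^2(X)K_h^2(c-Z)]$. Conditioning on $Z=t$ turns this into $p_y^{-2}\int K_h^2(c-t)\Gamma_y(t)\,dt$. Substituting $t=c-hv$ gives $h^{-1}p_y^{-2}\int K^2(v)\Gamma_y(c-hv)\,dv$. Continuity of $\Gamma_y$ at $c$, boundedness of $\Gamma_y$ from the bounded weights and densities, and dominated convergence with $\int K^2<\infty$ then give $V_{\mathrm{AL},y}h^{-1}\{1+o(1)\}$. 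Dividing by $n_{\mathrm{AL}}$ yields the stated AL contribution. The difference between this oracle variance and the fitted-nuisance version is $o(1)$ relative to it, by the empirical-remainder bound noted after Assumption~\ref{ass:rate}.

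The remaining claims are algebraic. Minimizing $V/(n_{\mathrm{AL}}h)+B^2h^4$ in $h$ gives the first-order condition $-V/(n_{\mathrm{AL}}h^2)+4B^2h^3=0$, hence $h_{\mathrm{AMSE}}$. For negligibility, the ratio of bias to AL standard deviation is $|B|h^2(n_{\mathrm{AL}}h/V)^{1/2}$. This tends to zero exactly when $n_{\mathrm{AL}}B^2h^5\to_p0$, because $V>0$ by positivity of $\Gamma_y(c)$. That in turn is equivalent to $(h/h_{\mathrm{AMSE}})^{5}\to_p0$. When the AL term dominates $a_n$, this combines with the expansion of Theorem~\ref{thm:asymptotic_normality_tpr} or Corollary~\ref{cor:pointwise_fpr}. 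The main obstacle is the bias step: it requires the Taylor remainder to be uniformly $o_p(h^2 r_{S,2})$ across folds, while the curvature $\partial_t^2\mu_{k,1}$ is itself random and only of order $r_{S,2}$. I would handle this by working on the high-probability event where the fold-uniform bounds of Assumption~\ref{ass:higher_smoothness}(b) hold with explicit constants.
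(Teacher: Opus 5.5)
Your proposal is correct and follows essentially the same route as the paper's proof. The paper cites the second-order kernel expansion from the proof of Theorem~\ref{thm:asymptotic_normality_tpr} rather than re-deriving it, replaces the exact fold weights by $1/V$ at cost $O_p(h^2r_{S,2}/n_{\mathrm{AL}})$, and obtains the AL variance by the same change of variables $t=c-hv$ with continuity of $\Gamma_y$ at $c$. The first-order-condition and bias-to-standard-deviation algebra is also the same.

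Two details to tighten:
\begin{itemize}
\item The modulus condition in part (b) of Assumption~\ref{ass:higher_smoothness} only covers $|s-t|\le h$. Inside your $|v|\le h^{-1/2}$ region, chain it over $|v|\le M$ for fixed $M$. Beyond $M$, use the $O_p(r_{S,2})$ bound on $\sup_{\mathcal U}|\partial_t^2\mu_{k,1}^{(a_y)}|$ together with $\int_{|v|>M}v^2K(v)\,dv\to0$.
\item $f_{Z,\mathrm{AL}}$ is only assumed bounded on $\mathcal U$. So the dominated-convergence step for the variance needs the kernel-tail split off $\mathcal U^c$, as the paper's ``kernel-tail conditions'' indicate, rather than global boundedness of $\Gamma_y$.
\end{itemize}
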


When $|B_{2,R,n}|\asymp_p r_{S,2}$, the oracle rate is
$(n_{\mathrm{AL}}r_{S,2}^2)^{-1/5}$; the familiar
$n_{\mathrm{AL}}^{-1/5}$ rate is recovered only when the bias coefficient
does not shrink. If $n_{\mathrm{AL}}B_{2,R,n}^2$ fails to diverge, the
localization bias is already too small for the AMSE criterion to have an interior minimizer,
and the remaining rate conditions determine how large $h$ may be.

\section{Proofs}
\label{apendsec:proof_results}

For source $g$, write $n_{g,k}$ for the size of its $k$th validation fold
and $\alpha_{g,k}=n_{g,k}/n_g$. The fold-balance convention above gives
\[
\max_{1\le k\le V}|\alpha_{g,k}-V^{-1}|\le n_g^{-1}.
\]
Consequently, for any fixed-$V$ collection $Q_{g,k}=O_p(1)$ uniformly in
$k$,
\[
\sum_{k=1}^V\alpha_{g,k}Q_{g,k}
-\frac1V\sum_{k=1}^VQ_{g,k}=O_p(n_g^{-1}).
\]
All foldwise population quantities and empirical averages for which this
replacement is made below are uniformly $O_p(1)$. For an AL kernel average,
this follows from its $O_p(1)$ conditional mean and its
$O_p\{(n_{\mathrm{AL},k}h)^{-1/2}\}$ centered part. Hence replacing exact
fold weights by $1/V$ contributes $o_p(b_{p,n})$, $o_p(a_n)$, or
$o_p(\rho_n)$ in the corresponding proof. This fold-weight approximation permits
first-order nuisance directions from different sources to be compared
fold by fold even when their fold sizes are not identical.

\subsection{Elementary Probability Tools Used Below}
\label{app:elementary_tools}

We record three elementary calculations so that the later proofs can focus on
the estimator-specific algebra. They are also useful for keeping track of what
cross-fitting does and does not change. In this subsection only, write
$\mathcal F_{\mathrm{HL},k}=\mathcal I_k$,
$\mathcal F_{\mathrm{AL},k}=\mathcal J_k$, and
$\mathcal F_{\mathrm{T},k}=\mathcal L_k$.

\begin{lemma}[Conditional empirical-mean bound]
\label{lem:conditional_empirical_mean}
Let $f_k$ be a possibly random function determined by the training data for
fold $k$, and suppose the observations in the corresponding validation fold
are independent of that training data. Conditional on the training data,
\[
(\mathbb P_{g,\mathcal F_{g,k}}-\mathbb P_g)f_k
=O_p\!\left(
\frac{\|f_k\|_{\mathbb P_g,2}}{\sqrt{n_{g,k}}}
\right).
\]
Consequently, when $V$ is fixed and
$\max_k\|f_k\|_{\mathbb P_g,2}=O_p(r_n)$,
\[
\sum_{k=1}^V\alpha_{g,k}
(\mathbb P_{g,\mathcal F_{g,k}}-\mathbb P_g)f_k
=O_p(r_n/\sqrt{n_g}).
\]
\end{lemma}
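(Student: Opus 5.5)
The plan is a conditional Chebyshev argument, followed by unconditioning and a fold-wise summation.

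\emph{Step 1: conditional mean and variance.} Fix $k$ and condition on the training data $\mathcal D^{(-k)}$ that determines $f_k$. By assumption the validation observations $\{O_r^g\}_{r\in\mathcal F_{g,k}}$ are independent of $\mathcal D^{(-k)}$ and i.i.d.\ from $\mathbb P_g$. Hence, conditionally, $f_k$ is a fixed function and $\mathbb P_{g,\mathcal F_{g,k}}f_k$ is an average of $n_{g,k}$ i.i.d.\ terms. This gives
\[
\mathbb E\{(\mathbb P_{g,\mathcal F_{g,k}}-\mathbb P_g)f_k\mid\mathcal D^{(-k)}\}=0,
\]
and
\[
\Var\{(\mathbb P_{g,\mathcal F_{g,k}}-\mathbb P_g)f_k\mid\mathcal D^{(-k)}\}\le\frac{\|f_k\|_{\mathbb P_g,2}^2}{n_{g,k}}.
\]
If $\|f_k\|_{\mathbb P_g,2}=\infty$ the bound is trivial, so we may assume it is finite.

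\emph{Step 2: conditional Chebyshev and unconditioning.} For any $M>0$, conditional Chebyshev gives
\[
\mathbb P\Bigl(|(\mathbb P_{g,\mathcal F_{g,k}}-\mathbb P_g)f_k|>M\|f_k\|_{\mathbb P_g,2}/\sqrt{n_{g,k}}\;\Big|\;\mathcal D^{(-k)}\Bigr)\le M^{-2}.
\]
On the event $\|f_k\|_{\mathbb P_g,2}=0$ the deviation is zero almost surely, so no division issue arises. Taking expectations over $\mathcal D^{(-k)}$ gives the same unconditional bound. This is exactly the first claim, and it holds uniformly, because $M^{-2}$ does not depend on the training data.

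\emph{Step 3: aggregation over folds.} Suppose $\max_k\|f_k\|_{\mathbb P_g,2}=O_p(r_n)$. The fold-balance convention gives $n_{g,k}\ge n_g/V-1$, so $n_{g,k}^{-1/2}\lesssim (V/n_g)^{1/2}$ for large $n_g$. Each summand is therefore $O_p(\|f_k\|_{\mathbb P_g,2}/\sqrt{n_g})=O_p(r_n/\sqrt{n_g})$. Since $\alpha_{g,k}\le1$, and a finite sum over fixed $V$ of $O_p$ terms is $O_p$ of the same order, the aggregate bound follows.

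The only delicate point is the product of a random norm with an $O_p$ bound. I would handle it explicitly. Given $\epsilon>0$, choose $B$ with $\mathbb P(\max_k\|f_k\|_{\mathbb P_g,2}>Br_n)<\epsilon/2$. Then choose $M$ with $VM^{-2}<\epsilon/2$, and apply a union bound to the conditional Chebyshev bounds of Step 2. I do not expect a real obstacle. The main thing to check is the independence of the validation fold from its own training data, which the cross-fitting design guarantees, including for the density-ratio fits that use target training folds.
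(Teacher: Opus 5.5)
Your proposal is correct and follows the paper's proof. Both use conditional mean zero and conditional variance at most $\|f_k\|_{\mathbb P_g,2}^2/n_{g,k}$, apply conditional Chebyshev, and then sum over the fixed $V$ folds using $n_{g,k}\asymp n_g$; your explicit $\epsilon$--$B$--$M$ union-bound treatment of the random rate $\max_k\|f_k\|_{\mathbb P_g,2}$ spells out a step the paper leaves implicit.
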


\begin{proof}
Conditional on the training data, the validation observations are i.i.d. and
the displayed empirical average has conditional mean zero and conditional
variance at most
$\|f_k\|_{\mathbb P_g,2}^2/n_{g,k}$. Conditional Chebyshev's inequality gives
the first assertion. The second follows by summing the fixed number of
foldwise bounds and using $n_{g,k}\asymp n_g$.
\end{proof}

\begin{lemma}[Ratio expansion]
\label{lem:ratio_expansion}
Let $p$ be bounded away from zero, $\theta=p\tau$, and suppose
$\hat p-p=O_p(r_n)$ and $\hat\theta-\theta=O_p(r_n)$ for $r_n\to0$. Then
\[
\frac{\hat\theta}{\hat p}-\tau
=\frac{\hat\theta-\theta}{p}
-\frac{\tau}{p}(\hat p-p)+o_p(r_n).
\]
\end{lemma}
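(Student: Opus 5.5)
The plan is an exact algebraic identity followed by a bound on the quadratic remainder. Write $\Delta_p=\hat p-p$ and $\Delta_\theta=\hat\theta-\theta$. Since $\theta=p\tau$, we have
\[
\frac{\hat\theta}{\hat p}-\tau=\frac{\hat\theta-\tau\hat p}{\hat p}
=\frac{\Delta_\theta-\tau\Delta_p}{\hat p},
\]
because $\theta-\tau p=0$. This identity is exact and needs no expansion.

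Next, replace $\hat p$ by $p$ in the denominator:
\[
\frac{\Delta_\theta-\tau\Delta_p}{\hat p}
=\frac{\Delta_\theta-\tau\Delta_p}{p}
-\frac{(\Delta_\theta-\tau\Delta_p)\Delta_p}{p\,\hat p}.
\]
The first term is exactly the displayed linear part, $\Delta_\theta/p-(\tau/p)\Delta_p$. It remains to show that the second term is $o_p(r_n)$.

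For that remainder, $p$ is bounded below by some $\eta>0$, and $\Delta_p=O_p(r_n)=o_p(1)$. Hence $\hat p\ge\eta/2$ with probability tending to one, so $1/\hat p=O_p(1)$. The numerator satisfies $\Delta_\theta-\tau\Delta_p=O_p(r_n)$, using that $\tau=\theta/p$ is a fixed finite constant. The product with $\Delta_p=O_p(r_n)$ is therefore $O_p(r_n^2)=o_p(r_n)$, since $r_n\to0$.

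No step is a genuine obstacle. The only point needing care is that $1/\hat p$ is stochastically bounded. This follows from consistency of $\hat p$ together with $p$ being bounded away from zero. On the vanishing-probability event where $\hat p$ is small, the $o_p$ statement is unaffected, so we can work on the event $\{\hat p\ge\eta/2\}$.
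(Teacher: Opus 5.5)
Your proposal is correct and matches the paper's proof. Your remainder $-(\Delta_\theta-\tau\Delta_p)\Delta_p/(p\hat p)$ is exactly the paper's $-\frac{(\hat\theta-\theta)(\hat p-p)}{p\hat p}+\frac{\tau(\hat p-p)^2}{p\hat p}$, bounded by $O_p(r_n^2)$ on the event where $\hat p$ stays away from zero; like the paper, you implicitly need $\tau$ to be bounded, which holds wherever the lemma is invoked since $\tau$ is then a TPR or FPR in $[0,1]$.
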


\begin{proof}
On an event whose probability tends to one, $\hat p$ is bounded away from
zero. The exact difference between the left-hand side and the two displayed
linear terms is
\[
-\frac{(\hat\theta-\theta)(\hat p-p)}{p\hat p}
+\frac{\tau(\hat p-p)^2}{p\hat p}.
\]
Its absolute value is $O_p(r_n^2)=o_p(r_n)$.
\end{proof}

\begin{lemma}[Three-sample triangular-array CLT]
\label{lem:three_sample_clt}
Suppose three independent empirical averages have mean-zero summands. Assume
the HL and target summands have fixed bounded envelopes, while the AL summand
has envelope $O(h^{-1})$ and variance $O(h^{-1})$. Let the variance of their
sum be $s_n^2$ and suppose
\[
s_n\asymp
\{n_{\mathrm{HL}}^{-1}+(n_{\mathrm{AL}}h)^{-1}
+n_{\mathrm{T}}^{-1}\}^{1/2},
\qquad n_{\mathrm{AL}}h\to\infty.
\]
Then the sum divided by $s_n$ converges in distribution to
$\mathcal N(0,1)$.
\end{lemma}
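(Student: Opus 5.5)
The plan is to reduce the claim to the Lindeberg--Feller central limit theorem for a triangular array of independent summands. Write the three empirical averages as $T_{\mathrm{HL}}=n_{\mathrm{HL}}^{-1}\sum_i \xi_{\mathrm{HL},i}$, $T_{\mathrm{AL}}=n_{\mathrm{AL}}^{-1}\sum_j \xi_{\mathrm{AL},j}$ and $T_{\mathrm{T}}=n_{\mathrm{T}}^{-1}\sum_\ell \xi_{\mathrm{T},\ell}$, with mean-zero summands. Since the three samples are mutually independent and i.i.d.\ within each sample, the pooled collection
\[
\{\xi_{\mathrm{HL},i}/(n_{\mathrm{HL}}s_n)\}\cup\{\xi_{\mathrm{AL},j}/(n_{\mathrm{AL}}s_n)\}\cup\{\xi_{\mathrm{T},\ell}/(n_{\mathrm{T}}s_n)\}
\]
is, for each $n$, a finite family of independent mean-zero variables. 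By definition of $s_n^2$, their variances sum to one. It therefore suffices to verify the Lindeberg condition: for every $\epsilon>0$, the sum of $\mathbb E[\zeta^2\Ind\{|\zeta|>\epsilon\}]$ over all normalized summands $\zeta$ tends to zero.

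The cleanest route is to show that every normalized summand is uniformly small, i.e., that its sup-norm is $o(1)$. Then for large $n$ every truncated second moment vanishes identically, and Lindeberg holds trivially.

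For the HL summands, the envelope is a bounded constant $C$, so the summands are at most $C/(n_{\mathrm{HL}}s_n)$ in absolute value. Because $s_n\gtrsim n_{\mathrm{HL}}^{-1/2}$ (from $s_n\asymp a_n$ and $a_n\ge n_{\mathrm{HL}}^{-1/2}$), this bound is $O(n_{\mathrm{HL}}^{-1/2})\to0$. The target summands are handled identically, giving $O(n_{\mathrm{T}}^{-1/2})$.

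For the AL summands, the envelope is $O(h^{-1})$, so the normalized summand is at most $C/(h\,n_{\mathrm{AL}}s_n)$. Using $s_n\gtrsim (n_{\mathrm{AL}}h)^{-1/2}$, this is bounded by
\[
\frac{C(n_{\mathrm{AL}}h)^{1/2}}{h\,n_{\mathrm{AL}}}=C(n_{\mathrm{AL}}h)^{-1/2}\to0,
\]
by the assumption $n_{\mathrm{AL}}h\to\infty$. Hence $\max|\zeta|\to0$ deterministically, and Lindeberg--Feller gives $(T_{\mathrm{HL}}+T_{\mathrm{AL}}+T_{\mathrm{T}})/s_n\Rightarrow\mathcal N(0,1)$.

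The only delicate point is the AL block, where the envelope blows up like $h^{-1}$. The key is that the nondegeneracy lower bound $s_n\gtrsim(n_{\mathrm{AL}}h)^{-1/2}$ exactly compensates: the ratio of the envelope to $n_{\mathrm{AL}}s_n$ is governed by the effective local sample size $n_{\mathrm{AL}}h$. The variance bound $O(h^{-1})$ is not actually needed for Lindeberg. It only guarantees that the upper bound $s_n\lesssim a_n$ is consistent, so I will remark that it enters only through the hypothesis $s_n\asymp a_n$.

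If one prefers not to use the sup bound, a Lyapunov check with third moments works too. The AL contribution is then
\[
n_{\mathrm{AL}}\cdot\frac{O(h^{-2})}{(n_{\mathrm{AL}}s_n)^3}=O\{(n_{\mathrm{AL}}h)^{-1/2}\},
\]
using $\mathbb E|\xi_{\mathrm{AL}}|^3\le\|\xi_{\mathrm{AL}}\|_\infty\Var(\xi_{\mathrm{AL}})=O(h^{-2})$, which yields the same conclusion.
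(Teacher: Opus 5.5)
Your proposal is correct and matches the paper's proof: the paper likewise uses the nondegeneracy lower bound on $s_n$ to bound each normalized HL/target summand by $C/(n_g s_n)\le C/\sqrt{n_g}$ and each AL summand by $C/(n_{\mathrm{AL}}h\,s_n)\le C/\sqrt{n_{\mathrm{AL}}h}$, so that no single observation is non-negligible and Lindeberg--Feller applies. Your side remarks are also correct but not needed for the argument: the $O(h^{-1})$ variance bound enters only through $s_n\asymp a_n$, and the Lyapunov third-moment check gives $O\{(n_{\mathrm{AL}}h)^{-1/2}\}$.
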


\begin{proof}
After division by $s_n$, the largest possible contribution of one HL or
target observation is bounded by $C/(n_gs_n)\le C/\sqrt{n_g}\to0$. The
corresponding AL bound is
\[
\frac{C}{n_{\mathrm{AL}}hs_n}
\le \frac{C}{\sqrt{n_{\mathrm{AL}}h}}\to0.
\]
Thus no single observation can contribute a nonnegligible amount. Because the
normalized variances sum to one, the Lindeberg condition follows, and the
Lindeberg--Feller theorem gives the result.
\end{proof}

\begin{lemma}[A Conditional-Density Condition for Regularity along the Score-Estimation Path]
\label{lem:primitive_score_path}
Suppose Assumption~\ref{ass:smooth} holds and $\bar r_S=o_p(1)$. For
$a\in\{m_Y^\star,1-m_Y^\star\}$, let
$p_a=\mathbb E_{\mathrm{T}}\{a(X)\}$ and define the tilted target law
$d\mathbb P_a=a\,d\mathbb P_{\mathrm{T}}/p_a$, with expectation
$\mathbb E_a$. Conditional on the fold-$k$
training data, put $V_k=\delta_S^{(-k)}(X)$ and let
$f_{a,k}(z\mid v)$ be the conditional density of
$Z=m_S^\star(X)$ given $V_k=v$ under $\mathbb P_a$.

Let $\mathcal U^+$ be a fixed open neighborhood with
$\overline{\mathcal U}\subset\mathcal U^+$. Suppose that, with probability tending to one,
$f_{a,k}(z\mid v)$ is three times continuously differentiable in $z$ on
$\mathcal U^+$ and, for a constant $C<\infty$,
\[
\max_k\max_a\operatorname*{ess\,sup}_{v}
\sup_{z\in\mathcal U^+}
\sum_{j=0}^3|\partial_z^j f_{a,k}(z\mid v)|\le C.
\]
Then part (b) of Assumption~\ref{ass:higher_smoothness} holds.
\end{lemma}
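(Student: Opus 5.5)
We need to verify part (b) of Assumption~\ref{ass:higher_smoothness}: existence of a density $\mu_{k,u}^{(a)}$ for $\nu_{k,u}^{(a)}$ near $c$, the path-derivative identity, the bound $\sup_u|\partial_u\mu_{k,u}^{(a)}(c)|=O_p(r_{S,2}^2)$, and the second-derivative bound and modulus for $\mu_{k,1}^{(a)}$. The plan is to write everything through the tilted law $\mathbb P_a$ and the conditional density $f_{a,k}(z\mid v)$, and then use a change of variables $t=z+uv$.

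\textbf{Step 1: explicit density.} For a Borel set $A\subset\mathcal U$,
\[
\nu_{k,u}^{(a)}(A)=p_a\,\mathbb E_a[V_k\Ind\{Z+uV_k\in A\}]
=p_a\int v\int \Ind\{z+uv\in A\}f_{a,k}(z\mid v)\,dz\,dP_{V_k}^a(v).
\]
On the event $\bar r_S<\epsilon$ (probability tending to one), $|uv|\le\bar r_S$ for $u\in[0,1]$. Hence $z+uv\in\mathcal U$ forces $z\in\mathcal U^+$. Substituting $t=z+uv$ gives
\[
\mu_{k,u}^{(a)}(t)=p_a\,\mathbb E_a\{V_k f_{a,k}(t-uV_k\mid V_k)\}.
\]
This is finite and continuous in $t$ by the uniform derivative bounds. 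The absolute continuity of $F_{k,a}$ with $F'_{k,a}(u)=\mu_{k,u}^{(a)}(c)$ follows by writing $F_{k,a}(u)=p_a\mathbb E_a\int\Ind\{z\ge c-uV_k\}f_{a,k}(z\mid V_k)dz$ and differentiating under the expectation. Dominated convergence is justified by the envelope $|V_k|C$.

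\textbf{Step 2: path-derivative bound.} Differentiating under the expectation gives
\[
\partial_u\mu_{k,u}^{(a)}(c)=-p_a\mathbb E_a\{V_k^2\partial_zf_{a,k}(c-uV_k\mid V_k)\}.
\]
Its absolute value is at most $Cp_a\mathbb E_a V_k^2=C\,\mathbb E_{\mathrm T}\{a\,(\delta_S^{(-k)})^2\}\le C\|\delta_S^{(-k)}\|_{\mathbb P_{\mathrm T},2}^2\le Cr_{S,2}^2$, because $0\le a\le1$ and $r_{S,2}$ includes the target norm. Continuity in $u$ of this derivative follows from continuity of $\partial_z f$ together with dominated convergence. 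Uniformity over the finitely many folds and the two choices of $a$ is immediate.

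\textbf{Step 3: second-derivative bound and modulus.} We have
\[
\partial_t^2\mu_{k,1}^{(a)}(t)=p_a\mathbb E_a\{V_k\partial_z^2f_{a,k}(t-V_k\mid V_k)\},
\]
so $|\partial_t^2\mu_{k,1}^{(a)}(t)|\le C\,\mathbb E_{\mathrm T}|a\,\delta_S|\le Cr_{S,2}$ by Cauchy--Schwarz. For the modulus, with $|s-t|\le h$, the mean value theorem applied through the third derivative gives
\[
|\partial_t^2\mu(s)-\partial_t^2\mu(t)|\le C\,h\,\mathbb E_{\mathrm T}|a\,\delta_S|=O_p(hr_{S,2})=o_p(r_{S,2}),
\]
since $h\to0$. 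Twice continuous differentiability follows by dominated convergence.

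\textbf{Main obstacle.} The delicate point is measure-theoretic bookkeeping rather than the estimates. We must justify the change of variables and differentiation under the integral uniformly, given only an essential supremum over $v$. We must also ensure that the shifted arguments $c-uV_k$ and $t-V_k$ remain in $\mathcal U^+$, which requires $\bar r_S$ to be smaller than the distance from $\overline{\mathcal U}$ to $(\mathcal U^+)^c$. I would handle this by working throughout on the event $\{\bar r_S<\operatorname{dist}(\overline{\mathcal U},(\mathcal U^+)^c)\}$ intersected with the event on which the density bounds hold. Since both events have probability tending to one, all stated $O_p$ and $o_p$ conclusions follow.
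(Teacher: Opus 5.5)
Your proposal is correct and follows essentially the same route as the paper. Both arguments rest on the change of variables giving $\mu_{k,u}^{(a)}(t)=p_a\mathbb E_a\{V_kf_{a,k}(t-uV_k\mid V_k)\}$ and the representation $F_{k,a}(u)=p_a\mathbb E_a\{1-F_{a,k}(c-uV_k\mid V_k)\}$, then use dominated differentiation together with the bounds $p_a\mathbb E_a|V_k|\le r_{S,2}$ and $p_a\mathbb E_aV_k^2\le r_{S,2}^2$ and the derivative bounds on $f_{a,k}$. Your explicit restriction to the event $\{\bar r_S<\operatorname{dist}(\overline{\mathcal U},(\mathcal U^+)^c)\}$ is the same localization the paper uses when it notes that $t-uV_k\in\mathcal U^+$ with probability tending to one.
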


\begin{proof}
Because $\bar r_S=o_p(1)$, with probability tending to one
$t-uV_k\in\mathcal U^+$ for every $t\in\mathcal U$, $u\in[0,1]$, and fold
$k$. Conditioning on $V_k$ and changing variables from $Z$ to
$Z+uV_k$ give
\[
\mu_{k,u}^{(a)}(t)
=p_a\mathbb E_a\!\left[
V_k f_{a,k}(t-uV_k\mid V_k)\right].
\]
The same conditioning argument gives
\[
F_{k,a}(u)
=p_a\mathbb E_a\!\left[
1-F_{a,k}(c-uV_k\mid V_k)\right],
\]
where $F_{a,k}(\cdot\mid v)$ is the conditional distribution function
corresponding to $f_{a,k}(\cdot\mid v)$. Dominated differentiation therefore
yields $F_{k,a}'(u)=\mu_{k,u}^{(a)}(c)$ and
\begin{align*}
\partial_u\mu_{k,u}^{(a)}(c)
&=-p_a\mathbb E_a\!\left[
V_k^2\partial_z f_{a,k}(c-uV_k\mid V_k)\right],\\
\partial_t^2\mu_{k,1}^{(a)}(t)
&=p_a\mathbb E_a\!\left[
V_k\partial_z^2 f_{a,k}(t-V_k\mid V_k)\right].
\end{align*}
Since $0\le a\le1$ and $\|V_k\|_{\mathbb P_{\mathrm{T}},2}\le r_{S,2}$,
\[
p_a\mathbb E_a|V_k|\le r_{S,2},
\qquad
p_a\mathbb E_a(V_k^2)\le r_{S,2}^2.
\]
The derivative bound on $f_{a,k}$ now gives the two rate bounds in
Assumption~\ref{ass:higher_smoothness}. Finally, the mean-value theorem and
the bound on $\partial_z^3f_{a,k}$ give
\begin{align*}
&\sup_{\substack{s,t\in\mathcal U\\|s-t|\le h}}
|\partial_t^2\mu_{k,1}^{(a)}(s)
-\partial_t^2\mu_{k,1}^{(a)}(t)|\\
&\qquad\le Cp_a h\mathbb E_a|V_k|
=O_p(hr_{S,2})=o_p(r_{S,2}),
\end{align*}
which proves the remaining condition.
\end{proof}

\subsection{Proof of Theorem \ref{thm:p1_consistency} (Target Prevalence Estimator)}
\label{apendsec:proof_p1}

\begin{proof}
For fold $k$, abbreviate
$m_k=\hat m_Y^{(-k)}$ and
$w_k=\hat w_{\mathrm{HL}}^{(-k)}$. When the full-sample averages defining
$\hat p_1$ are decomposed by fold, fold $k$ receives its exact
source-specific sample proportion. To display these weights explicitly, set
\[
\alpha_{\mathrm{T},k}=\frac{|\mathcal L_k|}{n_{\mathrm{T}}},
\qquad
\alpha_{\mathrm{HL},k}=\frac{|\mathcal I_k|}{n_{\mathrm{HL}}},
\]
and abbreviate
\begin{align*}
Q_{\mathrm{T},k}&=\mathbb P_{\mathrm{T},\mathcal L_k}m_k,\\
Q_{\mathrm{HL},k}&=\mathbb P_{\mathrm{HL},\mathcal I_k}
\bigl[w_k\{Y-m_k(X)\}\bigr].
\end{align*}
The estimator in Section~\ref{Method} is therefore exactly
\[
\hat p_1=\sum_{k=1}^V
\{\alpha_{\mathrm{T},k}Q_{\mathrm{T},k}
+\alpha_{\mathrm{HL},k}Q_{\mathrm{HL},k}\}.
\]
For the foldwise cancellation below, define its equal-fold-weight counterpart
\[
\widetilde p_1=\frac1V\sum_{k=1}^V
\{Q_{\mathrm{T},k}+Q_{\mathrm{HL},k}\}.
\]
Balanced folds satisfy
$|\alpha_{g,k}-V^{-1}|\le n_g^{-1}$. Moreover,
$\max_k|Q_{g,k}|=O_p(1)$ under Assumption~\ref{ass:smooth}, the stated
$L_2$ convergence, and fixed $V$. Consequently,
\begin{align*}
\hat p_1-\widetilde p_1
&=\sum_{k=1}^V
\left(\alpha_{\mathrm{T},k}-\frac1V\right)Q_{\mathrm{T},k}\\
&\quad+\sum_{k=1}^V
\left(\alpha_{\mathrm{HL},k}-\frac1V\right)Q_{\mathrm{HL},k}\\
&=O_p(n_{\mathrm{T}}^{-1}+n_{\mathrm{HL}}^{-1})
=o_p(b_{p,n}),
\end{align*}
because $b_{p,n}=(n_{\mathrm{HL}}^{-1}+n_{\mathrm{T}}^{-1})^{1/2}$.
Thus replacing the exact fold proportions by $1/V$ does not affect the
first-order expansion, and it is enough to analyze $\widetilde p_1$.

\paragraph{Step 1: Establish the population identity.}
For deterministic functions $m$ and $w$, recall
\[
\Psi(m,w;1)=\mathbb P_{\mathrm{T}}m
+\mathbb P_{\mathrm{HL}}\bigl[w\{Y-m(X)\}\bigr].
\]
Assumption~\ref{ass:smooth} gives
$\mathbb E_{\mathrm{HL}}(Y\mid X)=m_Y^\star(X)$. Therefore, conditioning on
$X$ in the second term yields
\begin{align*}
\Psi(m,w;1)-p_1
&=\mathbb P_{\mathrm{T}}(m-m_Y^\star)
+\mathbb P_{\mathrm{HL}}\{w(m_Y^\star-m)\}.
\end{align*}
The same assumption gives the change-of-measure identity
$\mathbb P_{\mathrm{T}}f=\mathbb P_{\mathrm{HL}}(w_{\mathrm{HL}}f)$.
Applying it to $f=m-m_Y^\star$ gives
\begin{equation}
\Psi(m,w;1)-p_1
=\mathbb P_{\mathrm{HL}}
\bigl[\{w_{\mathrm{HL}}-w\}(m-m_Y^\star)\bigr].
\label{eq:p1_product_identity_proof}
\end{equation}
The identity expresses the population bias as the product of the
outcome-regression and density-ratio errors. This product form is used below
to control the remainder when both nuisance estimators are consistent.

\paragraph{Step 2: Derive the first-order decomposition.}
Write
\[
\delta_{Y,k}=m_k-m_Y^\star,
\qquad
\delta_{w,k}=w_k-w_{\mathrm{HL}}.
\]
For compactness, define the centered validation-fold operators
\[
\mathbb U_{\mathrm{T},k}=\mathbb P_{\mathrm{T},\mathcal L_k}
-\mathbb P_{\mathrm{T}},
\qquad
\mathbb U_{\mathrm{HL},k}=\mathbb P_{\mathrm{HL},\mathcal I_k}
-\mathbb P_{\mathrm{HL}}.
\]
Substituting $(m_k,w_k)$ into
\eqref{eq:p1_product_identity_proof} shows that the fold-$k$ population bias is
$-\mathbb P_{\mathrm{HL}}(\delta_{w,k}\delta_{Y,k})$.
Expanding $w_k=w_{\mathrm{HL}}+\delta_{w,k}$ and
$m_k=m_Y^\star+\delta_{Y,k}$ gives
\begin{align*}
H_k
&:=w_k(Y-m_k)-w_{\mathrm{HL}}(Y-m_Y^\star)\\
&=\delta_{w,k}(Y-m_Y^\star)\\
&\quad-w_{\mathrm{HL}}\delta_{Y,k}
-\delta_{w,k}\delta_{Y,k}.
\end{align*}
Hence direct addition and subtraction of the two true summands give the exact
decomposition
\begin{align}
\widetilde p_1-p_1
&=\frac1V\sum_{k=1}^V
\left[\mathbb U_{\mathrm{T},k}m_Y^\star
+\mathbb U_{\mathrm{HL},k}
\{w_{\mathrm{HL}}(Y-m_Y^\star)\}\right]
\notag\\
&\quad+R_{p,\mathrm{emp}}+R_{p,\mathrm{pop}},
\label{eq:p1_exact_decomposition}
\end{align}
where
\begin{align*}
R_{p,\mathrm{emp}}
&=\frac1V\sum_{k=1}^V
\{\mathbb U_{\mathrm{T},k}\delta_{Y,k}
+\mathbb U_{\mathrm{HL},k}H_k\}\\
&=\frac1V\sum_{k=1}^V\Bigl[
\mathbb U_{\mathrm{T},k}\delta_{Y,k}
+\mathbb U_{\mathrm{HL},k}
\{\delta_{w,k}(Y-m_Y^\star)\}\Bigr]\\
&\quad-\frac1V\sum_{k=1}^V\mathbb U_{\mathrm{HL},k}
\{w_{\mathrm{HL}}\delta_{Y,k}+\delta_{w,k}\delta_{Y,k}\},\\
R_{p,\mathrm{pop}}
&=-\frac1V\sum_{k=1}^V
\mathbb P_{\mathrm{HL}}(\delta_{w,k}\delta_{Y,k}).
\end{align*}
Thus $R_{p,\mathrm{emp}}$ contains centered validation-sample fluctuations,
whereas $R_{p,\mathrm{pop}}$ is the remaining product bias.

\paragraph{Step 3: Bound the two remainders.}
For each fold, condition on the data used to train its nuisance estimators.
Cross-fitting makes the corresponding validation observations independent of
the fitted functions. Thus, conditional on the training data,
$\mathbb U_{\mathrm{T},k}\delta_{Y,k}$ and
$\mathbb U_{\mathrm{HL},k}H_k$ are centered empirical averages of fixed
functions. Boundedness of $Y$, $m_k$, and $w_{\mathrm{HL}}$ in
Assumption~\ref{ass:smooth} gives
\[
\|H_k\|_{\mathbb P_{\mathrm{HL}},2}
=O_p(r_{Y,2}+r_{w,\mathrm{HL},2});
\]
here the product term is controlled because $\delta_{Y,k}$ is uniformly
bounded. At this first use, Lemma~\ref{lem:conditional_empirical_mean} gives
the two bounds explicitly:
\begin{align*}
\mathbb U_{\mathrm{T},k}\delta_{Y,k}
&=O_p\!\left(
\frac{\|\delta_{Y,k}\|_{\mathbb P_{\mathrm{T}},2}}
{\sqrt{n_{\mathrm{T},k}}}\right)
=O_p\!\left(\frac{r_{Y,2}}{\sqrt{n_{\mathrm{T}}}}\right),\\
\mathbb U_{\mathrm{HL},k}H_k
&=O_p\!\left(
\frac{\|H_k\|_{\mathbb P_{\mathrm{HL}},2}}
{\sqrt{n_{\mathrm{HL},k}}}\right)\\
&=O_p\!\left(
\frac{r_{Y,2}+r_{w,\mathrm{HL},2}}
{\sqrt{n_{\mathrm{HL}}}}\right).
\end{align*}
The last equalities use balanced folds, so $n_{g,k}\asymp n_g$, and the
definitions of the nuisance rates. Averaging over the fixed number $V$ of
folds therefore yields
\[
R_{p,\mathrm{emp}}
=O_p\!\left(
\frac{r_{Y,2}}{\sqrt{n_{\mathrm{T}}}}
+\frac{r_{Y,2}+r_{w,\mathrm{HL},2}}{\sqrt{n_{\mathrm{HL}}}}
\right).
\]
Because both nuisance errors are $o_p(1)$ and
$b_{p,n}$ is at least of the order of both
$n_{\mathrm{T}}^{-1/2}$ and $n_{\mathrm{HL}}^{-1/2}$, this is
$o_p(b_{p,n})$. For the population remainder, Cauchy--Schwarz gives
\[
|R_{p,\mathrm{pop}}|
\le r_{w,\mathrm{HL},2}r_{Y,2}
=o_p(b_{p,n}),
\]
where the last equality is precisely the product-rate condition in the
theorem.

Finally, the fold-balance calculation converts the leading term in
\eqref{eq:p1_exact_decomposition} into the two full-sample empirical
averages, with an additional $o_p(b_{p,n})$ error. Specifically,
$\mathbb P_{\mathrm{T}}m_Y^\star=p_1$ and
$\mathbb P_{\mathrm{HL}}\{w_{\mathrm{HL}}(Y-m_Y^\star)\}=0$, so these
centered averages are exactly the empirical averages of
$\phi_{p_1}^{\mathrm{T}}$ and $\phi_{p_1}^{\mathrm{HL}}$, respectively.
Combining this fact with
$\hat p_1-\widetilde p_1=o_p(b_{p,n})$ gives
\[
\hat p_1-p_1
=\frac1{n_{\mathrm{HL}}}\sum_{i=1}^{n_{\mathrm{HL}}}
\phi_{p_1,i}^{\mathrm{HL}}
+\frac1{n_{\mathrm{T}}}\sum_{\ell=1}^{n_{\mathrm{T}}}
\phi_{p_1,\ell}^{\mathrm{T}}
+o_p(b_{p,n}),
\]
which is the claimed expansion.
\end{proof}

\subsection{Proof of Theorem \ref{thm:asymptotic_normality_tpr}}
\label{apendsec:proof_tpr}

\begin{proof}
Let
\[
\widehat\theta_1(c)=\widehat\theta_{\mathrm{T}}(c)
+\widehat\theta_{\mathrm{HL}}(c)
+\widehat\theta_{\mathrm{AL}}(c),\qquad
\theta_1(c)=\mathbb P_{\mathrm{T}}\{m_Y^\star \Ind_c\}.
\]
We first establish an expansion for the numerator and then account for the
estimated denominator. Replacing the exact source-specific fold weights by
$1/V$ costs $o_p(a_n)$ by the fold-weight calculation at the start of this
section; all cancellations below are therefore stated fold by fold with
common weight $1/V$.

At the true nuisance functions, define the mean-zero source terms
\begin{align*}
\xi_{\mathrm{HL}}(c)
&=w_{\mathrm{HL}}(X)
  \{Y-m_Y^\star(X)\}\Ind_c(X),\\
\xi_{\mathrm{AL},h}(c)
&=w_{\mathrm{AL}}(X)m_Y^\star(X)\\
&\quad\times K_h\{c-m_S^\star(X)\}
  \{S-m_S^\star(X)\},\\
\xi_{\mathrm{T}}(c)
&=m_Y^\star(X)\Ind_c(X)-\theta_1(c).
\end{align*}
We show that
\begin{equation}
\begin{aligned}
\widehat\theta_1(c)-\theta_1(c)
&=\mathbb P_{\mathrm{HL},n}\xi_{\mathrm{HL}}(c)
 +\mathbb P_{\mathrm{AL},n}\xi_{\mathrm{AL},h}(c)\\
&\quad+\mathbb P_{\mathrm{T},n}\xi_{\mathrm{T}}(c)+o_p(a_n).
\end{aligned}
\label{eq:numerator_expansion}
\end{equation}

We first give the target and HL remainder decomposition. Fix a fold and suppress its
index. Write $M=m_Y^\star$, $Z=m_S^\star$,
$\delta_Y=\hat m_Y-M$, $\delta_S=\hat m_S-Z$, and
$\delta_{w,d}=\hat w_d-w_d$. Also set
\[
I=\Ind\{Z\ge c\},
\qquad
\widehat I=\Ind\{Z+\delta_S\ge c\},
\]
and let $\mathbb U_{\mathrm{T}}$ and $\mathbb U_{\mathrm{HL}}$ denote the
centered empirical operators on the two validation folds. Directly expanding
the target and HL numerator terms gives
\begin{align}
&\mathbb P_{\mathrm{T},\mathcal L}\{(M+\delta_Y)\widehat I\}
\notag\\
&\quad+\mathbb P_{\mathrm{HL},\mathcal I}
[(w_{\mathrm{HL}}+\delta_{w,\mathrm{HL}})
\{Y-M-\delta_Y\}\widehat I]-\theta_1(c)
\notag\\
&\quad=\mathbb U_{\mathrm{T}}(MI)
\notag\\
&\qquad+\mathbb U_{\mathrm{HL}}[w_{\mathrm{HL}}(Y-M)I]
\notag\\
&\qquad+B_S(c)-\mathbb P_{\mathrm{HL}}
(\delta_{w,\mathrm{HL}}\delta_Y\widehat I)
\notag\\
&\qquad+R_{\mathrm{T}}+R_{\mathrm{HL}},
\label{eq:tpr_target_hl_exact}
\end{align}
where
\begin{align*}
R_{\mathrm{T}}
&=\mathbb U_{\mathrm{T}}
[\delta_Y\widehat I+M(\widehat I-I)],\\
R_{\mathrm{HL}}
&=\mathbb U_{\mathrm{HL}}
\bigl[w_{\mathrm{HL}}(Y-M)(\widehat I-I)
+\delta_{w,\mathrm{HL}}(Y-M)\widehat I\\
&\hspace{8em}-w_{\mathrm{HL}}\delta_Y\widehat I
-\delta_{w,\mathrm{HL}}\delta_Y\widehat I\bigr].
\end{align*}
To verify \eqref{eq:tpr_target_hl_exact}, first expand the four products in
the HL term. Conditional mean-zero residuals give
\[
\mathbb P_{\mathrm{HL}}
[w_{\mathrm{HL}}(Y-M)\widehat I]
=\mathbb P_{\mathrm{HL}}
[\delta_{w,\mathrm{HL}}(Y-M)\widehat I]=0.
\]
Change of measure then gives the exact cancellation
\[
\mathbb P_{\mathrm{T}}(\delta_Y\widehat I)
-\mathbb P_{\mathrm{HL}}
(w_{\mathrm{HL}}\delta_Y\widehat I)=0,
\]
leaving only the displayed product bias and
$B_S(c)=\mathbb P_{\mathrm{T}}\{M(\widehat I-I)\}$.

The product bias in \eqref{eq:tpr_target_hl_exact} is at most
$r_{w,\mathrm{HL},2}r_{Y,2}$ by Cauchy--Schwarz. Moreover,
$|\widehat I-I|$ can be nonzero only when $|Z-c|\le\bar r_S$. The bounded
target score density therefore gives
\[
\mathbb P_{\mathrm{T}}(|Z-c|\le\bar r_S)=O_p(\bar r_S).
\]
Applying Lemma~\ref{lem:conditional_empirical_mean} to the terms above yields
\begin{align*}
R_{\mathrm{T}}
&=O_p\!\left(
\frac{r_{Y,2}+\bar r_S^{1/2}}{\sqrt{n_{\mathrm{T}}}}
\right)=o_p(n_{\mathrm{T}}^{-1/2}),\\
R_{\mathrm{HL}}
&=O_p\!\left(
\frac{r_{Y,2}+r_{w,\mathrm{HL},2}+\bar r_S^{1/2}}
{\sqrt{n_{\mathrm{HL}}}}
\right)=o_p(n_{\mathrm{HL}}^{-1/2}).
\end{align*}
For the first term in $R_{\mathrm{HL}}$, use
$\mathbb E_{\mathrm{HL}}
[w_{\mathrm{HL}}^2\Ind\{|Z-c|\le\bar r_S\}]
\le C\mathbb P_{\mathrm{T}}(|Z-c|\le\bar r_S)$; the remaining terms follow
from the nuisance $L_2$ rates and boundedness. This accounts explicitly for
every target and HL remainder.

It remains to control estimation of the surrogate-derived model inside the target indicator.
We use the score-estimation path in Assumption \ref{ass:higher_smoothness}, which
avoids replacing a hard indicator evaluated at the fitted score by a smoothed
indicator. Suppress the fold index, put $Z=m_S^\star(X)$ under the target law, and for
$a=m_Y^\star$ define
\[
\nu_u^{(1)}(A)=\mathbb P_{\mathrm{T}}
\left[a(X)\delta_S(X)\Ind\{Z+u\delta_S(X)\in A\}\right],
\]
and let $\mu_u^{(1)}$ be its density on $\mathcal U$. Also define
\[
F(u)=\mathbb P_{\mathrm{T}}\!\left[
a(X)\Ind\{Z+u\delta_S(X)\ge c\}\right].
\]
The signed-density condition permits differentiation along this path and gives
\[
F'(u)=\mu_u^{(1)}(c).
\]
Consequently,
\[
B_S(c)=F(1)-F(0)=\int_0^1\mu_u^{(1)}(c)\,du.
\]

Conditional on the training data, transportability of the AL residual gives
\begin{align*}
&\mathbb P_{\mathrm{AL}}\!\left[
w_{\mathrm{AL}}m_Y^\star K_h(c-m_S^\star-\delta_S)
\{S-m_S^\star-\delta_S\}\right]\\
&\qquad=-\mathbb P_{\mathrm{T}}\!\left[
m_Y^\star K_h(c-Z-\delta_S)\delta_S\right]\\
&\qquad=-\int K_h(c-t)\,d\nu_1^{(1)}(t),
\end{align*}
where $\nu_1^{(1)}$ is the signed measure in
Assumption~\ref{ass:higher_smoothness}. Since its density is required only on
$\mathcal U$, write
\begin{align*}
-\int K_h(c-t)\,d\nu_1^{(1)}(t)
&=-\int_{\mathcal U}K_h(c-t)\mu_1^{(1)}(t)\,dt\\
&\quad+R_{\mathrm{tail}},\\
R_{\mathrm{tail}}&=o_p(h^2r_{S,2}).
\end{align*}
To verify the tail bound, let
$d=\operatorname{dist}(c,\mathcal U^c)>0$. Boundedness of $a$ and
Cauchy--Schwarz give
$\|\nu_1^{(1)}\|_{\mathrm{TV}}\le
\mathbb P_{\mathrm{T}}\{a(X)|\delta_S(X)|\}=O_p(r_{S,2})$.
For a compactly supported kernel, the tail is zero for all sufficiently small
$h$. For an exponentially decreasing kernel,
\begin{align*}
|R_{\mathrm{tail}}|
&\le \|\nu_1^{(1)}\|_{\mathrm{TV}}
\sup_{|t-c|\ge d}|K_h(c-t)|\\
&=O_p\{r_{S,2}h^{-1}\exp(-C_0d/h)\}
=o_p(h^2r_{S,2})
\end{align*}
for some $C_0>0$. Define the combined deterministic score drift by
\[
R_S(c)=B_S(c)+\mathbb P_{\mathrm{AL}}\!\left[
w_{\mathrm{AL}}m_Y^\star K_\delta(\epsilon_S-\delta_S)\right],
\]
where $K_\delta=K_h(c-Z-\delta_S)$ and $\epsilon_S=S-Z$.
The preceding identities give
\begin{align*}
R_S(c)
&=\int_0^1\{\mu_u^{(1)}(c)-\mu_1^{(1)}(c)\}\,du\\
&\quad+\mu_1^{(1)}(c)
-\int_{\mathcal U}K_h(c-t)\mu_1^{(1)}(t)\,dt
+R_{\mathrm{tail}}.
\end{align*}
By the mean-value theorem in $u$ and Assumption
\ref{ass:higher_smoothness},
\begin{align*}
&\left|\int_0^1
\{\mu_u^{(1)}(c)-\mu_1^{(1)}(c)\}\,du\right|\\
&\qquad\le \frac12\sup_{0\le u\le1}|\partial_u\mu_u^{(1)}(c)|
=O_p(r_{S,2}^2).
\end{align*}
Because $K$ is symmetric, its first moment vanishes. A second-order Taylor
expansion and Assumption~\ref{ass:higher_smoothness} therefore give
\begin{align*}
&\mu_1^{(1)}(c)
-\int_{\mathcal U}K_h(c-t)\mu_1^{(1)}(t)\,dt\\
&\qquad=-\frac{h^2\mu_2(K)}{2}
\partial_t^2\mu_1^{(1)}(c)+o_p(h^2r_{S,2}).
\end{align*}
Together with the already separated $R_{\mathrm{tail}}$, this gives
\begin{align*}
R_S(c)
&=O_p(r_{S,2}^2+h^2r_{S,2})+o_p(a_n)
=o_p(a_n),
\end{align*}
by Assumptions~\ref{ass:bandwidth} and~\ref{ass:rate}. This establishes the
required cancellation.

We next connect this population cancellation to the empirical AL correction.
Recall $\epsilon_S=S-Z$ and $K_\delta=K_h(c-Z-\delta_S)$, and set
$K_0=K_h(c-Z)$,
\[
\widehat C=(w_{\mathrm{AL}}+\delta_{w,\mathrm{AL}})
(m_Y^\star+\delta_Y),\qquad
D_C=\widehat C-w_{\mathrm{AL}}m_Y^\star.
\]
Let
\[
\widehat G=\widehat C K_\delta(\epsilon_S-\delta_S),
\qquad
G_0=w_{\mathrm{AL}}m_Y^\star K_0\epsilon_S,
\]
and let $\mathbb U_{\mathrm{AL}}=
\mathbb P_{\mathrm{AL},\mathcal J}-\mathbb P_{\mathrm{AL}}$.
The fitted AL integrand minus its oracle counterpart has the exact
decomposition
\begin{align*}
\widehat G-G_0
&=D_CK_\delta\epsilon_S
+w_{\mathrm{AL}}m_Y^\star(K_\delta-K_0)\epsilon_S\\
&\qquad-\widehat C K_\delta\delta_S.
\end{align*}
Conditional on the training data, the first two terms on the right have
$\mathbb P_{\mathrm{AL}}$-mean zero. Therefore
\begin{align*}
\mathbb P_{\mathrm{AL}}(\widehat G-G_0)
&=-\mathbb P_{\mathrm{AL}}(\widehat C K_\delta\delta_S),\\
R_S(c)-B_S(c)
&=-\mathbb P_{\mathrm{AL}}
(w_{\mathrm{AL}}m_Y^\star K_\delta\delta_S).
\end{align*}
Subtracting these two population terms leaves
$-\mathbb P_{\mathrm{AL}}(D_CK_\delta\delta_S)$, which gives the exact
foldwise identity
\begin{align*}
B_S(c)+\mathbb P_{\mathrm{AL},\mathcal J}\widehat G
&=\mathbb P_{\mathrm{AL},\mathcal J}G_0+R_S(c)\\
&\quad+\mathbb U_{\mathrm{AL}}(\widehat G-G_0)
-\mathbb P_{\mathrm{AL}}(D_CK_\delta\delta_S).
\end{align*}
This display identifies the oracle AL average, the score-drift remainder, the
centered AL remainder, and the remaining nuisance-product bias.

We now bound the last two terms. The bounded auxiliary score density and
$\bar r_S/h=o_p(1)$ imply the shifted-kernel moment bounds
\[
\mathbb P_{\mathrm{AL}}|K_\delta|=O_p(1),
\qquad
\mathbb P_{\mathrm{AL}}K_\delta^2=O_p(h^{-1}).
\]
Also, boundedness of the true functions and uniform nuisance consistency give
$\|D_C\|_\infty=
O_p(\bar r_{w,\mathrm{AL}}+\bar r_Y)$ and
$\|\widehat C\|_\infty=O_p(1)$. Hence
\begin{align*}
\|D_CK_\delta\epsilon_S\|_{\mathbb P_{\mathrm{AL}},2}
&=O_p\!\left\{
\frac{\bar r_{w,\mathrm{AL}}+\bar r_Y}{\sqrt h}\right\},\\
\|\widehat C K_\delta\delta_S\|_{\mathbb P_{\mathrm{AL}},2}
&=O_p(\bar r_S/\sqrt h),\\
\left|\mathbb P_{\mathrm{AL}}(D_CK_\delta\delta_S)\right|
&=O_p\{(\bar r_{w,\mathrm{AL}}+\bar r_Y)\bar r_S\}.
\end{align*}
For the remaining centered term, the mean-value theorem and
$\|K_h'\|_\infty=O(h^{-2})$ give
\[
\|w_{\mathrm{AL}}m_Y^\star(K_\delta-K_0)\epsilon_S
\|_{\mathbb P_{\mathrm{AL}},2}
=O_p(\bar r_S/h^2).
\]
Applying Lemma~\ref{lem:conditional_empirical_mean} to these three
conditional $L_2$ bounds yields
\begin{align*}
\mathbb U_{\mathrm{AL}}(D_CK_\delta\epsilon_S)
&=O_p\!\left\{
\frac{\bar r_{w,\mathrm{AL}}+\bar r_Y}
{\sqrt{n_{\mathrm{AL}}h}}\right\},\\
\mathbb U_{\mathrm{AL}}
\{w_{\mathrm{AL}}m_Y^\star(K_\delta-K_0)\epsilon_S\}
&=O_p\!\left\{\frac{\bar r_S}
{h^2\sqrt{n_{\mathrm{AL}}}}\right\},\\
\mathbb U_{\mathrm{AL}}(\widehat C K_\delta\delta_S)
&=O_p\!\left\{\frac{\bar r_S}
{\sqrt{n_{\mathrm{AL}}h}}\right\}.
\end{align*}
Collecting the preceding bounds, the four numerator remainder groups
and their orders are
\begin{align*}
\text{HL product:}\quad
&O_p(r_{w,\mathrm{HL},2}r_{Y,2}),\\
\text{Target/HL empirical:}\quad
&o_p(n_{\mathrm{T}}^{-1/2}+n_{\mathrm{HL}}^{-1/2}),\\
\text{Score drift:}\quad
&O_p(r_{S,2}^2+h^2r_{S,2}),\\
\text{Remaining AL:}\quad
&O_p\!\left\{(\bar r_{w,\mathrm{AL}}+\bar r_Y)\bar r_S\right\}\\
&\quad+O_p\!\left\{
\frac{\bar r_{w,\mathrm{AL}}+\bar r_Y+\bar r_S}
{\sqrt{n_{\mathrm{AL}}h}}\right\}\\
&\quad+O_p\!\left\{
\frac{\bar r_S}{h^2\sqrt{n_{\mathrm{AL}}}}\right\}.
\end{align*}
The AL term involving
$(\bar r_{w,\mathrm{AL}}+\bar r_Y+\bar r_S)/
\sqrt{n_{\mathrm{AL}}h}$ is $o_p(a_n)$ by the consequence stated after
Assumption~\ref{ass:rate}; every other line is $o_p(a_n)$ by the rate
conditions in that assumption. Apply the target--HL decomposition and the
exact AL identity fold by fold, then average over the fixed number of folds.
The three oracle averages become
$\mathbb P_{\mathrm{HL},n}\xi_{\mathrm{HL}}(c)$,
$\mathbb P_{\mathrm{AL},n}\xi_{\mathrm{AL},h}(c)$, and
$\mathbb P_{\mathrm{T},n}\xi_{\mathrm{T}}(c)$, up to the already controlled
fold-weight error. This proves \eqref{eq:numerator_expansion}.

For the denominator, repeat the decomposition in the proof of
Theorem~\ref{thm:p1_consistency} at the $a_n$ scale. Its centered empirical
and fold-weight remainders are $o_p(b_{p,n})=o_p(a_n)$ because
$b_{p,n}\le a_n$, while Assumption~\ref{ass:rate} gives
$r_{w,\mathrm{HL},2}r_{Y,2}=o_p(a_n)$. Therefore
\begin{equation}
\hat p_1-p_1
=\mathbb P_{\mathrm{HL},n}\phi_{p_1}^{\mathrm{HL}}
+\mathbb P_{\mathrm{T},n}\phi_{p_1}^{\mathrm{T}}+o_p(a_n).
\label{eq:p1_an_expansion}
\end{equation}
The variance of the three leading numerator averages is $O(a_n^2)$, so
\eqref{eq:numerator_expansion} gives
$\widehat\theta_1(c)-\theta_1(c)=O_p(a_n)$. The last display similarly gives
$\hat p_1-p_1=O_p(a_n)$, and $p_1$ is bounded away from zero.
Lemma~\ref{lem:ratio_expansion}, with
$(\hat\theta,\theta,\hat p,p)=(\widehat\theta_1(c),\theta_1(c),\hat p_1,p_1)$,
therefore gives
\begin{align*}
\hat\tau_1(c)-\tau_1(c)
&=\frac{\widehat\theta_1(c)-\theta_1(c)}{p_1}
-\frac{\tau_1(c)}{p_1}(\hat p_1-p_1)\\
&\quad+o_p(a_n).
\end{align*}
The centering in the influence functions can now be checked algebraically.
With $M=m_Y^\star$, $I=\Ind_c$, and $\theta_1(c)=p_1\tau_1(c)$,
\begin{align*}
\frac{\xi_{\mathrm{HL}}(c)-\tau_1(c)\phi_{p_1}^{\mathrm{HL}}}{p_1}
&=\frac{w_{\mathrm{HL}}(Y-M)\{I-\tau_1(c)\}}{p_1},\\
\frac{\xi_{\mathrm{T}}(c)-\tau_1(c)\phi_{p_1}^{\mathrm{T}}}{p_1}
&=\frac{MI-\theta_1(c)-\tau_1(c)(M-p_1)}{p_1}\\
&=\frac{M\{I-\tau_1(c)\}}{p_1},\\
\frac{\xi_{\mathrm{AL},h}(c)}{p_1}
&=\psi_{\mathrm{AL},1,h}(c).
\end{align*}
These are precisely the three influence terms stated in the theorem, and
therefore
\begin{align*}
\hat\tau_1(c)-\tau_1(c)
&=\mathbb P_{\mathrm{HL},n}\psi_{\mathrm{HL},1}(c)
+\mathbb P_{\mathrm{AL},n}\psi_{\mathrm{AL},1,h}(c)\\
&\quad+\mathbb P_{\mathrm{T},n}\psi_{\mathrm{T},1}(c)+o_p(a_n).
\end{align*}
If additionally $s_{1,n}(c)\asymp a_n$, the last remainder is
$o_p\{s_{1,n}(c)\}$.

Finally, the three empirical sums are independent. The HL and target
summands are bounded, while the AL summand has envelope $O(h^{-1})$ and
variance $O(h^{-1})$. Lemma~\ref{lem:three_sample_clt} applies because
$s_{1,n}(c)\asymp a_n$ and $n_{\mathrm{AL}}h\to\infty$. Hence the sum normalized
by $s_{1,n}(c)$ converges to $\mathcal N(0,1)$. No source is required to have
asymptotically negligible variance.
\end{proof}

\subsection{Proof of Theorem \ref{thm:variance_and_ci}}
\label{thm3cor1}

\begin{proof}
The argument has two stages. We first replace the oracle nuisance functions
by their fold-specific estimates while keeping the two global scalars fixed.
We then replace $p_1$ and $\tau_1(c)$ by $\hat p_1$ and $\hat\tau_1(c)$.
For an observation in validation fold $k$, let $\mathcal T_k$ denote the
training sigma-field and define $\widetilde\psi_{g,1,r}(c)$ to be the plug-in
influence value with the fold-specific nuisance estimates unchanged but with
$\hat p_1$ and $\hat\tau_1(c)$ replaced by $p_1$ and $\tau_1(c)$. Write
\[
\widetilde\Delta_{g,1,r}
=\widetilde\psi_{g,1,r}(c)-\psi_{g,1,r}(c).
\]
Set $d_{\mathrm{HL}}=d_{\mathrm{T}}=1$ and $d_{\mathrm{AL}}=h^{-1}$; these
are the natural second-moment orders for the three sources.

\paragraph{Step 1: Control the fold-specific nuisance estimates.}
Put $Z=m_S^\star(X)$, $I_c=\Ind\{Z\ge c\}$, and
$\hat I_{c,k}=\Ind\{\hat m_S^{(-k)}(X)\ge c\}$. Conditional on
$\mathcal T_k$, the indicator can change only when the true score lies near
the threshold:
\begin{align*}
\mathbb P_{\mathrm{T}}(\hat I_{c,k}\ne I_c\mid\mathcal T_k)
&\le \mathbb P_{\mathrm{T}}(|Z-c|\le\bar r_S)\\
&=O_p(\bar r_S)=o_p(1).
\end{align*}
The last equality uses the bounded target-score density in part (c) of
Assumption~\ref{ass:joint_density} and the consistency in part (a) of
Assumption~\ref{ass:rate}. For the HL sample, boundedness of
$w_{\mathrm{HL}}$ additionally
gives, for any event $A$,
\[
\mathbb E_{\mathrm{HL}}\{w_{\mathrm{HL}}^2\Ind(A)\}
\le C\mathbb E_{\mathrm{HL}}\{w_{\mathrm{HL}}\Ind(A)\}
=C\mathbb P_{\mathrm{T}}(A).
\]
Thus the same boundary probability controls the true-weight part of the HL
indicator error. The fitted-weight and outcome-regression errors are
controlled by $r_{w,\mathrm{HL},2}$ and $r_{Y,2}$, respectively. Boundedness
from part (a) of Assumption~\ref{ass:smooth} then gives
\[
\mathbb E_g(\widetilde\Delta_{g,1,r}^2\mid\mathcal T_k)=o_p(1),
\qquad g\in\{\mathrm{HL},\mathrm{T}\}.
\]

For the AL sample, first keep the kernel argument fixed at $Z$. Parts (b) and
(c) of Assumption~\ref{ass:smooth} give
$\mathbb E_{\mathrm{AL}}K_h^2(c-Z)=O(h^{-1})$. Hence, for any uniformly
consistent nuisance error $\delta$ appearing as a bounded multiplicative
factor,
\begin{align*}
&\mathbb E_{\mathrm{AL}}\{\delta^2K_h^2(c-Z)\mid\mathcal T_k\}\\
&\qquad\le \|\delta\|_\infty^2
\mathbb E_{\mathrm{AL}}K_h^2(c-Z)=o_p(h^{-1}).
\end{align*}
This explicitly handles the fitted weight, outcome factor, and score
residual when the kernel location is fixed. It remains to move the kernel
from $Z$ to $Z+\delta_S$. The mean-value theorem gives
\[
K_h(c-Z-\delta_S)-K_h(c-Z)
=-\delta_S\int_0^1K_h'(c-Z-t\delta_S)\,dt.
\]
The kernel-tail and AL score-density conditions in parts (b) and (c) of
Assumption~\ref{ass:smooth}, together with $\bar r_S/h=o_p(1)$ from
part (a) of Assumption~\ref{ass:rate}, imply
$\int_0^1\mathbb E_{\mathrm{AL}}K_h'^2(c-Z-t\delta_S)\,dt=O_p(h^{-3})$.
Jensen's inequality therefore gives
\begin{align*}
&\mathbb E_{\mathrm{AL}}\left[
\{K_h(c-Z-\delta_S)-K_h(c-Z)\}^2\mid\mathcal T_k
\right]\\
&\qquad\le C\bar r_S^2h^{-3}=o_p(h^{-1}),
\end{align*}
where the last equality uses $\bar r_S/h=o_p(1)$. Combining this bound with
the fixed-kernel bounds and the analogous
$O_p(\bar r_S^2h^{-1})=o_p(h^{-1})$ bound for replacing $S-Z$ by
$S-Z-\delta_S$ gives
\[
\mathbb E_{\mathrm{AL}}(\widetilde\Delta_{\mathrm{AL},1,r}^2
\mid\mathcal T_k)=o_p(h^{-1}).
\]

By the sampling condition in part (a) of Assumption~\ref{ass:smooth} and the
cross-fitting construction, validation observations are i.i.d. conditional
on $\mathcal T_k$. Conditional Markov's inequality therefore converts the
preceding conditional-mean bounds into empirical bounds. Summing the fixed
number of folds gives
\[
\frac1{n_g}\sum_{r=1}^{n_g}
\widetilde\Delta_{g,1,r}^2
=o_p(d_g),
\qquad g\in\mathcal G.
\]

\paragraph{Step 2: Restore $\hat p_1$ and $\hat\tau_1(c)$.}
Theorem~\ref{thm:asymptotic_normality_tpr} gives
$\hat\tau_1(c)\to_p\tau_1(c)$. The denominator expansion already established
in its proof gives $\hat p_1-p_1=O_p(a_n)=o_p(1)$; this conclusion does not
require the sharper $b_{p,n}$-scale product rate in
Theorem~\ref{thm:p1_consistency}. Since $p_1\ge\eta$ by part (a) of
Assumption~\ref{ass:smooth},
$\mathbb P(\hat p_1\ge\eta/2)\to1$. On this event, replacing the two global
scalars is a Lipschitz operation. The empirical second moments of the
remaining factors are $O_p(1)$ for HL and target and $O_p(h^{-1})$ for AL.
These orders follow from boundedness and the kernel second-moment bound used
in Step 1. Consequently,
\[
\frac1{n_g}\sum_{r=1}^{n_g}
\{\hat\psi_{g,1,r}(c)-\widetilde\psi_{g,1,r}(c)\}^2
=o_p(d_g).
\]
Combining this display with Step 1 and the empirical triangle inequality,
and defining
$\Delta_{g,1,r}=\hat\psi_{g,1,r}(c)-\psi_{g,1,r}(c)$, yields
\begin{equation}
\frac1{n_g}\sum_{r=1}^{n_g}\Delta_{g,1,r}^2=o_p(d_g),
\qquad g\in\mathcal G.
\label{eq:if_plugin_l2}
\end{equation}

\paragraph{Step 3: Establish the oracle second-moment law of large numbers.}
Each oracle influence function has mean zero: this follows from the
conditional mean-zero HL and AL residuals and from the explicit centering of
the target term. The usual law of large numbers applies to the bounded HL and
target influence values. For AL, the kernel moment calculation gives
\begin{align*}
\mathbb E_{\mathrm{AL}}\{\psi_{\mathrm{AL},1}^2(c)\}&=O(h^{-1}),\\
\mathbb E_{\mathrm{AL}}\{\psi_{\mathrm{AL},1}^4(c)\}&=O(h^{-3}).
\end{align*}
Here parts (b) and (c) of Assumption~\ref{ass:smooth} give the required
kernel and AL score-density bounds; in particular, bounded $K$ and
$\int K^2<\infty$ imply $\int K^4<\infty$. The i.i.d. condition in part (a)
of the same assumption then gives
\[
\Var\!\left\{\frac1{n_{\mathrm{AL}}}
\sum_{j=1}^{n_{\mathrm{AL}}}\psi_{\mathrm{AL},1,j}^2(c)
\right\}=O\{(n_{\mathrm{AL}}h^3)^{-1}\}=o(h^{-2}),
\]
because $n_{\mathrm{AL}}h\to\infty$. Chebyshev's inequality therefore shows
that the AL empirical second moment differs from its expectation by
$o_p(h^{-1})$. Thus, for all three sources,
\begin{equation}
\frac1{n_g}\sum_{r=1}^{n_g}\psi_{g,1,r}^2(c)
=\mathbb E_g\{\psi_{g,1}^2(c)\}+o_p(d_g).
\label{eq:if_oracle_second_moment}
\end{equation}

\paragraph{Step 4: Pass from oracle to plug-in second moments and center them.}
Set
\begin{align*}
D_{g,n}^2&=\frac1{n_g}\sum_{r=1}^{n_g}\Delta_{g,1,r}^2,\\
Q_{g,n}&=\frac1{n_g}\sum_{r=1}^{n_g}\psi_{g,1,r}^2(c),\\
\widehat Q_{g,n}&=\frac1{n_g}\sum_{r=1}^{n_g}\hat\psi_{g,1,r}^2(c).
\end{align*}
The passage from oracle to plug-in second moments follows from the elementary
inequality
\begin{align*}
&\left|\frac1{n_g}\sum_{r=1}^{n_g}
\{\hat\psi_{g,1,r}^2(c)-\psi_{g,1,r}^2(c)\}\right|\\
&\quad\le D_{g,n}\bigl(\widehat Q_{g,n}^{1/2}+Q_{g,n}^{1/2}\bigr).
\end{align*}
Moreover, the empirical triangle inequality gives
$\widehat Q_{g,n}^{1/2}\le Q_{g,n}^{1/2}+D_{g,n}$, so the right-hand side is
at most $2D_{g,n}Q_{g,n}^{1/2}+D_{g,n}^2$.
Equations~\eqref{eq:if_plugin_l2}--\eqref{eq:if_oracle_second_moment}
give $D_{g,n}=o_p(d_g^{1/2})$ and $Q_{g,n}=O_p(d_g)$, so the bound is
$o_p(d_g)$.

It remains to subtract the source-specific empirical mean. Let
$\bar\psi_{g,n}^{\circ}=n_g^{-1}\sum_r\psi_{g,1,r}(c)$ denote the oracle
mean. Since the oracle influence function is mean zero and has variance
$O(d_g)$,
\[
\bar\psi_{g,n}^{\circ}
=O_p\{(d_g/n_g)^{1/2}\},
\qquad
|\bar\psi_{g,1}(c)-\bar\psi_{g,n}^{\circ}|\le D_{g,n}.
\]
Hence $\bar\psi_{g,1}^2(c)=o_p(d_g)$, and centering changes the variance
estimator for source $g$ by only $o_p(d_g/n_g)$. We have proved the
source-specific expansion
\begin{align*}
\frac1{n_g^2}\sum_{r=1}^{n_g}
\{\hat\psi_{g,1,r}(c)-\bar\psi_{g,1}(c)\}^2
&=\frac{\Var_g\{\psi_{g,1}(c)\}}{n_g}\\
&\quad+o_p(d_g/n_g).
\end{align*}

\paragraph{Step 5: Combine the three sources.}
Because there are only three sources and
\[
\sum_{g\in\mathcal G}\frac{d_g}{n_g}
=\frac1{n_{\mathrm{HL}}}+\frac1{n_{\mathrm{AL}}h}
+\frac1{n_{\mathrm{T}}}=a_n^2,
\]
the source-specific remainders sum to $o_p(a_n^2)$. The nondegeneracy
condition $s_{1,n}(c)\asymp a_n$ converts this into
$o_p\{s_{1,n}^2(c)\}$. Therefore
\begin{align*}
&\sum_{g\in\mathcal G}\frac{1}{n_g^2}\sum_{r=1}^{n_g}
\{\hat\psi_{g,1,r}(c)-\bar\psi_{g,1}(c)\}^2\\
&\quad=\sum_{g\in\mathcal G}
\frac{\operatorname{Var}_g\{\psi_{g,1}(c)\}}{n_g}
+o_p\{s_{1,n}^2(c)\}\\
&\quad=s_{1,n}^2(c)\{1+o_p(1)\}.
\end{align*}
Taking square roots gives
$\widehat{\mathrm{SE}}_{\mathrm{TPR}}(c)/s_{1,n}(c)\to_p1$. The Wald interval
then follows from Theorem~\ref{thm:asymptotic_normality_tpr} and Slutsky's
theorem. The regularity condition in part (b) of
Assumption~\ref{ass:higher_smoothness} enters here through that theorem; the
second-moment calculations above use only the basic kernel, density, and
nuisance-consistency conditions.
\end{proof}

\subsection{Proof of Corollary \ref{cor:pointwise_fpr}}

\begin{proof}
Write $M=m_Y^\star$, and, within fold $k$, abbreviate
$\widehat M=\hat m_Y^{(-k)}$ and $\delta_Y=\widehat M-M$. Introduce the
negative-class outcome, regression, fitted regression, and regression error:
\begin{align*}
Y_0&=1-Y, & M_0&=1-M,\\
\widehat M_0&=1-\widehat M,
&\delta_{Y,0}&=\widehat M_0-M_0=-\delta_Y.
\end{align*}
The oracle and fitted negative-class residuals are, respectively,
\begin{align*}
Y_0-M_0&=M-Y,\\
Y_0-\widehat M_0&=\widehat M-Y=(M-Y)+\delta_Y.
\end{align*}
Thus the FPR numerator is exactly the positive-class numerator construction
applied to $(Y_0,M_0,\widehat M_0)$, while the score and threshold indicator
are unchanged.

We first verify that every remainder bound used for the TPR numerator is
preserved by this substitution. In the target--HL decomposition
\eqref{eq:tpr_target_hl_exact}, replace
\begin{align*}
M&\mapsto M_0=1-M,\\
Y-M&\mapsto M-Y,\\
\delta_Y&\mapsto\delta_{Y,0}=-\delta_Y.
\end{align*}
The population product bias consequently changes from
$-\mathbb P_{\mathrm{HL}}(\delta_{w,\mathrm{HL}}\delta_Y\widehat I)$
to
$+\mathbb P_{\mathrm{HL}}(\delta_{w,\mathrm{HL}}\delta_Y\widehat I)$,
but its absolute value is still bounded by
$r_{w,\mathrm{HL},2}r_{Y,2}$. The target and HL centered empirical
remainders have the same $L_2$ bounds because
$\|\delta_{Y,0}\|_2=\|\delta_Y\|_2$ and all remaining factors are unchanged.
These steps use the sampling, transportability, change-of-measure, and
boundedness conditions in part (a), and the local density condition in part
(c), of Assumption~\ref{ass:smooth}.

For the AL term, the fitted negative-class coefficient satisfies
\begin{align*}
&\hat w_{\mathrm{AL}}^{(-k)}(1-\widehat M)
-w_{\mathrm{AL}}(1-M)\\
&\quad=\delta_{w,\mathrm{AL}}(1-M)
-w_{\mathrm{AL}}\delta_Y
-\delta_{w,\mathrm{AL}}\delta_Y.
\end{align*}
Hence its absolute error has exactly the same rate bounds as the positive-class
coefficient. The fitted score residual remains
$S-m_S^\star-\delta_S$. The score-drift cancellation now uses the
$a=1-m_Y^\star$ branch of part (b) of Assumption~\ref{ass:rate}; the kernel
and shifted-kernel bounds use parts (b) and (c) of
Assumption~\ref{ass:smooth}. Part (a) of Assumption~\ref{ass:rate} then makes
the negative-class product, empirical, score-drift, and remaining AL
remainders all $o_p(a_n)$, exactly as in the TPR proof.

Let $\widehat\nu(c)$ be the FPR numerator, put
$\nu(c)=p_0\tau_0(c)$ and $I=\Ind_c$, and define the mean-zero oracle terms
\begin{align*}
\xi_{\mathrm{HL},0}(c)
&=w_{\mathrm{HL}}(M-Y)I,\\
\xi_{\mathrm{AL},0,h}(c)
&=w_{\mathrm{AL}}(1-M)
K_h(c-m_S^\star)(S-m_S^\star),\\
\xi_{\mathrm{T},0}(c)
&=(1-M)I-\nu(c).
\end{align*}
The preceding bounds and the fixed-fold weight approximation therefore give
\begin{align}
\widehat\nu(c)-\nu(c)
&=\mathbb P_{\mathrm{HL},n}\xi_{\mathrm{HL},0}(c)
+\mathbb P_{\mathrm{AL},n}\xi_{\mathrm{AL},0,h}(c)\notag\\
&\quad+\mathbb P_{\mathrm{T},n}\xi_{\mathrm{T},0}(c)+o_p(a_n).
\label{eq:fpr_numerator_expansion}
\end{align}

For the denominator, use the $a_n$-scale prevalence expansion
\eqref{eq:p1_an_expansion}, rather than the sharper standalone
$b_{p,n}$-scale conclusion. Since $\hat p_0=1-\hat p_1$,
\begin{equation}
\hat p_0-p_0
=-\mathbb P_{\mathrm{HL},n}\phi_{p_1}^{\mathrm{HL}}
-\mathbb P_{\mathrm{T},n}\phi_{p_1}^{\mathrm{T}}+o_p(a_n).
\label{eq:p0_an_expansion}
\end{equation}
The variances of the three leading terms in
\eqref{eq:fpr_numerator_expansion} are bounded by constant multiples of
$n_{\mathrm{HL}}^{-1}$, $(n_{\mathrm{AL}}h)^{-1}$, and
$n_{\mathrm{T}}^{-1}$, respectively, so
$\widehat\nu(c)-\nu(c)=O_p(a_n)$. Equation~\eqref{eq:p0_an_expansion}
similarly gives $\hat p_0-p_0=O_p(a_n)$. Finally,
$p_0\ge\eta>0$ by part (a) of Assumption~\ref{ass:smooth}. Thus all
conditions of Lemma~\ref{lem:ratio_expansion} hold, and
\begin{align*}
\hat\tau_0(c)-\tau_0(c)
&=\frac{\widehat\nu(c)-\nu(c)}{p_0}
+\frac{\tau_0(c)}{p_0}(\hat p_1-p_1)\\
&\quad+o_p(a_n).
\end{align*}

It remains to identify the centered influence terms. Using
$\phi_{p_1}^{\mathrm{HL}}=w_{\mathrm{HL}}(Y-M)$,
$\phi_{p_1}^{\mathrm{T}}=M-p_1$,
$\nu(c)=p_0\tau_0(c)$, and $p_0+p_1=1$, we obtain
\begin{align*}
&\frac{\xi_{\mathrm{HL},0}(c)
+\tau_0(c)\phi_{p_1}^{\mathrm{HL}}}{p_0}
=\frac{w_{\mathrm{HL}}(M-Y)\{I-\tau_0(c)\}}{p_0},\\
&\frac{\xi_{\mathrm{T},0}(c)
+\tau_0(c)\phi_{p_1}^{\mathrm{T}}}{p_0}
=\frac{(1-M)\{I-\tau_0(c)\}}{p_0},\\
&\frac{\xi_{\mathrm{AL},0,h}(c)}{p_0}
=\psi_{\mathrm{AL},0,h}(c).
\end{align*}
Here and in the corollary statement, the bandwidth subscript is suppressed
when the AL term is included in the source-indexed notation
$\psi_{g,0}(c)$. Substitution into the ratio expansion proves
\[
\hat\tau_0(c)-\tau_0(c)
=\sum_{g\in\mathcal G}\mathbb P_{g,n}\psi_{g,0}(c)+o_p(a_n).
\]

We next verify the central limit theorem rather than relying only on analogy.
The three influence terms are mean zero by conditional mean-zero residuals,
target centering, and change of measure. Part (a) of
Assumption~\ref{ass:smooth} gives independence of the three source averages
and fixed bounded envelopes for the HL and target terms. Parts (b) and (c)
give an $O(h^{-1})$ envelope and $O(h^{-1})$ variance for the AL term, as well
as $n_{\mathrm{AL}}h\to\infty$. Therefore Lemma~\ref{lem:three_sample_clt}
applies. Under $s_{0,n}(c)\asymp a_n$, it yields
\[
\frac{\hat\tau_0(c)-\tau_0(c)}{s_{0,n}(c)}
\Rightarrow\mathcal N(0,1).
\]

Finally, we make the plug-in variance calculation explicit. Define
$\widetilde\psi_{g,0,r}(c)$ by retaining all fold-specific fitted nuisance
functions in $\hat\psi_{g,0,r}(c)$ but replacing the global scalars
$\hat p_0$ and $\hat\tau_0(c)$ by $p_0$ and $\tau_0(c)$. Because
$\delta_{Y,0}=-\delta_Y$, the source-specific negative-class calculations leading
to \eqref{eq:if_plugin_l2} have the same orders:
\[
\frac1{n_g}\sum_{r=1}^{n_g}
\{\widetilde\psi_{g,0,r}(c)-\psi_{g,0,r}(c)\}^2=o_p(d_g),
\]
where $d_{\mathrm{HL}}=d_{\mathrm{T}}=1$ and
$d_{\mathrm{AL}}=h^{-1}$. Here the HL and target bounds use parts (a) and (c)
of Assumption~\ref{ass:smooth}. The AL kernel-shift bound additionally uses
the kernel smoothness, tail, and bounded AL score-density conditions in
parts (b) and (c) of that assumption, together with
$\bar r_S/h=o_p(1)$ and the nuisance rates in part (a) of
Assumption~\ref{ass:rate}.

Equations~\eqref{eq:p0_an_expansion} and the FPR expansion give
$\hat p_0\to_p p_0$ and $\hat\tau_0(c)\to_p\tau_0(c)$. Restoring these global
scalars therefore preserves the same empirical $L_2$ orders. The oracle HL
and target second moments satisfy the ordinary law of large numbers. For the
AL term, the fourth moment is $O(h^{-3})$; together with
$n_{\mathrm{AL}}h\to\infty$, this gives the corresponding negative-class version of
\eqref{eq:if_oracle_second_moment}. Repeating the elementary second-moment
comparison following those two equations yields
\begin{align*}
&\sum_{g\in\mathcal G}\frac1{n_g^2}\sum_{r=1}^{n_g}
\{\hat\psi_{g,0,r}(c)-\bar\psi_{g,0}(c)\}^2\\
&\quad=s_{0,n}^2(c)+o_p\{s_{0,n}^2(c)\}.
\end{align*}
Consequently,
$\widehat{\operatorname{SE}}_{\mathrm{FPR}}(c)/s_{0,n}(c)\to_p1$, and the
pointwise Wald interval follows from the preceding Gaussian limit and
Slutsky's theorem.
\end{proof}

\subsection{Proof of Theorem \ref{thm:roc_auc_consistency}}
\label{proof:roc_auc_consistency}

\begin{proof}
Write $\hat\tau_1^{\mathrm{dir}}(c)$ and
$\hat\tau_0^{\mathrm{dir}}(c)$ for the raw estimators
$\hat\tau_1(c)$ and $\hat\tau_0(c)$ before projection.
By the definition in part (a) of Assumption~\ref{ass:uniform_curve}, for
$y\in\{0,1\}$ and $c\in\mathcal C_n$,
\[
\hat\tau_y^{\mathrm{dir}}(c)-\tau_y(c)
=\sum_{g\in\mathcal G}\mathbb P_{g,n}\psi_{g,y}(c)+R_{y,n}(c).
\]
The HL and target oracle terms are bounded empirical averages. The AL oracle
term is a kernel average with variance of order $h^{-1}$ and envelope of
order $h^{-1}$. Bernstein's inequality and a union bound over the grid give
\begin{align*}
&\max_{c\in\mathcal C_n}\sum_{y=0}^1
|\hat\tau_y^{\mathrm{dir}}(c)-\tau_y(c)|\\
&\quad=O_p\!\left\{
\sqrt{\frac{\log(M_n+1)}{n_{\mathrm{HL}}}}
+\sqrt{\frac{\log(M_n+1)}{n_{\mathrm{AL}}h}}\right.\\
&\hspace{8em}\left.
+\sqrt{\frac{\log(M_n+1)}{n_{\mathrm{T}}}}
\right\}+o_p(1)\\
&\quad=o_p(1),
\end{align*}
where the final $o_p(1)$ is precisely
$\max_{y\in\{0,1\}}\max_{c\in\mathcal C_n}|R_{y,n}(c)|$ from
part (a) of Assumption~\ref{ass:uniform_curve}. Cross-fitting permits the
empirical-process argument to be applied conditionally fold by fold, and fixed
$V$ does not change the order.
Bernstein's inequality also produces the smaller linear terms
$\log(M_n+1)/n_{\mathrm{HL}}$,
$\log(M_n+1)/(n_{\mathrm{AL}}h)$, and
$\log(M_n+1)/n_{\mathrm{T}}$. Each is dominated by its displayed square-root
counterpart because the corresponding ratio tends to zero.

At the two points in $\mathcal C_n^+\setminus\mathcal C_n$, the fitted and
population endpoint values are both fixed at one or zero. Thus the same
$o_p(1)$ gridwise bound holds for the augmented TPR and FPR vectors on
$\mathcal C_n^+$.

Let $\Pi_0$ denote equal-weight least-squares projection onto the cone of
nonincreasing sequences. For completeness, it has the max--min representation
\[
(\Pi_0z)_j
=\min_{0\le a\le j}\ \max_{j\le b\le M_n}
\frac1{b-a+1}\sum_{i=a}^bz_i.
\]
This representation shows directly that isotonic projection is coordinatewise
order preserving and commutes with constant shifts. Let $\mathbf 1$ denote
the vector of ones. Hence, for
any nonincreasing vector $v$ and any vector
$z$, if $\|z-v\|_\infty\le\delta$, then
$v-\delta\mathbf 1\le z\le v+\delta\mathbf 1$ and therefore
$v-\delta\mathbf 1\le\Pi_0z\le v+\delta\mathbf 1$. Thus
$\|\Pi_0z-v\|_\infty\le\delta$. Clipping $\Pi_0z$ to $[0,1]$, which yields
the constrained projection used here, cannot increase its distance from the
true vector $v\in[0,1]^{M_n+1}$. Applying this property to the true augmented
TPR and FPR vectors shows that projection does not enlarge their gridwise
sup-norm errors. On $\mathcal C$, linear interpolation uses the grid
$\mathcal C_n$, whose endpoints are $\underline c$ and $\overline c$.
Continuity then yields
\[
\|\tilde\tau_1-\tau_1\|_{\infty,\mathcal C}
+\|\tilde\tau_0-\tau_0\|_{\infty,\mathcal C}
=o_p(1),
\]
because the additional deterministic interpolation errors are bounded by the
moduli of continuity of $\tau_1$ and $\tau_0$ at $\Delta_{c,n}$.

The artificial endpoints also remain fixed after the constrained projection.
Indeed, their raw values are $1$ and $0$. If a feasible minimizer had
$v_0<1$, replacing only $v_0$ by $1$ would preserve the nonincreasing
constraints and strictly reduce the squared-error objective. Similarly, if
$v_{M_n}>0$, replacing only $v_{M_n}$ by $0$ would preserve feasibility and
reduce the objective. Thus each projected curve is continuous after
interpolation, starts at one, and ends at zero.

Let $c_u=\inf\{c\in\mathcal C:\tau_0(c)\le u\}$. Continuity and strict
monotonicity of $\tau_0$ on $\mathcal C$ imply that $u\mapsto c_u$ is
continuous. Fix $\epsilon\in(0,1/2)$ and put
$\delta_n=\|\tilde\tau_0-\tau_0\|_{\infty,\mathcal C}=o_p(1)$.
With probability tending to one, the fitted inverse $\hat c_u$ lies in
$\mathcal C$ for every $u\in[\epsilon,1-\epsilon]$: the fitted FPR at
$\underline c$ is larger than $1-\epsilon$, whereas its value at
$\overline c$ is smaller than $\epsilon$. On this event, monotonicity and the
definition of the generalized inverse give
\[
c_{u+\delta_n}
\le \hat c_u
\le c_{u-\delta_n},
\qquad \epsilon\le u\le1-\epsilon,
\]
whenever $\delta_n<\epsilon$. For example,
$\tilde\tau_0(c)\le u$ implies
$\tau_0(c)\le u+\delta_n$, which proves the left inequality; evaluating the fitted
curve at $c_{u-\delta_n}$ proves the right one. Uniform continuity of
$u\mapsto c_u$ on $[\epsilon/2,1-\epsilon/2]$ now yields
$\sup_{u\in[\epsilon,1-\epsilon]}|\hat c_u-c_u|=o_p(1)$.
Consequently,
\begin{align*}
\sup_{u\in[\epsilon,1-\epsilon]}\left|
\tilde\tau_1(\hat c_u)-\tau_1(c_u)\right|
&\le \|\tilde\tau_1-\tau_1\|_{\infty,\mathcal C}\\
&\quad+\sup_{u\in[\epsilon,1-\epsilon]}
|\tau_1(\hat c_u)-\tau_1(c_u)|,
\end{align*}
and the right-hand side is $o_p(1)$. This proves uniform consistency of the
ROC estimator on every fixed interior interval.

It remains to include $u=0,1$ without applying the inverse-continuity
argument to the artificial thresholds. Both the true and estimated ROC curves
are nondecreasing, take values in $[0,1]$, and are anchored at
$\operatorname{ROC}(0)=\widehat{\operatorname{ROC}}^{cf}(0)=0$ and
$\operatorname{ROC}(1)=\widehat{\operatorname{ROC}}^{cf}(1)=1$. Write
$R=\operatorname{ROC}$ and $\widehat R=\widehat{\operatorname{ROC}}^{cf}$.
Hence
\begin{align*}
\sup_{0\le u\le\epsilon}
|\widehat R(u)-R(u)|
&\le \widehat R(\epsilon)+R(\epsilon),\\
\sup_{1-\epsilon\le u\le1}
|\widehat R(u)-R(u)|
&\le 2-\widehat R(1-\epsilon)-R(1-\epsilon).
\end{align*}
For fixed $\epsilon$, the preceding interior result controls the estimated
values on the right. Letting $\epsilon\downarrow0$ and using continuity of
the true ROC at zero and one proves uniform consistency on $[0,1]$.

Under the additional AUC-grid mesh condition in
Theorem~\ref{thm:roc_auc_consistency}, the trapezoidal
estimator equals the integral of the piecewise-linear
interpolant of the estimated ROC values on the $u$-grid. Its error is bounded
by
\[
\sup_{u\in[0,1]}|\widehat{\operatorname{ROC}}^{cf}(u)
-\operatorname{ROC}(u)|
+\omega_{\operatorname{ROC}}(\Delta_{u,n}),
\]
where $\omega_{\operatorname{ROC}}$ is the modulus of continuity of the true
ROC curve. The first term is $o_p(1)$ by the preceding argument, and the
second tends to zero because the ROC curve is continuous and
$\Delta_{u,n}\to0$. This proves AUC consistency.
\end{proof}

\subsection{Auxiliary Lemmas for Pointwise ROC Inference}

\begin{lemma}[Asymptotic Equivalence of Direct and Interpolated Estimators]
\label{lem:grid_interpolation}
For $y\in\{0,1\}$, let $\hat\tau_y^{\mathrm{dir}}(c)$ denote the direct
cross-fitted TPR or FPR estimator evaluated at the fixed threshold $c$, and
let $\hat\tau_y(c)$ be the linear interpolant of its values on
$\mathcal C_n$. Under Assumptions~\ref{ass:smooth}--\ref{ass:roc_local},
for the fixed population threshold $c_u$,
\[
\max_{y\in\{0,1\}}
\left|\hat\tau_y(c_u)-\hat\tau_y^{\mathrm{dir}}(c_u)\right|
=o_p(a_n).
\]
Consequently, the fixed-threshold expansions in
Theorem~\ref{thm:asymptotic_normality_tpr} and
Corollary~\ref{cor:pointwise_fpr} also hold for the interpolated raw
estimators at $c_u$.
\end{lemma}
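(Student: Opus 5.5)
Write $c_{m-1}\le c_u\le c_m$ for the two adjacent grid points bracketing $c_u$ (for large $n$ both lie in $\mathcal U_u$, since $\Delta_{c,n}\to0$), and let $\omega=(c_u-c_{m-1})/(c_m-c_{m-1})\in[0,1]$. By definition of the linear interpolant,
\[
\hat\tau_y(c_u)-\hat\tau_y^{\mathrm{dir}}(c_u)
=(1-\omega)\{\hat\tau_y^{\mathrm{dir}}(c_{m-1})-\hat\tau_y^{\mathrm{dir}}(c_u)\}
+\omega\{\hat\tau_y^{\mathrm{dir}}(c_m)-\hat\tau_y^{\mathrm{dir}}(c_u)\}.
\]
It therefore suffices to show that $|\hat\tau_y^{\mathrm{dir}}(c')-\hat\tau_y^{\mathrm{dir}}(c_u)|=o_p(a_n)$ for $c'\in\{c_{m-1},c_m\}$, that is, for thresholds within $\Delta_{c,n}=o(a_n)$ of $c_u$. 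I would split this difference into three parts:
\[
\{\tau_y(c')-\tau_y(c_u)\}+\sum_{g}\mathbb P_{g,n}\{\psi_{g,y}(c')-\psi_{g,y}(c_u)\}+\{R_{y,n}(c')-R_{y,n}(c_u)\}.
\]
The deterministic part is handled by local Lipschitz continuity of $\tau_y$ from Assumption~\ref{ass:roc_local}. That gives $O(\Delta_{c,n})=o(a_n)$.

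\textbf{HL and target oracle increments.} These are centered averages of bounded functions multiplied by $\Ind\{c_u\le Z<c'\}$, plus the centering shift $\tau_y(c')-\tau_y(c_u)$. Their variance is $O(\Delta_{c,n}/n_g)$ by the bounded score density, so they are $o_p(n_g^{-1/2})$.

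\textbf{AL oracle increment.} The mean-value theorem with $\|K_h'\|_\infty=O(h^{-2})$ gives $|K_h(c'-Z)-K_h(c_u-Z)|\le\Delta_{c,n}h^{-2}\sup|K'|$, supported on the window $|Z-c_u|\lesssim h$ (or with exponentially small tails). So the second moment of the increment is $O(\Delta_{c,n}^2h^{-3})$, and the increment is $O_p(\Delta_{c,n}h^{-1}(n_{\mathrm{AL}}h)^{-1/2})$. This is $o_p(a_n)$ because $\Delta_{c,n}=o(a_n)=o(h)$.

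\textbf{Remainder increment (main obstacle).} Assumption~\ref{ass:uniform_curve}(a) only gives $o_p(1)$ for $R_{y,n}$ uniformly over the grid, and Theorem~\ref{thm:asymptotic_normality_tpr} gives $o_p(a_n)$ at a fixed threshold only. Since $c_u$ is fixed, Theorem~\ref{thm:asymptotic_normality_tpr} and Corollary~\ref{cor:pointwise_fpr} give $R_{y,n}(c_u)=o_p(a_n)$. The problem is that $c'$ moves with $n$.

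\emph{First route.} Invoke the local equicontinuity condition in Assumption~\ref{ass:roc_local}(b) with $M$ large: $\Delta_{c,n}=o(a_n)=O(b_n)$, so $|e_{y,n}(c')-e_{y,n}(c_u)|=o_p(a_n)$. This needs care, because $e_{y,n}$ there is defined through the interpolated estimator, which is exactly the object under study. At grid points, however, interpolated and direct estimators coincide, so the condition does yield $|\hat\tau_y^{\mathrm{dir}}(c_{m})-\hat\tau_y^{\mathrm{dir}}(c_{m-1})|=o_p(a_n)$ after subtracting the Lipschitz $\tau_y$ increment. Combining this with the convex-combination identity reduces the whole claim to comparing $\hat\tau_y^{\mathrm{dir}}(c_u)$ with a single neighbouring grid value.

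\emph{Second route, for that remaining comparison.} Bound the direct estimator's increment term by term, fold by fold.
\begin{itemize}
\item Target and HL indicator terms change only for observations with $\hat m_S\in[c_u,c')$, which is within $\bar r_S$ of the true-score band. Their expected mass is $O(\Delta_{c,n}+\bar r_S)$, and this is where the condition $\bar r_S=o_p(a_n)$ enters, giving a mean increment $o_p(a_n)$ plus a fluctuation of smaller order.
\item The AL kernel term is handled exactly as in the oracle AL calculation above, with $\hat m_S$ in place of $m_S^\star$. This uses $\bar r_S/h=o_p(1)$ and bounded fitted factors, giving an increment of order $\Delta_{c,n}/h$ times an $O_p(1)$ kernel average, hence $o_p(a_n)$.
\item The denominator $\hat p_y$ does not depend on $c$.
\end{itemize}

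The delicate step is the indicator terms: showing that the fitted-score band mass is $O_p(\Delta_{c,n}+\bar r_S)$ requires the bounded target and HL score densities together with the uniform score error. With both routes in hand, the expansions of Theorem~\ref{thm:asymptotic_normality_tpr} and Corollary~\ref{cor:pointwise_fpr} transfer to $\hat\tau_y(c_u)$.
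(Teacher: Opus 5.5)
Your reduction is the paper's argument: the convex-combination identity, the $c$-free denominator, and source-by-source bounds on the numerator increments between $c_u$ and a neighbouring grid point, with $q_n=\Delta_{c,n}+\bar r_S=o_p(a_n)$ controlling the target indicator band. Your second route, applied to \emph{both} neighbours, already suffices. The oracle/remainder split and the appeal to the equicontinuity clause in part (b) of Assumption~\ref{ass:roc_local} are correct but superfluous; the paper uses only the $\Delta_{c,n}=o(a_n)$ and $\bar r_S=o_p(a_n)$ parts of that assumption.

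\textbf{The HL step relies on a condition that is not assumed.} You bound the HL increment by the HL-sample mass of $\{\hat m_S\in[c_u,c')\}$ and call this $O_p(\Delta_{c,n}+\bar r_S)$ via a bounded HL score density. Part (c) of Assumption~\ref{ass:smooth} bounds only the target and AL score densities. The condition $w_{\mathrm{HL}}\le C$ gives $\mathbb P_{\mathrm T}(A)\le C\,\mathbb P_{\mathrm{HL}}(A)$, which is the wrong direction. The HL law may place non-vanishing mass near $\{m_S^\star=c_u\}$ where $w_{\mathrm{HL}}$ is small, so your absolute-value bound need not even tend to zero.

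The repair, which is what the paper does, keeps the residual structure. Put $d_n=\Ind\{\hat m_S\ge c'\}-\Ind\{\hat m_S\ge c_u\}$, whose support lies in $A=\{|m_S^\star-c_u|\le q_n\}$, and write $\hat w_{\mathrm{HL}}(Y-\hat m_Y)=w_{\mathrm{HL}}(Y-m_Y^\star)+\delta_w(Y-m_Y^\star)-w_{\mathrm{HL}}\delta_Y-\delta_w\delta_Y$. Each piece is then controlled as follows.
\begin{itemize}
\item The first term is conditionally mean zero. Its variance is controlled by $\mathbb E_{\mathrm{HL}}\{w_{\mathrm{HL}}^2\Ind(A)\}\le C\,\mathbb P_{\mathrm T}(A)=O_p(q_n)$.
\item The third term has mean $\mathbb E_{\mathrm T}(\delta_Y d_n)=O_p(\bar r_Y q_n)$ by change of measure.
\item The second term is centered and of order $O_p(r_{w,\mathrm{HL},2}/\sqrt{n_{\mathrm{HL}}})$.
\item The fourth term has mean at most $r_{w,\mathrm{HL},2}r_{Y,2}=o_p(a_n)$ by Cauchy--Schwarz.
\end{itemize}
The same change-of-measure argument, not an HL density, is also what justifies your oracle HL variance bound.

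\textbf{The AL step does not work as written.} The inference ``order $\Delta_{c,n}/h$ times an $O_p(1)$ kernel average, hence $o_p(a_n)$'' does not follow. The assumptions give only $\Delta_{c,n}=o(a_n)$ and $a_n=o(h)$, so $\Delta_{c,n}/h$ is typically much larger than $a_n$. Nor can you reuse the oracle calculation verbatim, because $S-\hat m_S$ is not conditionally mean zero.

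Split $S-\hat m_S=(S-m_S^\star)-\delta_S$ and let $D=K_h(c'-\hat m_S)-K_h(c_u-\hat m_S)$. Conditional on the training folds, the bounded AL score density, the kernel conditions and $\bar r_S/h=o_p(1)$ give $\mathbb E_{\mathrm{AL}}|D|=O_p(\Delta_{c,n}/h)$ and $\mathbb E_{\mathrm{AL}}D^2=O_p(\Delta_{c,n}^2/h^3)$.
\begin{itemize}
\item The centered part is then $O_p\{\Delta_{c,n}/\sqrt{n_{\mathrm{AL}}h^3}\}=o_p\{(n_{\mathrm{AL}}h)^{-1/2}\}$, because $\Delta_{c,n}/h\to0$.
\item The drift part is at most $\bar r_S\,\mathbb E_{\mathrm{AL}}|D|=O_p(\bar r_S\Delta_{c,n}/h)=o_p(\Delta_{c,n})=o_p(a_n)$. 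This is where $\bar r_S/h=o_p(1)$ is actually needed.
\end{itemize}
With these two repairs, your second route coincides with the paper's proof.
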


\begin{proof}
Let $c_{L,n}\le c_u\le c_{R,n}$ be the two adjacent grid points and let
$\mathcal L_n$ denote linear interpolation between them: for a
threshold-indexed function $f$,
\begin{align*}
(\mathcal L_nf)(c_u)
&=\omega_nf(c_{L,n})+(1-\omega_n)f(c_{R,n}),\\
\omega_n&=\frac{c_{R,n}-c_u}{c_{R,n}-c_{L,n}}.
\end{align*}
If $c_u$ itself is a grid point, the interpolation difference is zero and
the result is immediate; below we consider the nontrivial case
$c_{L,n}<c_u<c_{R,n}$.
Put
$\Delta_n=c_{R,n}-c_{L,n}\le\Delta_{c,n}$ and
$q_n=\Delta_{c,n}+\bar r_S=o_p(a_n)$. The denominator of each direct
estimator does not depend on $c$ and is bounded away from zero with
probability tending to one, so it suffices to compare the three numerator
terms.

The three interpolation differences have the following orders:
\begin{align*}
\mathrm{T}:\quad
&O_p\{q_n+(q_n/n_{\mathrm{T}})^{1/2}\},\\
\mathrm{HL}:\quad
&O_p\{\bar r_Yq_n+r_{w,\mathrm{HL},2}r_{Y,2}
+(q_n/n_{\mathrm{HL}})^{1/2}\}\\
&\quad+O_p\{r_{w,\mathrm{HL},2}/\sqrt{n_{\mathrm{HL}}}\},\\
\mathrm{AL}:\quad
&O_p\{\Delta_{c,n}/\sqrt{n_{\mathrm{AL}}h^3}
+\bar r_S\Delta_{c,n}/h\}.
\end{align*}
Each row is $o_p(a_n)$ under part (b) of Assumption~\ref{ass:roc_local} and the pointwise
rate conditions. We now derive them one source at a time.

For any score value $z$, define
$d_n(z)=(\mathcal L_n[c\mapsto\Ind\{z\ge c\}])(c_u)
-\Ind\{z\ge c_u\}$. Then
\[
|d_n(z)|\le \Ind\{|z-c_u|\le\Delta_{c,n}\}.
\]
With $z=\hat m_S^{(-k)}(X)$, the event on the right is contained in
$\{|m_S^\star(X)-c_u|\le q_n\}$. The locally bounded target score density
therefore makes its target probability $O_p(q_n)$. For the target numerator,
boundedness and conditional variance calculation give interpolation error
\[
O_p\!\left(q_n+\sqrt{q_n/n_{\mathrm{T}}}\right)=o_p(a_n).
\]
For the HL numerator, decompose the fitted weight and residual around
$w_{\mathrm{HL}}$ and $Y-m_Y^\star$. The true-weight residual part has
conditional mean zero, and change of measure bounds its conditional variance
by $O_p(q_n/n_{\mathrm{HL}})$. The conditional mean involving
$\delta_Y$ is $O_p(\bar r_Yq_n)$, while the fitted-weight terms are bounded
by $O_p(r_{w,\mathrm{HL},2}r_{Y,2})$ in mean and
$O_p(r_{w,\mathrm{HL},2}/\sqrt{n_{\mathrm{HL}}})$ after centering. Hence the
HL interpolation error is also $o_p(a_n)$. The same argument applies to the
negative-class numerator, with only the residual sign changed.

It remains to control the AL kernel term. Conditional on fold-$k$ training,
write $\mathcal T_k$ for the corresponding training sigma-field and
$\hat Z=\hat m_S^{(-k)}(X)$, and define
\[
D_{k,h}(X)=
(\mathcal L_n[c\mapsto K_h(c-\hat Z)])(c_u)-K_h(c_u-\hat Z).
\]
The mean-value theorem, the bounded AL density of $Z=m_S^\star(X)$, the
kernel-tail conditions, and $\bar r_S/h=o_p(1)$ imply
\begin{align*}
\mathbb E_{\mathrm{AL}}\{|D_{k,h}(X)|\mid\mathcal T_k\}
&=O_p(\Delta_{c,n}/h),\\
\mathbb E_{\mathrm{AL}}\{D_{k,h}^2(X)\mid\mathcal T_k\}
&=O_p(\Delta_{c,n}^2/h^3).
\end{align*}
Indeed, these follow by integrating the first derivative of the translated
kernel; $\|K_h'\|_1=O(h^{-1})$ and
$\|K_h'\|_2^2=O(h^{-3})$. Split the fitted score residual as
$S-\hat Z=(S-Z)-\delta_S^{(-k)}(X)$. The first component has conditional
mean zero, whereas the second is uniformly bounded by $\bar r_S$. Because
the remaining fitted factors are uniformly bounded with probability tending
to one, the AL interpolation error is
\[
O_p\!\left\{
\frac{\Delta_{c,n}}{\sqrt{n_{\mathrm{AL}}h^3}}
+\frac{\bar r_S\Delta_{c,n}}{h}
\right\}=o_p(a_n).
\]
For the first term, use
$a_n\ge(n_{\mathrm{AL}}h)^{-1/2}$ and
$\Delta_{c,n}/h=o(1)$; for the second, use
$\Delta_{c,n}=o(a_n)$ and $\bar r_S/h=o_p(1)$.
Combining the three sources and the fixed number of folds proves the first
claim. The consequence follows by applying the two fixed-threshold
expansions at the nonrandom threshold $c_u$ and then using the proved
$o_p(a_n)$ difference.
\end{proof}

\begin{lemma}[Local Asymptotic Equivalence after Isotonic Projection]
\label{lem:isotonic_local}
Let $f$ be nonincreasing on the full grid and continuously differentiable and
strictly decreasing on a neighborhood $\mathcal U$ of an interior point $c_0$,
with $\inf_{t\in\mathcal U}|f'(t)|>0$. Let $z_n$ be a linearly interpolated raw estimator on an ordered
quasi-uniform grid in a fixed compact interval, and let $\tilde z_n$ be its equal-weight nonincreasing isotonic
projection, followed by clipping to $[0,1]$. Suppose the grid mesh is
$o(a_n)$ and, for deterministic $a_n=O(b_n)\to0$,
\begin{align*}
\sup_{t\in\mathcal U}|z_n(t)-f(t)|&=O_p(b_n),\\
\sup_{\substack{s,t\in\mathcal U\\|s-t|\le Mb_n}}
|e_n(s)-e_n(t)|&=o_p(a_n)
\end{align*}
for every fixed $M$, where $e_n=z_n-f$. If $z_n$ is uniformly consistent on
the full grid and $f(c_0)\in(0,1)$, then, for every fixed $M$,
\[
\begin{aligned}
&\sup_{|t-c_0|\le Ma_n}
\Bigl|\{\tilde z_n(t)-f(t)\}\\
&\qquad-\{z_n(c_0)-f(c_0)\}\Bigr|=o_p(a_n).
\end{aligned}
\]
\end{lemma}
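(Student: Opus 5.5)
The plan is to use the min--max (pool-adjacent-violators) representation of the isotonic projection recorded in the proof of Theorem~\ref{thm:roc_auc_consistency}, $(\Pi_0z)_j=\min_{a\le j}\max_{b\ge j}\bar z_{[a,b]}$ with $\bar z_{[a,b]}$ the block average. Write $z_n=f+e_n$ and $\epsilon_0=e_n(c_0)$. The argument rests on a localization claim: with probability tending to one, every isotonic block that determines $\tilde z_n(t)$ for $|t-c_0|\le Ma_n$ is contained in $[c_0-Lb_n,c_0+Lb_n]$ for a suitably large fixed $L$. Given that claim, every block average satisfies $\bar z_{[a,b]}=\bar f_{[a,b]}+\epsilon_0+o_p(a_n)$ by the local equicontinuity condition applied with $M=2L$.

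For the localization, I fix $j$ with grid point $c_j$ near $t$. For any $a\le j$ I take $b=b^\ast$ equal to the grid index nearest $c_j+Lb_n$. Then $f(c_i)\le f(c_j)-\kappa(c_i-c_j)$ on $\mathcal U$, where $\kappa=\inf|f'|$. Also $\sup|e_n|=O_p(b_n)$, and $f$ is nonincreasing outside $\mathcal U$. Hence any block $[a,b]$ with $b\ge b^\ast$ has average at most $f(c_j)-c\kappa Lb_n+O_p(b_n)$ when it lies to the right of $j$. This is below the competing single-point value $z_n(c_j)=f(c_j)+O_p(b_n)$ once $L$ is large. Symmetrically, blocks extending more than $Lb_n$ to the left have averages that are too large. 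The max over $b$ is therefore attained within $Lb_n$, and likewise the min over $a$. The uniform consistency on the full grid handles blocks that would leave $\mathcal U$, because $f$ is bounded away from $f(c_0)$ there and $f(c_0)\in(0,1)$. Quasi-uniformity lets me convert grid averages to integrals with controlled weights.

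With localization in hand, $\tilde z_n(c_j)=\min_a\max_b\{\bar f_{[a,b]}\}+\epsilon_0+o_p(a_n)$. Since $f$ is nonincreasing, $\bar f_{[a,b]}$ is maximized at $b=j$ for fixed $a$, giving $\bar f_{[a,j]}$. Minimizing over $a$ then gives $a=j$, so the result is $f(c_j)$. Therefore $\tilde z_n(c_j)-f(c_j)=\epsilon_0+o_p(a_n)$ uniformly over grid points within $Ma_n$ of $c_0$. The min--max is only approximately attained, but the perturbation is uniform, so the $o_p(a_n)$ error passes through the monotone min--max operation. Clipping to $[0,1]$ is inactive near $c_0$ with probability tending to one, since $f(c_0)\in(0,1)$. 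Linear interpolation between grid points at mesh $o(a_n)$, together with continuity of $f'$, adds only $o(a_n)$.

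The main obstacle is the localization step. The drift $\kappa Lb_n$ must dominate the fluctuation of $e_n$ over blocks of length of order $b_n$. The stated conditions give only $\sup|e_n|=O_p(b_n)$, not $o_p(b_n)$, so I must choose $L$ depending on the tightness constant, which is the reason the equicontinuity is required for every fixed $M$. A further subtlety is that blocks reaching far outside $\mathcal U$ need the global uniform consistency rather than local rates. To handle that, I would first show that blocks stay inside a fixed neighborhood, and only then refine to the $Lb_n$ scale.
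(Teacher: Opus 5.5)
Your proof is correct, but it follows a different route from the paper's.

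\textbf{The paper's route.} The paper does not use the min--max formula in this lemma. It works with maximal constant blocks and the KKT inequality $\bar y_{L:q}\le\bar y_{q+1:R}$. Combined with the strict slope of $f$, this inequality gives a block-width bound of order $\epsilon_n/\kappa$, which the paper applies twice.
\begin{itemize}
\item First, after a separate step that uses full-grid consistency to confine the relevant blocks to a fixed neighborhood, it takes $\epsilon_n=Bb_n$ and gets widths $O(b_n)$.
\item Second, relative to the shifted curve $v_n=f+e_n(c_0)$, it takes $\epsilon_n^\circ=o_p(a_n)$ and gets widths $o_p(a_n)$. Each fitted value is a block average, so it lies within $o_p(a_n)$ of $v_n$.
\end{itemize}

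\textbf{Your route.} You localize only once, at the $b_n$ scale, and finish with a perturbation argument. On the window, $z_n$ is uniformly within $o_p(a_n)$ of the nonincreasing sequence $f+\epsilon_0$. That sequence is a fixed point of the min--max operator, and the operator is monotone and commutes with constants, hence is nonexpansive in sup norm.

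\textbf{Comparison.} Your route avoids the second width computation and never needs to identify blocks; it reuses exactly the order-preserving property from the consistency proof. The paper's route gives something extra: explicit control of block widths near $c_0$. This is the structural fact behind its scale-separation discussion, which requires isotonic blocks to be small relative to $h$.

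\textbf{One step to tighten.} For $a<j$, the block $[a,b]$ does not lie to the right of $j$. Comparing its average with $z_n(c_j)$ therefore does not by itself show that the max over $b$ is attained within $Lb_n$. There are two easy fixes.
\begin{itemize}
\item Write $\bar z_{[a,b]}$ as a convex combination of $\bar z_{[a,j]}$ and $\bar z_{[j+1,b]}$. Use $\bar z_{[a,j]}\ge f(c_j)-O_p(b_n)$ and $\bar z_{[j+1,b]}\le f(c_j)-c\kappa Lb_n+O_p(b_n)$ for $c_b-c_j\ge Lb_n$.
\item More simply, use the sandwich $\min_{a\le j}\bar z_{[a,j]}\le(\Pi_0z)_j\le\max_{b\ge j}\bar z_{[j,b]}$. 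Its two sides need only the one-sided localizations at $a=j$ and at $b=j$, which are what you actually proved.
\end{itemize}
In this form, a single comparison also handles grid points outside $\mathcal U$. There $|f-f(c_0)|$ is bounded below by a constant that dominates the $o_p(1)$ global error, so the preliminary confinement to a fixed neighborhood (the paper's Step 2) is optional.
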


\begin{proof}
The proof has three steps. First, the block characterization of isotonic
regression converts a uniform error in the unprojected curve into a bound on the width of a
pooled block. Second, strict decrease of $f$ prevents a block touching $c_0$
from extending outside the local neighborhood. Third, local equicontinuity
shrinks the relevant blocks from order $b_n$ to $o_p(a_n)$.

\paragraph{Step 1: Relate block width to error in the unprojected curve.}
Write the grid as $x_{0,n}<\cdots<x_{N_n,n}$, set
$y_i=z_n(x_{i,n})$, and first ignore clipping and interpolation. Every maximal
constant block $[L,R]$ of the equal-weight nonincreasing isotonic fit has fitted
value $\bar y_{L:R}=(R-L+1)^{-1}\sum_{i=L}^R y_i$. The KKT conditions imply that, for every
$L\le q<R$,
\begin{equation}
\bar y_{L:q}\le \bar y_{q+1:R}.
\label{eq:isotonic_block_inequality}
\end{equation}
Indeed, summing the isotonic normal equations over the prefix gives
$\bar y_{L:q}\le\bar y_{L:R}$, and summing them over the suffix gives
$\bar y_{q+1:R}\ge\bar y_{L:R}$.

We record the consequence of
\eqref{eq:isotonic_block_inequality} used twice below. Suppose a block lies in
a region on which
\[
f(s)-f(t)\ge\kappa(t-s),\qquad s<t,
\]
and write $y_i=f(x_{i,n})+r_i$ on that block, with
$\max_i|r_i|\le\epsilon_n$. Split the block into two halves by number of grid
points; for a singleton block the asserted width bound is immediate.
Quasi-uniformity supplies a constant $c_{\mathrm{q}}>0$, independent of the
block and $n$, such that the difference between the mean grid locations in the
right and left halves is at least
$c_{\mathrm{q}}(x_{R,n}-x_{L,n})$. Averaging the strict-slope inequality over all
left--right pairs and using \eqref{eq:isotonic_block_inequality} yields
\[
0\ge \bar y_{L:q}-\bar y_{q+1:R}
\ge \kappa c_{\mathrm{q}}(x_{R,n}-x_{L,n})-2\epsilon_n.
\]
Consequently,
\begin{equation}
x_{R,n}-x_{L,n}\le
\frac{2\epsilon_n}{\kappa c_{\mathrm{q}}}.
\label{eq:isotonic_block_width}
\end{equation}

\paragraph{Step 2: Localize every block that can affect $c_0$.}
We next verify that the relevant blocks lie in the strict-slope neighborhood.
Choose $\delta>0$ such that
$\mathcal U_0=[c_0-2\delta,c_0+2\delta]\subset\mathcal U$. For all large $n$,
$Ma_n<\delta/4$. Suppose a block intersecting
$\{|t-c_0|\le Ma_n\}$ exits through the right endpoint of $\mathcal U_0$, and
split it at a grid point $x_{q,n}=c_0+\delta+O(\Delta_{c,n})$. For every
prefix index $i\le q$, monotonicity gives
$f(x_{i,n})\ge f(x_{q,n})$. By quasi-uniformity and compactness of the
full grid, a fixed positive fraction, say $c_\delta$, of the suffix grid points
lies in $[c_0+3\delta/2,c_0+2\delta]$. On those points the strict-slope
inequality gives
\[
f(x_{i,n})\le f(x_{q,n})-\kappa\delta/3
\]
for all sufficiently large $n$, while monotonicity gives
$f(x_{i,n})\le f(x_{q,n})$ on the rest of the suffix. Hence
\[
\bar f_{L:q}-\bar f_{q+1:R}
\ge c_\delta\kappa\delta/3>0.
\]
The full-grid $o_p(1)$ error preserves this sign with probability tending to
one, contradicting \eqref{eq:isotonic_block_inequality}. Splitting at
$c_0-\delta+O(\Delta_{c,n})$ gives the identical contradiction for a block
exiting through the left endpoint. Thus every block intersecting the local
set is contained in $\mathcal U_0$ with probability tending to one.

Fix $\varepsilon>0$ and choose $B<\infty$ so that
$\sup_{\mathcal U}|z_n-f|\le Bb_n$ with limiting probability at least
$1-\varepsilon$. On the preceding localization event,
\eqref{eq:isotonic_block_width} with $\epsilon_n=Bb_n$ shows, uniformly over
all blocks intersecting $\{|t-c_0|\le Ma_n\}$, that their diameters are
$O(b_n)$. Because $a_n=O(b_n)$, all their grid points therefore lie in
$\{|t-c_0|\le Cb_n\}$ for a fixed $C$.

\paragraph{Step 3: Use local equicontinuity inside the localized blocks.}
Now set $v_n(t)=f(t)+e_n(c_0)$ and
\[
\epsilon_n^\circ=
\sup_{|t-c_0|\le Cb_n}|z_n(t)-v_n(t)|.
\]
The local modulus assumption gives $\epsilon_n^\circ=o_p(a_n)$. Reapply
\eqref{eq:isotonic_block_width} on each relevant block, now with the
nonincreasing sequence $v_n$ in place of $f$ and
$\epsilon_n=\epsilon_n^\circ$. The diameter of every such block is
$o_p(a_n)$, uniformly. Since its fitted value is its raw block average and
$f'$ is bounded on $\mathcal U_0$,
\begin{align*}
|\bar y_{L:R}-v_n(x_{j,n})|
&\le \epsilon_n^\circ
+\sup_{i\in[L,R]}|v_n(x_{i,n})-v_n(x_{j,n})|\\
&=o_p(a_n)
\end{align*}
for every $j\in[L,R]$ with $|x_{j,n}-c_0|\le Ma_n$, uniformly over those
grid points. Thus the desired conclusion holds at the grid points.

Linear interpolation adds $O(\Delta_{c,n})=o(a_n)$ because $f'$ is locally
bounded. Clipping is inactive on this neighborhood with probability tending
to one because $f(c_0)\in(0,1)$ and $e_n(c_0)=o_p(1)$. Since
$\varepsilon$ was arbitrary, the result follows.
\end{proof}

\subsection{Proof of Theorem \ref{thm:pointwise_roc} and Corollary \ref{cor:roc_se}}
\label{proof:pointwise_roc}

\begin{proof}
Theorem~\ref{thm:roc_auc_consistency}
and the strict monotonicity of $\tau_0$ give $\hat c_u\to_p c_u$. We first sharpen
this to the scale $a_n$. Lemma~\ref{lem:grid_interpolation} and the joint
fixed-threshold TPR/FPR expansions give, for $y\in\{0,1\}$,
\begin{equation}
\hat\tau_y(c_u)-\tau_y(c_u)
=\sum_{g\in\mathcal G}\mathbb P_{g,n}\psi_{g,y}(c_u)+o_p(a_n).
\label{eq:interpolated_pointwise_expansion}
\end{equation}
Applying Lemma~\ref{lem:isotonic_local} to the raw
TPR and FPR curves gives, for $y\in\{0,1\}$ and every fixed $M$,
\begin{equation}
\begin{aligned}
&\sup_{|t-c_u|\le Ma_n}\Bigl|
\{\tilde\tau_y(t)-\tau_y(t)\}\\
&\qquad-\{\hat\tau_y(c_u)-\tau_y(c_u)\}\Bigr|=o_p(a_n).
\end{aligned}
\label{eq:proved_local_equivalence}
\end{equation}
For either sign, \eqref{eq:proved_local_equivalence} and a Taylor expansion
give
\begin{align*}
\tilde\tau_0(c_u\pm Ma_n)-u
&=\hat\tau_0(c_u)-\tau_0(c_u)\\
&\quad\pm Ma_n\tau_0'(c_u)+o_p(a_n)\\
&\quad+o(Ma_n).
\end{align*}
Equation~\eqref{eq:interpolated_pointwise_expansion} implies
$\hat\tau_0(c_u)-\tau_0(c_u)=O_p(a_n)$. Because
$\tau_0'(c_u)<0$ and is bounded away from zero, this bracketing can be checked
directly. Given $\varepsilon>0$, choose $C_\varepsilon<\infty$ such that
\[
\mathbb P\{|
\hat\tau_0(c_u)-\tau_0(c_u)|\le C_\varepsilon a_n\}
\ge1-\varepsilon
\]
for all sufficiently large $n$. Let
$\kappa=|\tau_0'(c_u)|$ and choose $M>2C_\varepsilon/\kappa$. For this fixed
$M$, the Taylor remainder is smaller than
$M\kappa a_n/2$ for all sufficiently large $n$. Hence, on the displayed
event,
\[
\tilde\tau_0(c_u-Ma_n)>u,
\qquad
\tilde\tau_0(c_u+Ma_n)<u.
\]
Monotonicity then places $\hat c_u$ between these two thresholds and gives
\[
\lim_{M\to\infty}\limsup_n
\mathbb P(|\hat c_u-c_u|>Ma_n)=0,
\]
which is precisely $\hat c_u-c_u=O_p(a_n)$.

The projected curve is continuous after linear interpolation and runs from
one to zero, so its generalized inverse satisfies
$\tilde\tau_0(\hat c_u)=u$; allowing for a grid convention
without interpolation would change this by only $O(\Delta_{c,n})=o(a_n)$.
Since $\hat c_u-c_u=O_p(a_n)$, we may apply
\eqref{eq:proved_local_equivalence} at the random point $\hat c_u$: first
restrict to $\{|\hat c_u-c_u|\le Ma_n\}$, apply the uniform statement for
fixed $M$, and then let $M\to\infty$. Expanding $\tau_0$ at $c_u$ therefore gives
the explicit equation
\[
0=\hat\tau_0(c_u)-\tau_0(c_u)
+\tau_0'(c_u)(\hat c_u-c_u)+o_p(a_n).
\]
Solving it gives the Bahadur representation
\[
\hat c_u-c_u
=-\frac{\hat\tau_0(c_u)-\tau_0(c_u)}{\tau_0'(c_u)}
+o_p(a_n).
\]
The same proved local equivalence for TPR now gives
\begin{align*}
\widehat{\operatorname{ROC}}^{cf}(u)-\operatorname{ROC}(u)
&=\hat\tau_1(c_u)-\tau_1(c_u)\\
&\quad-\frac{\tau_1'(c_u)}{\tau_0'(c_u)}
\{\hat\tau_0(c_u)-\tau_0(c_u)\}\\
&\quad+o_p(a_n).
\end{align*}

Substituting \eqref{eq:interpolated_pointwise_expansion} into the last
display gives
\[
\sum_{g\in\mathcal G}\mathbb P_{g,n}
\{\psi_{g,1}(c_u)-\lambda_u\psi_{g,0}(c_u)\}+o_p(a_n).
\]
The three source sums are independent. The HL and target summands are
bounded, while the AL linear combination has envelope $O(h^{-1})$ and second
moment $O(h^{-1})$. Lemma~\ref{lem:three_sample_clt}, now normalized by
$v_n(u)$, applies because $v_n(u)\asymp a_n$. This proves the stated
Gaussian limit.

For the plug-in standard-error result, write
$\hat\phi_{g,r,u}-\phi_{g,r,u}$ as the sum of the two plug-in errors in
$\hat\psi_{g,1,r}$ and $\hat\psi_{g,0,r}$ and the additional term
$(\hat\lambda_u-\lambda_u)\hat\psi_{g,0,r}$. Consistency of
$\hat\lambda_u$, the source-specific $L_2$ condition in
Corollary~\ref{cor:roc_se}, and
$n_g^{-1}\sum_r\hat\psi_{g,0,r}^2=O_p(d_g)$ give
\[
\frac1{n_g}\sum_{r=1}^{n_g}
(\hat\phi_{g,r,u}-\phi_{g,r,u})^2=o_p(d_g).
\]
The same second-moment comparison used in the proof of
Theorem~\ref{thm:variance_and_ci} then gives
\begin{align*}
\sum_{g\in\mathcal G}\frac{1}{n_g^2}
\sum_{r=1}^{n_g}\{\hat\phi_{g,r,u}-\bar\phi_{g,u}\}^2
&=\sum_{g\in\mathcal G}\frac{\Var_g(\phi_{g,u})}{n_g}\\
&\quad+o_p\{v_n^2(u)\}.
\end{align*}
The standard-error consistency and Wald interval follow by Slutsky's theorem.
\end{proof}

\subsection{Proof of Proposition \ref{prop:oracle_bandwidth}}
\label{proof:oracle_bandwidth}

\begin{proof}
Consider first class $y\in\{0,1\}$ at threshold $c$. The second-order
expansion in the proof of Theorem~\ref{thm:asymptotic_normality_tpr}, applied
with $a=a_y$ and divided by $p_y$, gives for fold $k$
\[
-\frac{h^2\mu_2(K)}{2p_y}
\partial_t^2\mu_{k,1}^{(a_y)}(c)+o_p(h^2r_{S,2}).
\]
The fold weights and derivative terms satisfy
\begin{align*}
\max_k|\alpha_{\mathrm{AL},k}-V^{-1}|
&\le n_{\mathrm{AL}}^{-1},\\
\max_k|\partial_t^2\mu_{k,1}^{(a_y)}(c)|
&=O_p(r_{S,2}).
\end{align*}
Therefore, replacing the exact fold weights by $1/V$ changes this
$h^2$-weighted average by
$O_p(h^2r_{S,2}/n_{\mathrm{AL}})=o_p(h^2r_{S,2})$. Hence the weighted average is
$B_{2,y,n}(c)h^2+o_p(h^2r_{S,2})$. The error from variation along the
score-estimation path is $O_p(r_{S,2}^2)$ and does not depend on $h$.

For the variance, the AL influence term for class $y$ is
\[
\psi_{\mathrm{AL},y,h}(c)
=p_y^{-1}w_{\mathrm{AL}}(X)a_y(X)
K_h(c-Z)\{S-Z\}.
\]
Let
\[
\Gamma_y(t)=f_{Z,\mathrm{AL}}(t)
\mathbb E_{\mathrm{AL}}\!\left[
w_{\mathrm{AL}}^2(X)a_y^2(X)\sigma_S^2(X)\mid Z=t\right].
\]
Because $\mathbb E_{\mathrm{AL}}(S-Z\mid X)=0$, a change of variables gives
\begin{align*}
h\operatorname{Var}_{\mathrm{AL}}
\{\psi_{\mathrm{AL},y,h}(c)\}
&=p_y^{-2}\int K^2(v)\Gamma_y(c-hv)\,dv\\
&\longrightarrow p_y^{-2}R(K)\Gamma_y(c)
=V_{\mathrm{AL},y}(c).
\end{align*}
The convergence follows from continuity of the variance profile at $c$ and the
kernel-tail conditions.
Thus the AL contribution to the variance of the sample average is
$V_{\mathrm{AL},y}(c)/(n_{\mathrm{AL}}h)\{1+o(1)\}$.

Retaining the localization bias and the AL variance contribution just derived
gives the two-term conditional benchmark
$V_{\mathrm{AL},R}/(n_{\mathrm{AL}}h)+B_{2,R,n}^2h^4$.
This is a definition of the two-term benchmark, not an expansion of the full
conditional mean squared error. Differentiating with respect to $h$ gives the
unique interior solution
$h^5=V_{\mathrm{AL},R}/(4n_{\mathrm{AL}}B_{2,R,n}^2)$.
Finally, when the AL variance term dominates the HL and target variance terms,
the ratio of squared localization bias to AL variance is
\[
\frac{B_{2,R,n}^2h^4}
{V_{\mathrm{AL},R}/(n_{\mathrm{AL}}h)}
=\frac{n_{\mathrm{AL}}B_{2,R,n}^2h^5}
{V_{\mathrm{AL},R}}.
\]
This ratio vanishes exactly when $h/h_{\mathrm{AMSE},R}\to_p0$, completing
the proof.
\end{proof}

\subsection{Proof of Theorem \ref{thm:auc_clt} and Corollary \ref{cor:auc_se}}
\label{proof:auc_clt}

\begin{proof}
The proof has four steps. We first expand the target pairwise plug-in. We then
show that the HL and AL source corrections cancel the first-order outcome and
score directions. Third, we expand the prevalence denominator. Finally, we
combine the three source averages and apply a triangular-array CLT.

Recall that
\begin{align*}
\eta
&=\mathbb E_{\mathrm{T}\times\mathrm{T}}
[M(X)\{1-M(X')\}\kappa\{Z(X),Z(X')\}]\\
&=p_1p_0A.
\end{align*}
The last equality follows by conditioning on $(X,X')$: $M(X)$ and
$1-M(X')$ are the conditional probabilities that the first record is a case
and the second is a control. Because $Z$ is continuous, ties have probability
zero.

\paragraph{Step 1: Compute the two path derivatives.}
For an outcome direction $\delta_Y$, direct expansion of the two outcome
weights gives
\begin{align*}
&\eta(M+s\delta_Y,Z)-\eta(M,Z)\\
&\quad=s\,\mathbb E_{\mathrm{T}}
[\delta_Y\{G_0(Z)-G_1(Z)\}]\\
&\qquad-s^2\mathbb E_{\mathrm{T}\times\mathrm{T}}
[\delta_Y(X)\delta_Y(X')\kappa\{Z(X),Z(X')\}].
\end{align*}
Indeed, the first linear term is obtained by conditioning on $X$, whereas in
the second one we interchange $X$ and $X'$ and condition on the renamed first
argument. The quadratic term is bounded in absolute value by
$s^2\|\delta_Y\|_{\mathbb P_{\mathrm{T}},2}^2$. In particular,
\[
D_Y\eta[\delta_Y]
=\mathbb E_{\mathrm{T}}[\delta_Y\{G_0(Z)-G_1(Z)\}].
\]

For a score direction, set
\begin{align*}
W&=M(X)\{1-M(X')\},\\
D&=Z(X)-Z(X'),\\
\Delta&=\delta_S(X)-\delta_S(X').
\end{align*}
Part (a) of Assumption~\ref{ass:auc_path} directly assumes the resulting
score derivative. To see why it has the displayed form, consider the
sufficient signed-density condition stated after that assumption. Under that
condition, the signed measure
$B\mapsto\mathbb E(W\Delta\Ind\{D+t\Delta\in B\})$ has a density $\mu_t$
near zero. Differentiating the ordering probability gives
\[
\frac{d}{dt}\eta(M,Z+t\delta_S)=\mu_t(0).
\]
At $t=0$, integrate the $\delta_S(X)$ part of $\Delta$ over $X'$ and the
$-\delta_S(X')$ part over $X$, interchanging the two record labels in the
second calculation. This yields
\begin{align*}
\mu_0(0)
&=\mathbb E_{\mathrm{T}}[\delta_S Mq_0(Z)]
-\mathbb E_{\mathrm{T}}[\delta_S(1-M)q_1(Z)]\\
&=\mathbb E_{\mathrm{T}}[\delta_S
\{Mq_0(Z)-(1-M)q_1(Z)\}].
\end{align*}
Thus
\[
D_S\eta[\delta_S]
=\mathbb E_{\mathrm{T}}[\delta_S
\{Mq_0(Z)-(1-M)q_1(Z)\}],
\]
and the signs of the two score-density terms follow from the two components
of $\Delta$.
The second-order remainder is $o_p(\rho_n)$ by the two parts of
Assumption~\ref{ass:auc_path}.

We now keep the fold indices explicit. Put
$M_k=\hat m_Y^{(-k)}$, $Z_k=\hat m_S^{(-k)}$, and
$\eta_k=\eta(M_k,Z_k)$. Also write
$\delta_{Y,k}=M_k-M$, $\delta_{S,k}=Z_k-Z$, and
$\delta_{w,g,k}=\hat w_g^{(-k)}-w_g$ for
$g\in\{\mathrm{HL},\mathrm{AL}\}$. Define the population summaries evaluated at the fitted nuisance functions
\begin{align*}
G_{0,k}^{\circ}(z)&=\mathbb E_{\mathrm{T}}
[\{1-M_k(X)\}\kappa\{z,Z_k(X)\}],\\
G_{1,k}^{\circ}(z)&=\mathbb E_{\mathrm{T}}
[M_k(X)\kappa\{Z_k(X),z\}],
\end{align*}
and the mean-zero first projection
\begin{align*}
\gamma_k(X)
&=M_k(X)G_{0,k}^{\circ}\{Z_k(X)\}\\
&\quad+\{1-M_k(X)\}G_{1,k}^{\circ}\{Z_k(X)\}-2\eta_k.
\end{align*}
Let
\begin{align*}
h_k(x,x')
&=M_k(x)\{1-M_k(x')\}
\kappa\{Z_k(x),Z_k(x')\},\\
\bar h_k(x,x')&=\{h_k(x,x')+h_k(x',x)\}/2.
\end{align*}
Although $h_k$ is ordered and need not be symmetric, the estimator sums over
both orders of each pair. Therefore
\[
\hat\eta_{\mathrm{T},k}
=\binom{n_{\mathrm{T},k}}{2}^{-1}
\sum_{\substack{\ell,r\in\mathcal L_k\\\ell<r}}
\bar h_k(X_\ell,X_r).
\]
Let $\mathcal T_k^A$ be the sigma-field generated by all observations used to
train $M_k$ and $Z_k$; it excludes the target validation fold
$\mathcal L_k$. Conditional on $\mathcal T_k^A$, define
\[
h_{1,k}(x)=\mathbb E_{\mathrm{T}}\{\bar h_k(x,X')\}-\eta_k.
\]
Direct conditioning shows that $2h_{1,k}(x)=\gamma_k(x)$. If
\[
h_{2,k}(x,x')=\bar h_k(x,x')-\eta_k-h_{1,k}(x)-h_{1,k}(x'),
\]
then $h_{2,k}$ is degenerate, meaning that its conditional expectation given
either argument is zero. Hoeffding's identity is consequently
\begin{align*}
\hat\eta_{\mathrm{T},k}-\eta_k
&=\mathbb P_{\mathrm{T},\mathcal L_k}\gamma_k+R_{U,k},\\
R_{U,k}
&=\binom{n_{\mathrm{T},k}}{2}^{-1}
\sum_{\substack{\ell,r\in\mathcal L_k\\\ell<r}}
h_{2,k}(X_\ell,X_r).
\end{align*}
Because the kernel is uniformly bounded, degeneracy gives
\[
\mathbb E(R_{U,k}^2\mid\mathcal T_k^A)
=\frac{2\mathbb E(h_{2,k}^2\mid\mathcal T_k^A)}
{n_{\mathrm{T},k}(n_{\mathrm{T},k}-1)}
=O(n_{\mathrm{T},k}^{-2}).
\]
To verify convergence of the fitted ordering kernel, for every $\epsilon>0$,
\begin{align*}
&\mathbb P_{\mathrm{T}\times\mathrm{T}}
[\kappa\{Z_k(X),Z_k(X')\}\ne\kappa\{Z(X),Z(X')\}]\\
&\quad\le \mathbb P(|Z-Z'|\le2\epsilon)
+2\mathbb P(|Z_k-Z|>\epsilon)=o_p(1)
\end{align*}
first along the joint sample-size sequence and then as
$\epsilon\downarrow0$. The last step
uses nuisance consistency and $\mathbb P(Z=Z')=0$. Boundedness then gives
convergence in $\|\cdot\|_{\mathbb P_{\mathrm{T}}\times\mathbb P_{\mathrm{T}},2}$.
Multiplication by the bounded outcome weights and
$\|M_k-M\|_{\mathbb P_{\mathrm{T}},2}=o_p(1)$ show that the fitted pairwise
kernel converges to its population counterpart in the same $L_2$ space.
Conditional expectation is an $L_2$ contraction, so both of its first
projections converge in $\|\cdot\|_{\mathbb P_{\mathrm{T}},2}$; its expectation $\eta_k$ also
converges to $\eta$. Hence
$\|\gamma_k-\gamma\|_{\mathbb P_{\mathrm{T}},2}=o_p(1)$, where
\[
\gamma(X)=MG_0(Z)+(1-M)G_1(Z)-2\eta.
\]
Both $\gamma_k$ and $\gamma$ have target mean zero. Lemma
\ref{lem:conditional_empirical_mean} therefore gives
\[
\sum_{k=1}^V\alpha_{\mathrm{T},k}
(\mathbb P_{\mathrm{T},\mathcal L_k}-\mathbb P_{\mathrm{T}})
(\gamma_k-\gamma)=o_p(n_{\mathrm{T}}^{-1/2}).
\]
Also, the fixed number of degenerate remainders $R_{U,k}$ contributes
$O_p(n_{\mathrm{T}}^{-1})=o_p(n_{\mathrm{T}}^{-1/2})$. Cross-fitting and the
Hoeffding identity therefore yield
\begin{equation}
\begin{aligned}
\hat\eta_{\mathrm{T}}-\frac1V\sum_{k=1}^V\eta_k
&=\mathbb P_{\mathrm{T},n}\gamma+o_p(n_{\mathrm{T}}^{-1/2})\\
&=\mathbb P_{\mathrm{T},n}\gamma+o_p(\rho_n).
\end{aligned}
\label{eq:auc_target_hoeffding}
\end{equation}
The fold-weight reduction at the start of this section justifies the common
$1/V$ weights in this display and in the two source expansions below; its
error is $O_p(\max_g n_g^{-1})=o_p(\rho_n)$.

\paragraph{Step 2: Expand the two source corrections.}
We first make the conditional independence explicit. For the HL correction,
let $\mathcal A_{\mathrm{HL},k}$ contain the data used to fit
$M_k$, $Z_k$, and $\hat w_{\mathrm{HL}}^{(-k)}$, together with the target
fold $\mathcal L_k$ used to construct $\hat\Gamma_{Y,k}$. By construction,
this sigma-field contains no record in $\mathcal I_k$; observations from the
other two samples are also independent of $\mathcal I_k$. Thus the held-out
HL records are i.i.d. and independent of $\mathcal A_{\mathrm{HL},k}$. The
analogous sigma-field for the AL correction excludes $\mathcal J_k$.
Dependence among different training folds does not affect the rates because
$V$ is fixed and each remainder bound is established fold by fold before summing.

For the gold source, define
\begin{align*}
A_{\mathrm{HL},k}
&=\hat w_{\mathrm{HL}}^{(-k)}(Y-M_k)\hat\Gamma_{Y,k},\\
A_{\mathrm{HL}}^0
&=w_{\mathrm{HL}}(Y-M)\Gamma_Y.
\end{align*}
Since $\mathbb P_{\mathrm{HL}}A_{\mathrm{HL}}^0=0$, exact addition and
subtraction gives
\begin{align*}
\mathbb P_{\mathrm{HL},\mathcal I_k}A_{\mathrm{HL},k}
&=\mathbb P_{\mathrm{HL},\mathcal I_k}A_{\mathrm{HL}}^0\\
&\quad+(\mathbb P_{\mathrm{HL},\mathcal I_k}
-\mathbb P_{\mathrm{HL}})
(A_{\mathrm{HL},k}-A_{\mathrm{HL}}^0)\\
&\quad+\mathbb P_{\mathrm{HL}}A_{\mathrm{HL},k}.
\end{align*}
Boundedness and change of measure give
\[
\|A_{\mathrm{HL},k}-A_{\mathrm{HL}}^0\|_{
\mathbb P_{\mathrm{HL}},2}
=O_p(r_{w,\mathrm{HL},2}+r_{Y,2}+r_{\Gamma_Y,2})=o_p(1).
\]
Lemma~\ref{lem:conditional_empirical_mean} makes the centered middle term
$o_p(n_{\mathrm{HL}}^{-1/2})$. For the final population expectation, conditional
mean-zero residuals and change of measure give, line by line,
\begin{align*}
\mathbb P_{\mathrm{HL}}A_{\mathrm{HL},k}
&=-\mathbb P_{\mathrm{T}}(\delta_{Y,k}\hat\Gamma_{Y,k})
-\mathbb P_{\mathrm{HL}}
(\delta_{w,\mathrm{HL},k}\delta_{Y,k}\hat\Gamma_{Y,k})\\
&=-D_Y\eta[\delta_{Y,k}]
-\mathbb P_{\mathrm{T}}
[\delta_{Y,k}(\hat\Gamma_{Y,k}-\Gamma_Y)]\\
&\quad-\mathbb P_{\mathrm{HL}}
(\delta_{w,\mathrm{HL},k}\delta_{Y,k}\hat\Gamma_{Y,k}).
\end{align*}
Cauchy--Schwarz bounds the last two terms by
$O_p[\{r_{\Gamma_Y,2}+r_{w,\mathrm{HL},2}\}r_{Y,2}]$.
Averaging the fixed number of folds consequently gives
\begin{equation}
\begin{aligned}
\hat\eta_{\mathrm{HL}}
&=\mathbb P_{\mathrm{HL},n}
[w_{\mathrm{HL}}(Y-M)\Gamma_Y]\\
&\quad-\frac1V\sum_{k=1}^VD_Y\eta[\delta_{Y,k}]
+R_{\mathrm{HL},n},\\
|R_{\mathrm{HL},n}|
&=O_p[\{r_{\Gamma_Y,2}+r_{w,\mathrm{HL},2}\}r_{Y,2}]
+o_p(\rho_n)\\
&=o_p(\rho_n).
\end{aligned}
\label{eq:auc_hl_expansion}
\end{equation}

For the auxiliary source, set
\begin{align*}
A_{\mathrm{AL},k}
&=\hat w_{\mathrm{AL}}^{(-k)}(S-Z_k)\hat\Gamma_{S,k},\\
A_{\mathrm{AL}}^0
&=w_{\mathrm{AL}}(S-Z)\Gamma_S.
\end{align*}
The same add-and-subtract identity and Lemma
\ref{lem:conditional_empirical_mean} control the centered empirical
difference, because
\[
\|A_{\mathrm{AL},k}-A_{\mathrm{AL}}^0\|_{
\mathbb P_{\mathrm{AL}},2}
=O_p(r_{w,\mathrm{AL},2}+r_{S,2}+r_{\Gamma_S,2})=o_p(1).
\]
Since $\mathbb E_{\mathrm{AL}}(S-Z_k\mid X)=-\delta_{S,k}$, its population
term is
\begin{align*}
\mathbb P_{\mathrm{AL}}A_{\mathrm{AL},k}
&=-D_S\eta[\delta_{S,k}]
-\mathbb P_{\mathrm{T}}
[\delta_{S,k}(\hat\Gamma_{S,k}-\Gamma_S)]\\
&\quad-\mathbb P_{\mathrm{AL}}
(\delta_{w,\mathrm{AL},k}\delta_{S,k}\hat\Gamma_{S,k}).
\end{align*}
Cauchy--Schwarz now gives the remainder
$O_p[\{r_{\Gamma_S,2}+r_{w,\mathrm{AL},2}\}r_{S,2}]$, and therefore
\begin{equation}
\begin{aligned}
\hat\eta_{\mathrm{AL}}
&=\mathbb P_{\mathrm{AL},n}
[w_{\mathrm{AL}}(S-Z)\Gamma_S]\\
&\quad-\frac1V\sum_{k=1}^VD_S\eta[\delta_{S,k}]
+R_{\mathrm{AL},n},\\
|R_{\mathrm{AL},n}|
&=O_p[\{r_{\Gamma_S,2}+r_{w,\mathrm{AL},2}\}r_{S,2}]
+o_p(\rho_n)\\
&=o_p(\rho_n).
\end{aligned}
\label{eq:auc_al_expansion}
\end{equation}
The error measures $r_{\Gamma_Y,2}$ and $r_{\Gamma_S,2}$ include the target
pilot randomness, the evaluation of $G_y$ and $q_y$ at $Z_k$, and estimation
of $M$ within the two gradients. Thus no pilot term is left unaccounted for in
\eqref{eq:auc_hl_expansion}--\eqref{eq:auc_al_expansion}.

\paragraph{Step 3: Cancel the nuisance directions and expand the denominator.}
The foldwise path expansion in Assumption~\ref{ass:auc_path} gives
\[
\frac1V\sum_{k=1}^V(\eta_k-\eta)
=\frac1V\sum_{k=1}^V
\{D_Y\eta[\delta_{Y,k}]+D_S\eta[\delta_{S,k}]\}
+o_p(\rho_n).
\]
Write
$\hat\eta=\hat\eta_{\mathrm{T}}+\hat\eta_{\mathrm{HL}}
+\hat\eta_{\mathrm{AL}}$. Combining this display with
\eqref{eq:auc_target_hoeffding}--\eqref{eq:auc_al_expansion} cancels both
first-order nuisance directions. Recalling that
$\Gamma_Y=H_Y+A(p_0-p_1)$, we first obtain the numerator
expansion
\begin{align*}
\hat\eta-\eta
&=\mathbb P_{\mathrm{HL},n}
[w_{\mathrm{HL}}(Y-M)\\
&\hspace{7em}\times\{G_0(Z)-G_1(Z)\}]\\
&\quad+\mathbb P_{\mathrm{AL},n}
[w_{\mathrm{AL}}(S-Z)\Gamma_S]\\
&\quad+\mathbb P_{\mathrm{T},n}
[MG_0(Z)+(1-M)G_1(Z)\\
&\hspace{8em}-2\eta]\\
&\quad+o_p(\rho_n).
\end{align*}

For completeness, repeat the augmented prevalence decomposition at the
$\rho_n$ scale. Its centered cross-fitted empirical remainder has conditional
standard deviation bounded by constants times
\[
\frac{r_{Y,2}}{\sqrt{n_{\mathrm{T}}}}
+\frac{r_{Y,2}+r_{w,\mathrm{HL},2}}{\sqrt{n_{\mathrm{HL}}}}
=o_p(\rho_n),
\]
because both nuisance errors are $o_p(1)$. Its population remainder is bounded
by $r_{w,\mathrm{HL},2}r_{Y,2}=o_p(\rho_n)$ under
part (b) of Assumption~\ref{ass:auc_rates}. Therefore, without invoking the sharper
$b_{p,n}$-scale conclusion of Theorem~\ref{thm:p1_consistency},
\[
\hat p_1-p_1=\mathbb P_{\mathrm{HL},n}
[w_{\mathrm{HL}}(Y-M)]+\mathbb P_{\mathrm{T},n}(M-p_1)
+o_p(\rho_n).
\]
To make the final ratio step explicit, put
\begin{align*}
d&=p_1p_0,
&\hat d&=\hat p_1(1-\hat p_1),\\
\Delta_p&=\hat p_1-p_1,
&\Delta_\eta&=\hat\eta-\eta.
\end{align*}
Then
\[
\hat d-d=(p_0-p_1)\Delta_p-\Delta_p^2
\]
and the following identity is exact:
\begin{align*}
\frac{\hat\eta}{\hat d}-\frac{\eta}{d}
&=\frac{\Delta_\eta}{d}-\frac{\eta(\hat d-d)}{d^2}\\
&\quad-\frac{\Delta_\eta(\hat d-d)}{d\hat d}
+\frac{\eta(\hat d-d)^2}{d^2\hat d}.
\end{align*}
The numerator and prevalence expansions show that
$\Delta_\eta=O_p(\rho_n)$ and $\Delta_p=O_p(\rho_n)$. Since $d$ is bounded
away from zero, the last two displayed terms and the contribution of
$\Delta_p^2$ are $O_p(\rho_n^2)=o_p(\rho_n)$. Retaining the linear terms
gives the following three source-specific expressions:
\begin{align*}
&\frac{w_{\mathrm{HL}}(Y-M)}{p_1p_0}
\{\Gamma_Y-A(p_0-p_1)\}
=\varphi_{\mathrm{HL}}^A,\\
&\frac{w_{\mathrm{AL}}(S-Z)\Gamma_S}{p_1p_0}
=\varphi_{\mathrm{AL}}^A,\\
&\frac{\gamma-A(p_0-p_1)(M-p_1)}{p_1p_0}
=\varphi_{\mathrm{T}}^A.
\end{align*}
Therefore
\[
\widehat{\operatorname{AUC}}_{\mathrm{os}}^{cf}-A
=\sum_{g\in\mathcal G}\mathbb P_{g,n}\varphi_g^A+o_p(\rho_n).
\]

\paragraph{Step 4: Prove the CLT and plug-in variance consistency.}
The three leading averages are independent. Moreover, the boundedness and
density-ratio conditions, together with bounded $q_0$ and $q_1$, give a common
finite envelope $C$ for all three $\varphi_g^A$. For an observation from
source $g$, its normalized contribution is therefore bounded by
\[
\frac{C}{n_g\sigma_{A,n}}
\le \frac{C'}{n_g\rho_n}
\le \frac{C'}{\sqrt{n_g}}\to0,
\]
where the second inequality uses
$\sigma_{A,n}\asymp\rho_n$ and the third uses
$\rho_n\ge n_g^{-1/2}$. The normalized variances sum to one by the definition
of $\sigma_{A,n}$. Hence the Lindeberg condition holds and the
Lindeberg--Feller theorem gives the stated Gaussian limit.

For the plug-in variance result, let
$e_{g,r}=\hat\varphi_{g,r}^A-\varphi_{g,r}^A$, and write bars for
source-specific empirical means. Centering is an empirical $L_2$ contraction,
so the assumed source-specific convergence gives
$\mathbb P_{g,n}(e_g-\bar e_g)^2=o_p(1)$. Cauchy--Schwarz then yields
\begin{align*}
&\left|\mathbb P_{g,n}
(\hat\varphi_g^A-\bar{\hat\varphi}_g^A)^2
-\mathbb P_{g,n}(\varphi_g^A-\bar\varphi_g^A)^2\right|\\
&\quad\le
2\{\mathbb P_{g,n}(\varphi_g^A-\bar\varphi_g^A)^2\}^{1/2}
\{\mathbb P_{g,n}(e_g-\bar e_g)^2\}^{1/2}\\
&\qquad+\mathbb P_{g,n}(e_g-\bar e_g)^2
=o_p(1),
\end{align*}
because the oracle influence functions are bounded. The ordinary law of large
numbers gives
$\mathbb P_{g,n}(\varphi_g^A-\bar\varphi_g^A)^2
=\Var_g(\varphi_g^A)+o_p(1)$. Multiplying the preceding comparison by
$n_g^{-1}$ and summing the fixed three sources gives an error
$o_p(\rho_n^2)=o_p(\sigma_{A,n}^2)$. Therefore
\[
\sum_{g\in\mathcal G}\frac{1}{n_g^2}
\sum_{r=1}^{n_g}(\tilde\varphi_{g,r}^A)^2
=\sum_{g\in\mathcal G}\frac{\Var_g(\varphi_g^A)}{n_g}
+o_p(\sigma_{A,n}^2).
\]
Variance consistency and the Wald interval follow from Slutsky's theorem.
\end{proof}

\paragraph{Scope of the proof.}
The TPR and FPR arguments are pointwise in $c$, and the ROC argument adds
only local stochastic equicontinuity around the crossing $c_u$. It does not
establish the global process approximation needed for a uniform ROC band.
The AUC result instead uses a scalar pairwise expansion and therefore does not
imply simultaneous validity over the ROC curve. The centered Wald results for
TPR, FPR, and pointwise ROC require the remainder conditions in
Assumption~\ref{ass:rate}; the one-step AUC result instead uses
Assumption~\ref{ass:auc_path}. For the threshold-based
results, the condition $h^2r_{S,2}=o_p(a_n)$ ensures that localization bias is
negligible on the inference scale. The bandwidth sensitivity analysis
examines the corresponding finite-sample behavior.

\section{Additional Simulation Details and Results}
\label{sec:sim_setup_details}

\subsection{Data-Generating Processes}

All target covariates have dimension \(d=20\). For DGP-A,
\[
  X\sim N(0,\Sigma_A),\qquad
  (\Sigma_A)_{jk}=0.2^{|j-k|},
\]
and
\begin{align*}
  \eta_Y^A(X)
    &=0.10+0.70X_1+0.45\sin(X_2)\\
    &\quad+0.30X_3X_4+0.20(X_5^2-1),\\
  \eta_S^A(X)
    &=-0.05+1.00X_1+0.30\sin(X_2)\\
    &\quad+0.45X_3X_4+0.12(X_5^2-1).
\end{align*}

DGP-B represents a heterogeneous target population:
\begin{align*}
  X&=B\mu+\varepsilon,\qquad \varepsilon\sim N(0,\Sigma_B),\\
  \Pr(B=1)&=\Pr(B=-1)=\tfrac12,
\end{align*}
where
\[
  \mu=(0.9,-0.7,0.6,0,0,-0.5,0,\ldots,0)^\top.
\]
The diagonal entries of \(\Sigma_B\) are \(0.65\);
\((\Sigma_B)_{2r-1,2r}=(\Sigma_B)_{2r,2r-1}=-0.12\) for
\(r=1,\ldots,10\); all other off-diagonal entries are zero. Its response
surfaces are
\begin{align*}
  \eta_Y^B(X)
    &=-0.20+0.85\tanh(1.2X_1)-0.50X_2\\
    &\quad+0.40(X_3^2-1.01)+0.30\sin(X_4X_5),\\
  \eta_S^B(X)
    &=-0.50+0.55\tanh(1.2X_1)-0.15X_2\\
    &\quad+0.65(X_3^2-1.01)-0.35\sin(X_4X_5)\\
    &\quad+0.30\tanh(X_6).
\end{align*}


For \(q\in\{A,B\}\),
\(m_Y^\star(x)=\operatorname{expit}\{\eta_Y^q(x)\}\) and
\(m_S^\star(x)=\operatorname{expit}\{\eta_S^q(x)\}\). We draw
\(Y\mid X\sim\operatorname{Bernoulli}\{m_Y^\star(X)\}\) in HL and
\(S\mid X\sim\operatorname{Bernoulli}\{m_S^\star(X)\}\) in AL.
Note that while our asymptotic theory assumes compactly supported covariates for uniform rate control (Assumption \ref{ass:rate}), the use of unbounded Gaussian covariates here serves as an intentional stress test of the estimators' empirical robustness.

\paragraph{Additional DGP-D diagnostic.}
The regularized quadratic-dictionary diagnostic uses a separate
20-dimensional sparse quadratic design:
\[
  X\sim N(0,\Sigma_D),\qquad
  (\Sigma_D)_{jk}=0.3^{|j-k|}.
\]
Its outcome and surrogate mechanisms are
\begin{align*}
  \eta_Y^D(X)
    &=0.10+0.65X_1-0.45X_2+0.30X_3X_4\\
    &\quad+0.22(X_5^2-1),\\
  \eta_S^D(X)
    &=-0.10+0.95X_1-0.30X_2+0.48X_3X_4\\
    &\quad+0.18(X_5^2-1)+0.20X_6.
\end{align*}
We set
\[
m_Y^\star(x)=\operatorname{expit}\{\eta_Y^D(x)\},
\qquad
m_S^\star(x)=\operatorname{expit}\{\eta_S^D(x)\},
\]
and generate the corresponding Bernoulli gold and surrogate labels.

\subsection{Controlled and Moderate Source Shifts}

Studies~1, 2, and 4 use a controlled bounded-overlap design. Source
covariates are selected from the target law with
\begin{align*}
  \pi_{\mathrm{HL}}(x)
    &=0.35+0.30\operatorname{expit}\{a_{\mathrm{HL}}(x)\},\\
  \pi_{\mathrm{AL}}(x)
    &=0.40+0.20\operatorname{expit}\{a_{\mathrm{AL}}(x)\}.
\end{align*}
For DGP-A,
\begin{align*}
  a_{\mathrm{HL}}&=0.45X_1-0.30X_2+0.20X_6,\\
  a_{\mathrm{AL}}&=-0.40X_1+0.25X_3-0.20X_7,
\end{align*}
whereas for DGP-B,
\begin{align*}
  a_{\mathrm{HL}}&=0.35X_1+0.25X_3-0.20X_7,\\
  a_{\mathrm{AL}}&=-0.30X_2+0.25X_4+0.20X_8.
\end{align*}
Central symmetry gives \(E_{\mathrm T}\{\pi_g(X)\}=1/2\), so the exact
untruncated ratio is \(w_g(x)=\{2\pi_g(x)\}^{-1}\).

Study~3 uses a stronger bounded log-density-ratio tilt:
\begin{align*}
  \frac{p_g(x)}{p_{\mathrm T}(x)}
    &=\frac{\exp\{\gamma_g^\top z_g(x)\}}{Z_g},\\
  w_g(x)&=Z_g\exp\{-\gamma_g^\top z_g(x)\},
\end{align*}
where
\begin{align*}
  z_{\mathrm{HL}}(x)&=(\tanh X_1,\tanh X_2,\tanh X_6)^\top,\\
  z_{\mathrm{AL}}(x)&=(\tanh X_1,\tanh X_3,\tanh X_7)^\top,\\
  \gamma_{\mathrm{HL}}&=(0.810,-0.675,0.585)^\top,\\
  \gamma_{\mathrm{AL}}&=(-0.765,0.630,-0.540)^\top,
\end{align*}
and \(Z_g=E_{\mathrm T}[\exp\{\gamma_g^\top z_g(X)\}]\).
This construction is bounded because every coordinate of \(z_g\) is in
\([-1,1]\). For all non-oracle Study~3 estimators, \(w_{\mathrm{HL}}\) and
\(w_{\mathrm{AL}}\) are estimated separately in each cross-fitting training
fold. With target membership coded as one, the classifier probability
\(\hat\pi_g(x)\) is converted by
\[
  \hat w_g(x)=
  \frac{n_g^{(-v)}}{n_{\mathrm T}^{(-v)}}
  \frac{\hat\pi_g^{(-v)}(x)}
       {1-\hat\pi_g^{(-v)}(x)}.
\]
The logistic domain models use \(z_g\), for which the log density ratio is
correctly specified. A numerical probability guard at \(10^{-6}\) was
included but never activated; no realized weight was truncated. The oracle
normalizers \(Z_g\) are evaluated by the same \(10^6\)-draw population Monte
Carlo calculation used for the truths, yielding a high-accuracy numerical
oracle.

\begin{table}[t]
  \centering
  \caption{Study 3 cross-fitted density-ratio diagnostics, averaged over 500
  replications. ESS is \((\sum_iw_i)^2/\sum_iw_i^2\).}
  \label{tab:sim_weight_diagnostics}
  \scriptsize
  \begin{tabular}{llrrrr}
    \toprule
    DGP & Source & Domain AUC & Mean wt. & Mean max. & ESS \\
    \midrule
    DGP-A & HL & 0.689 & 1.002 & 7.800 & 2462 \\
    DGP-A & AL & 0.688 & 1.001 & 7.759 & 9913 \\
    DGP-B & HL & 0.713 & 1.000 & 8.814 & 2096 \\
    DGP-B & AL & 0.659 & 1.000 & 6.563 & 11447 \\
    \bottomrule
  \end{tabular}
\end{table}

The DGP-D diagnostic uses the same bounded-overlap selection form, with
\begin{align*}
  a_{\mathrm{HL}}(x)
    &=0.40X_1-0.30X_6+0.20X_7,\\
  a_{\mathrm{AL}}(x)
    &=-0.35X_1+0.25X_8-0.20X_9.
\end{align*}
Central symmetry again gives
\(E_{\mathrm T}\{\pi_g(X)\}=1/2\), so the exact untruncated
density ratio is \(w_g(x)=\{2\pi_g(x)\}^{-1}\).

\subsection{Implementation and Monte Carlo Algorithm}
\label{app:simulation_algorithm}

The correctly specified nuisance learner is logistic regression on the exact DGP-specific nonlinear basis. The sensitivity analysis instead uses histogram gradient boosting with 50 iterations, at most 10 leaves, and minimum leaf size 35. Pointwise targets use \(c\in\{0.2,0.5\}\). ROC uses \(u\in\{0.1,0.2\}\), a 199-point threshold grid, separate isotonic projections, and a common grid-based linear interpolation convention. This finite-grid convention approximates the generalized inverse and is applied identically to Full, IW-HL, and TSC. The scalar one-step AUC uses the half-tie pairwise rule and the density pilot \(b=n_{\mathrm T}^{-1/5}\). Population truths are computed from \(10^6\) independent target draws.

\paragraph{Fixed-score transported-validation comparators.}
For Studies~1, 2, and 4, let
\(\widehat Z(x)=\widehat m_S^{\mathrm{AL}}(x)\), where the same correctly
specified score learner is fit once on the complete independent AL sample.
IW-HL uses the exact controlled-design HL weight and the Hájek ratios
\begin{align*}
 \widehat{\mathrm{TPR}}_{\mathrm{IW}}(c)
 &=\frac{\mathbb P_{\mathrm{HL}}
   [w_{\mathrm{HL}}Y\Ind\{\widehat Z\ge c\}]}
   {\mathbb P_{\mathrm{HL}}(w_{\mathrm{HL}}Y)},\\
 \widehat{\mathrm{FPR}}_{\mathrm{IW}}(c)
 &=\frac{\mathbb P_{\mathrm{HL}}
   [w_{\mathrm{HL}}(1-Y)\Ind\{\widehat Z\ge c\}]}
   {\mathbb P_{\mathrm{HL}}\{w_{\mathrm{HL}}(1-Y)\}}.
\end{align*}
Its ROC is formed from the transported weighted empirical curve, and its AUC
is the corresponding half-tie weighted Mann--Whitney functional.

TSC uses the cubic Legendre score basis
\[
 b_3(z)=
 \left(1,t,\frac{3t^2-1}{2},\frac{5t^3-3t}{2}\right)^\top,
 \qquad t=2z-1.
\]
It fits \(r_\beta(z)=\operatorname{expit}\{\beta^\top b_3(z)\}\) by
HL-weighted logistic likelihood and standardizes the fitted calibration over
the target scores; for example,
\[
 \widehat{\mathrm{TPR}}_{\mathrm{TSC}}(c)
 =
 \frac{\mathbb P_{\mathrm T}
 [r_{\widehat\beta}(\widehat Z)\Ind\{\widehat Z\ge c\}]}
 {\mathbb P_{\mathrm T}[r_{\widehat\beta}(\widehat Z)]},
\]
with \(1-r_{\widehat\beta}\) replacing \(r_{\widehat\beta}\) for FPR.
ROC and AUC use the corresponding target-standardized pairwise
functionals. IW-HL uses its transported empirical influence function; TSC
combines the weighted calibration \(M\)-estimation influence function with
the target empirical influence function. Both standard errors condition on
\(\widehat Z\) and therefore omit AL score-learning variability. This is a
valid fixed-score analysis but is not unconditional inference for the
population surrogate-derived model \(m_S^\star\), which is the estimand of the proposed method.
Using the complete AL sample rather than folds gives both comparators their
strongest natural fixed-score implementation.

\begin{algorithm}[t]
  \caption{Three-sample Monte Carlo experiment}
  \label{alg:simulation_pipeline}
  \footnotesize
  \begin{algorithmic}[1]
    \REQUIRE DGP, design, \(C\), \(R=500\), and \(V=5\) folds
    \STATE Set \(h=Cn_{\mathrm{AL}}^{-1/4}\); compute population truths
    \FOR{\(r=1,\ldots,R\)}
      \STATE Draw the target, HL, and AL samples and split each into \(V\) folds
      \FOR{\(v=1,\ldots,V\)}
        \STATE Fit \(m_Y,m_S\) and exact or estimated density ratios
        \STATE Form held-out nuisance predictions and weights
      \ENDFOR
      \STATE Compute Full TPR, FPR, ROC, AUC, and influence-function SEs
      \IF{Study 1, 2, or 4}
        \STATE Fit the score on complete AL; evaluate IW-HL and TSC with their conditional SEs
      \ELSIF{Study 3}
        \STATE Evaluate plug-in, HL-only, AL-only, Full, and oracle variants
      \ENDIF
    \ENDFOR
    \STATE Report bias, empirical SD, mean SE, SE/SD, and 95\% coverage
  \end{algorithmic}
\end{algorithm}

\begin{table}[t]
  \centering
  \caption{Study 1 TPR empirical SD/mean analytical SE at \(c=0.5\) for the
  proposed Full estimator. Non-proportional designs change one P2 sample size
  at a time.}
  \label{tab:sim_scaling_sdse}
  \small
  \begin{tabular}{lcc}
    \toprule
    Design & DGP-A & DGP-B \\
    \midrule
    P1 & 0.0295/0.0281 & 0.0323/0.0292 \\
    P2 & 0.0208/0.0205 & 0.0233/0.0216 \\
    P3 & 0.0155/0.0151 & 0.0161/0.0161 \\
    HL 0.25k & 0.0354/0.0343 & 0.0333/0.0328 \\
    HL 2k & 0.0188/0.0175 & 0.0214/0.0194 \\
    AL 1k & 0.0304/0.0271 & 0.0385/0.0301 \\
    AL 8k & 0.0195/0.0186 & 0.0186/0.0188 \\
    T 1k & 0.0259/0.0251 & 0.0299/0.0279 \\
    T 5k & 0.0219/0.0213 & 0.0236/0.0226 \\
    T 40k & 0.0206/0.0204 & 0.0216/0.0214 \\
    \bottomrule
  \end{tabular}
\end{table}

\begin{figure*}[t]
  \centering
  \includegraphics[width=\textwidth]{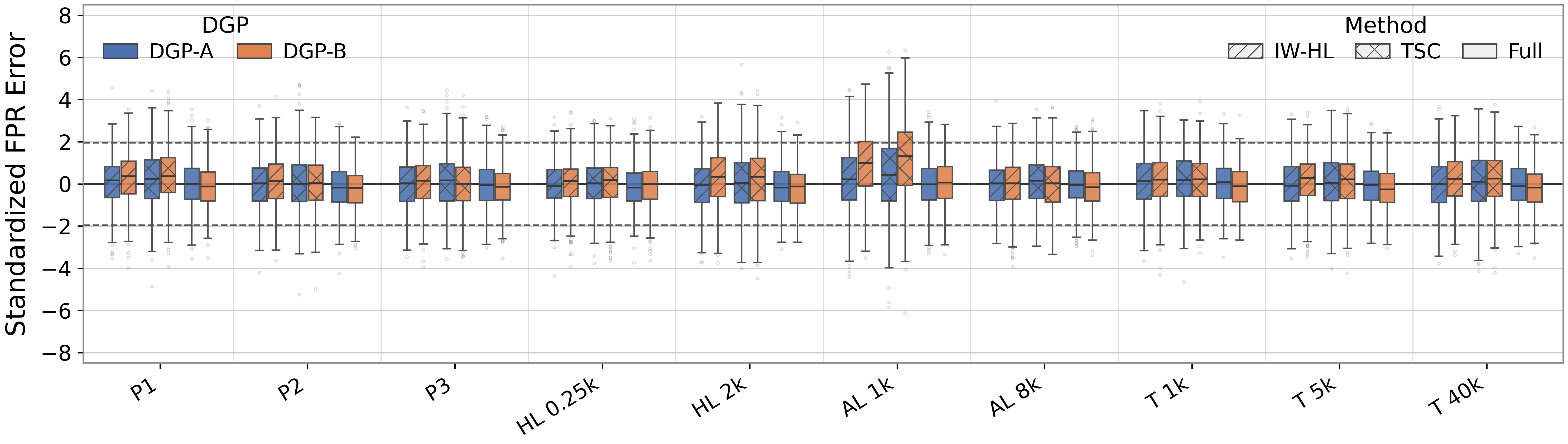}
  \caption{Study 1 standardized FPR errors at \(c=0.5\). Full is the proposed
  estimator; IW-HL is importance-weighted gold-label validation; TSC is
  transported score calibration. Colors distinguish DGPs. HL, AL, and T
  denote the gold-labeled, surrogate-labeled, and target samples; dashed lines
  mark \(\pm1.96\), and outliers are shown.}
  \label{fig:sim_scaling_fpr}
\end{figure*}

\begin{figure*}[t]
  \centering
  \begin{subfigure}{0.49\textwidth}
    \includegraphics[width=\linewidth]{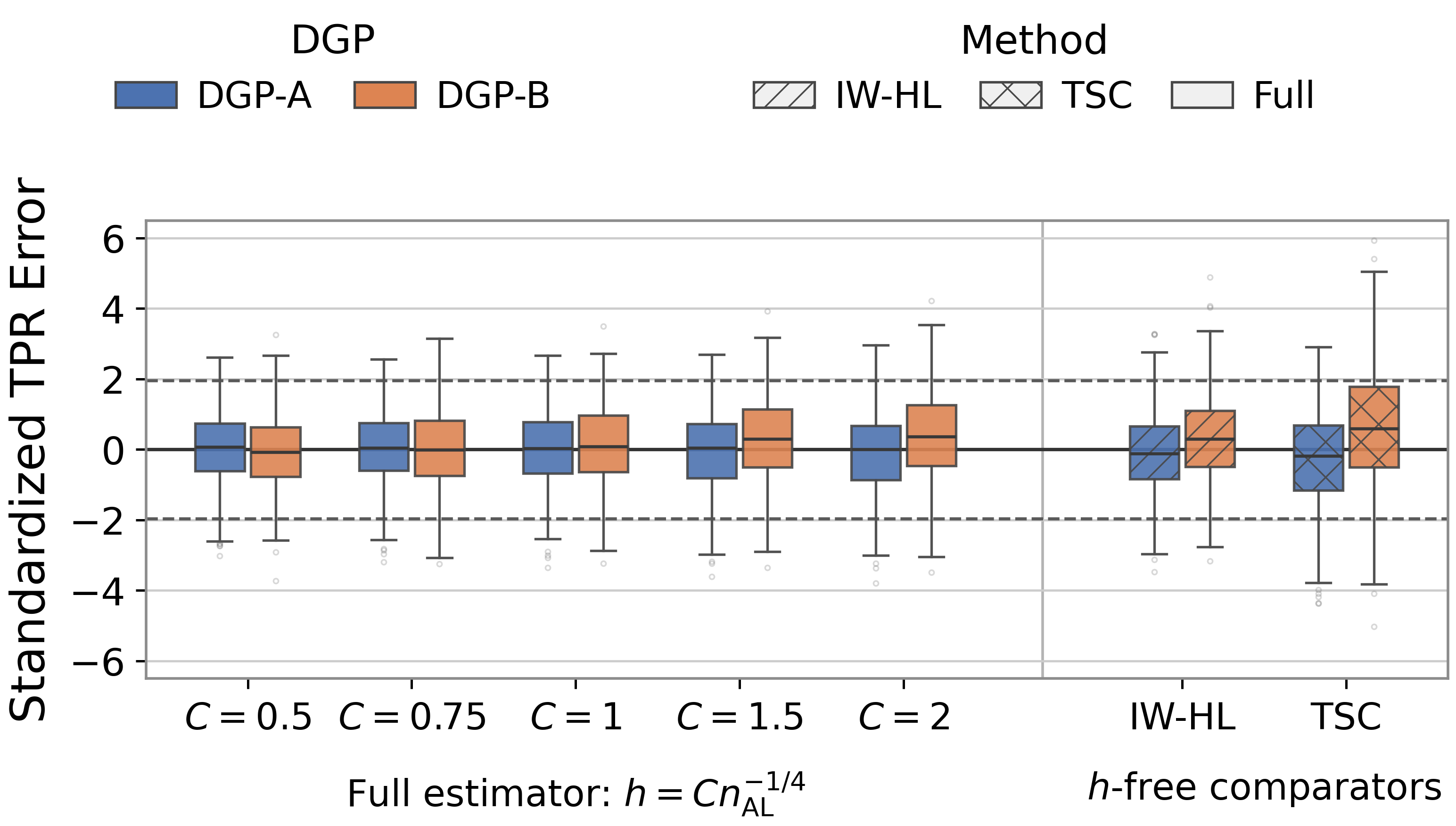}
    \caption{TPR}
    \label{fig:sim_bandwidth_tpr}
  \end{subfigure}
  \hfill
  \begin{subfigure}{0.49\textwidth}
    \includegraphics[width=\linewidth]{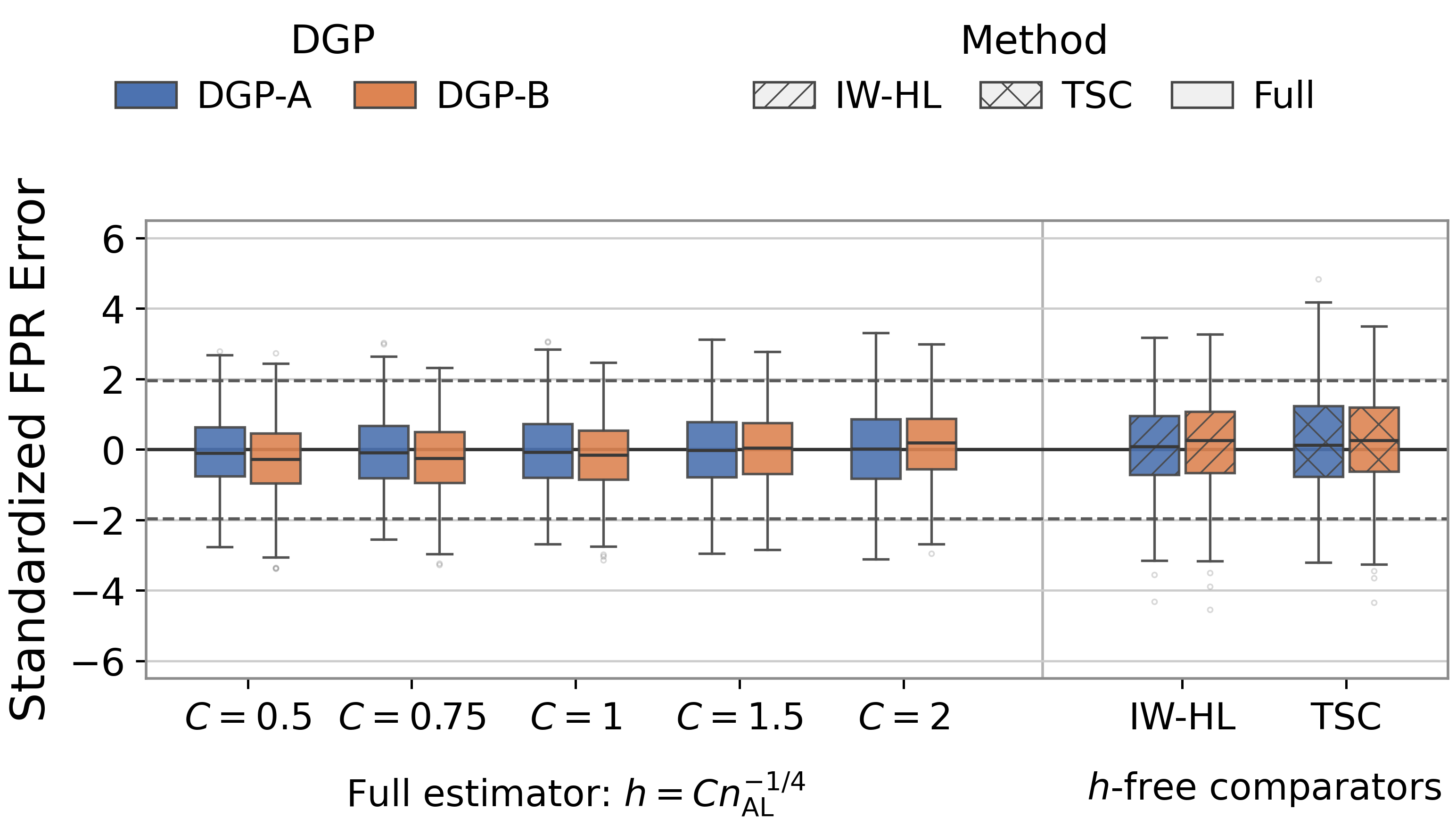}
    \caption{FPR}
    \label{fig:sim_bandwidth_fpr}
  \end{subfigure}
  \caption{Study 2 standardized errors at \(c=0.5\). The five Full groups
  vary \(C\); the \(h\)-free IW-HL and TSC references are shown once. Colors
  distinguish DGPs, hatches distinguish methods, dashed lines mark
  \(\pm1.96\), and outliers are shown.}
  \label{fig:sim_bandwidth_comparison}
\end{figure*}

\begin{figure}[t]
  \centering
  \includegraphics[width=\linewidth]{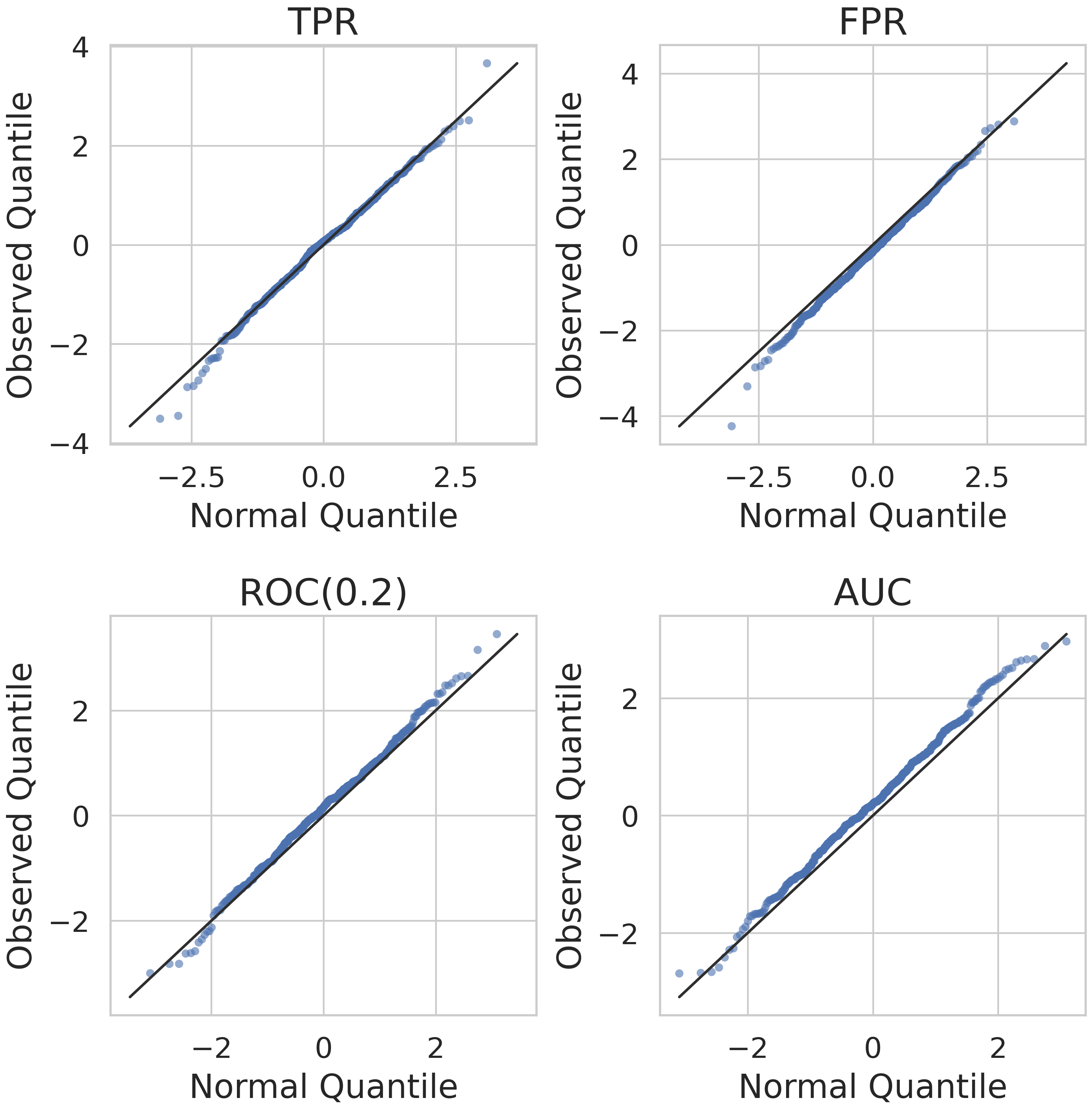}
  \caption{Normal Q--Q diagnostics for the proposed Full estimator under
  DGP-A at P2. TPR and FPR use \(c=0.5\), ROC uses \(u=0.2\), and AUC is the
  scalar one-step estimator.}
  \label{fig:sim_qq_appendix}
\end{figure}

For the additional DGP-D diagnostic, all 20 raw covariates are expanded into the complete degree-two dictionary containing 20 linear terms, 20 squared terms, and 190 pairwise products. Thus, the learner receives 230 candidate terms but is not supplied with the true active terms or their variable indices. Within each training fold, these features are standardized and separately fit to the HL outcome and AL surrogate labels using elastic-net logistic regression with fixed \(C=0.05\), \(\ell_1\) ratio \(0.5\), the SAGA solver, tolerance \(10^{-3}\), and at most 750 iterations. No internal cross-validation or data-driven hyperparameter tuning is performed. The diagnostic uses \(4\times\mathrm{P2}=(4000,16000,80000)\), exact controlled-shift density ratios, five-fold cross-fitting, and \(h=n_{\mathrm{AL}}^{-1/4}\).

\subsection{Coverage and Standard-Error Summary}

\begin{table*}[t]
  \centering
  \caption{Empirical coverage and SE/SD of the proposed Full construction over
  500 replications. Study 3 rows select the Full ablation member; HGB rows use
  the same estimator with histogram-gradient-boosting nuisances. ``Worst''
  gives the configuration attaining minimum coverage.}
  \label{tab:sim_calibration_ranges}
  \scriptsize
  \begin{tabular}{lllccc}
    \toprule
    Study & Target & Coordinate & Coverage range & SE/SD range & Worst \\
    \midrule
    Study 1 & TPR & \(c=0.2\) & .896--.944 & .826--1.029 &
      B/AL 1k (.896) \\
    Study 1 & FPR & \(c=0.2\) & .662--.930 & .496--.916 &
      B/AL 1k (.662) \\
    Study 1 & TPR & \(c=0.5\) & .872--.952 & .782--1.014 &
      B/AL 1k (.872) \\
    Study 1 & FPR & \(c=0.5\) & .914--.960 & .883--1.022 &
      A/AL 1k (.914) \\
    \addlinespace
    Study 2 & TPR & \(c=0.2\) & .904--.946 & .876--1.007 &
      B/\(C=2\) (.904) \\
    Study 2 & FPR & \(c=0.2\) & .580--.928 & .434--.952 &
      B/\(C=2\) (.580) \\
    Study 2 & TPR & \(c=0.5\) & .862--.960 & .798--.991 &
      B/\(C=2\) (.862) \\
    Study 2 & FPR & \(c=0.5\) & .906--.940 & .858--.981 &
      A/\(C=2\) (.906) \\
    \addlinespace
    Study 3 Full & TPR & \(c=0.5\) & .944--.956 & .958--1.036 &
      B (.944) \\
    Study 3 Full & FPR & \(c=0.5\) & .950--.950 & .987--1.017 &
      A/B (.950) \\
    Study 3 Full & ROC & \(u=0.1\) & .936--.938 & .951--.963 &
      B (.936) \\
    Study 3 Full & AUC & scalar & .926--.934 & .927--1.014 &
      A (.926) \\
    \addlinespace
    Study 4 & ROC & \(u=0.1,0.2\) & .882--.954 & .817--1.001 &
      B/P2/\(u=.1\) (.882) \\
    Study 4 & AUC & scalar & .932--.950 & .974--1.006 &
      A/P2 (.932) \\
    \addlinespace
    HGB & TPR/FPR & \(c=0.2\) & .288--.622 & .429--.882 &
      A/FPR (.288) \\
    HGB & TPR/FPR & \(c=0.5\) & .770--.886 & .758--.898 &
      B/FPR (.770) \\
    HGB & ROC & \(u=0.1,0.2\) & .764--.906 & .749--.878 &
      B/\(u=.2\) (.764) \\
    HGB & AUC & scalar & .788--.922 & .927--.930 &
      B (.788) \\
    \bottomrule
  \end{tabular}
\end{table*}

The table separates the main theory-focused experiments from stress tests. The histogram-gradient-boosting (HGB) learner has materially poorer finite-sample calibration: coverage is \(0.288\)--\(0.622\) at \(c=0.2\), \(0.770\)--\(0.886\) at \(c=0.5\), \(0.764\)--\(0.906\) for ROC, and \(0.788\)--\(0.922\) for AUC. This pattern is consistent with nuisance errors not yet being negligible at the inference scale. It neither validates arbitrary machine learning nor shows that flexible learners necessarily fail; it demonstrates that inference depends materially on the nuisance-rate conditions.

\begin{table*}[t]
  \centering
  \caption{Comparison with fixed-score transported validation. Entries average
  over the stated design-by-DGP cells. Coverage is mean [minimum, maximum].
  Study 2 compares the default Full choice \(C=1\) with the \(h\)-free
  comparators.}
  \label{tab:sim_comparator_summary}
  \scriptsize
  \setlength{\tabcolsep}{3.3pt}
  \begin{tabular}{llrrrrc}
    \toprule
    Target/design & Method & Cells & Mean \(|\)bias\(|\) & Mean RMSE &
      Mean SE/SD & Coverage \\
    \midrule
    S1 TPR(.5), all & Full  & 20 & .0020 & .0250 & .950 & .934 [.872,.952] \\
                       & IW-HL & 20 & .0041 & .0284 & .821 & .879 [.668,.930] \\
                       & TSC   & 20 & .0056 & .0230 & .701 & .805 [.520,.928] \\
    \addlinespace
    S1 FPR(.5), all & Full  & 20 & .0020 & .0217 & .975 & .941 [.914,.960] \\
                       & IW-HL & 20 & .0030 & .0238 & .842 & .887 [.686,.948] \\
                       & TSC   & 20 & .0031 & .0192 & .770 & .850 [.600,.920] \\
    \addlinespace
    S2 TPR(.5), P2 & Full  & 2 & .0017 & .0225 & .941 & .945 [.934,.956] \\
                     & IW-HL & 2 & .0046 & .0260 & .837 & .879 [.864,.894] \\
                     & TSC   & 2 & .0058 & .0209 & .679 & .804 [.752,.856] \\
    \addlinespace
    S2 FPR(.5), P2 & Full  & 2 & .0013 & .0199 & .951 & .938 [.936,.940] \\
                     & IW-HL & 2 & .0031 & .0224 & .823 & .890 [.876,.904] \\
                     & TSC   & 2 & .0030 & .0182 & .718 & .837 [.836,.838] \\
    \addlinespace
    S4 ROC(.1,.2), P1--P3 & Full  & 12 & .0048 & .0367 & .901 & .919 [.882,.954] \\
                            & IW-HL & 12 & .0023 & .0353 & .838 & .896 [.864,.946] \\
                            & TSC   & 12 & .0029 & .0309 & .891 & .911 [.868,.952] \\
    \addlinespace
    S4 AUC, P1--P3 & Full  & 6 & .0026 & .0181 & .986 & .941 [.932,.950] \\
                     & IW-HL & 6 & .0031 & .0178 & .964 & .940 [.926,.950] \\
                     & TSC   & 6 & .0031 & .0175 & .962 & .937 [.924,.950] \\
    \bottomrule
  \end{tabular}
\end{table*}

Table~\ref{tab:sim_comparator_summary} makes the distinction between point accuracy and reported-SE behavior explicit. Full has the most stable coverage in Study~1, while TSC can trade lower RMSE for optimistic conditional SEs. The comparison is not uniformly favorable: for FPR at \(c=0.2,C=2\), Full versus IW-HL coverage is \(0.814\) versus \(0.864\) in DGP-A and \(0.580\) versus \(0.676\) in DGP-B. This reinforces the bandwidth warning rather than being hidden by an average. In the difficult DGP-A/HL 0.25k cell at \(c=0.2\), three of 500 IW-HL replications have a degenerate zero conditional TPR SE because every empirical positive lies above the threshold; coverage uses all 500 replications, while studentized summaries use the 497 nondegenerate values.

\subsection{Unified Study 3 Numerical Results}
\label{app:ablattpr}

Table~\ref{tab:sim_ablation_full} reports the confirmed P4 Study~3 comparison at \(c=0.5\), \(u=0.1\), and for scalar AUC. Each cell reports, in order, bias in percentage points, empirical SD, mean SE, and coverage. Because the original P2 run showed finite-sample ROC undercoverage, we conducted an exploratory 100-replication screen. The first stage crossed P2, P2.5, and P3 with \(C\in\{0.75,1,1.25\}\), \(c\in\{0.4,0.5,0.6\}\), and \(u\in\{0.10,0.15,0.20\}\). A second 100-replication stage evaluated P4 at \(c=0.5\), with \(C\in\{0.5,1.5,2\}\) and \(u\in\{0.10,0.15,0.20,0.25\}\). Before inspecting a new Monte Carlo stream, we fixed P4, \(C=0.5\), \(c=0.5\), and \(u=0.1\), then ran an independent 500-replication confirmation with a disjoint base seed. This was a post-hoc screen followed by independent confirmation, not a preregistered design. All 93 screening candidates and the independent run are retained in the result files; the confirmation, rather than a screening maximum, is shown. For reference, the original P2, \(C=1\) Full coverages for TPR/ROC/AUC were 0.934/0.936/0.942 in DGP-A and 0.930/0.904/0.952 in DGP-B.

\begin{table*}[t]
  \centering
  \caption{Unified Study 3 ablation. Entries are
  bias (percentage points)/empirical SD/mean SE/95\% coverage.}
  \label{tab:sim_ablation_full}
  \scriptsize
  \setlength{\tabcolsep}{2.7pt}
  \begin{tabular}{llcccc}
    \toprule
    DGP & Estimator & TPR(.5) & FPR(.5) & ROC(.1) & AUC \\
    \midrule
    DGP-A & Plug-in
      & \(-.15/.010/.002/.260\) & \(-.02/.010/.002/.258\)
      & \(-.09/.012/.001/.112\) & \(-.08/.008/{<}.001/.100\) \\
    DGP-A & HL-only
      & \(-.18/.012/.010/.904\) & \(+.02/.012/.010/.892\)
      & \(+.02/.018/.017/.940\) & \(-.05/.010/.010/.942\) \\
    DGP-A & AL-only
      & \(-.09/.016/.014/.936\) & \(-.14/.016/.015/.926\)
      & \(+.13/.014/.006/.616\) & \(+.03/.008/.001/.298\) \\
    DGP-A & Full
      & \(-.12/.017/.017/.956\) & \(-.10/.017/.018/.950\)
      & \(+.25/.019/.018/.938\) & \(+.06/.010/.010/.926\) \\
    DGP-A & Oracle nuis.
      & \(-.15/.017/.017/.960\) & \(-.08/.017/.018/.948\)
      & \(+.16/.018/.018/.934\) & \(-.01/.010/.010/.946\) \\
    \addlinespace
    DGP-B & Plug-in
      & \(+.28/.012/.002/.296\) & \(+.17/.007/.001/.234\)
      & \(-.03/.013/.001/.138\) & \(-.13/.009/.001/.104\) \\
    DGP-B & HL-only
      & \(+.26/.013/.008/.800\) & \(+.19/.008/.007/.874\)
      & \(-.14/.019/.017/.928\) & \(-.08/.013/.011/.914\) \\
    DGP-B & AL-only
      & \(-.10/.020/.019/.936\) & \(-.10/.012/.010/.928\)
      & \(+.06/.017/.011/.780\) & \(+.05/.009/.005/.688\) \\
    DGP-B & Full
      & \(-.12/.021/.021/.944\) & \(-.09/.012/.012/.950\)
      & \(-.05/.021/.020/.936\) & \(+.10/.013/.012/.934\) \\
    DGP-B & Oracle nuis.
      & \(+.05/.022/.021/.948\) & \(+.06/.012/.012/.958\)
      & \(-.10/.021/.020/.942\) & \(+.04/.013/.012/.936\) \\
    \bottomrule
  \end{tabular}
\end{table*}

The plug-in estimator's principal failure is severe standard-error underestimation rather than large point-estimation bias. Full has SE/SD between 0.927 and 1.036 across the six main-display cells, whereas the omitted corrections produce target-specific failures. In particular, AL-only misses most uncertainty for global ROC/AUC, while HL-only is weakest for DGP-B TPR. The oracle-nuisance comparison shows that the remaining finite-sample error is similar to that obtained when nuisance functions are known.

\begin{figure*}[t]
  \centering
  \includegraphics[width=\textwidth]{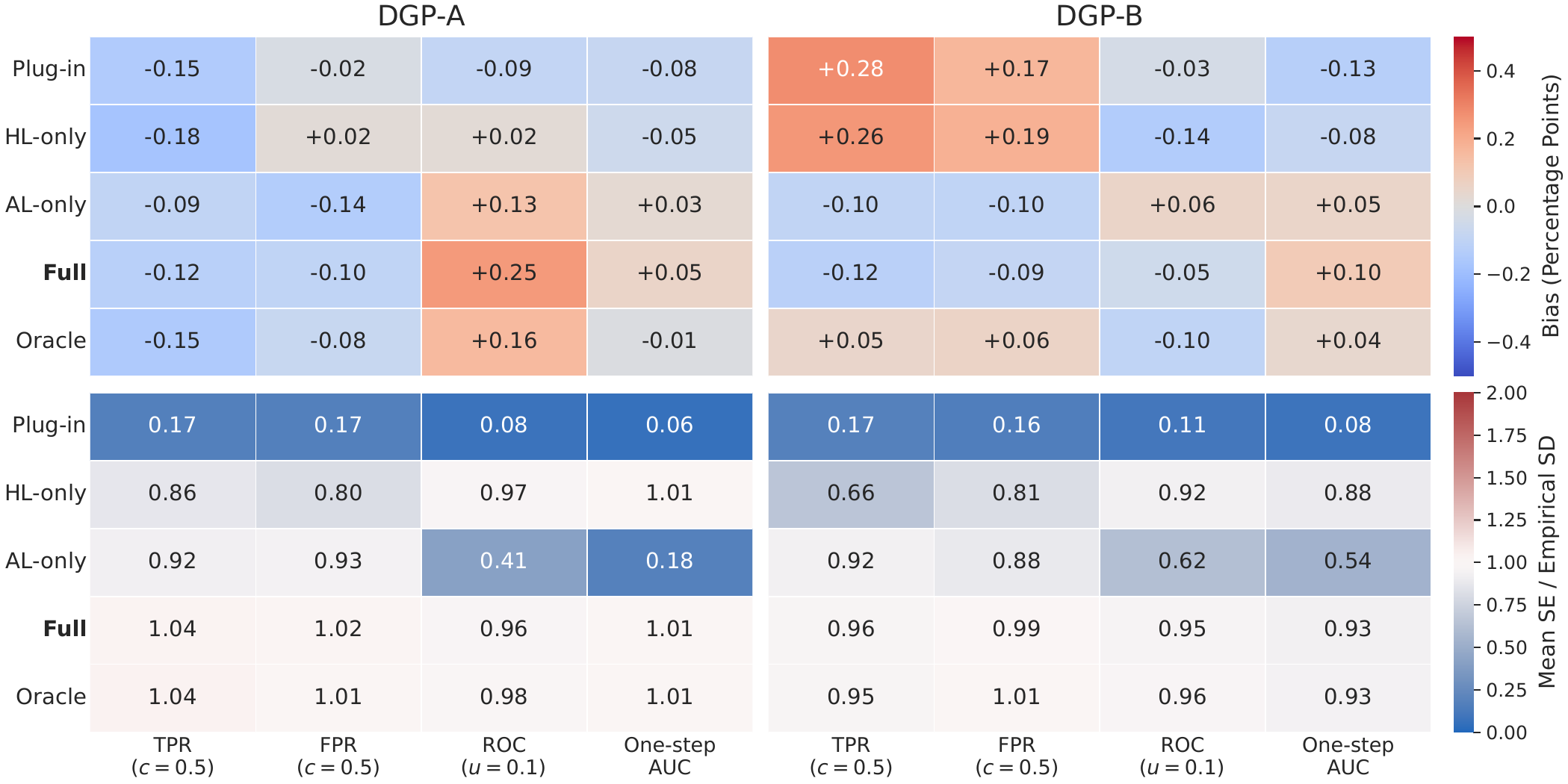}
  \caption{Confirmed P4 Study 3 diagnostics at \(c=0.5\), \(u=0.1\), and
  \(C=0.5\). The top row reports bias in percentage points; the bottom row
  reports mean analytical SE divided by empirical SD.}
  \label{fig:sim_ablation_appendix}
\end{figure*}

\begin{figure}[t]
  \centering
  \includegraphics[width=\linewidth]{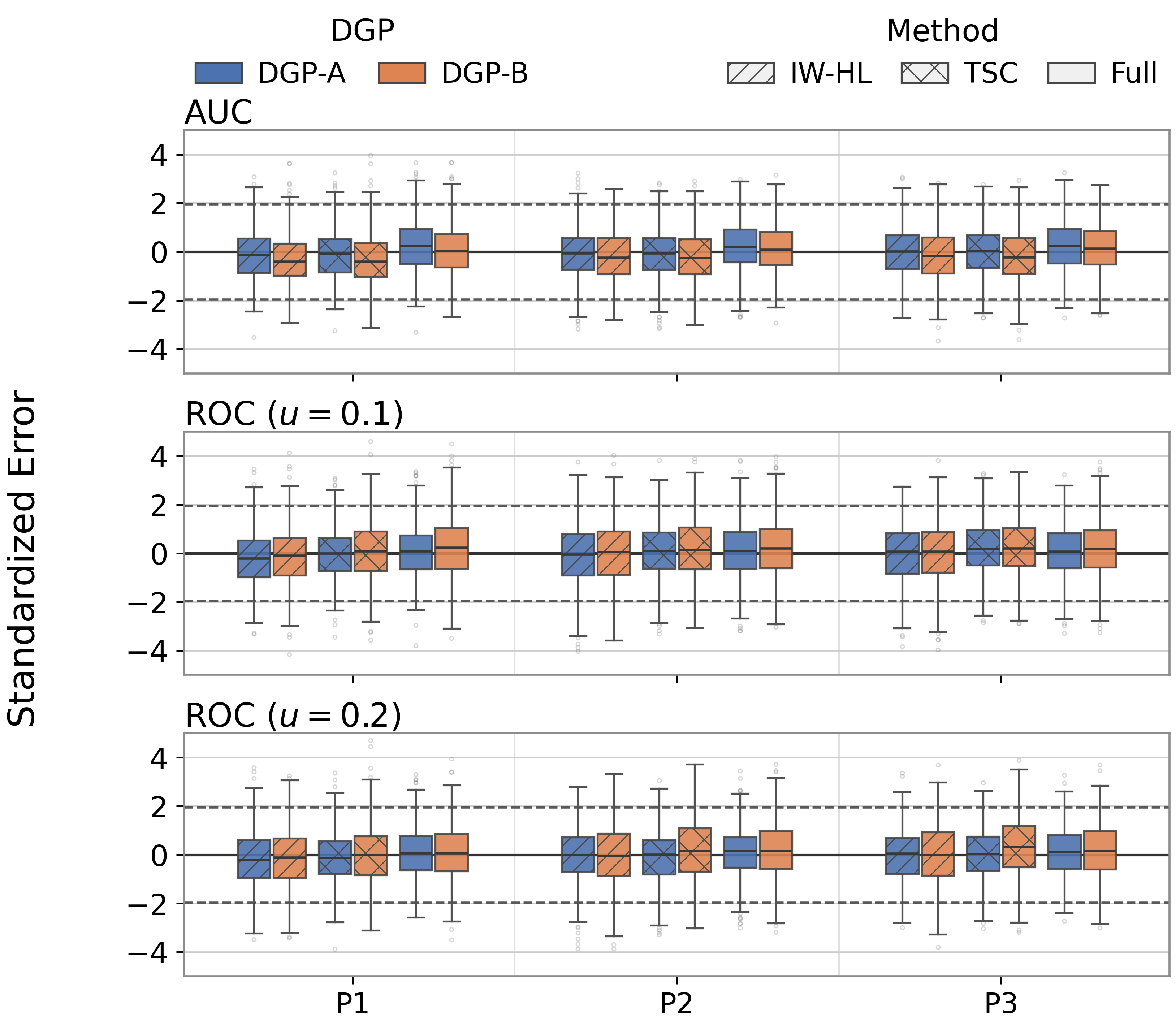}
  \caption{Study 4 standardized AUC and fixed-\(u\) ROC errors.}
  \label{fig:sim_roc_auc}
\end{figure}

\subsection{Regularized Quadratic-Dictionary Diagnostic}
\label{app:flexible_ml_diagnostic}

The additional diagnostic uses DGP-D at \(4\times\mathrm{P2}\) with the regularized quadratic-dictionary learner specified above. Because the complete degree-two dictionary contains the true DGP-D mechanisms, this is a correctly specified model-class experiment without oracle term selection, rather than evidence of robustness to arbitrary machine-learning nuisance fits. For the central and global targets TPR\((0.5)\), FPR\((0.5)\), ROC\((0.1)\), ROC\((0.2)\), and AUC, coverage is \(0.922\)--\(0.946\), SE/SD is \(0.919\)--\(0.972\), and absolute bias/SD is at most \(0.349\). This supplies a configuration without oracle term selection and with useful finite-sample coverage and SE/SD. It does not overturn the limitation: TPR\((0.2)\) and FPR\((0.2)\) cover only \(0.824\) and \(0.854\), respectively, and the HGB results for DGP-A/B remain substantially weaker. Figure~\ref{fig:sim_flexible_ml} displays every evaluated coordinate rather than only the favorable ones.

As an auxiliary diagnostic, we also proportionally enlarged the difficult DGP-B design. Larger samples did not uniformly eliminate lower-threshold FPR undercoverage by \(4\times\)P2; the remaining error was driven more by an optimistic analytical SE than by point-estimation bias. We therefore treat low-threshold FPR undercoverage as a finite-sample limitation rather than use this diagnostic as primary evidence for the method.

\begin{figure*}[t]
  \centering
  \includegraphics[width=\linewidth]{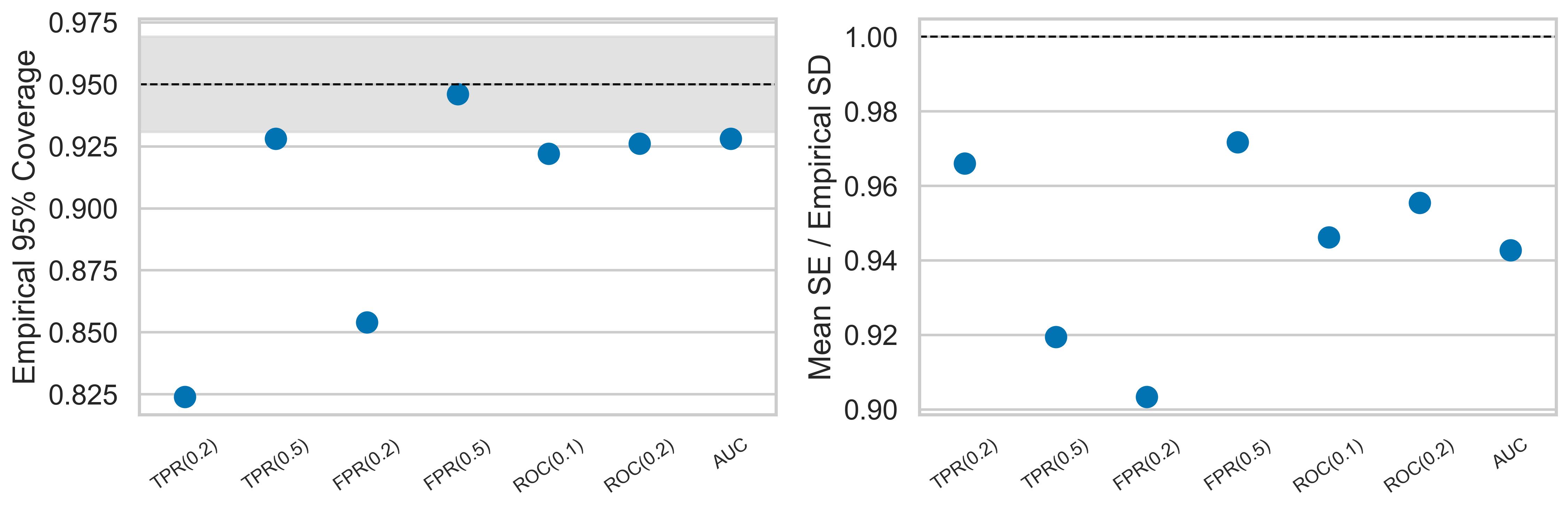}
  \caption{Regularized full quadratic-dictionary diagnostic under DGP-D over
  500 replications. The \(4\times\)P2 design is
  \((n_{\mathrm{HL}},n_{\mathrm{AL}},n_{\mathrm T})
  =(4000,16000,80000)\). Gray coverage bands show the Monte Carlo 95\%
  neighborhood of nominal 0.95 coverage; dashed lines mark the ideal value
  in each panel. All coordinates, including the weaker lower-threshold
  results, are displayed.}
  \label{fig:sim_flexible_ml}
\end{figure*}

\section{Application Details and Additional Results}
\label{app:application_details}

This appendix emphasizes reproducible construction, implementation choices, and numerical diagnostics following Section~\ref{sec:real_data}.

\subsection{Chatbot Arena Application: Additional Details}
\label{app:arena_temporal}
\suppressfloats[t]

\subsubsection{Data Linkage, Cohorts, and Masking}
\label{app:arena_temporal_data}

\paragraph{Exact linkage and analysis unit.}
We matched the original Arena-33K conversation release
\citep{lmsys2023arena33k} to a public timestamped copy at a fixed revision
\citep{agieai2023arena33kmirror} using exact question IDs, and then linked the
existing six-judge label file by question ID. A separate masking script
checks question ID, model A/B, human winner, anonymous user, language,
turn, and full-conversation hashes. The
linkage also verifies all fixed
source-file hashes, unique IDs, exact A/B conversation agreement, and the
absence of fuzzy or content-based matches.  Before assigning calendar
windows, a fixed hash keeps one comparison per anonymized user globally.
Consequently, question IDs and user proxies are disjoint across cohorts.
The common-family rule is computed from \(X\) and calendar windows only; the
declared target analysis population retains 1,499 of
3,027 later decisive user-level
comparisons (49.52\%).

\paragraph{Label visibility.}
The early HL file contains only \(X,Y\), and the middle AL file contains only
\(X,S\).  Here \(S\) is a retrospectively constructed six-judge surrogate label, not a
contemporaneous judge label.  AL eligibility never uses the human winner or
human tie status.

\paragraph{Surrogate-label construction.}
The fixed panel is
\path{Qwen/Qwen3-235B-A22B-Instruct-2507-tput},
\path{kimi-k2-0905-preview},
\path{meta-llama/Llama-4-Maverick-17B-128E-Instruct-FP8},
\path{mistralai/Mistral-7B-Instruct-v0.1},
\path{openai/gpt-oss-20b}, and
\path{zai-org/GLM-4.5-Air-FP8}.
After mapping a presented vote to canonical Arena A/B coordinates, an A vote
contributes \(+c\), a B vote contributes \(-c\), and an unknown preference
contributes zero, where \(c\) is confidence clipped to \([0,1]\); a missing or
nonfinite confidence for a directional A/B vote receives the prespecified
unit weight.  The sign of the
six-vote mean defines \(S\), while an exact-zero mean is unresolved and never
encoded as response B.  Of 32,922 X-eligible questions, 27,482 (83.48\%) have
all six records and 27,354 (83.09\% overall; 99.53\% of complete panels) have
a nonzero aggregate; 442 individual votes use the missing-confidence
fallback. We find no pair mismatch, duplicate judge key, or
within-question multiple orientation. A separate masking script uses only decisive status to define the
binary target validation population, encrypts its human outcomes in frozen ID
order, and exports only target \(X\) to the analysis directory.
Table~\ref{tab:arena_temporal_cohorts} and
Algorithm~\ref{alg:arena_temporal_workflow} summarize the resulting analysis design.

\begin{table}[t]
\centering
\caption{Fixed temporal cohorts. Intervals are UTC and follow the
left-closed, right-open convention except at the observed endpoint.}
\label{tab:arena_temporal_cohorts}
\footnotesize
\setlength{\tabcolsep}{3.0pt}
\begin{tabular}{lccc}
\toprule
Cohort & Calendar window & \(n\) & Visible labels \\
\midrule
HL & Apr.~24--May~7, 2023 &
2,816 & \(X,Y\) \\
AL & May~7--26, 2023 &
3,590 & \(X,S\) \\
Target & May~26--Jun.~26, 2023 &
1,499 & \(X\) before reveal \\
\bottomrule
\end{tabular}
\end{table}

\begin{algorithm}[t]
\caption{Temporal Chatbot Arena analysis workflow}
\label{alg:arena_temporal_workflow}
\footnotesize
\begin{algorithmic}[1]
\STATE Fix and verify conversation, timestamp, and six-judge files by SHA-256
\STATE Join exact question IDs and map every decision to standard Arena A/B
\STATE Hash-select one comparison per anonymous user before window assignment
\STATE Apply the outcome-blind common-family support rule
\STATE Export HL \((X,Y)\), AL \((X,S)\), and target \(X\) into disjoint files
\STATE Cross-fit nuisances and source-to-target ratios on five fixed folds
\STATE Freeze estimates, scores, and diagnostics and save their hashes
\STATE Verify the saved pre-reveal results, reveal target \(Y\), and evaluate the frozen scores
\end{algorithmic}
\end{algorithm}

\subsubsection{Natural Temporal Shift and Overlap}
\label{app:arena_temporal_shift}

No artificial probability-proportional-to-size (PPS) sampling or covariate tilt is used.  Calendar time determines only the early, middle, and later windows and is excluded from both the preference and domain models.  Figure~\ref{fig:arena_temporal_shift}(a) reports every retained candidate-family share.  The corresponding changes in language, prompt/response length, task/style indicators, and their Jensen--Shannon summaries are retained in the supplementary output files. Figure~\ref{fig:arena_temporal_shift}(b) and Table~\ref{tab:arena_temporal_overlap} report the out-of-fold (OOF) source-to-target domain AUC and uncapped density-ratio diagnostics; balance is summarized by standardized mean differences (SMDs).  Ratios are normalized using source-training predictions only; the held-out source fold is never forced to mean one.

\begin{figure}[t]
\centering
\IfFileExists{Figures/arena33k_temporal_shift.pdf}{%
\includegraphics[width=\columnwidth]{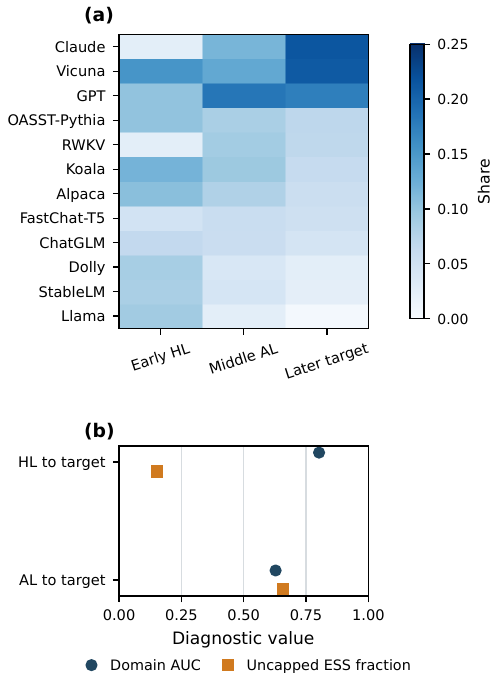}
}{%
\fbox{\parbox[c][4.8cm][c]{0.94\columnwidth}{\centering
Figure unavailable in this build\\
\texttt{arena33k\_temporal\_shift.pdf}}}
}
\caption{Outcome-blind temporal-shift diagnostics. (a) Candidate-family shares
across the fixed calendar cohorts. (b) OOF domain discrimination and
uncapped effective-sample-size fractions for each transport. Domain AUC
measures separability and is not a positivity proof. These diagnostics show
that the calendar cohorts differ in observed covariates without imposed
tilting; they do not assess estimator accuracy.}
\label{fig:arena_temporal_shift}
\end{figure}

\begin{table}[t]
\centering
\caption{Outcome-blind overlap diagnostics for the semantic-plus-metadata
density design. Primary ratios are uncapped; caps are sensitivities only.}
\label{tab:arena_temporal_overlap}
\scriptsize
\setlength{\tabcolsep}{2.0pt}
\begin{tabular}{@{}p{0.46\columnwidth}cc@{}}
\toprule
Diagnostic & HL \(\to\) T & AL \(\to\) T \\
\midrule
Domain AUC & 0.802 &
0.628 \\
Uncapped ESS & 427.9 &
2,356.7 \\
ESS/source \(n\) & 0.152 &
0.656 \\
Weight \(p_{99}\) & 9.43 &
3.69 \\
Maximum weight & 55.12 &
7.40 \\
Guard fraction & 0.000 &
0.000 \\
Weighted mean \(|\mathrm{SMD}|\) &
0.039 &
0.020 \\
\bottomrule
\end{tabular}
\end{table}

\subsubsection{Content-Aware Representation}
\label{app:arena_temporal_features}

The primary encoder is the fixed \path{intfloat/multilingual-e5-small} model (384 dimensions) \citep{wang2024multilingual,intfloat2023multilinguale5small}. Its exact model revision is \texttt{fd1525a9fd15316a2d503bf26\allowbreak ab031a61d056e98}. Prompt, response A, and response B are encoded separately with fixed query/passage prefixes, masked-mean pooling, L2 normalization, and deterministic 512-token head--tail truncation.  The observed truncation rates are 1.01\%, 6.41\%, and 6.78\%.  Preference features contain signed response contrasts and named signed metadata; density features contain only swap-invariant combinations.  Training-fold A/B augmentation makes fitted preference scores complementary up to the recorded numerical tolerance. Dense embedding coordinates are not individually interpreted: substantive interpretability comes from the named language, family, length, style, and task features and the metadata-only sensitivity.

\subsubsection{Estimation and Validation Objects}
\label{app:arena_temporal_estimation}

Five fixed folds are shared across sources.  Each held-out observation is
predicted once by nuisance models that use only the corresponding training
folds: \(m_Y\) uses HL, \(m_S\) uses AL, and each density ratio uses its source
and target training folds.  The 790-dimensional preference nuisances are
no-intercept logistic regressions with A/B augmentation and training-only
scaling without centering; principal component analysis (PCA) is not used
for \(m_Y\) or \(m_S\).
Two grouped inner folds select
\(C\in\{0.03,0.1\}\) and elastic-net mixing
\(\alpha\in\{0,0.5\}\) by source log loss inside each outer fold
(\(\alpha=0\) is L2); all ten outer-fold preference fits and the all-AL
deployment fit select \(C=0.03,\alpha=0.5\).
Each 1,224-dimensional density-ratio design uses training-only standardization,
randomized PCA capped at 128 components, and L2 logistic regression with
\(C=0.1\), posterior guard \(10^{-6}\), and a normalizer computed only from
source-training predictions.  All fits use at most 5,000 iterations with
tolerance \(10^{-5}\).  The outer-fold salt is
\path{arena33k-temporal-v4-fold-20260725}, the inner-fold salt is
\path{arena33k-temporal-v4-inner-20260725}, and the numeric seed is 20260725.
The primary endpoint is the scalar one-step AUC of the population
surrogate-derived model under the target law. We separately report
\begin{align*}
\widehat\theta_{\mathrm{CF}}
&=\operatorname{AUC}_{\mathrm T}
  \{Y_i,\widehat m_{S,-k(i)}(X_i)\},\\
\widehat\theta_{\mathrm{deploy}}
&=\operatorname{AUC}_{\mathrm T}
  \{Y_i,\widehat m_{S,\mathrm{all\text{-}AL}}(X_i)\}.
\end{align*}
The first evaluates pooled fold-specific scores and the second evaluates one
model fitted to all AL observations. Target \(S\) is not used. These quantities
describe the Target performance of actual fitted scores, but neither equals
the population-model estimand
\(\operatorname{AUC}_{\mathrm T}\{Y,m_S^\star(X)\}\).

\paragraph{Comparator definitions.}
The outcome-regression (g-computation) row plugs the cross-fitted human-outcome
regression into the target empirical law.  The importance-weighted human-score
row transports the empirical AUC of the frozen surrogate-derived model using only HL
outcomes and the HL-to-target density ratio.  The human-label augmentation
adds only the HL residual and prevalence corrections; the LLM-label
augmentation adds only the AL surrogate residual correction.  Full combines
the target plug-in with both source corrections and is the only row carrying
the theorem-covered three-source interval.  The first four rows are
prespecified analytic diagnostics or ablations, not formal method-ranking
tests.

\begin{table}[t]
\centering
\caption{Chatbot Arena scalar-AUC results. Panel A concerns the population
surrogate-score estimand. Panel B evaluates frozen finite-sample scores against
human Target outcomes revealed after estimation. The panels have different
estimands and should not be read as a method ranking.}
\label{tab:arena_temporal_all_auc}
\footnotesize
\setlength{\tabcolsep}{2.6pt}
\begin{tabular}{@{}p{0.61\columnwidth}c@{}}
\toprule
Object & AUC [95\% interval] \\
\midrule
\multicolumn{2}{@{}l}{\emph{A. Population-score estimation}}\\
Outcome regression (g-computation) &
0.796~[0.786,0.807] \\
Importance-weighted human-label score &
0.841~[0.803,0.878] \\
Human-label correction only &
0.825~[0.793,0.857] \\
LLM-label correction only &
0.819~[0.805,0.833] \\
\textbf{Full three-source (ours)} &
\textbf{0.848~
[0.814,0.881]} \\
\midrule
\multicolumn{2}{@{}l}{\emph{B. Finite-score Target-\(Y\) validation}}\\
Cross-fitted learned score &
0.810~[0.787,0.832] \\
All-AL deployment score &
0.812~[0.789,0.834] \\
\bottomrule
\end{tabular}
\end{table}

\subsubsection{Numerical Diagnostics and Sensitivities}
\label{app:arena_temporal_diagnostics}

Table~\ref{tab:arena_temporal_nuisance} gives OOF nuisance quality, score-tie checks, and the actual threshold-local AL information.  Merely observing \(n_{\mathrm{AL}}h=463.8\) is not treated as sufficient: the local count is the number of AL scores within one \(h\) of the threshold, whereas the Gaussian-kernel ESS uses its full-support weights and may exceed that count. We report pointwise inference only if the local count, kernel ESS, maximum local share, weighted kernel mass, residual variation, projection, and foldwise score diagnostics all satisfy the stated criteria.  The primary scalar analysis uses uncapped weights; Figure~\ref{fig:arena_temporal_sensitivity} reports the prespecified weight caps and AUC pilot-bandwidth perturbations; none was used to choose the primary analysis.

\begin{table}[t]
\centering
\caption{Nuisance and local-information diagnostics. Pointwise inference is
reported only when every stated criterion is satisfied.}
\label{tab:arena_temporal_nuisance}
\footnotesize
\setlength{\tabcolsep}{2.8pt}
\begin{tabular}{lcc}
\toprule
Component & Diagnostic & Value/status \\
\midrule
\(\hat m_Y\) (HL) & OOF AUC / Brier &
0.814 / 0.178 \\
\(\hat m_S\) (AL) & OOF AUC / Brier &
0.869 / 0.149 \\
\(\hat m_S\) & Swap-complement error &
\(1.5\times 10^{-16}\) \\
Target score & Unique / \(n\); max. atom &
1,499/1,499 / 0.00067 \\
ROC(.10) & Local count / kernel ESS &
734 / 1,029.0 \\
ROC(.15) & Local count / kernel ESS &
632 / 905.9 \\
ROC(.20) & Local count / kernel ESS &
612 / 821.9 \\
\bottomrule
\end{tabular}
\end{table}

The finite-score rows and their paired differences use 2,000 Target-only nonparametric bootstrap resamples with both score vectors fixed.  They include no nuisance, density-ratio, model-training, or fold/split refitting uncertainty, and are conditional validation diagnostics rather than formal three-source intervals.

\begin{table}[t]
\centering
\caption{Paired comparisons and cross-estimand discrepancies. Analytic
population-model comparisons and conditional target-only bootstrap
comparisons have different uncertainty scopes.}
\label{tab:arena_temporal_paired}
\scriptsize
\setlength{\tabcolsep}{2.0pt}
\begin{tabular}{@{}p{0.46\columnwidth}cc@{}}
\toprule
Comparison & Estimate & 95\% interval \\
\midrule
Full \(-\) outcome regression & 0.051 &
[0.020,0.083] \\
Full \(-\) human-label correction & 0.023 &
[0.014,0.031] \\
All-LLM-label deployment \(-\) cross-fitted score &
0.002 &
[-0.002,0.007] \\
Cross-fitted score \(-\) Full & \(-0.038\) &
[-0.061,-0.016] \\
All-LLM-label deployment \(-\) Full &
\(-0.036\) &
[-0.058,-0.014] \\
\bottomrule
\end{tabular}
\end{table}

\begin{figure}[t]
\centering
\IfFileExists{Figures/arena33k_temporal_roc.pdf}{%
\includegraphics[width=\columnwidth]{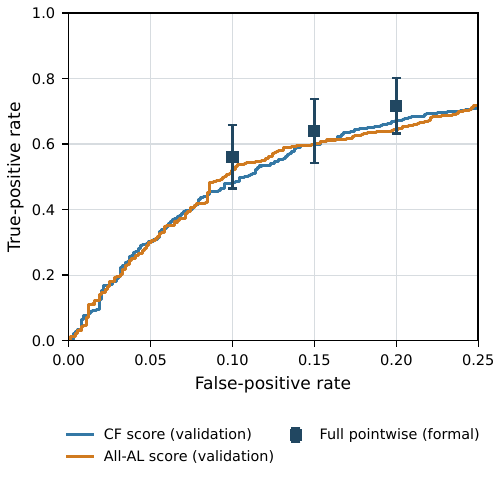}
}{%
\fbox{\parbox[c][3.4cm][c]{0.94\columnwidth}{\centering
Figure unavailable in this build\\
\texttt{arena33k\_temporal\_roc.pdf}}}
}
\caption{Low-FPR target ROC results. Solid curves are the two frozen
finite-score empirical validations; square markers and error bars are the
Full population-model estimates, for which all stated pointwise diagnostics
were satisfied. The figure checks operating-point scale and behavior, but the curves
and markers target different estimands and therefore do not rank methods or
measure bias.}
\label{fig:arena_temporal_roc}
\end{figure}

\begin{figure}[t]
\centering
\IfFileExists{Figures/arena33k_temporal_sensitivity.pdf}{%
\includegraphics[width=\columnwidth]{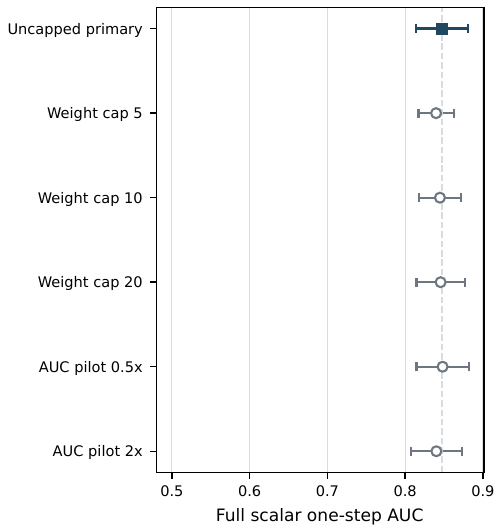}
}{%
\fbox{\parbox[c][3.4cm][c]{0.94\columnwidth}{\centering
Figure unavailable in this build\\
\texttt{arena33k\_temporal\_sensitivity.pdf}}}
}
\caption{Prespecified scalar-AUC sensitivities for the Full estimator.
The uncapped analysis is primary; the capped-weight and pilot-bandwidth
results are sensitivity analyses specified before Target reveal. Their
stability concerns numerical sensitivity only and does not establish
identification or transportability.}
\label{fig:arena_temporal_sensitivity}
\end{figure}

\subsubsection{Provenance and Scope}
\label{app:arena_temporal_limitations}

The existing judge-label release preserves the six judge identifiers, presented order, structured preference/confidence decisions, and question IDs, so the A/B mapping and panel aggregation are reproducible. It does not preserve the complete prompt template, provider and call dates, exact served model versions, raw responses, or retry/failure logs; most judge models also postdate the 2023 traffic, so training-data overlap cannot be ruled out. Each comparison was judged in one randomized orientation rather than in both A/B and B/A orders. We therefore treat \(S\) as a retrospectively constructed six-model surrogate label and do not interpret this study as validation of a contemporaneous single judge. Finally, real temporal evolution may include conditional or concept drift not captured by \(X\); the analysis assumes, rather than establishes, transportability and applies only to the declared common-family overlap population.

\subsection{Construction and Curation of the Semi-Synthetic ACS Benchmark}
\label{realdataconstruction}

\paragraph{Data source, task, and estimand.}
We use the 2019 one-year American Community Survey Public Use Microdata Sample
person file for California (\texttt{psam\_p06.csv}). We retain records satisfying
\texttt{AGEP}$>16$, \texttt{PINCP}$>100$, \texttt{WKHP}$>0$, and
\texttt{PWGTP}$\geq1$, yielding 197{,}193 eligible individuals. The binary gold
outcome is $Y=\Ind\{\mathrm{PINCP}>\$50{,}000\}$; the surrogate is the same
threshold applied to LLM-predicted annual income.
The analysis excludes the ACS survey weight \texttt{PWGTP}; accordingly, the target law is the unweighted ACS sample target distribution rather than the survey-weighted California population.

The target cohort retains the continuous \texttt{PINCP} value in a verification-only column. This column is removed before preprocessing and is unavailable to every nuisance learner, density-ratio model, cross-fitting operation, and estimating equation. It is converted to the held-out benchmark outcome only after all cross-fitted predictions have been completed.

\paragraph{Fixed candidate pools and LLM labeling.}
Before cohort construction, we fix 11{,}000 identifiers: 1{,}000 gold-pool
candidates and a 10{,}000-ID unlabeled expansion reserve. The prespecified
current expansion requested 200 gold IDs (which were used exclusively for initial prompt evaluation) and only the first 1{,}000 unlabeled IDs in the fixed permutation. All 200 gold requests and 999 unlabeled requests
produced schema-valid records; one unlabeled request produced a logged terminal
schema violation (a zero income prediction). The other 9{,}000 reserve IDs were
never requested and therefore are not missing LLM responses. Operational errors
were retained in the attempt log. Sixteen nonterminal transport or network
errors were resumed in later explicit sessions, and all affected request IDs
subsequently completed.

Calls were made through OpenRouter on July 13 and 18, 2026, using the provider-reported identifier \texttt{google/gemini-3.5-flash}, prompt version \texttt{P2-socioeconomic-role-v1}, temperature zero, and a strict three-field JSON schema. OpenRouter did not expose a finer immutable model fingerprint, so we do not claim one. The released supplement contains the frozen request manifest, every attempt and raw response, parsed outputs, the terminal-failure record, retry status, token and cost ledger, and SHA-256 hashes. Invalid model content was not automatically regenerated; recoverable transport or service errors were logged and resumed.

\paragraph{Exact pointwise prompt.}
The wording of the complete template is reproduced below, with
\texttt{\{person\_profile\}}
replaced by a readable profile that excluded \texttt{PINCP}, the gold outcome,
survey weight, and dataset identity:
\begin{quote}\small\raggedright
\texttt{You are a socioeconomic analyst estimating annual personal income from
structured survey profiles.}\\
\texttt{Task: Estimate this person's annual personal income in U.S. dollars.}\\
\texttt{Return JSON only, with exactly these fields:
predicted\_annual\_\allowbreak income\_usd,
income\_\allowbreak rank\_score, and confidence.}\\
\texttt{Definitions: predicted\_annual\_\allowbreak income\_usd: your best
estimate of this person's annual personal income in U.S. dollars.
income\_\allowbreak rank\_score: integer
from 0 to 10, where 0 means very low expected income and 10 means very high
expected income relative to U.S. adults. confidence: numeric confidence from 0
to 100, where 0 means not confident and 100 means very confident. This is your
confidence in the income prediction, not the income level.}\\
\texttt{Person profile: \{person\_profile\}}\\
\texttt{Rules: Return valid JSON only. Do not include explanation text. Do not
output ranges; output a single numeric income estimate.}
\end{quote}
The continuous prediction was thresholded at \$50{,}000 to form $S$.

\paragraph{Age-biased source sampling.}
We induce transparent opposing shifts at age 45. An older candidate enters HL
with probability 0.8 and a younger candidate with probability 0.2; AL reverses
these probabilities. All draws use seed 20260720, and the remaining eligible
individuals form the target cohort:
\begin{table}[t]
\centering
\caption{ACS analytical cohorts after opposing age-biased sampling.}
\label{tab:acs_cohorts}
\small
\setlength{\tabcolsep}{3.5pt}
\resizebox{\columnwidth}{!}{%
\begin{tabular}{lrrrr}
\toprule
Cohort & $n$ & Mean age & Age $>45$ & Label prevalence \\
\midrule
HL & 447 & 50.79 & 75.6\% & $227/447=0.508$ \\
AL & 552 & 35.78 & 15.4\% & $244/552=0.442$ \\
Target & 186,193 & 42.93 & 43.8\% & $81,674/186,193=0.4387$ \\
\bottomrule
\end{tabular}
}
\end{table}
Direct identifier comparisons give zero overlap for every pair of analytical cohorts.
The opposing shifts require the two source-specific density ratios defined in
Appendix~\ref{densityratioestimation}.

\begin{algorithm}[t]
\caption{ACS-Income label-masking workflow}
\label{alg:acs_workflow}
\small
\begin{algorithmic}[1]
\STATE Freeze candidate IDs and request only the prespecified LLM-label subset
\STATE Validate the response schema and retain the complete request/attempt log
\STATE Form disjoint age-shifted HL, AL, and target cohorts
\STATE Cross-fit outcome, surrogate, and density-ratio learners with training-fold normalization
\STATE Estimate scalar AUC and pointwise ROC diagnostics with target \texttt{PINCP} masked
\STATE Freeze estimates, reveal verification-only target outcomes, and compute the CF benchmark
\end{algorithmic}
\end{algorithm}

\subsection{Covariates and Preprocessing}
\label{sec:add_numerical}
\label{inclusionfeature}

The common feature set contains numerical variables \texttt{AGEP} and \texttt{WKHP} and categorical variables \texttt{COW}, \texttt{SCHL}, \texttt{MAR}, \texttt{OCCP}, \texttt{POBP}, \texttt{RELSHIPP}, \texttt{SEX}, and \texttt{RAC1P}. Within each outer cross-fitting fold, the numerical variables are standardized using the training observations. The categorical variables are processed using sparse \texttt{OneHotEncoder} with no reference category dropped and with unknown evaluation categories ignored. Category levels are frozen across the three cohorts. The categorical variables contain 8, 24, 5, 529, 220, 19, 2, and 9 observed levels, respectively. The common linear design used by the outcome and density-ratio models has 818 columns. For the surrogate regression, each of \texttt{AGEP} and \texttt{WKHP} is replaced by a three-knot cubic quantile-spline basis, giving 824 columns.

\subsection{Cross-Fitting and Density-Ratio Estimation}
\label{supprealmain}

We use five-fold honest cross-fitting. The outcome regression \(\hat m_Y\) uses a 100-tree Random Forest. The surrogate regression \(\hat m_S\) uses L2-penalized cubic-spline logistic regression. Within each outer AL training fold, three-fold cross-validation compares \(C\in\{0.03,0.3,3\}\) and L1 ratios \(\{0,0.5\}\) by log loss; all five folds select \(C=3\) and L1 ratio zero. Extending the grid to \(C=30\) leaves every selection and reported result unchanged. Domain learners remain calibrated 100-tree Random Forests. Every observation is evaluated using nuisance fits that exclude its corresponding held-out fold.

The source sample is the information bottleneck for each density-ratio problem. Within each outer fold, we therefore draw a seeded source-size-matched subset from the target training fold and fit the domain classifier to balanced source and target classes. The Random Forest domain learner uses 100 trees and minimum leaf size 10. Its probabilities are calibrated using internal three-fold sigmoid calibration. Under the balanced domain design, if $\hat\pi_g(x)$ denotes the fitted probability of target membership, the posterior odds estimate the density ratio: the implementation converts $\hat\pi_g(x)$ to posterior odds as in Appendix~\ref{densityratioestimation}. Domain posteriors are prespecified to be clipped to $[0.001,0.999]$. Subject to an upper bound of 10, a single multiplicative normalization factor is estimated using only the source observations in the outer training fold and then frozen before weights are evaluated on the held-out fold. Thus, an evaluation observation and its weight depend only on training-fold fits; in particular, we do not force the held-out mean weight to equal one.

In the primary analysis, neither posterior nor weight clipping is activated.
\begin{table}[t]
\centering
\caption{ACS held-out density-ratio weight diagnostics.}
\label{tab:acs_weight_diagnostics}
\begin{tabular}{lrrrr}
\toprule
Ratio & Median & Maximum & ESS & ESS fraction \\
\midrule
$\hat w_{\mathrm{HL}}$ & 0.816 & 4.730 & 332.8 & 0.745 \\
$\hat w_{\mathrm{AL}}$ & 0.864 & 3.452 & 438.2 & 0.794 \\
\bottomrule
\end{tabular}
\end{table}
The pooled held-out mean weights are 1.056 and 1.032, respectively, rather than
being mechanically set to one. Across the 818 encoded numerical and one-hot
indicator features, reweighting reduces the maximum absolute standardized mean
difference from 0.558 to 0.156 for HL and from 0.536 to 0.187 for AL; the
remaining imbalance is a limitation, not evidence of exact overlap.
Out-of-fold nuisance diagnostics are AUC/Brier
score/log loss $=0.816/0.179/0.534$ for $\hat m_Y$ and
$0.948/0.087/0.297$ for $\hat m_S$.

\subsection{Pointwise ROC Implementation and Diagnostics}
\label{app:acs_pointwise}

We prespecify $h=0.5n_{\mathrm{AL}}^{-1/4}=0.10315$ and evaluate TPR and FPR on 201 thresholds. The two sequences are separately projected by bounded least-squares isotonic regression onto $[0,1]$ to be nonincreasing, and combined by generalized inversion and linear interpolation. The slope-ratio pilot uses $b=n_{\mathrm T}^{-1/5}=0.08831$. Pointwise influence values, analytic standard errors, and nominal Wald intervals are computed exactly as in Theorem~\ref{thm:pointwise_roc} and Corollary~\ref{cor:roc_se}; repeating those formulae here would add no application-specific information.

The nominal scale $n_{\mathrm{AL}}h=56.94$ is not interpreted as an effective sample size. At target FPR $u=0.10,0.15,0.20$, transport-weighted local kernel ESS values are 150.7, 97.3, and 64.8, with 87, 49, and 37 AL observations within $\lvert\hat m_S(X)-\hat c_u\rvert\leq h$. The pooled cross-fitted Target score has 183,778 distinct values among 186,193 observations and an ordered-pair tie probability of $1.63\times10^{-7}$. The spline basis avoids the original coarse fitted score, but these diagnostics concern the fitted score \(\hat m_S\), not the population score \(m_S^\star(X)\). Because the ACS covariates are largely integer- or category-valued, they do not verify population-score continuity; we interpret the analytic interval conditional on the stated population regularity. The scalar one-step AUC remains the primary endpoint; pointwise ROC results are supplementary because they are less stable in this small-AL application.

\subsection{Scalar AUC Numerical Implementation}
\label{app:acs_scalar_auc}

The implementation follows the scalar construction in Appendix~\ref{FPRROCAUC} and the three-source influence functions in Appendix~\ref{ROCAUCCorollary}. It uses the tie-adjusted comparison $\kappa(z,z')=\Ind\{z>z'\}+\tfrac12\Ind\{z=z'\}$ and computes the target pairwise term exactly in $O(n_{\mathrm T}\log n_{\mathrm T})$ time by sorting scores, avoiding an $n_{\mathrm T}\times n_{\mathrm T}$ matrix. The weighted score-density pilots use a Gaussian kernel with $b=n_{\mathrm T}^{-1/5}$ and an FFT evaluation.

The reported estimate adds the two source corrections to the cross-fitted target pairwise term and uses the augmented prevalence denominator, exactly as specified in Corollary~\ref{cor:auc_se}. Its standard error sums the within-source empirical variances of centered influence values. This scalar procedure contains no boundary bandwidth $h$ and uses analytic three-source inference rather than a component bootstrap.

\subsection{Cross-Fitted Target Benchmark}
\label{app:acs_target_benchmark}

\begin{table}[t]
\centering
\caption{ACS-Income estimates using the cross-fitted spline-3 logistic score. Full uses the
proposed three-source analytic interval under the stated regularity
conditions. CF values evaluate a different finite-sample score against
revealed Target outcomes.}
\label{tab:acs_main_metrics}
\scriptsize
\setlength{\tabcolsep}{2.2pt}
\resizebox{\columnwidth}{!}{%
\begin{tabular}{@{}lrrrr@{}}
\toprule
Metric & Plug-in & Human correction & Full [95\% interval] & CF validation \\
\midrule
AUC & 0.707 & 0.838 & 0.866 [0.822, 0.911] & 0.855 \\
TPR at FPR .10 & 0.293 & 0.536 & 0.585 [0.458, 0.713] & 0.577 \\
TPR at FPR .15 & 0.391 & 0.631 & 0.652 [0.532, 0.772] & 0.682 \\
TPR at FPR .20 & 0.472 & 0.692 & 0.725 [0.610, 0.841] & 0.754 \\
\bottomrule
\end{tabular}%
}
\end{table}

\begin{figure}[t]
\centering
\includegraphics[width=\columnwidth]{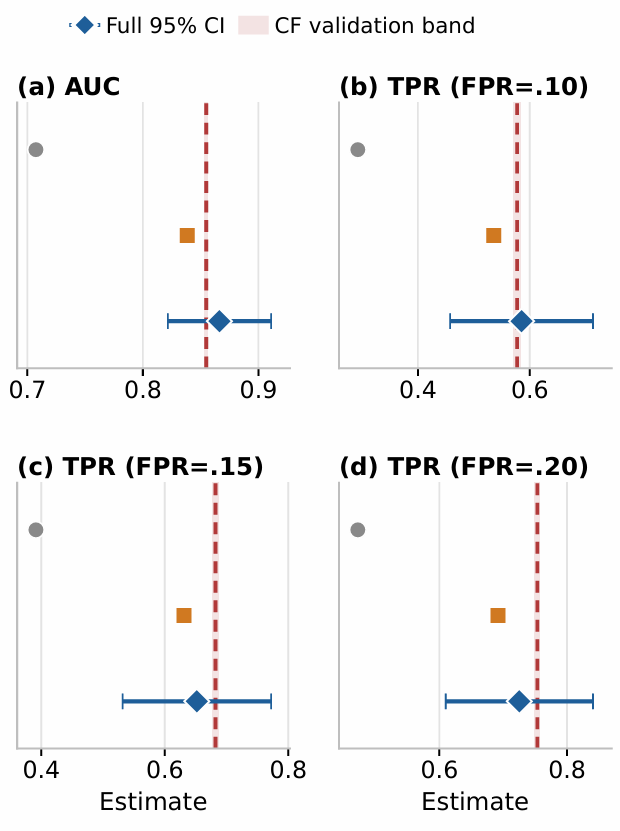}
\caption{Complete ACS comparison for the spline-3 score. Gray, orange, and blue
denote the plug-in, human-correction-only, and Full estimates. Only Full carries
the proposed three-source intervals; the ablations are shown as point estimates.
The red line and band evaluate pooled cross-fitted finite-sample scores after
Target-outcome reveal and do not represent population-score truth. Full is
descriptively closer than the human-correction-only ablation at all four
reported metrics.}
\label{fig:acs_spline_full_validation}
\end{figure}

The program first writes a pre-reveal results file containing the scalar-AUC and pointwise-ROC estimates, seed, model specification, bandwidths, and a hash of the ordered Target IDs. It then loads Target \texttt{PINCP} by ID. Target \(S\) is never loaded or used. The revealed outcomes compute the empirical Target AUC and ROC values of the pooled out-of-fold surrogate-derived models. The Target AUC uses the standard tie-adjusted empirical pairwise statistic. Its conditional standard error is computed from the positive- and negative-class empirical AUC influence values.

These two quantities are not the same estimand. The benchmark pools predictions from five fold-specific finite-sample models and is not the performance of one frozen deployable model, whereas the one-step estimator targets the population surrogate-derived model $m_S^\star$. Their difference is therefore an empirical cross-estimand discrepancy, not repeated-sampling bias and not a coverage assessment.

The full scalar one-step estimate is 0.8663 (SE 0.0228; analytic 95\% interval $[0.8216,0.9110]$ under the stated regularity conditions), whereas the cross-fitted Target benchmark AUC is 0.8549 ($\operatorname{SE}=0.0009$). Their absolute discrepancy is 0.0114. The corresponding plug-in and human-correction-only discrepancies are 0.1475 and 0.0166. The Full discrepancy is therefore 92.3\% smaller than the plug-in discrepancy and 31.5\% smaller than the human-correction-only discrepancy. More directly, the paired Full-minus-human correction is 0.0280 with 95\% interval $[0.0044,0.0515]$, showing that the auxiliary-label correction is not negligible in this application. We do not use nested subsample containment as coverage evidence and do not interpret a finite-model learning curve as evolution of this fixed population-model estimand; repeated-sampling calibration is assessed in the simulation studies.

\subsection{Exploratory Surrogate-Learner Sensitivity}
\label{app:acs_learner_sensitivity}

The six surrogate-score learners in Table~\ref{tab:acs_learner_sensitivity} were examined after the initial Target-outcome reveal, so this analysis is an exploratory sensitivity check rather than prespecified model selection. Their displayed order uses only AL out-of-fold log loss; neither Target outcomes nor Target-validation discrepancies enter the ranking. The spline-3 logistic learner has the lowest AL log loss and is therefore used for the main presentation. For every learner, the auxiliary-label correction increases AUC relative to the human-correction-only ablation, and each learner-specific paired interval excludes zero. Full is descriptively closer to its finite-score Target validation in five of six cases; calibrated quadratic-SVC is the counterexample. Thus, the sensitivity analysis supports a persistent, non-negligible auxiliary-source contribution, but it does not establish uniform estimator superiority or verify the population score-density assumption. Extending the elastic-net grid from \(C\leq3\) to \(C\leq30\) leaves the selected spline-3 fit and all reported results unchanged.

\begin{table}[t]
\centering
\caption{Post-reveal exploratory ACS surrogate-learner sensitivity. Learners are ordered by AL out-of-fold log loss. The paired intervals concern Full minus the human-correction-only ablation for the same learner; they are not simultaneous confidence intervals or a direct ranking of estimator accuracy.}
\label{tab:acs_learner_sensitivity}
\scriptsize
\setlength{\tabcolsep}{2.0pt}
\resizebox{\columnwidth}{!}{%
\begin{tabular}{@{}lrrrrr@{}}
\toprule
Learner & AL log loss & Full & Human corr. & CF validation & Full\(-\)Human [95\% interval] \\
\midrule
Spline-3 logistic & 0.2966 & 0.8663 & 0.8383 & 0.8549 & 0.0280 [0.0044, 0.0515] \\
Elastic-net logistic & 0.2992 & 0.8632 & 0.8405 & 0.8543 & 0.0226 [0.0034, 0.0419] \\
Spline-5 logistic & 0.3059 & 0.8638 & 0.8302 & 0.8517 & 0.0336 [0.0098, 0.0573] \\
Calibrated RBF-SVC & 0.3164 & 0.8618 & 0.8307 & 0.8486 & 0.0311 [0.0112, 0.0511] \\
Ridge logistic & 0.3178 & 0.8616 & 0.8354 & 0.8493 & 0.0261 [0.0058, 0.0464] \\
Calibrated quadratic-SVC & 0.3331 & 0.8745 & 0.8394 & 0.8468 & 0.0351 [0.0125, 0.0577] \\
\bottomrule
\end{tabular}%
}
\end{table}

\begin{figure}[t]
\centering
\includegraphics[width=\columnwidth]{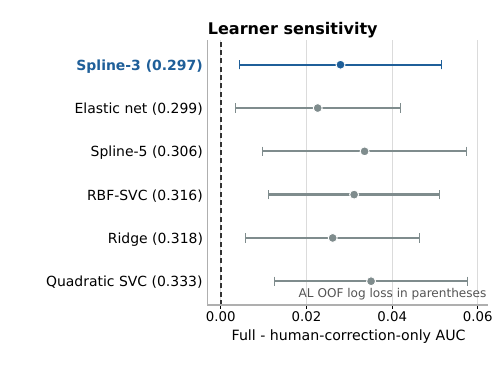}
\caption{Paired Full-minus-human-correction AUC contrasts across the six ACS surrogate learners, ordered by AL out-of-fold log loss. Bars are learner-specific 95\% intervals; all remain above zero. The intervals are learner-specific and do not provide an inferential ranking of learner accuracy.}
\label{fig:acs_learner_sensitivity}
\end{figure}

\section{Further Discussion}

The main implication of our results is that evaluating a learned surrogate-derived model is fundamentally different from evaluating a fixed prediction rule. At a hard threshold, score-estimation error contributes through observations near the threshold, producing variance of order $(n_{\mathrm{AL}}h)^{-1}$ rather than $n_{\mathrm{AL}}^{-1}$. Thus, a large surrogate-labeled sample alone does not justify treating the score as known. Treating the score as known is justified when $(n_{\mathrm{AL}}h)^{-1}=o(n_{\mathrm{HL}}^{-1}+n_{\mathrm{T}}^{-1})$, so that the auxiliary-source contribution is negligible relative to the gold-source and target-sample contributions. Otherwise, it omits a first-order source of uncertainty.

The three-source expansion also has implications for data collection. Its variance components distinguish uncertainty arising from gold-outcome estimation, score estimation near the threshold, and target-population averaging. Subject to labeling costs and nuisance-estimation quality, their relative magnitudes can indicate whether additional gold labels, surrogate labels, or target covariates would be most useful. Increasing $n_{\mathrm{AL}}$ is most valuable when the variance from estimating the score near the threshold is the largest component, but its benefit depends on the bandwidth and on the accuracy of the fitted score. More auxiliary observations cannot by themselves remove localization bias caused by an unsuitable bandwidth.

Our estimand defines the scope of the framework. We evaluate the population surrogate-derived model $m_S^\star(x)=\mathbb E_{\mathrm{AL}}(S\mid X=x)$ under the target covariate law, rather than the conditional performance of one particular finite-sample fitted model. If the inferential target is the performance of a specific deployed fit, that fitted model would instead be treated as fixed, leading to a different estimand and inferential analysis. Moreover, separate density ratios allow the two labeled sources to have different covariate distributions, but they do not protect against violations of gold-outcome transportability or inadequate source support for the target covariate distribution.

The numerical results should be interpreted within these limits. The plug-in comparison shows that the two numerator corrections jointly reduce empirical discrepancy in one controlled configuration; it neither isolates their individual effects nor establishes uniform superiority. The broader simulations show that bandwidth and sample-size configurations can materially affect finite-sample bias and coverage, reinforcing the value of reporting bandwidth sensitivity. The semi-synthetic ACS-Income design permits controlled validation because the masked target gold outcomes are revealed after estimation; in a genuinely unlabeled target population, such direct validation would not be available. Threshold-specific inference applies to prespecified thresholds or false-positive rates, while AUC inference uses the separate scalar one-step estimator. Data-driven threshold selection and simultaneous ROC bands require additional uniform-in-threshold theory.

\paragraph{Practical Considerations and Extensions}
\label{disclimappendix}

Several practical considerations remain. First, the bandwidth controls how
closely the AL correction concentrates around the threshold and how many local
observations contribute. The reference family in Section~\ref{Theory}, together
with the reported values of $h$ and $n_{\mathrm{AL}}h$ and the sensitivity
analysis, makes this choice transparent. Second, orthogonality removes the
leading nuisance-estimation effects, while the rate conditions in
Assumption~\ref{ass:rate} control the remaining higher-order terms. Third, the
threshold-specific and fixed-FPR ROC results are pointwise; simultaneous ROC
bands require a uniform Gaussian approximation. Finally, repeated
cross-fitting adds computational cost because each nuisance function is fitted
within every training fold.

One immediate extension is simultaneous confidence bands for the entire ROC curve, which would go beyond the fixed-threshold TPR/FPR, fixed-$u$ ROC, and scalar AUC inference established here. Other directions include predictive performance metrics for survival outcomes or multiclass classification.

A related extension is sequential monitoring as the surrogate-labeled sample grows. Such a procedure would require confidence bands that are uniform over the data-acquisition path, rather than repeated application of the current pointwise intervals. Developing those guarantees, potentially with a multiplicity adjustment depending on the monitoring schedule, is left for future work.

\end{appendices}

\end{document}